\RecustomVerbatimCommand{\VerbatimInput}{VerbatimInput}
{fontsize=\footnotesize,
 frame=single,  
 framesep=0.5em,
 labelposition=topline,
}
\def\eqref#1{equation~\ref{#1}}
\def\1{\bm{1}}
\DeclareMathAlphabet{\mathsfit}{\encodingdefault}{\sfdefault}{m}{sl}
\SetMathAlphabet{\mathsfit}{bold}{\encodingdefault}{\sfdefault}{bx}{n}
\newcommand{\softmax}{\mathrm{softmax}}
\DeclareMathOperator*{\argmax}{arg\,max}
\DeclareMathOperator*{\argmin}{arg\,min}
\title{Do LLM Agents  Have Regret? \\  
A Case Study in Online Learning and Games}
 \author[1, *]{Chanwoo Park}
 \author[2, *]{~~~Xiangyu Liu}
 \author[1]{~~~Asuman Ozdaglar}
 \author[2]{~~~Kaiqing Zhang}
 \affil[1]{Massachusetts Institute of Technology}
 \affil[2]{University of Maryland, College Park}
 \affil[*]{Equal Contribution}
\date{}
\newcommand{\kzedit}[1]{{#1}}
\newcommand{\xyedit}[1]{{#1}}
\newcommand{\cpedit}[1]{{#1}}
\newcommand{\xy}[1]{\textcolor{green}{}} 
 \newcommand{\arxiv}[1]{{#1}}
 \newcommand{\arxivconfirmed}[1]{{#1}}
 \newcommand{\kz}[1]{{}} 
\newcommand{\icml}[1]{{}}
 \newcommand{\conf}[1]{{}} 
\newcommand{\iclr}[1]{{}}
\newcommand{\safevspace}[1]{\vspace{0mm}}
\begin{document}

\maketitle

\begin{abstract}
Large language models (LLMs)  have been increasingly employed for (interactive) decision-making, via the development of LLM-based autonomous agents.  Despite their emerging successes, the performance of LLM agents in decision-making has not been fully investigated through quantitative metrics---especially in the multi-agent setting when they interact with each other, a typical scenario in real-world LLM-agent applications. To better understand the limits of  
LLM agents in these interactive environments, we propose to study their interactions in benchmark decision-making  settings in online learning and game theory, through the   
performance metric of \emph{regret}. 
We first empirically study the {no-regret} behaviors of LLMs in canonical non-stochastic 
online learning problems, as well as the emergence of equilibria when multiple of them interact through playing repeated games.  
We then provide some theoretical insights into {sublinear regret growth in the cases we observed}, under certain assumptions on {(supervised)} pre-training \kzedit{and the data generation model.}  
Notably, we also identify (simple) cases where advanced LLMs such as GPT-4 fail to be no-regret. To further promote the no-regret behaviors, we propose a novel  \emph{unsupervised} training loss, the \emph{regret-loss}, which, in contrast to the supervised pre-training loss,  does \emph{not}  require the labels of (optimal)  actions. Finally, we establish the \emph{statistical}   guarantee of generalization bound for  regret-loss minimization, {and more importantly,} the \emph{optimization} guarantee that 
minimizing such a loss can lead to known no-regret learning algorithms, when single-layer self-attention models are used. 
Our further experiments demonstrate the effectiveness of our regret-loss, especially in addressing the above  ``regrettable'' cases.        
\end{abstract}

\iclr{\safevspace{-20pt}}
\safevspace{-5pt}

\section{Introduction}

\safevspace{-6pt} 

Large language models (LLMs) have recently exhibited remarkable emerging capabilities \citep{bubeck2023sparks,achiam2023gpt,wei2022chain,yao2023tree}. As a consequence, a burgeoning body of work has been investigating the employment of LLMs as central {controllers} for (interactive) decision-making, through the construction of \emph{LLM-based autonomous agents}  \citep{hao2023reasoning,shen2023hugginggpt,yao2022react,shinn2023reflexion,wang2023describe,Significant_Gravitas_AutoGPT}. Specifically, an LLM agent interacts with the (physical) world in a \emph{dynamic/sequential} way: it 
uses LLMs as an oracle for reasoning and planning, then acts in the environment based on the reasoning/planning  and the feedback it perceives over time. LLM agents have achieved impressive successes in embodied AI  \citep{ahn2022can,huang2022language,wang2023voyager}, \icml{and }natural science \citep{wu2023autogen,swan2023math}\icml{applications. }\arxivconfirmed{, and social science  \citep{park2022social,park2023generative} applications.}

\safevspace{-5pt}

Besides being \emph{dynamic}, another increasingly captivating feature of LLM-based decision-making is the involvement of \emph{strategic} interactions, oftentimes among multiple LLM agents. For example, it has been reported that the reasoning capability of LLMs can be improved by interacting \arxivconfirmed{with each other }through negotiation and/or debate games \citep{fu2023improving,du2023improving}; LLM agents have now been widely used to \emph{simulate} the strategic behaviors for social and economic studies, to understand the emerging behaviors in interactive social systems \citep{aher2022using,park2023generative}.  
Moreover, LLMs have also exhibited remarkable potential in solving various games \citep{meta2022human,mukobi2023welfare}, and in fact, a rapidly expanding literature has employed \emph{repeated games} as a fundamental benchmark to understand the strategic behaviors of LLMs  \citep{brookins2023playing,akata2023playing,fan2023can}. These \arxivconfirmed{exciting }empirical successes call for a rigorous examination and understanding through a theoretical lens of decision-making. 
\iclr{\safevspace{-6pt}}

\safevspace{-5pt}

\emph{Regret}, on the other hand, has been a core metric in (online) decision-making. It measures how ``sorry'' the decision-maker is, in retrospect, not to have followed the best prediction in hindsight 
\citep{shalev2012online}.  It provides not only a sensible way to \emph{evaluate}  the {sophistication} level of online decision-makers, but also a quantitative way to measure their \emph{robustness} against arbitrary (and possibly adversarial) environments.  More importantly, it inherently offers a connection to modeling and analyzing \emph{strategic behaviors}: the long-run interaction of {no-regret} learners leads to certain \emph{equilibrium} when they repeatedly play games  \citep{cesa2006prediction}. In fact, \emph{no-regret} learning has served as a natural framework for modeling and explaining human behavior in strategic decision-making, with experimental evidence \citep{erev1998predicting,nekipelov2015econometrics,balseiro2019learning}. It has thus been posited as an important model of  ``rational behaviors''  in playing games \citep{blum2008regret,roughgarden2015intrinsic}.  
Hence, it is natural to ask:

\iclr{
\safevspace{-6pt}
}
{\iclr{\fontsize{9.5}{11}}
\arxivconfirmed{
\safevspace{-8pt}
\begin{center}
    \emph{Can we 
    examine and better understand the online and strategic decision-making \\
    behaviors 
    of LLMs through the lens of \emph{regret}?}
\end{center}
\safevspace{2pt}} 
\icml{
\fontsize{9.5}{11}
\safevspace{-20pt}
\begin{center}
    \emph{Can we examine and better understand the online and strategic decision-making behaviors 
    of LLMs through the lens of \emph{regret}?}
\end{center}
\safevspace{-7pt}} 
}   
\iclr{\safevspace{-8pt}}

 \safevspace{-10pt}
Acknowledging that LLM(-agent)s are extremely complicated to analyze, 
to gain some insights into the question, 
we focus on benchmark decision-making settings: online learning with convex (linear) loss functions, and playing repeated games. 
We summarize our contributions as follows.

\safevspace{-10pt}

\icml{\safevspace{-14pt}}
\iclr{\safevspace{-8pt}}
\paragraph{Contributions.} 
First, we carefully examine the performance of several representative pre-trained LLMs in the aforementioned benchmark online decision-making settings,  
in terms of \emph{regret}. 
We observe that LLM agents can {achieve regret sublinear in time} in (non-stochastic) online learning settings,  where the loss functions change over time either arbitrarily, or  by following some patterns with bounded variation, and 
in playing both representative and randomly generated repeated games. For the latter,  equilibria will emerge as the long-term behavior of the multi-LLM interactions.  
Second, we provide some theoretical insights into the observed {sublinear regret} behaviors, \kzedit{based on certain assumptions on the \emph{supervised pre-training} procedure, a common practice in training large models for decision-making, and some hypothetical models for  training data generation.} 
In particular, we make a connection of the pre-trained LLMs to the known no-regret algorithm of \emph{follow-the-perturbed-leader} (FTPL) under these assumptions. Third, we also identify (simple) cases where advanced LLMs such as GPT-4 fail to be
no-regret. We thus propose a novel \emph{unsupervised} training loss, \emph{regret-loss}, which, in contrast to the supervised pre-training loss, does not require the \emph{labels}  of (optimal) actions. 
We then establish both \emph{statistical} and \emph{optimization}  guarantees 
for regret-loss minimization, \kzedit{which, in particular, show} that 
minimizing such a loss can  \emph{automatically} lead to the known no-regret learning algorithm of \emph{follow-the-regularized leader}  (FTRL),  \kzedit{under single-layer self-attention  parameterization.}  
Our \arxivconfirmed{further }experiments demonstrate the effectiveness of our new loss, especially in addressing the above ``regrettable'' cases. \kzedit{With the fast development of LLMs, we emphasize that our goal is not to assert whether  (current) LLMs are no-regret learners or not, especially given both the positive and negative observations above. Instead, our hope is to introduce and inspire more rigorous metrics and principles into the current evaluation and development of LLM agents, for online and multi-agent strategic decision-making.} 

\subsection{Related Work}
\paragraph{LLM(-agent) for decision-making.} The impressive capability of LLMs for \emph{reasoning} \citep{bubeck2023sparks,achiam2023gpt,wei2022chain,wei2022emergent,srivastava2023beyond,yao2023tree} has inspired a growing line of research on \emph{LLM for (interactive) decision-making}, i.e., an LLM-based autonomous agent interacts with the environment by taking actions repeatedly/sequentially,  based on the feedback it perceives. Some promises have been shown from a  \emph{planning} perspective \citep{hao2023reasoning,valmeekam2023planbench,huang2022inner,shen2023hugginggpt}. In particular, for embodied AI applications, e.g., robotics, LLMs have achieved impressive performance when used as the controller for decision-making \citep{ahn2022can,yao2022react,shinn2023reflexion,wang2023describe,driess2023palm,Significant_Gravitas_AutoGPT}. However, the performance of decision-making has not been rigorously characterized via the regret metric in these works. Very recently,  \cite{liu2023reason} has proposed a  principled architecture for LLM-agent, with provable regret guarantees in stationary and stochastic decision-making environments, under the Bayesian adaptive Markov decision processes framework. In contrast, our work focuses on online learning and game-theoretic settings, in potentially
adversarial and non-stationary environments.   
Moreover, (first part of) our work focuses on \emph{evaluating} the intelligence level of LLM per se in decision-making (in terms of the regret metric), while \cite{liu2023reason} focused on \emph{developing} a new architecture that uses LLM as an oracle for reasoning,  together with memory and specific planning/acting subroutines, \emph{to achieve} sublinear (Bayesian) regret, in stationary and stochastic environments.   

\paragraph{LLMs in multi-agent environments.} The interaction of multiple LLM agents has garnered significant attention lately.  
For example, \cite{fu2023improving} showed that LLMs can autonomously 
improve each other in a negotiation game by playing and criticizing each other. Similarly, \citep{du2023improving,liang2023encouraging,xiong2023diving,chan2023chateval, li2023prd} showed that multi-LLM \emph{debate} can improve the reasoning and evaluation capabilities of the LLMs.  \cite{qian2023communicative,schick2022peer,wu2023autogen}  demonstrated the potential of multi-LLM interactions and collaboration in software development,  writing, and problem-solving, respectively. \cite{zhang2023building} exhibited a similar potential in embodied cooperative environments. Recently, \cite{park2025maporl} trained multi-agent LLM systems using multi-agent reinforcement learning. More formally,  multi-LLM interactions have also been investigated under a \emph{game-theoretic} framework, to characterize the \emph{strategic} decision-making of LLM agents. \cite{meta2022human,mukobi2023welfare} and \cite{xu2023language,xu2023exploring} have demonstrated the promise of LLMs in playing Diplomacy and WereWolf games, respectively, which are both language-based games with a mixture of competitive and cooperative agents.  
{Note that these works utilized LLM to solve a specific rather than a general game.} Related to our work, \cite{brookins2023playing,akata2023playing,lore2023strategic,brookins2023playing,fan2023can} have also used (repeated) matrix games as a benchmark to evaluate the reasoning capability and rationality of LLM agents. 
In contrast to our work, these empirical studies have not formally investigated LLM agents using the metric of \emph{regret}, nor through the lenses of \emph{online learning} and  \emph{equilibrium-computation}, which are all fundamental in modeling and analyzing strategic multi-agent interactions. Moreover, our work also provides theoretical results to explain and further enhance the no-regret property of LLM agents.

\paragraph{LLMs \& Human/Social behavior.}  LLMs have also been used to \emph{simulate} the behavior of human beings, for social science and economics studies \citep{engel2023integrating}. The extent of LLMs simulating human behavior has been claimed as a way to evaluate the level of its intelligence in a controlled environment \citep{aher2022using,tsai2023can}.  For example, \cite{li2023camel,hong2023metagpt,zhao2023competeai} showed that by specifying different ``roles'' to LLM agents, certain collaborative/competitive behaviors can emerge.  \cite{argyle2023out} showed that LLMs can emulate response distributions from diverse human subgroups, illustrating their adaptability. \cite{horton2023large} argued that an LLM, as a computational model of humans, can be used as \emph{homo economicus} when given endowments, information,
preferences, etc., to gain new economic insights by simulating its interaction with other LLMs.  \cite{park2022social,park2023generative}  proposed scalable simulators that can generate realistic social behaviors emerging in populated and interactive social systems, and emerging behaviors of LLM agents in society have also been consistently observed in \cite{chen2023agentverse,chen2023emergence}.  \cite{li2023you,li2023quantifying} studied the behavioral dynamics of LLM agents on social networks. 
These empirical results have inspired our work, which can be viewed as an initial attempt towards quantitatively understanding the \emph{emerging behavior} of LLMs as computational human models, given the known justification of \emph{equilibrium} being a long-run emerging behavior of \emph{learning dynamics} \citep{fudenberg1998theory} and strategic interactions \citep{young2004strategic,camerer2011behavioral}.

\paragraph{Transformers \& In-context-learning.} LLMs nowadays are predominantly built  upon the architecture of {Transformers}  \citep{vaswani2017attention}. Transformers have exhibited a remarkable capacity of \emph{in-context-learning} (ICL), which can construct new predictors from sequences of labeled examples as input,  without further parameter updates. This has enabled the  \emph{few-shot learning}  capability of Transformers \citep{brown2020language,garg2022can,min2022rethinking}. The empirical successes have inspired burgeoning theoretical studies on ICL. \cite{xie2021explanation} used a Bayesian inference framework to explain how ICL works, which has also been adopted in \cite{wang2023large,jiang2023latent}. \cite{akyurek2022learning, von2023transformers,dai2022can,giannou2023looped} showed (among other results) that ICL comes from that Transformers can implement the gradient descent (GD) algorithm. \cite{bai2023transformers} further established that Transformers can
implement a broad class of machine learning algorithms in context. 
Moreover,  \cite{ahn2023transformers, zhang2023trained,mahankali2023one}  proved that a \emph{minimizer} of the certain training loss 
among single-layer Transformers is equivalent to a single step of GD for linear regression. 
\cite{li2023transformers} established generalization bounds of ICL from a   multi-task learning perspective. \cite{zhang2023and} argued that ICL  implicitly implements Bayesian model averaging, and can be approximated by the attention mechanism. They also established a result on some  \emph{regret} metric. However, the regret notion is not defined for (online) decision-making, and is fundamentally different from ours that is standard in online learning and games.  Also, we provide extensive experiments to validate the no-regret behavior by our definition. More recently, the ICL property has also been generalized to decision-making settings.   \cite{laskin2022context,lee2023supervised,lin2023transformers} investigated the in-context reinforcement learning \arxiv{(RL)} property of Transformers under supervised pre-training, for solving 
stochastic bandits and Markov decision processes.  
In contrast, our work focuses on online learning settings with an arbitrary and \emph{potentially adversarial} nature, as well as \emph{game-theoretic} settings. We also provide a new  \emph{unsupervised}  loss to promote the no-regret behavior in our settings.

\paragraph{Online learning and games.} Online learning has been extensively studied to model the decision-making of an agent who interacts with the environment sequentially, with a potentially arbitrary sequence of loss functions  \citep{shalev2012online,hazan2016introduction}, and has a deep connection to game theory \citep{cesa2006prediction}.  In particular, regret,  the difference between the incurred accumulated loss and the best-in-hindsight accumulated loss,  has been the core performance metric, and a  good online learning algorithm should have regret at most sublinear in time $T$, which is referred to as being \emph{no-regret}. 
Many well-known algorithms can achieve no-regret against \emph{arbitrary} loss sequences, e.g., multiplicative weight updates (MWU)/Hedge  \citep{freund1997decision,arora2012multiplicative}, EXP3 \citep{auer2002nonstochastic},  
and more generally follow-the-regularized-leader (FTRL) \citep{shalev2007primal} and follow-the-perturbed-leader (FTPL) \citep{kalai2005efficient}. In the bandit 
literature \citep{lattimore2020bandit,bubeck2012regret}, such a setting without any statistical assumptions on the losses is also referred to as the \emph{adversarial/non-stochastic}  setting. 
Following the conventions in this literature, the online settings we focus on 
shall not be confused with the stationary and \emph{stochastic}(-bandit)/(-reinforcement learning) settings that have been explored in several other recent works on \emph{Transformers for decision-making}   \citep{lee2023supervised,lin2023transformers}. Centering around the regret metric, our work has also explored the 
non-stationary bandit setting \citep{besbes2014stochastic}, as well as the repeated game setting where the environment itself consists of strategic agents \citep{cesa2006prediction}.

\iclr{\safevspace{-15pt}}
\icml{\safevspace{-14pt}}
\safevspace{-9pt}

\section{Preliminaries}\label{sec:formulation}
\iclr{\safevspace{-9pt}}
\icml{\safevspace{-7pt}}
\safevspace{-10pt}

\iclr{\safevspace{-8pt}}
\icml{\safevspace{-10pt}}

\paragraph{Notation.} 
We use $\NN$ and $\NN^+$ 
to denote the sets of non-negative and positive integers, respectively. 
For a finite set $\cS$, we use $\Delta(\cS)$ to denote the simplex over $\cS$. For $d\in \NN^+$, we define $[d]:= \{1, 2, \dots, d\}$. \arxivconfirmed{For two vectors $x,y\in\RR^d$, we use $\la x,y\rangle$ to denote the inner product of $x$ and $y$. }We define $\pmb{0}_d$ and $\pmb{1}_d$ as a $d$-dimensional zero or one vector\icml{.}\arxivconfirmed{, {and $\pmb{O}_{d\times d}$ and $I_{d \times d}$ as a $d \times d$-dimensional zero matrix and  identity matrix, respectively. We omit $d$ when it is clear from the context.}} We define $e_i$ as a unit vector \arxivconfirmed{(with proper dimension) }whose  $i$-th coordinate equals $1$. For $p \in \RR^d, R>0$ and $C \subseteq \RR^d$ is a convex set, define $B(p, R, \norm{\cdot}):= \{x \in \RR^d \mid \norm{x - p} \leq R \}$,  $\texttt{Proj}_{C, \norm{\cdot}}(p) = \argmin_{x \in C} \norm{x - p}$\icml{.}\arxivconfirmed{ (which is well-defined as $C$ is a convex set), and $\texttt{clip}_R(x):=[ \texttt{Proj}_{B(0, R, \norm{\cdot}_2), \norm{\cdot}_2}(x_i)]_{i \in [d]}$.} Define $\texttt{Softmax}(x):= \left(\frac{e^{x_i}}{\sum_{j \in [d]}e^{x_j}}\right)_{i \in [d]}$\icml{.}\arxivconfirmed{ and $\texttt{ReLU}(x) = \max(0, x)$ for $x \in \RR^d$. For $A \in \RR^{m \times n}$ {with $A_i$ denoting its $i$-th column}, we define $\norm{A}_\texttt{op}:= \max_{\norm{x}_2 \leq 1} \norm{Ax}_2$, $\norm{A}_{2, \infty}:= \sup_{i \in [n]} \norm{A_i}_2$,
$\norm{A}_{F}$
as the Frobenius norm, and    $A_{-1} := A_n$ to denote the last column vector of $A$. } We define $\RR^+:= \{x \mid x\geq 0 \}$. \arxivconfirmed{For a set $\Pi$, define $\text{diam}(\Pi, \norm{\cdot}):= \sup_{\pi_1, \pi_2 \in \Pi} \norm{\pi_1 - \pi_2}$.}  {We define $\mathbbm{1}(\cE):=1$ if $\cE$ is true, and $\mathbbm{1}(\cE):=0$ otherwise.} \arxivconfirmed{For a random variable sequence $(X_n)_{n \in \NN}$ and random  variables $X, Y$, we denote $F_X$ as the cumulative distribution function of a random variable $X$, $X_n \overset{p}{\to} X$  if $\forall \epsilon > 0, \lim_{n \to \infty} \PP(|X_n - X| > \epsilon) = 0$, $X_n \overset{d}{\to} X$  if $\lim_{n \to \infty} F_{X_n}(x) = F_X(x)$ for all $x$ where $F_X(x)$ is continuous, $X \overset{d}{=} Y$ if $F_X(x) = F_Y(x)$ for all $x$, $X_n \overset{a.s.}{\to} X$ if $\PP(\lim_{n \to \infty} X_n = X) = 1$, and $\operatorname{esssup}(X) := \inf \{M \in \mathbb{R} : \PP(X > M) = 0\}$.} {For a random variable $X$, we use $\text{supp}(X)$ to denote its support.} \arxivconfirmed{{For functions $f,g:\RR\to \RR$,} we define $g(x) = \cO(f(x))$ if there exist $x_0, M<\infty$ such that $|g(x)| \leq M|f(x)|$ for {all} $x > x_0$. {We use $f'$ to denote the derivative of $f$.} Let $F: \Omega\rightarrow \RR$ be a continuously-differentiable, strictly convex function defined on a convex set $\Omega$. The Bregman divergence associated with $F$ for points $p$, $q$ is defined as $D_F(p, q):=F(p)-F(q)-\langle \nabla F(q), p-q\rangle$. For a sequence $(\ell_{t})_{t\in[T]}$ for some $T\in\NN^+$, we define $\ell_{a:b}:=(\ell_a, \cdots, \ell_b)$ for $1\le a\le b\le T$. If $a>b$, we define  $\ell_{a:b}=\emptyset$.}

\subsection{Online Learning \& Repeated Games}
\safevspace{-9pt}

\icml{\safevspace{-6pt}}
\iclr{\safevspace{-8pt}}

\paragraph{Online learning.} We consider the {online learning} setting where an agent interacts with the environment for $T$ rounds,  by iteratively making decisions based on the feedback she receives. Specifically, at each time step $t$, the agent chooses her decision policy $\pi_t\in\Pi$ for some bounded domain $\Pi$, and after her commitment to $\pi_t$, a {bounded} 
loss function $f_t:\Pi\to {[-B,B]}$ for some constant $B>0$ is chosen by the environment, potentially in an adversarial fashion. The agent thus incurs a loss of $f_t(\pi_t)$, and will update her decision to $\pi_{t+1}$ using the feedback. We focus on the most basic setting where the agent chooses actions from a finite set $\cA$ every round, which is also referred to as the \emph{Experts Problem} \citep{cover1966behavior,vovk1990aggregating,littlestone1994weighted,hazan2016introduction}, without loss of much generality (cf. \Cref{rem:wolg} for a detailed discussion). In this case,  $\Pi$ becomes the simplex over $\cA$, i.e., $\Pi=\Delta(\cA)$, and $f_t(\pi_t)=\la {\ell}_t,\pi_t \rangle$  for some loss \emph{vector} ${\ell}_t\in\RR^{d}$ that may change over time{, where $d:=|\cA|$.}

\iclr{\safevspace{-4pt}} 
\safevspace{-4pt}

\icml{\safevspace{-4pt}} 

At time step $t\in[T]$, the agent may receive either the full vector ${\ell}_t$, or only the realized loss ${\ell}_{ta_t}$ {(\arxivconfirmed{we sometimes also interchangeably write it as }\icml{or }$\ell_t (a_t)$)}, {the $a_t$th element of $\ell_t$,} for some $a_t\sim \pi_t(\cdot)$, as feedback, which will be referred to as online learning with \emph{full-information feedback}, and that with \emph{bandit feedback}, respectively. The latter is also referred to as the \emph{adversarial/non-stochastic bandit} problem in the multi-armed bandit (MAB) literature. Note that hereafter, we will by default refer to this setting that does \emph{not} make any assumptions on the loss sequence $({\ell}_t)_{t \in [T]}$ simply as \emph{online learning}. 
Moreover, if the loss functions change over time (usually with certain bounded variation), we will refer to it as 
\emph{non-stationary online learning} for short, whose bandit-feedback version is also referred to as the \emph{non-stationary bandit} problem. 

\safevspace{-9pt}

\safevspace{-3pt}

\icml{\safevspace{-2pt}}

\paragraph{Repeated games.} The online learning setting above has an intimate connection to game theory. Consider a normal-form game $\cG=\la N, \{\cA_n\}_{n\in[N]},\{r_n\}_{n\in[N]}\rangle$, where $N$ is the number of players, $\cA_n$ and $r_n:\cA_1 \times \dots \times \cA_N \rightarrow {[-B, B]}$ are the action set and  the payoff  function of player $n$, respectively. The $N$ players repeatedly play the game for $T$ rounds, each player $n$  maintains a strategy $\pi_{n,t}\in\Delta(\cA_n)$ at time $t$, and takes action $a_{n,t}\sim\pi_{n,t}(\cdot)$. The \emph{joint} action $a_t=(a_{1,t},\cdots,a_{N,t})$ determines the payoff of each player at time $t$, $\{r_n(a_t)\}_{n\in[N]}$. From \arxivconfirmed{a single-player's (e.g., player $n$'s)}\icml{player $n$'s} perspective, she encounters an online learning problem with (expected) loss function  ${\ell}_t:=-\EE_{a_{-n,t}\sim \pi_{-n,t}}[r_{n}(\cdot,a_{-n,t})]$  at time $t$, where $-n$ denotes the index for all the players other than player $n$. 
We will  refer to it as the \emph{game setting} for short,  
and use  the terms of ``agent'' and ``player'' interchangeably hereafter.  The key difference between online learning and repeated games is in their interaction dynamics: online learning involves an agent facing a potentially adversarial, changing environment (or sequence of loss functions), while in repeated games, agents interact by playing the same game repeatedly, which might be less adversarial when they follow specific learning algorithms.

\subsection{Self-attention \& Transformers}
\label{ssec:self-tf}
\paragraph{(Linear) Self-attention.} 
One key component in Transformers \citep{vaswani2017attention}, the backbone of modern language models, is the \emph{(self-)attention} mechanism. For simplicity, we here focus on introducing the \emph{single-layer} self-attention architecture.
The mechanism takes a sequence of  vectors \( Z = [z_1, \dots, z_t] \in \RR^{d \times t} \) as input, and  outputs some sequence of $[\hat{z}_1, \dots, \hat{z}_t] \in \RR^{d \times t}$. For each $i \in [t]$ where $i > 1$, the output is generated by  $\hat{z}_i = (Vz_{1:{i-1}}) \sigma((Kz_{1:i-1})^\intercal  (Qz_i))$, where $z_{1:{i-1}}$ denotes the $1$ to $i-1$ columns of $Z$, $\sigma$ is either the $\texttt{Softmax}$ or $\texttt{ReLU}$ activation function, and for the initial output, \( \hat{z}_1 = \pmb{0}_d \). Here, $V, Q, K \in \RR^{d \times d}$ are referred to as  the \emph{Value}, \emph{Query}, and \emph{Key} matrices,  respectively. Following the theoretical framework in \cite{von2023transformers, mahankali2023one}, we exclude the attention score for a token $z_i$ in relation to itself.
For theoretical analysis, we also consider the \emph{linear} self-attention  model, where $\hat{z}_i = (Vz_{1:{i-1}}) ((Kz_{1:i-1})^\intercal  (Qz_i))$. We write this (linear) self-attention layer's output as $\texttt{(L)SA}_{(V, Q, K)}(Z)$. We define an $M$-head self-attention layer with $\theta = \{(V_m, Q_m, K_m)\}_{m\in[M]}$ as $\texttt{M-(L)SA}_{\theta}(Z) := \sum_{m=1}^M \texttt{(L)SA}_{(V_m, Q_m, K_m)}(Z)$. We define $\norm{\cdot}_{\texttt{M-(L)SA}}$ as $\norm{\theta}_{\texttt{M-(L)SA}} := \max_{m \in [M]} \left\{ \norm{Q_m}_{\texttt{op}}, \norm{K_m}_{\texttt{op}} \right\} + \sum_{m=1}^M \norm{V_m}_{\texttt{op}}$.

\paragraph{Transformers.} 
For a multi-layer perceptron (MLP) layer, it takes \( Z = [z_1, \dots, z_t]\in \RR^{d \times t} \) as input, with parameter $\theta = (W_1, W_2) \in \RR^{d' \times d} \times \RR^{d \times d'}$ such that for each $i \in [t]$, the output is $\hat{z_i} := W_2 \sigma(W_1 z_i)$ where $\sigma$ is either $ \texttt{Softmax}$ or $\texttt{ReLU}$. We write the output of an MLP layer with parameter $\theta$ as $\texttt{MLP}_\theta(Z)$. Defining    $\norm{\cdot}_{\texttt{MLP}}$ as $\norm{\theta}_{\texttt{MLP}} := \norm{W_1}_{\texttt{op}} + \norm{W_2}_{\texttt{op}}$ and  $\texttt{ResNet}(f, Z):= Z + f(Z)$, we can define an  $L$-layer Transformer with parameter $\theta = (\theta^{(lm)}, \theta^{(la)})_{l \in [L]}$ as
\$
\texttt{TF}_{\theta}(Z):= Z^{(L)},
\$
where the output $Z^{(L)}$ is defined iteratively from  $Z^{(0)} = \texttt{clip}_R(Z):= \min(R, \max(-R, Z))$ and 
\begin{align*}
    Z^{(l)} = \texttt{clip}_R\left(\texttt{ResNet}\left(\texttt{MLP}_{\theta^{(la)}}, \texttt{ResNet}\left(\texttt{M-(L)SA}_{\theta^{(lm)}}, Z^{(l-1)}\right)\right)\right),
\end{align*} 
for some $R>0$.   
We define a class of Transformers with certain parameters as $\Theta_{d, L, M, d', B_{\texttt{TF}}}:= \{\theta = (\theta^{(lm)}, \theta^{(la)})_{l \in [L], m \in [M]}: \norm{\theta}_{\texttt{TF}}  \leq B_{\texttt{TF}}\}$,  where $M$ is the number of heads of self-attention,  \#\label{equ:def_TF_norm}
\norm{\theta}_{\texttt{TF}} := \max_{l \in [L]}\left\{  \norm{\theta^{(la)}}_{\texttt{M-(L)SA}}  + \norm{\theta^{(lm)}}_{\texttt{MLP}} \right\},
\#
and $B_{\texttt{TF}}>0$ is some constant. When it is clear from the context, we may omit the subscripts and write it as $\Theta$ for simplicity. We assume $R$ to be sufficiently large such  that $\texttt{clip}$ does not take effect on any of our approximation results.

\icml{\safevspace{-11pt}}

\safevspace{-10pt}

\iclr{\safevspace{-9pt}}

\subsection{Performance Metric: Regret}
\safevspace{-7pt}

\iclr{\safevspace{-10pt}}
\icml{\safevspace{-8pt}}

We now introduce \emph{regret}, the core performance metric used in online learning and games. 
For a given algorithm $\mathscr{A}$, 
let \( \pi_{\mathscr{A},t} \) denote the decision policy of the agent at time $t$ generated by $\mathscr{A}$. Then, the regret, which is the difference between the accumulated (expected) loss incurred by implementing $\mathscr{A}$ and that incurred by 
the best-in-hindsight fixed decision, can be defined as  
\safevspace{-18pt}
\iclr{
\safevspace{-23pt}}
\safevspace{-4pt}
\icml{\safevspace{-8pt}}
\$
\text{Regret}_{\mathscr{A}}\left((f_t)_{t\in[T]}\right) :=\sum_{t=1}^T f_t(\pi_{\mathscr{A}, t}) - \inf_{\pi \in \Pi} \sum_{t=1}^T f_t(\pi).
\$ 
\safevspace{-4pt} 
\icml{\safevspace{-12pt}}
\iclr{
\normalsize 
\safevspace{-18pt}}
\safevspace{-12pt}

In the Experts Problem, the definition is instantiated as $\text{Regret}_{\mathscr{A}}(({\ell}_t)_{t \in [T]}):=\sum_{t=1}^T \langle {\ell}_t, \pi_{\mathscr{A}, t} \rangle - \inf_{\pi \in \Pi} \sum_{t=1}^T \langle {\ell}_t,  \pi\rangle$. With bandit-feedback, a common metric may also take further expectation for $\text{Regret}_{\mathscr{A}}$, over the {randomness of the policies $(\pi_{\mathscr{A}, t})_{t\in[T]}$.}  
An algorithm $\mathscr{A}$ is referred to as being \emph{no-regret}, {if $\max_{(f_t)_{t\in[T]}}\text{Regret}_{\mathscr{A}}((f_t)_{t\in[T]})\sim o(T)$}\arxivconfirmed{, i.e., the {worst-case} regret grows sublinearly in $T$}. Known no-regret algorithms include follow-the-regularized-leader \citep{shalev2007primal}, follow-the-perturbed-leader  \citep{kalai2005efficient} (see \Cref{appendix:FTRL} for more details). 
\safevspace{-5pt}

\icml{\safevspace{-8pt}}

In non-stationary online learning, one may also use the metric of \emph{dynamic regret} \citep{zinkevich2003online}, where the \emph{comparator} in the definition also changes over time, as the best decision policy at each time $t$: $\text{D-Regret}_{\mathscr{A}}((f_t)_{t\in[T]}):=\sum_{t=1}^T f_t(\pi_{\mathscr{A},t}) - \sum_{t=1}^T\inf_{\pi \in \Pi}f_t(\pi)$, 
which is a stronger notion than $\text{Regret}_{\mathscr{A}}((f_t)_{t\in[T]})$ in that $\text{Regret}_{\mathscr{A}}((f_t)_{t\in[T]})\leq  \text{D-Regret}_{\mathscr{A}}((f_t)_{t\in[T]})$. 

\iclr{\safevspace{-12pt}}
\icml{\safevspace{-12pt}}

\safevspace{-8pt}
\section{{Do} Pre-Trained LLM{s} Have  Regret? Experimental Validation}
\iclr{\safevspace{-6pt}}
\label{sec:empirical_insights}
\icml{\safevspace{-8pt}}
\safevspace{-11pt}

In this section, we explore the no-regret behaviors of representative LLMs (i.e., \kzedit{mainly} GPT-4 Turbo and GPT-4, \kzedit{together with} GPT-3.5 {Turbo}{, Mixtral-8x7b-instruct, and Llama-3-70B-instruct}),  
in the context of online learning and games.
\arxivconfirmed{All experiments with LLMs are conducted using the public OpenAI \citep{openai2023gpt4} {or LLM Engine \citep{llm-engine} Python API}.}  {We provide some hypothetical intuitions as to why pre-trained LLMs might exhibit no-regret behavior in \Cref{sec:intuition}{, which will be made concrete next}.} 

\safevspace{-11pt}

\icml{\safevspace{-10pt}}
 \iclr{\safevspace{-2pt}}
\paragraph{Interaction protocol.} To enable the sequential interaction with  LLMs, we first describe the setup and objective of our experimental study. At each round, we incorporate the entire history of loss vectors of past interactions into our prompts, as concatenated texts, and ask the LLM agent to determine a policy that guides the decision-making for the next round. Note that since we hope to \emph{evaluate} the sophistication level of pre-trained LLMs through online learning or games, we only provide simple prompts that it should utilize the history information, without providing explicit rules of \emph{how} to make use of the history information, nor asking her to \emph{minimize regret} (in any sense). A detailed description and an ablation study of the prompts are deferred to \Cref{appendix:ablation}{, and an illustration of the protocol for playing repeated games is given in \Cref{fig:game-prompt}.} 

\begin{figure*}[!h]	
\hspace{-25pt}
\centering\includegraphics[width=1.04\textwidth]{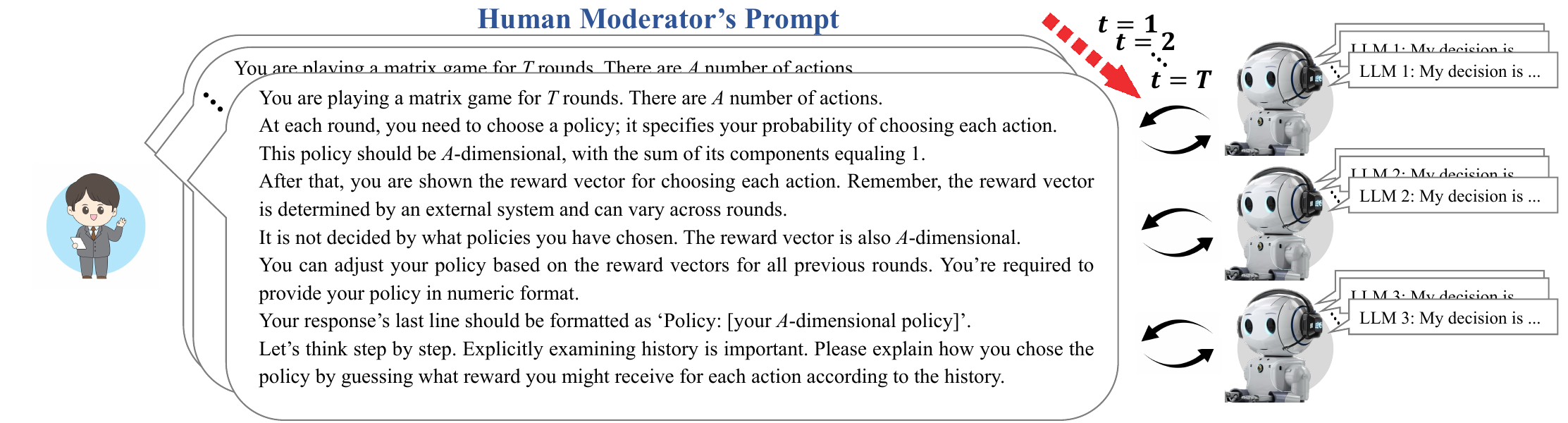}	\caption{Demonstration of the prompts and interaction protocol for multi-player repeated games. A human moderator does not provide the game's payoff matrices to the LLMs. Instead, at each round, the human moderator provides each player's own payoff vector history.}
 \label{fig:game-prompt}
 \end{figure*}

\subsection{Framework for {Sublinear Regret} Behavior Validation}\label{sec:framework_test}

\arxivconfirmed{Before delving into the results, }{we note that to the best of our knowledge, we are not aware of any principled framework for validating {sublinear growth of the regret} with \emph{finite-time}  experimental data. Therefore, }we propose two frameworks below to rigorously validate the no-regret behaviors of algorithms over a \emph{finite} $T${, which might be of independent interest. More details  can be found in  \Cref{appendix:verify}.}
\safevspace{-12pt}

\arxivconfirmed{
\paragraph{Trend-checking framework.}
{We propose a statistical hypothesis test aligned with our objectives:}
\begin{align*}
    H_0&: \text{The sequence } \left(\text{Regret}_{\mathscr{A}}\left((f_\tau)_{\tau\in[t]}\right)/t\right)_{t \in [T]} \text{ does not exhibit a decreasing pattern}
    \\
    H_1&:  \text{The sequence } \left(\text{Regret}_{\mathscr{A}}\left((f_\tau)_{\tau\in[t]}\right)/t\right)_{t \in [T]} \text{ shows a decreasing pattern}.
\end{align*}

\safevspace{-12pt}
Ideally, one should check if $\text{Regret}_{\mathscr{A}}\left((f_\tau)_{\tau\in[t]}\right)/t$ approaches zero (or a negative value) as $t$ goes to infinity. With a finite $T$ value, testing these hypotheses provides a method to quantify this -- whether we reject $H_0$ offers a way to measure it. 
To this end, one needs to count the number of $\text{Regret}_{\mathscr{A}}\left((f_\tau)_{\tau\in[t]}\right)/t - \text{Regret}_{\mathscr{A}}\left((f_\tau)_{\tau\in[t+1]}\right)/(t+1) > 0$, for which we use \Cref{prop:testing} below. We will report the $p$-value of $H_0$, denoted as $p_{trend}$, as the output of this framework.

\begin{restatable}{proposition}{testing}
\label{prop:testing}
    \emph{($p$-value of the null hypothesis)\textbf{.}} Define the event 
    \small
    \$
    \cE(s, T) :=\left\{\text{The~number~of~} \frac{\text{Regret}_{\mathscr{A}}\left((f_\tau)_{\tau\in[t]}\right)}{t} - \frac{\text{Regret}_{\mathscr{A}}\left((f_\tau)_{\tau\in[t+1]}\right)}{t+1} > 0 \text{~~for~~} t = 1, \dots, T \text{~is~at~least~~} s \geq \frac{T-1}{2}\right\}.
    \$
    \normalsize
    Under the assumption that {the null}  hypothesis \( H_0 \) holds, the probability of this event happening is bounded as $\PP_{H_0}(\cE(s, T)) \leq \frac{1}{2^{T-1}} \sum_{t = s}^{T-1}  \begin{pmatrix}
T-1 \\
t
\end{pmatrix}$. 
\end{restatable}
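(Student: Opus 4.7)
The plan is to reduce the claim to the one-sided tail probability of a $\mathrm{Bin}(T-1, 1/2)$ random variable via a classical sign-test argument. Writing $R_t := \text{Regret}_{\mathscr{A}}((f_\tau)_{\tau\in[t]})/t$ and introducing the indicators $X_t := \mathbbm{1}(R_t > R_{t+1})$ for $t\in[T-1]$, the event $\cE(s,T)$ is exactly $\bigl\{\sum_{t=1}^{T-1} X_t \geq s\bigr\}$, so it suffices to bound $\PP_{H_0}\bigl(\sum_{t=1}^{T-1} X_t \geq s\bigr)$.

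Next I would pin down a probabilistic model for the $X_t$'s under $H_0$. The null says the sequence does not display more than half descents, so by the usual symmetry argument underlying a sign test, each sign comparison is at least as likely to favor non-decrease as decrease; the least favorable (boundary) instance within $H_0$ has each $X_t$ distributed as an independent fair coin, giving $\sum_{t=1}^{T-1} X_t \sim \mathrm{Bin}(T-1, 1/2)$. More formally, any specification inside $H_0$ with $\PP(X_t = 1) \leq 1/2$ produces a count that is stochastically dominated by $\mathrm{Bin}(T-1, 1/2)$, via a standard coupling of Bernoulli sequences.

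Combining the two reductions yields
\[
\PP_{H_0}\bigl(\cE(s,T)\bigr) \;\leq\; \PP\bigl(\mathrm{Bin}(T-1, 1/2) \geq s\bigr) \;=\; \frac{1}{2^{T-1}} \sum_{t=s}^{T-1} \binom{T-1}{t},
\]
which is the claimed inequality; the assumption $s \geq (T-1)/2$ ensures we are evaluating the upper tail of a symmetric binomial, so the displayed formula is a meaningful $p$-value rather than a vacuous bound.

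The main obstacle here is conceptual rather than technical: because $H_0$ is phrased as a qualitative statement about a ``pattern'' rather than as a parametric family of distributions, the delicate point is specifying the probability measure under which the bound is taken. The chosen route is the textbook sign-test convention -- absent any systematic downward drift, the signs of consecutive differences behave as i.i.d.\ symmetric Bernoullis. If one prefers a fully non-parametric treatment, the same bound can be recovered by a counting argument over orderings of the ratios $(R_t)_{t\in[T]}$ that are consistent with $H_0$, which requires more careful combinatorial bookkeeping but introduces no new ideas.
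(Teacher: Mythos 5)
Your argument matches the paper's in substance: both interpret $H_0$ as saying the sign indicators $X_t=\mathbbm{1}(R_t>R_{t+1})$ are i.i.d.\ Bernoulli with success probability $p\le 1/2$, so that the count follows $\mathrm{Bin}(T-1,p)$, and then bound its upper tail by that of $\mathrm{Bin}(T-1,1/2)$. You justify the comparison step via stochastic dominance/coupling, whereas the paper passes directly from $\sum_{k\ge s}p^k(1-p)^{T-1-k}\binom{T-1}{k}$ to $2^{-(T-1)}\sum_{k\ge s}\binom{T-1}{k}$ via a term-by-term inequality (which is where the hypotheses $s\ge(T-1)/2$ and $p\le 1/2$ are actually used), but these are two ways of phrasing the same monotonicity fact.
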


\paragraph{Regression-based framework.} 
We propose an alternative approach by 
fitting the data with regression.  
In particular, one can use the data 
\(
\left\{\left(t, \log \text{Regret}_{\mathscr{A}}\left((f_\tau)_{\tau\in[t]}\right)\right)\right\}_{t\in[T]}
\) to fit a  function $g(t) = \beta_0 \log t + \beta_1$, where the estimate of $\beta_0$, i.e., $\hat{\beta}_0$, satisfying  $\hat{\beta}_0<1$ 
may be used to indicate the no-regret behavior, i.e., the \emph{sublinear}  growth of $\text{Regret}_{\mathscr{A}}\left((f_\tau)_{\tau\in[t]}\right)$ over time.  
While being simple, it cannot be directly used when $\text{Regret}_{\mathscr{A}}\left((f_\tau)_{\tau\in[t]}\right)< 0$. Hence, we set $\log \text{Regret}_{\mathscr{A}}\left((f_\tau)_{\tau\in[t]}\right)$ as $-10$ if this happens. {We define $p_{reg}$ as the $p$-value of the regression parameter $\hat{\beta}_0$, and will report  the pair of $(\hat{\beta}_0,p_{reg})$ as the output of this framework.} 
}

\safevspace{-10pt}

\icml{
\icml{\safevspace{-12pt}}
 \paragraph{Trend-checking framework.}
This framework is built upon non-parametric hypothesis testing.
\iclr{
\safevspace{-3pt}
\noindent\begin{minipage}{.62\textwidth}
\setlength{\fboxsep}{-4pt}
\fbox{%
\safevspace{-3pt}
\icml{\safevspace{-10pt}}
    \parbox{\textwidth}{%
    \begin{align*}
        H_0&: (\text{Regret}(t)/t)_{t \in [T]} \text{ does not exhibit a decreasing pattern;} \\
        H_1&: (\text{Regret}(t)/t)_{t \in [T]} \text{ shows a decreasing pattern.}
    \end{align*}
    }%
\safevspace{-2pt}
\icml{\safevspace{-10pt}}
}
\end{minipage}%
\hfill
\begin{minipage}{.37\textwidth}
{
Ideally, one should check if $\text{Regret}(t)/t$ approaches zero (or a negative value) as $t$ goes to infinity. With ~~finite~~ $T$ ~~values, testing these hypotheses} 
\end{minipage}
\\
}
{
\icml{\safevspace{-25pt}}

    \begin{align*}
        H_0&: (\text{Regret}(t)/t)_{t \in [T]} \text{ does not exhibit a decreasing pattern;} \\
        H_1&: (\text{Regret}(t)/t)_{t \in [T]} \text{ shows a decreasing pattern.}
    \end{align*}
    
    \icml{\safevspace{-13pt}}
Ideally, one should check if $\text{Regret}(t)/t$ approaches zero (or a negative value) as $t$ goes to infinity. With finite $T$ values, testing these hypotheses}
provides a method to quantify this—whether we reject $H_0$ offers a way to measure it.
To this end, one needs to count the number of $R(t)/t - R(t+1)/(t+1) > 0$, for which we have \Cref{prop:testing} to give some understanding of the probability it happens with various counts. {We will report the $p$-value of $H_0$ as the output of this framework.}

\icml{\safevspace{-5pt}}

\safevspace{-7pt}
\paragraph{Regression-based framework.} 
One can use the data $\{(t, \log \text{Regret}(t))\}_{t\in[T]}$ to fit a  function $\log \text{Regret}(t) = \beta_0 \log t + \beta_1$, to validate no-regret behaviors by checking the estimate of $\beta_0$ (i.e., $\hat{\beta}_0$) is less than 1.  We define $p_{reg}$ as the $p$-value of the regression parameter $\hat{\beta}_0$, which will also be reported.
}

\icml{\safevspace{-10pt}}

\subsection{Results: Online Learning}
\label{sec:experiments_online} 

\icml{\safevspace{-8pt}}
\safevspace{-8pt} 

We now present the experimental results of pre-trained LLMs in online learning in: 1) (arbitrarily) changing environments, 2) non-stationary environments, and 3) bandit-feedback environments. {Results for 2) and 3) are deferred to Appendices  \ref{appendix:nonstationary} and \ref{appendix:bandit}.}

\icml{We defer a detailed explanation to \Cref{ssec:environment}. For \textbf{arbitrary changing environments}, the average regret (over multiple randomly generated instances) performance is presented in \Cref{fig:online}, where we compare {GPT-3.5 Turbo/}GPT-4 with well-known no-regret algorithms, FTRL with entropy regularization {and FTPL with Gaussian perturbations} (with tuned parameters). It is seen that these pre-trained LLMs are indeed no-regret, and often have smaller regrets than baselines. For \textbf{non-stationary environments}, the average dynamic regret results are presented in \Cref{table:no_regret_statistics}. It can be seen that GPT-4 achieves sublinear dynamic regret and outperforms Restart FTRL/FTPL. For \textbf{bandit-feedback settings}, we compare the performance with the counterparts of FTRL in the bandit-feedback setting, e.g., EXP3 \citep{auer2002nonstochastic} and the bandit-version of FTPL \citep{abernethy2015fighting} in both \Cref{fig:bandit-feedback} and \Cref{table:no_regret_statistics}, where {GPT-3.5 Turbo/}GPT-4 consistently achieves lower regret.} 

\safevspace{-11pt}

\arxivconfirmed{\paragraph{Changing environments.} 
We first consider the setting with  (arbitrarily)  changing environments, which are instantiated as follows:  
1) \emph{Randomly-generated loss sequences}. At every timestep, we generate a random loss vector ${\ell}_t\sim \operatorname{Unif}(\times_{i=1}^d[\min\{x_i, y_i\}, \max\{x_i, y_i\}])$ for $\{x_i, y_i\sim \operatorname{Unif}(0, 10)\}_{i\in[d]}$ or ${\ell}_t\sim \cN(\pmb{\mu}_d, I)$ with 
clipping {to $[0,10]$} 
to ensure boundedness of the loss, where $\pmb{\mu}_d\sim\operatorname{Unif}([0, 10]^d)$. Note that we 
use this as a way to  \emph{systematically} generate potentially arbitrary loss sequences, and also note that our regret was defined for each \emph{realization} of the \emph{random loss vectors} (instead of their expectations as in the definition of regret in \emph{stochastic bandit}  problems), which can be arbitrarily different across timesteps.} 
2) \emph{Loss sequences with certain trends.} Although many real-world environments may change, they often change by following certain patterns. Therefore, we consider two representative trends, the \emph{linear} trend and the \emph{periodic}  (sinusoid) trend. We sample $a, b\sim\text{Unif}([0, 10]^d)$ and let $\ell_t = (b-a)\frac{t}{T}+a$  for the linear trend and $\ell_t = 5(1+\sin(at+b))$ for the periodic trend. In the experiments, we choose $d=2$.  The average regret (over multiple randomly generated instances) performance is presented in \Cref{fig:online}\footnote{\kzedit{We emphasize that the error bars in the figures are \emph{not} associated with the randomness/variance of the \emph{algorithms/LLM-agents}, but with the randomness/variance of the \emph{generation of environment instances}.}}, 
where we compare GPT-4 with well-known no-regret algorithms, FTRL with entropy regularization {and FTPL with Gaussian perturbations} (with tuned parameters). {It is seen that these pre-trained LLMs can achieve sublinear regret in a large portion of the instances, and have sometimes even lower regret values than baselines}. 

 \safevspace{-5pt}

{\paragraph{Behavioral patterns of LLMs.} To understand how LLMs make decisions at each time step, we provided example outputs of LLMs \emph{reasoning} how they generate their policies in \Cref{sec:LLM_explain}. We find that LLMs tend to use the history of the reward vectors by computing their \emph{sum/average}, and tend to introduce \emph{randomization} in decision-making. These are known to be the keys to achieving no-regret behaviors in online learning \citep{hazan2016introduction,cesa2006prediction}}.  

 \begin{figure*}[!t]
 \safevspace{-16pt}
 \hspace{-9.5pt} 
\centering\includegraphics[width=1.02\textwidth]{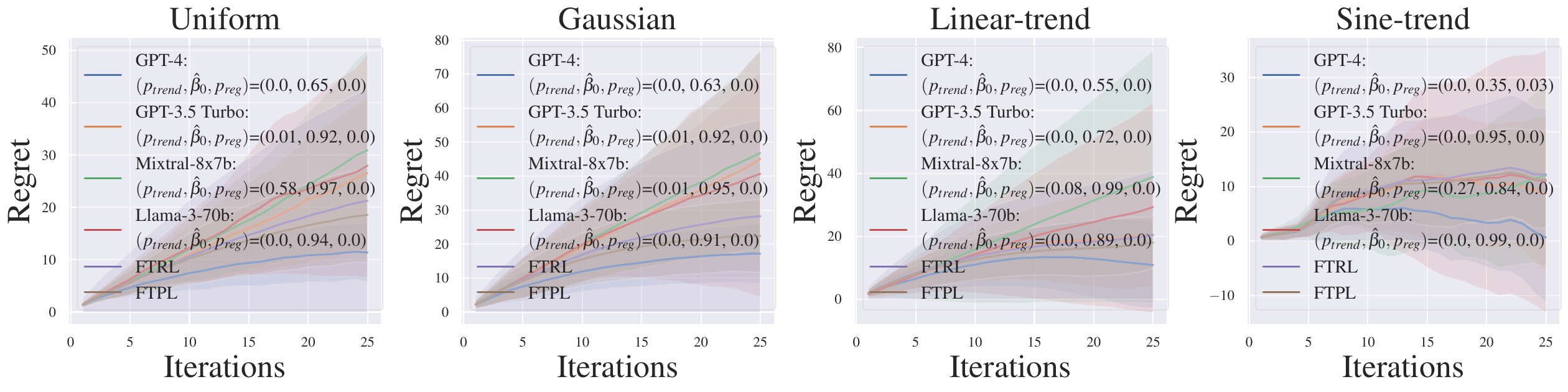}

\caption{Regret of pre-trained LLMs for online learning with full-information feedback. {Notably, both commercial and open-source LLMs {can achieve sublinear regret} as validated by our frameworks and the comparison with FTRL/FTPL, 
 \kzedit{though the performances  of weaker models, GPT-3.5 and open-source ones are worse}.} Interestingly,  the GPT-4 model can even outperform well-known no-regret learning algorithms, FTRL and FTPL. 
}\label{fig:online}

 \end{figure*}

  \begin{figure*}[!t]
 \hspace{-10pt} 
 
\centering\includegraphics[width=1.02\textwidth]{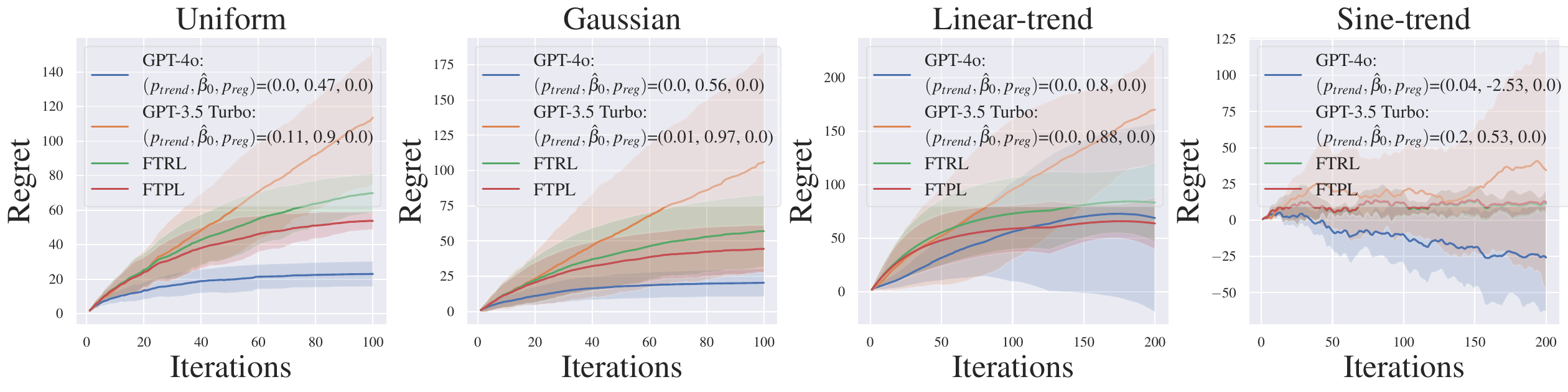}
 
\caption{Regret of pre-trained LLMs for online learning with full-information feedback, with longer horizons of {$T=100$ and $T=200$}. In most cases, the LLMs {can achieve sublinear regret} as validated by our frameworks \kzedit{and the comparison with FTRL/FTPL, though the performances of the weaker model, GPT-3.5, is worse}.}\label{fig:online200}
 \end{figure*}

\paragraph{Longer-horizon results.}  
We also test the robustness and scalability of our empirical findings in more challenging environments. We extend the problem horizon to $T=100$ for the two settings where loss vectors are generated in a stationary way (i.e., \emph{Uniform}  and \emph{Gaussian}), and $T=200$ for the other two non-stationary settings (i.e., \emph{Linear-trend} and \emph{Sine-trend}). Note that since in each round, we need to feed all the previous history to the LLMs, the API costs in fact scale \emph{quadratically} with respect to the horizon $T$.  Therefore, we replace GPT-4 by its cheaper (and more recent) version of GPT-4o. To further scale to even longer-horizon cases with  $T=500$, we \emph{summarize}  the history 
to reduce the prompt length by providing LLMs with the summation of the history loss associated with each action. \kzedit{Similar summary-based input was also used in the concurrent work \cite{krishnamurthy2024can}, where both the \emph{averaged reward} and the \emph{action selection count} of each action were summarized for the (i.i.d.) stochastic bandit setting.} 
The corresponding results are provided in \Cref{fig:online200} and \Cref{table:long-horizon-500}, 
where the LLMs can exhibit no-regret behaviors as validated by our frameworks and the comparison with FTRL/FTPL.

\begin{table}[!h]
    \centering
    \begin{tabular}{lcccc}
        \toprule
        $(p_{trend},\hat{\beta}_o, p_{reg})$ & GPT-4o & FTRL & FTPL \\
        \midrule
        Uniform     & (0.0, 0.85, 0.0) & (0.0, 0.6, 0.0) & (0.0, 0.52, 0.0) \\
        Gaussian    & (0.0, 0.86, 0.0) & (0.0, 0.64, 0.0) & (0.0, 0.68, 0.0) \\
        Linear-trend & (0.02, 0.83, 0.5) & (0.02, 0.76, 0.1) & (0.01, 0.79, 0.0) \\
        Sine-trend  & (0.09, 0.28, 0.0) & (0.01, 0.24, 0.0) & (0.01, 0.26, 0.0) \\
        \bottomrule
    \end{tabular}
    \caption{Longer-horizon ($T=500$). GPT-4o model can still exhibit sublinear regret behaviors 
    as validated by our frameworks and the comparison with FTRL/FTPL.} 
    \label{table:long-horizon-500}

\end{table}

\safevspace{-8pt}

\subsection{Results: Multi-Player Repeated Games}\label{sec:experiments_games}

{\safevspace{-8pt}}

We now consider the setting when multiple  LLMs make online decisions in a \emph{shared} environment repeatedly. 
Specifically, at each round, the loss vectors each agent receives are determined by both her payoff matrix and the strategies of all other agents. Note that the payoff matrix is not directly revealed to the LLM agent, but she has to make decisions in a completely online fashion {based on the payoff vector marginalized by the opponents' strategies} (see \Cref{fig:game-prompt} for an example of the prompt). This is a typical scenario in learning in (repeated) games \citep{cesa2006prediction}. 

{\safevspace{-10pt}}

\arxiv{
\paragraph{{Representative  games.}}
{We first test LLMs on $6$ representative general-sum games (\emph{win-win, prisoner's dilemma, unfair, cyclic, biased,} and \emph{second best}) studied in \cite{robinson2005topology} (cf. \Cref{appendix:six})}. For each type of the game, we conduct $20$ repeated experiments. 

{\safevspace{-10pt}}

\paragraph{{Randomly generated  games.}} 
To further validate the no-regret behaviors of LLMs, we also test on $50$ randomly generated three-player general-sum games, and $50$ randomly generated four-player general-sum games, where each entry of the payoff matrix is sampled randomly from $\operatorname{Unif}([0, 10])$. These are larger and more challenging settings than the structured and representative ones above.  

{\safevspace{-4pt}}

We summarize the experimental results in \Cref{fig:average}, which 
 are similar to the above {in the online setting}: for all types of games, \xyedit{pre-trained LLMs} can achieve sublinear regret, which is often lower than that obtained by FTRL/FTPL for most games. {We provide six instances of three-player general-sum games and six instances of four-player general-sum games in \Cref{fig:333} and \Cref{fig:3333}, respectively. Occasionally, GPT-4 even provides a negative regret value. 

 }

\begin{figure*}[!t]
    \hspace{-7pt}  
    \centering
    \begin{minipage}{0.33\textwidth}
        \centering
        \includegraphics[width=\linewidth]{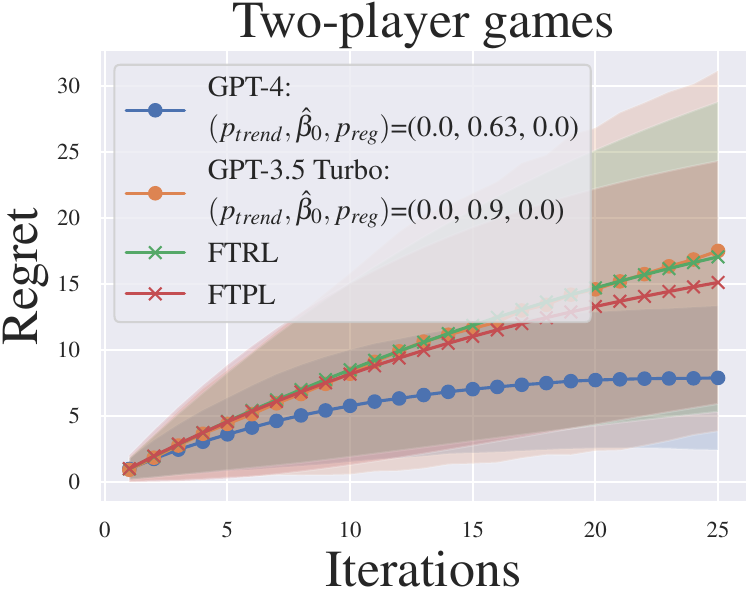}
    \end{minipage}\hspace{-6pt}
    \begin{minipage}{0.33\textwidth}
        \centering
        \includegraphics[width=\linewidth]{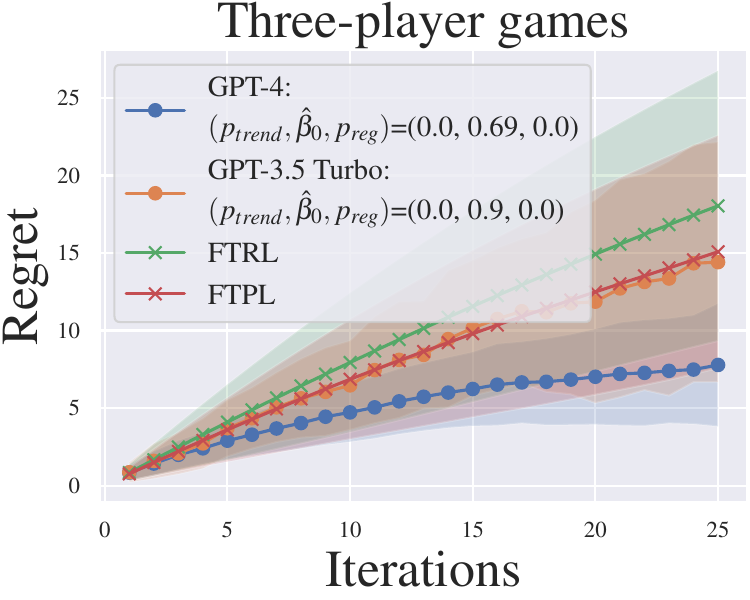}
    \end{minipage}
    \hspace{-9pt}
    \begin{minipage}{0.33\textwidth} 
        \centering
        \includegraphics[width=\linewidth]{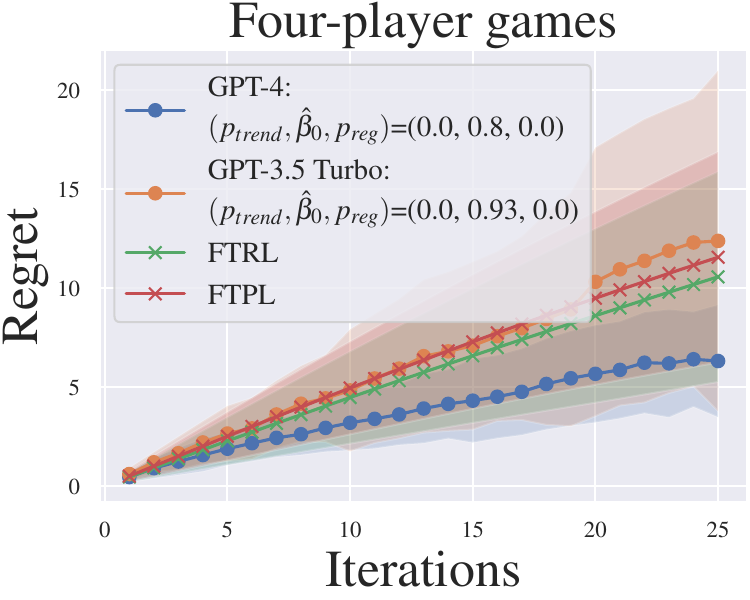}
    \end{minipage}

    \safevspace{-8pt}
\caption{Regret of {pre-trained LLMs}
for repeated games of different sizes. In most cases, both commercial and open-source
LLMs can achieve sublinear regret as validated by our frameworks and the comparison with
FTRL/FTPL. 
{We report the regret of one agent for ease of presentation.}} 
\label{fig:average} 
\safevspace{-8pt}
\end{figure*} 
}

\icml{We introduce the detailed games in \Cref{ssec:environment-game}. The results (\Cref{fig:average}, \ref{fig:333} and \ref{fig:3333}) show that: 1) GPT-4 indeed have no-regret behavior when interacting in repeated games; 2) GPT-4's regrets are comparable with those obtained by the FTRL algorithm, {according to the frameworks in \Cref{sec:framework_test} and the graphic trends.}}

\arxiv{
\begin{figure}[!t]
\centering 
\begin{adjustwidth}{-0.10cm}{-2cm} 
    \begin{subfigure}[b]{0.35\textwidth}
        \includegraphics[height=4cm]{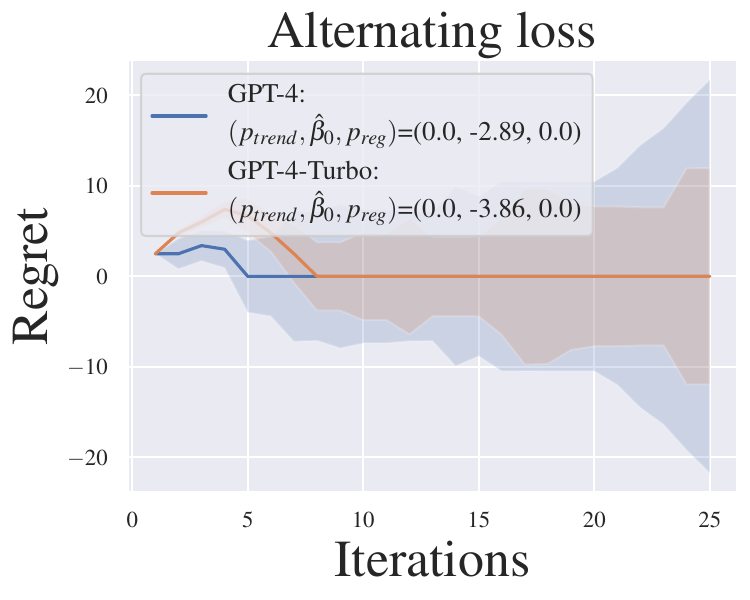}
    \end{subfigure}
    \hspace{-15pt}
    \begin{subfigure}[b]{0.35\textwidth}
        \includegraphics[height=4cm]{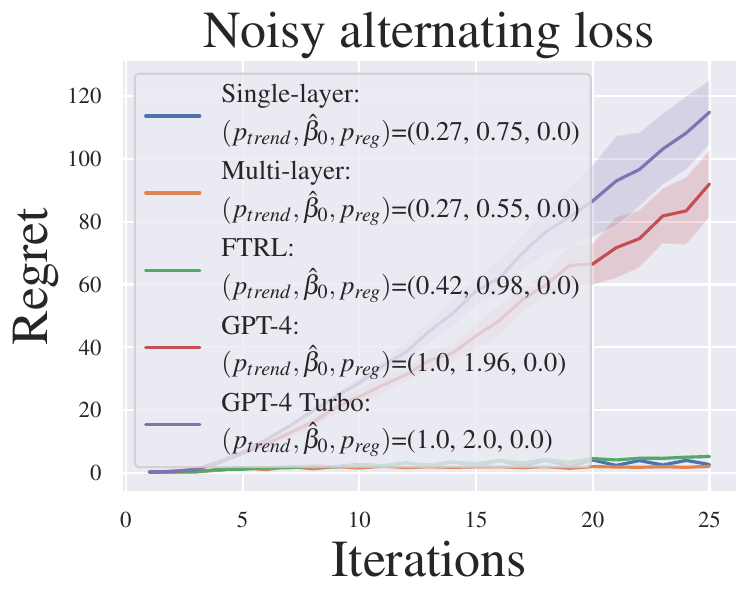}
    \end{subfigure}
   \hspace{-15pt}
    \begin{subfigure}[b]{0.35\textwidth}
        \includegraphics[height=4cm]{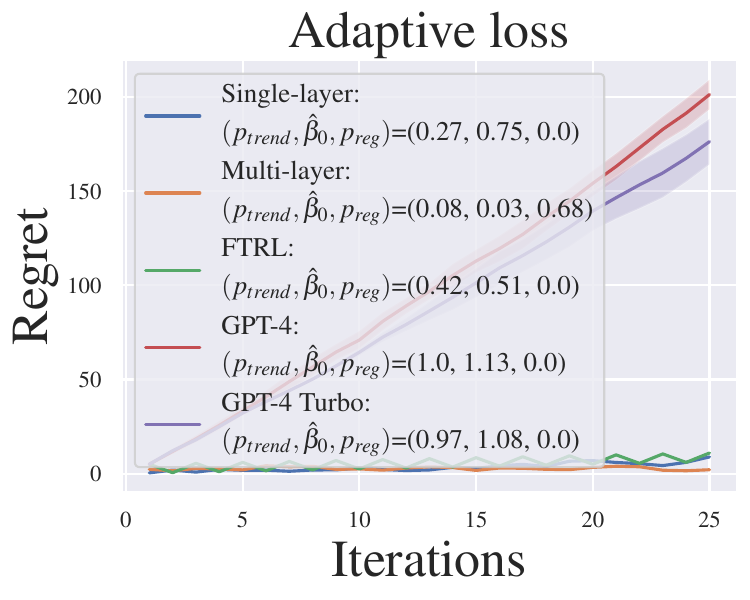}
    \end{subfigure}
    \end{adjustwidth}
\safevspace{-8pt}
\caption{(left) Regret of GPT-4 (Turbo) under the canonical counterexample for FTL  \citep[Chapter 5]{hazan2016introduction}.  (middle, right) Failure of GPT-4 (Turbo)  on two scenarios with regrettable behaviors, while Transformers trained by our new regret-loss {($N=1$)} in \Cref{sec:trained-transformer} {can achieve sublinear regret}.}
    \label{fig:counter}
    \safevspace{-16pt}
\end{figure}  
}

{\safevspace{-9pt}}

\subsection{Pre-Trained LLM Agent{s} Can Still Have Regret}\label{sec:yes_regret_example}

{\safevspace{-6pt}}

The experiments above may suggest the no-regret behaviors of LLMs in online learning and game playing. 
However, is this capability \emph{universal}? We show that the no-regret property can  break for LLM agents if the loss vectors are generated in a more adversarial way.

{\safevspace{-10pt}}

{
\arxiv{
\paragraph{Canonical counterexamples for follow-the-leader.} 
First, we consider two well-known examples that the \emph{follow-the-leader} (FTL) algorithm \citep{shalev2012online} suffers from \emph{linear regret}. 
 
\noindent\textit{\textbf{Example 1:}} $\ell_{1}(1)=5, \ell_{1}(2)=0$ and $\ell_{t}(2-t\%2) = 10, \ell_{t}(1+t\%2) = 0$ for $t\ge 2$ \citep{hazan2016introduction}. 

\noindent\textit{\textbf{Example 2:}}    $\ell_{t}(2-t\%2) = 10, \ell_{t}(1+t\%2) = 0$ for $1 \leq t\leq c$ and $\ell_t (1) = 10$, $\ell_t (2) =0 $ for $c + 1 \leq t \leq  T (= 500)$, for some integer $c$ satisfying $0<c<T$  \citep{feder1992universal}. 

Here, $\%$ denotes the modulo operation. Interestingly, for \textit{\textbf{Example 1}}, GPT-4 agent can easily identify the pattern for the loss sequence that the optimal action \emph{alternates}, thus accurately predicting the loss it will receive and achieving low regret in \Cref{fig:counter}. For \textit{\textbf{Example 2}}, the GPT-4 agent with \emph{raw history} input also provides an impressively lower (negative) regret than FTRL and FTPL  (\Cref{fig:feder}). The GPT-4 agent with \emph{summarized history} input, in contrast,  \kzedit{suffers from much larger  regret than FTRL and FTPL.} {We defer the detailed comparison between using raw history and summarized history to \Cref{fig:feder}, and an explanation of LLMs' behaviors  via predicting the \emph{trend} of the loss instances  to \Cref{ssec:additional_yes_REGRET}.} 
In summary, the GPT-4 agent \kzedit{may predict such worst-case sequences well}, and does not fail in the same way as FTL, which is known to suffer from a lack of randomness in decisions. 

{\safevspace{-6pt}} 

\kzedit{Additionally, the results on \textit{\textbf{Example 2}} also imply that summary-based history input can perform worse than the raw-history-based one in the adversarial setting we consider, while the former was claimed to be the key in succeeding in the i.i.d. stochastic bandit setting \citep{krishnamurthy2024can}. The regret values between these two input types differ significantly, with a $p$-value of $1.2\times 10^{-157}$ under a one-sided independent t-test. 
These results further illustrate the fundamental differences between the settings considered in \cite{krishnamurthy2024can} and ours.} 

{\safevspace{-10pt}}

\paragraph{Noisy alternating loss sequence.} Inspired by the above, we design a new loss sequence that is \emph{similar but less predictable}, by adding some noise to the canonical counterexample. Specifically,
we construct the following (simple) loss sequence with $2$ actions such that ${\ell}_{t}(1+t\%2) = \min(25/t, 10), {\ell}_{t} (2-t\%2)\sim\text{Unif}([9, 10])$  for $t \in [25]$. 

{\safevspace{-10pt}}

\paragraph{Adaptive loss sequence.} We also develop a simpler but more \emph{adaptive} loss sequence that takes the full power of the adversary in our online learning setup. After the GPT-4 agent provides $\pi_t$, we choose ${\ell}_t$ with  ${\ell}_{t}(\argmax_{i}\pi_{ti}) = 10$ and ${\ell}_{t}(3-\argmax_{i}\pi_{ti}) = 0$.

We also report the average regret over $20$ repeated experiments for the latter two settings using GPT-4 and more advanced GPT-4 Turbo in \Cref{fig:counter}, where we cannot reject the hypothesis that GPT-4 (Turbo) has linear regret by either our trend-checking or regression-based framework.}
These observations have thus motivated us to design new approaches to further promote the no-regret behaviors {of the models}{, with additional training,} as to be detailed in \Cref{sec:trained-transformer}. Before it, 
 we first provide some theoretical insights into the observed 
 {sublinear regret behaviors}.

\safevspace{-11pt}
\section{Why Do  Pre-Trained LLM{s} (Not) Have Regret? {A Hypothetical Model and} Some Theoretical Insights} 
\label{sec:theoretical_insights}
\safevspace{-6pt}

{We now provide plausible explanations about the observed no-regret behaviors  of pre-trained LLMs,
which are highly  \emph{hypothetical} by nature, since to the best of our knowledge, the details of pre-training these popular LLMs (e.g., GPT-3.5 {Turbo} and GPT-4), concerning data distribution, training algorithm, etc., have not been revealed. We instead make the explanations based on some existing assumptions 
in {the literature for modeling human behaviors, and} the recent literature on  understanding LLMs and Transformers.}

\safevspace{-3mm}
\arxiv{
\subsection{A (Human)  Decision-Making Model: Quantal Response}\label{sec:human_DM_model_intro}
 \safevspace{-2mm}

A seminal model for human decision-making behaviors is the \emph{quantal response} model,  which assumes that humans are often   {imperfect} decision-makers, and their \emph{bounded rationality} can be modeled through unseen \emph{latent variables} that influence the decision-making process  \citep{mcfadden1976quantal,mckelvey1995quantal}, for which we defer the formal definition and introduction to \Cref{app:quantal-new}. 
In online decision-making, given the \emph{history} information with \emph{multiple} loss vectors, 
we adopt the following generalization of the quantal response model. 
 
\begin{definition}[Quantal response against multiple losses]\label{quantal}
	Given a set of losses \( ({\ell}_i)_{i \in [t]} \), a noise distribution \( \epsilon\sim P_{\text{noise}} \), and $\eta_t>0$, the generalized quantal response against \( ({\ell}_i)_{i \in [t]} \) is defined as
$$  
P_{\text{quantal}}^{\eta_t}\left(a \biggiven  ({\ell}_i)_{i \in [t]}\right):=  P_{\text{quantal}}^{\eta_t}\left( a \bigggiven  \sum_{i=1}^{t} {\ell}_{i}\right)=\PP\left( a \in \argmin_{a'\in\cA}~~z(a')  \right),~\text{where $z=\eta_t \epsilon+\sum_{i=1}^{t} {\ell}_{i} $.}
$$ 
\normalsize
\end{definition} 
\safevspace{-3mm}
{In simpler terms, the generalized quantal response 
is defined as the standard quantal response against the \emph{summation} of the losses. Such a model has been investigated in the learning-in-games and behavioral economics literature (see \Cref{app:quantal-new} for more details). {Such a definition is also aligned with our empirical findings on LLMs' behavioral patterns in \Cref{sec:experiments_online}: i) evaluating the summation/average; ii) introducing randomization in decision-making.}  {To gain more insights into these empirical findings}, we next analyze a case where pre-training under certain assumptions {provably leads to the quantal response behaviors and further yields no-regret guarantees}. 
}
}

\iclr{In \Cref{appendixsec:GQR}, we newly defined quantal response (QR) for the multiple-losses (\Cref{quantal}), which is defined as the standard quantal response against some scaled summation of the losses. We also provided implications of our generalized quantal response from behavioral economics. Moreover, our generalized quantal response is equivalent to the FTPL algorithm \citep{kalai2005efficient} with proper perturbation (\Cref{thm:informal_generalized_quantal_FTPL}).
\safevspace{-8pt}}
\safevspace{-3mm}
\arxiv{\subsection{Case Study: Pre-Training   under Canonical Data Distribution}}
\safevspace{-2mm}
Pre-training of LLMs is predominantly based on  \emph{next-token prediction}. When applying LLMs to sequential decision-making, the model receives the context of the decision-making task as $(x_{1}, x_{2}, \cdots, x_{N})$ and then generates $(x_{N+1}, \cdots, x_{M})$ encoding the \emph{action} for some $N, M\in\NN^+$ and $N<M$, where each $x_i\in\cV$ represents one \emph{natural language token} for $i\in[M]$, and $\cV$ is the finite token set. This process can be conceptualized as \emph{predicting the optimal action} in the form of the next token prediction \citep{yao2022react,shinn2023reflexion, liu2023chain,liu2023reason}. Note that this training procedure
may also appear in the form of \emph{supervised fine-tuning (SFT)} for downstream tasks of decision-making
or question-answering, where optimal action labels may be easier to obtain \citep{cobbe2021training,li2022pre,lewkowycz2022solving}.  
Meanwhile, large models are often {(pre-)trained under several \emph{fixed/stationary} environments} \citep{laskin2022context,lin2023transformers, lee2023supervised,reed2022generalist}, which may limit their ability to handle \emph{arbitrary{/non-stationary}/adversarial} {loss} sequences in online learning. Thus, it is natural to ask: 
\emph{Is it possible to have {no-regret behaviors} emerging as a consequence of this (optimal) action prediction, {under {only} a \emph{fixed} pre-training distribution of the environments?}}  

\begin{wrapfigure}{r}{0.35\textwidth}  
\centering
  \safevspace{-22pt}
  \includegraphics[width=0.35\textwidth]{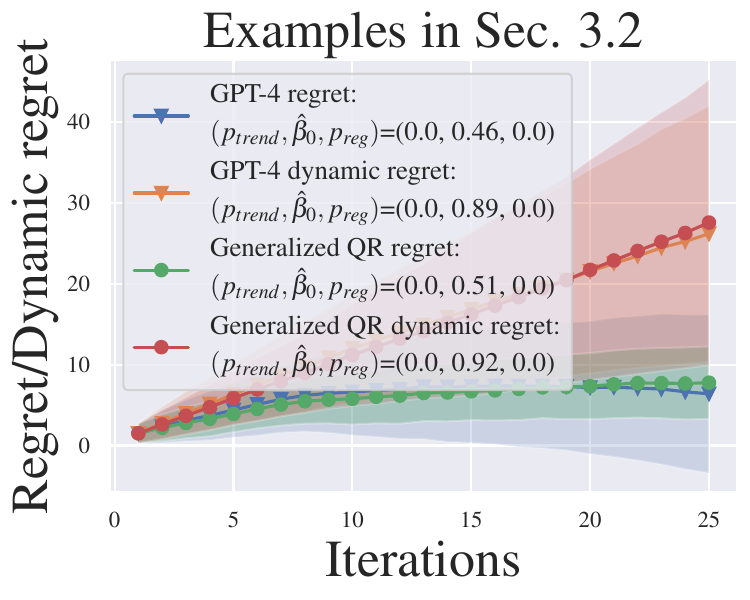}
  \caption{Comparison of GPT-4 with the generalized QR model, 
where the model can very well capture the behavior of the GPT-4 agent for examples in \Cref{sec:experiments_online}.}
  \label{fig:calibration}
\end{wrapfigure}

Here we analyze a standard pre-training objective on a token sequence distribution $x_{1:N_{t+1}}\sim P^{text}_t$ for given $t\in[T]$, 
which is the expected log-likelihood maximization for next-token prediction over $\Theta$, the parameter space of the LLM:  
\small
\safevspace{-6pt} 
\#\label{eq:pretrain}
\max_{\theta\in\Theta}\quad \EE_{x_{1:N_{t+1}}\sim P^{text}_t}\sum_{j=1}^{N_{t+1}}\log\text{LLM}_\theta\left(x_{j}\given x_{1:j-1}\right),
\#
\safevspace{-14pt}
\normalsize 
where we define $\text{LLM}_\theta\left(x_{1}\given x_{1:0}\right)=\text{LLM}_\theta\left(x_{1}\right)$.

For the pre-training distribution, we model it as follows: there exists a latent variable \( z \), representing the loss for the underlying \emph{static} decision-making problem. 
The pre-training dataset, however, only contains \emph{partial observations} $x_{1:N_t}$ (a natural language representation of \( {\ell}_{1:t} \)) of \( z \) due to imperfect data collection, which could be attributed to the fact that $z$ is private to the data-generator (human), representing the actual intention of the human/data-generator. 
Hence, LLM will only be pre-trained with partial and noisy information about \( z \). Meanwhile, we assume that some high-quality action label $x_{N_t+1:N_{t+1}}$ (a natural language representation of \( a \)) with respect to the underlying loss vector $z$ is also available in the dataset{, which could come from user surveys, personal blogs, or data annotation}. We formalize such an assumption:
\begin{assumption}[{Pre-training distribution}]\label{assump:decomp}
Given $T\in\NN^+$, $t\in[T]$, $N_{t+1}\in\NN^+$, there are latent variables $(z, \ell_{1:t})$, $N_1, \cdots, N_t\in[N_{t+1}]$, $N_0=0$, such that 
	$
	\PP(z, \ell_{1:t}, x_{1:N_{t+1}}) =\PP(z, \ell_{1:t})\PP(x_{1:N_t}\given \ell_{1:t})\PP(x_{N_t+1:N_{t+1}}\given z),
	$ 
	and    
	$
P^{text}_t(x_{1:N_{t+1}}):= \PP(x_{1:N_{t+1}})=\int_{z}\int_{\ell_{1:t}} \PP(z, \ell_{1:t}, x_{1:N_{t+1}})d\ell_{1:t} dz.
	$
	Intuitively, tokens $\{x_{N_{i-1}+1:N_{i}}\}_{i\in[t]}$ encode the context, i.e., information for $\ell_{1:t}$, and the user will decode action $a$ from $x_{N_t+1:N_{t+1}}$.
\end{assumption}
\safevspace{-3mm}
To further understand our assumption, we provide an example in \Cref{app:example}, showing how a natural text corpus may satisfy it. {Similar assumptions that suppose the existence of such latent variables in generating the pre-training datasets have also been made recently in  \cite{lee2023supervised, lin2023transformers, liu2023reason}, for understanding the in-context decision-making behaviors of LLMs/Transformers through posterior sampling, for which we defer a detailed comparison to \Cref{app:comparison}. In particular, we show in \Cref{thm:ftpl-main} that if the noise, i.e., $\ell_i-z$ is modeled as Gaussian distributions and $x_{N_t+1:N_{t+1}}$ encodes the optimal action for $z$, the pre-trained LLM provably recovers the prominent human behavior model in \Cref{sec:human_DM_model_intro}, the quantal response model.}

\safevspace{0pt}
\begin{theorem}[Informal: Emergence of no-regret behavior]
    \label{thm:ftpl-main}
    Suppose \Cref{assump:decomp} holds with both the prior distribution of $z$ and the conditional distribution of $\{\ell_i\given z\}_{i\in[t]}$ being Gaussian, and  $x_{N_t+1:N_{t+1}}$ encodes the optimal action for $z$. Then, with the function class of $\text{LLM}_{\theta}$ being expressive enough, and $\theta^\star$ being a  maximizer of \Cref{eq:pretrain}, {the behavior of $\text{LLM}_{\theta^\star}$ follows \Cref{quantal}. Furthermore,} \kzedit{the use of} $\text{LLM}_{\theta^\star}$  can achieve no (dynamic) regret for (non-stationary) online learning with full-information/bandit feedback for arbitrary loss vectors (with bounded variation). 
\end{theorem}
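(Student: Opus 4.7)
The plan is to proceed in three steps: (i) show that the Bayes-optimal next-token predictor under the data model of \Cref{assump:decomp} induces exactly the generalized quantal response of \Cref{quantal}; (ii) invoke expressiveness of $\text{LLM}_\theta$ together with the KL characterization of the MLE to transfer this conclusion to $\text{LLM}_{\theta^\star}$; (iii) identify the resulting decision rule with FTPL under Gaussian perturbation and import existing regret bounds.

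First, I would unpack the pre-training objective. Maximizing the expected log-likelihood in \Cref{eq:pretrain} over a sufficiently expressive class is equivalent, by Gibbs' inequality, to matching the conditional distributions, so $\text{LLM}_{\theta^\star}(x_{N_t+1:N_{t+1}} \mid x_{1:N_t}) = \PP(x_{N_t+1:N_{t+1}} \mid x_{1:N_t})$ almost surely under $P_t^{\text{text}}$. Since $x_{N_t+1:N_{t+1}}$ is assumed to encode the optimal action $a^\star(z) \in \argmin_{a'} z(a')$, and since by \Cref{assump:decomp} the token block $x_{1:N_t}$ is conditionally independent of $z$ given $\ell_{1:t}$ and is a natural-language encoding of $\ell_{1:t}$ (so it is an information-preserving representation), the induced action distribution is
\[
\PP(a \mid x_{1:N_t}) \;=\; \int \mathbbm{1}\bigl(a \in \argmin_{a'} z(a')\bigr)\, \PP(z \mid \ell_{1:t})\, dz.
\]
This reduces the problem to computing the posterior $\PP(z \mid \ell_{1:t})$.

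Second, I would use Gaussian conjugacy. With a Gaussian prior on $z$ and Gaussian likelihoods $\ell_i \mid z$, the posterior $z \mid \ell_{1:t}$ is Gaussian with mean an affine function of $\sum_{i=1}^{t} \ell_i$ and covariance $\Sigma_t$ depending only on $t$ and the prior/noise variances. Writing $z = \mu_t \sum_{i=1}^t \ell_i + \Sigma_t^{1/2} \xi$ for standard Gaussian $\xi$, and rescaling by the positive constant $\mu_t$, the event $\{a \in \argmin_{a'} z(a')\}$ becomes $\{a \in \argmin_{a'} (\eta_t \epsilon + \sum_{i=1}^t \ell_i)(a')\}$ for $\epsilon$ Gaussian and a suitable $\eta_t > 0$. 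This is exactly \Cref{quantal} with Gaussian noise, proving the first (behavioral) half of the theorem.

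Third, for the regret half, I would appeal to the classical equivalence between the generalized quantal response against $\sum_{i \le t} \ell_i$ with Gaussian perturbation and the FTPL algorithm of \citet{kalai2005efficient} with Gaussian perturbations (with a tuned scale $\eta_t$). Standard FTPL analysis then yields $O(\sqrt{T})$ regret against arbitrary bounded loss sequences in the full-information experts setting; for the bandit-feedback case I would plug in the estimator-based FTPL of \citet{abernethy2015fighting}; for the non-stationary case with bounded variation $V_T$, I would use a restart/sliding-window wrapper to obtain dynamic regret of order $V_T^{1/3} T^{2/3}$. The main obstacle I anticipate is the passage from a distributional statement about tokens to a decision rule over actions: one needs the decoding map from $x_{N_t+1:N_{t+1}}$ to $a$ to be well-defined and faithful, and one needs $x_{1:N_t}$ to be an information-preserving encoding of $\ell_{1:t}$ so that conditioning on text is equivalent to conditioning on losses. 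Making these measurability/identifiability conditions precise (and handling the approximation error when $\text{LLM}_\theta$ is only near-optimal rather than exact) is where I expect the bulk of the technical care to go; the Gaussian-conjugacy and FTPL pieces are then essentially mechanical.
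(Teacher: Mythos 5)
Your proposal follows essentially the same route as the paper's actual proof: (i) the KL/Gibbs argument showing $\text{LLM}_{\theta^\star}$ matches $\PP(x_{N_t+1:N_{t+1}}\mid x_{1:N_t})$ is exactly the opening of the paper's proof; (ii) your Gaussian-conjugacy step (posterior on $z$ is $\cN\bigl(\frac{1}{t+1}\sum_i\ell_i,\, \frac{\sigma^2}{t+1}I\bigr)$, then rescale to read it as a quantal response against $\sum_i\ell_i$ with perturbation $\sigma\sqrt{t+1}\,\epsilon$) is precisely the content of the paper's \Cref{lem:quantal}; and (iii) the FTPL identification plus the four regret instantiations (full-info FTPL bound, bandit-feedback FTPL, and a restart wrapper for the dynamic-regret/non-stationary cases) match the paper's \Cref{lem:ftpl-full}, \Cref{lem:ftpl-bandit}, and \Cref{lem:nonstationary} respectively. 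The measurability/faithfulness concern you raise about the token-to-action decoding is exactly what the paper's decoding maps $f,g$ and the conditional-independence structure of \Cref{assump:decomp} are introduced to handle, so your anticipation of where the technical care goes is also on target.
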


{The formal statement and proof are deferred to \Cref{app:ftpl}. The results show that even when pre-training is conducted solely with loss vectors generated from \emph{stationary} distributions  ($\ell_{1:t}$ are i.i.d. conditioned on $z$), it can still enable the \textit{emergence of no-regret behaviors} in online learning against \emph{potentially adversarial losses}. Key in the proof is a connection of pre-trained LLM models to the online learning algorithm of FTPL.  
Furthermore, \Cref{assump:decomp} can be relaxed to better match the actual LLMs' pre-training data distributions from diverse sources (cf. \Cref{sec:data_relaxation}), and the prior distribution of $z$ could also be replaced by a general distribution (cf.  \Cref{prop:extension}). Finally, we point out its implications for playing games in \Cref{app:game}}.   
 
\safevspace{-8pt}
\paragraph{How well can our hypothetical model class predict actual LLMs' behaviors?}  
{To further verify our theoretically-justified model in  \Cref{thm:ftpl-main}, we propose to \emph{estimate} the parameters of $\{\eta_t\}_{t=0}^{T-1}$ in \Cref{quantal} using the interaction data with actual LLMs, and use the estimated model to predict LLMs' behaviors on some test set. In \Cref{fig:calibration}, we show the averaged regret for the LLMs and our estimated model, {where the generalized quantal response can \emph{very well capture} the behavior of the LLM agent for all problem instances in \Cref{sec:experiments_online}, on which the LLMs {oftentimes achieve sublinear regret}, justifying the applicability of our hypothetical model and assumptions. 
}}

Finally, we acknowledge that for existing pre-trained LLMs like GPT-4, {the canonical assumptions above, though may be further relaxed (cf. \Cref{app:relaxtion}), may not hold in general. More importantly, the \emph{supervision labels}, i.e., the optimal action given $z$, may be {sometimes imperfect or unavailable} in  the dataset. These caveats motivate the study in our next section.}

{\safevspace{-10pt}}
\section{Provably Promoting No-Regret Behavior by a New 
Loss}  
\label{sec:trained-transformer}

{\safevspace{-8pt}}

In light of the observations in \Cref{sec:empirical_insights}, we ask the question:    

{\safevspace{-8pt}}

\begin{center}
    \emph{Is there a way to enhance the no-regret property of the models \textbf{without} (optimal) action labels?}  
\end{center}

{\safevspace{-8pt}}

To address this question, we propose to train models with a new \emph{unsupervised learning} loss that naturally provides no-regret behaviors. \kzedit{We will particularly focus on the \emph{Transformer}  architecture \citep{vaswani2017attention} under this new loss, since the Transformer architecture is the backbone of most modern LLMs.} 

{\safevspace{-8pt}}
\subsection{A New Unsupervised Training Loss: \emph{Regret-Loss}} 
\label{ssec:regret-loss}
{\safevspace{-6pt}}

Intuitively, our new training loss is designed to enforce the trained models to minimize regret under an arbitrary sequence of loss vectors. Specifically, 
we define the training loss as 
\safevspace{-2pt}
\begin{equation}   {\cL(\theta)}:=\max_{\ell_1, \dots, \ell_T} ~~~\text{Regret}_{\text{LLM}_\theta}\left((\ell_t)_{t\in[T]}\right) \label{eqn:max-loss}
\end{equation}
where $\norm{\ell_t}_\infty \leq B$ for $t \in [T]$. 
As discussed in   \cite{kirschner2023regret}, directly minimizing the max regret can be  {computationally} challenging, except for superficially simple problems.   
Moreover, 
\Cref{eqn:max-loss} is not necessarily differentiable with respect to the parameter $\theta$, if it does not satisfy the condition of Danskin's Theorem \citep{danskin1966theory}; or even if it is differentiable (i.e., the maximizer of  $(\ell_t)_{t \in [T]}$ is unique), computation of derivatives can be challenging {since we need to calculate $\argmax_{(\ell_t)_{t \in [T]}} \text{Regret}_{\text{LLM}_\theta}((\ell_t)_{t\in[T]})$ while there is an $\inf$ in the definition of regret.}  
Therefore, we provide a general class of surrogate losses to approximate \Cref{eqn:max-loss}: 
\safevspace{-4pt}
\begin{align}
    &\cL(\theta, k, N):= 
\EE\left[\frac{\sum_{j \in [N]} h(\text{Regret}_{\text{LLM}_\theta}((\ell_t^{(j)})_{t\in[T]})) f(\text{Regret}_{\text{LLM}_\theta}((\ell_t^{(j)})_{t\in[T]}), k)}{\sum_{j \in [N]}f(\text{Regret}_{\text{LLM}_\theta}((\ell_t^{(j)})_{t\in[T]}), k)}\right], \label{eqn:regret-loss}
\end{align}
where
$k \in \NN^+$, $N\in\NN^+$, $h: \RR \to \RR^+$ is a continuous function, with  continuous derivative $h'$, and $f(\cdot, k): \RR \to \RR^+$ is a continuous function for each $k \in \NN^+$, satisfying  $\lim_{k \to \infty} \frac{f(R_1, k)}{f(R_2, k)} = \infty \cdot \mathbbm{1}(R_1 > R_2) + \mathbbm{1}(R_1 = R_2)$, {where we use the convention of $\infty \cdot 0= 0$}. These conditions on $h,f$ will be assumed throughout the paper. 
Examples of such an $f$ include $f(x, k) = x^{k}$ and $\exp(kx)$.
We will sample $N$ trajectories of loss sequences $(\ell_t^{(j)})_{t\in [T], j \in [N]}$ from some continuous 
probability distribution supported on $[-B, B]^{T \times N}$ (without other additional statistical assumptions),  {and the expectation in \Cref{eqn:regret-loss} is thus taken with respect to this distribution}. 
In \Cref{appendix:mathrigor}, we prove  that {under certain regularity conditions of $f$ and $h$,} we have 
$$\lim_{N, k \to \infty}\cL(\theta, k, N) = h\left(\max_{\ell_1, \dots, \ell_T} \text{Regret}_{\text{LLM}_\theta}((\ell_t)_{t\in[T]})\right),$$
and the uniform convergence of $\cL(\theta, k, N)$:  
$\lim\limits_{N, k \to \infty} \sup_{\theta \in \Theta}\Big|h\left(\max\limits_{\ell_1, \dots, \ell_T} \text{Regret}_{\text{LLM}_\theta}((\ell_t)_{t\in[T]})\right) - \cL(\theta, k, N)\Big| = 0,$  
where $\Theta$ is {a compact} set of the model   {parameters}. Hence, one can expect that minimizing the loss function in  \Cref{eqn:regret-loss} with large enough $k$ and $N$ may promote the trained \kzedit{models} to have a small regret value.   We will hereafter refer to \Cref{eqn:regret-loss} as the \textit{regret-loss}. 

\safevspace{-1mm}
\subsection{
Generalization and Regret  Guarantees of Regret-Loss Minimization} \label{ssec:general-case} 

{\safevspace{-6pt}}
{We first establish a \emph{statistical} guarantee under general parameterizations of $\text{LLM}_\theta$ that are Lipschitz with respect to $\theta$, including the Transformer-based models as used in GPT-4 and most existing LLMs ({see} \Cref{prop:transformer-lipschitz} {for an example with a formal statement}).  
This guarantee focuses on their \emph{generalization ability} when trained to minimize the empirical regret loss {({cf.} \Cref{eqn:empirical_regret_loss}), denoted as $\hat{\cL}(\theta, k, N, N_T)$, by replacing the expectation $\EE$ in \Cref{eqn:regret-loss} {with the empirical mean using  $N_T$ samples.}} We  denote $\hat{\theta}_{k,N, N_T}\in\argmin_{\theta\in\Theta}~\hat{\cL}(\theta,k,N, N_T)$, and present the generalization guarantee in   \Cref{thm:generalization-error}.} Thanks to the uniform convergence of $\cL(\theta, k, N)$ (cf. \Cref{appendix:mathrigor}), we further obtain  the following theorem on the regret guarantee of $\text{LLM}_{\hat{\theta}_{k, N,N_T}}$:  

\begin{restatable}{theorem}{corregret}\label{coro:regret_guarantee}
\emph{(Regret){\bf.}}
Suppose\footnote{Note that these conditions on $h,f$ are in addition to those specified after \Cref{eqn:regret-loss}.} {for any $k\in \NN^+$,} $h,f(\cdot, k)$ are non-decreasing,  and $\log f$ is a supermodular function {(i.e., $ \log f(R_1 , k_1) - \log f(R_1, k_2) \geq \log f(R_2, k_1) - \log f(R_2, k_2)$ for $R_1 \geq R_2$ and $k_1 \geq k_2$)}. {Then, with high probability,  
we have}  
\begin{align*} 
    &h\left(\lim_{N \to \infty}\lim_{k \to \infty} \max_{\norm{\ell_t}_\infty \leq B} \emph{Regret}_{\emph{LLM}_{\hat{\theta}_{k, N,N_T}}} \left((\ell_t)_{t\in[T]}\right)\right)  \leq  h\left(\inf_{{\theta \in \Theta}} \max_{\norm{\ell_t}_\infty \leq B} \emph{Regret}_{\emph{LLM}_{{\theta}}} \left((\ell_t)_{t\in[T]}\right)\right) +   \tilde{\cO}\left(\sqrt{\frac{d_\theta}{{N_T}}}\right).  
\end{align*}
\normalsize
\end{restatable}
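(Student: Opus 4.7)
The plan is to chain together three ingredients: the generalization bound of \Cref{thm:generalization-error} (which controls $|\hat{\mathcal{L}}(\theta,k,N,N_T) - \mathcal{L}(\theta,k,N)|$ uniformly over $\theta \in \Theta$ by $\tilde{\mathcal{O}}(\sqrt{d_\theta/N_T})$ with high probability), the uniform convergence $\sup_{\theta\in\Theta}\bigl|\mathcal{L}(\theta,k,N) - h(R^\star(\theta))\bigr|\to 0$ as $k,N\to\infty$ established in \Cref{appendix:mathrigor} (where I abbreviate $R^\star(\theta):=\max_{\|\ell_t\|_\infty\le B}\text{Regret}_{\text{LLM}_\theta}((\ell_t)_{t\in[T]})$), and the continuity and monotonicity of $h$. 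The monotonicity of $h$ and $f(\cdot,k)$ together with supermodularity of $\log f$ are consumed entirely by (ii): they guarantee that the soft-max-like weighting in $\mathcal{L}(\theta,k,N)$ concentrates on the worst-case $\ell_{1:T}$ as $k\uparrow\infty$, and hence that $h\circ R^\star$ is the uniform limit. Given these three ingredients, no new probabilistic machinery is required -- only a careful interchange of limits.

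\paragraph{Main chain of inequalities.} Fix any reference $\theta^\star\in\Theta$. Using the optimality of $\hat\theta_{k,N,N_T}$ on $\hat{\mathcal{L}}$ and applying the generalization bound on both sides, I would obtain, on the high-probability event,
\begin{align*}
\mathcal{L}(\hat\theta_{k,N,N_T},k,N) &\le \hat{\mathcal{L}}(\hat\theta_{k,N,N_T},k,N,N_T)+\tilde{\mathcal{O}}\!\left(\sqrt{d_\theta/N_T}\right)\\
&\le \hat{\mathcal{L}}(\theta^\star,k,N,N_T)+\tilde{\mathcal{O}}\!\left(\sqrt{d_\theta/N_T}\right)\\
&\le \mathcal{L}(\theta^\star,k,N)+\tilde{\mathcal{O}}\!\left(\sqrt{d_\theta/N_T}\right).
\end{align*}
Next, applied to both the $(k,N)$-dependent sequence $\hat\theta_{k,N,N_T}$ and the fixed point $\theta^\star$, uniform convergence yields $\mathcal{L}(\hat\theta_{k,N,N_T},k,N)=h(R^\star(\hat\theta_{k,N,N_T}))+o(1)$ and $\mathcal{L}(\theta^\star,k,N)=h(R^\star(\theta^\star))+o(1)$ as $k,N\to\infty$; it is crucial that the convergence is uniform and not merely pointwise, since this allows the first identity without requiring the iterates $\hat\theta_{k,N,N_T}$ to themselves converge. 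Substituting and taking the iterated $\limsup$ as $k\to\infty$ then $N\to\infty$, the $o(1)$ terms vanish and
$$\limsup_{N\to\infty}\limsup_{k\to\infty} h\bigl(R^\star(\hat\theta_{k,N,N_T})\bigr)\ \le\ h(R^\star(\theta^\star))+\tilde{\mathcal{O}}\!\left(\sqrt{d_\theta/N_T}\right).$$
Since $\theta^\star\in\Theta$ was arbitrary, I would take the infimum on the right and use monotonicity of $h$ to commute $\inf$ past $h$, giving $\limsup h(R^\star(\hat\theta_{k,N,N_T}))\le h(\inf_{\theta\in\Theta}R^\star(\theta))+\tilde{\mathcal{O}}(\sqrt{d_\theta/N_T})$. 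Continuity of $h$ then moves $h$ across the outer limit, producing the stated bound.

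\paragraph{Anticipated obstacles.} The delicate step is passing to the limit in $k,N$ at the empirical minimizer $\hat\theta_{k,N,N_T}$, which itself depends on $k$ and $N$ and need not converge. This is precisely the reason that \emph{uniform} (rather than pointwise) convergence of $\mathcal{L}(\cdot,k,N)$ to $h\circ R^\star$ must be established in advance, and correspondingly why the monotonicity and supermodularity hypotheses on $(h,f)$ appear in the theorem hypothesis -- they are exactly the conditions under which the uniform limit in \Cref{appendix:mathrigor} holds. A minor secondary issue is that $\inf_{\theta\in\Theta}R^\star(\theta)$ need not be attained; this is handled by choosing an $\varepsilon$-infimizer, pushing through the argument, and letting $\varepsilon\downarrow 0$ afterwards, which does not inflate the rate. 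The high-probability event used throughout is the single event supplied by \Cref{thm:generalization-error}, so no additional union bound over $(k,N)$ is required, and the final rate $\tilde{\mathcal{O}}(\sqrt{d_\theta/N_T})$ is inherited directly from the generalization bound.
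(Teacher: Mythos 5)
Your proposal is essentially correct and follows the same architecture as the paper's proof of \Cref{coro:regret_guarantee}: (i) the ERM generalization inequality from \Cref{thm:generalization-error}, (ii) the uniform-in-$\theta$ convergence $\sup_{\theta\in\Theta}|\cL(\theta,k,N)-h(\max_{\ell}\text{Regret}_{\text{LLM}_\theta})|\to 0$ from \Cref{claim:double-limit}, whose validity is precisely what consumes the monotonicity of $h,f(\cdot,k)$ and the supermodularity of $\log f$ (via \Cref{lem:inc}), and (iii) continuity of $h$ to push it through the limit and monotonicity of $h$ to commute it with the infimum (the paper routes this through \Cref{claim:limliminf}). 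You correctly flag that uniform convergence is the crucial property that allows passing to the limit at the $(k,N)$-dependent iterate $\hat\theta_{k,N,N_T}$ without requiring the iterates themselves to converge, and you correctly observe that the rate is inherited wholesale from \Cref{thm:generalization-error} because that bound is uniform in $N$ and in all sufficiently large $k$. The one small slip is that \Cref{thm:generalization-error} as stated is the ERM excess-risk bound $\cL(\hat\theta_{k,N,N_T},k,N)-\inf_{\theta}\cL(\theta,k,N)\le\tilde\cO(\sqrt{d_\theta/N_T})$, not directly the two-sided uniform control $|\hat\cL-\cL|\le\tilde\cO(\cdot)$; your three-line chain in effect re-derives that excess-risk bound from the underlying uniform deviation control (which is what the paper's covering-number argument establishes on the way to \Cref{thm:generalization-error}), so the derivation is still sound, but you could have invoked \Cref{thm:generalization-error} directly and skipped the re-derivation, exactly as the paper does.
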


We defer the proof of the theorem to \Cref{appendix:proof-generalization-error}.  
Therefore, if additionally, the model parameterization (e.g., Transformers) can \emph{realize} a no-regret algorithm {(as to be shown next)}, 
then \Cref{coro:regret_guarantee} means that with a large enough 
$N_T$, the learned $\text{LLM}_{\hat{\theta}_{k, N,N_T}}$ becomes a \emph{no-regret} learner, i.e., $\text{Regret}_{\text{LLM}_{\hat{\theta}_{k, N,N_T}}}\left((\ell_t)_{t\in[T]}\right)=o(T)$. Finally, as a consequence, it is folklore that when multiple such LLMs interact, a coarse correlated equilibrium will emerge in the long-term (cf. \Cref{cor:noregretcce}).

{\safevspace{-6pt}}

\subsection{Regret-Loss Trained Transformers Can be 
Online Learning Algorithms} 
\label{ssec:opt}
{\safevspace{-6pt}}

Despite the generality of the previous results, one cannot use an \emph{infinitely large} $N$ and $k$ in practice. Hence, we next provide results when $N$ is finite, for the architecture of  \emph{Transformer} models   \citep{vaswani2017attention}. 
We focus on single-layer (linear) self-attention models, as in most recent theoretical studies of Transformers \citep{ahn2023transformers, zhang2023trained, mahankali2023one}, and 
{$N =1$. {Note that in this case, the choice of $f$ (and thus $k$) is not relevant.} Thus, throughout this subsection, we drop superscript $(j)$ in \Cref{eqn:regret-loss}}. {We sample $\ell_t$ for $t \in [T]$ as  realizations of some random variable $Z$}, where we assume that $Z$ is {symmetric about zero}, and $\text{Var}(Z) = \Sigma\succ 0$.  

{\safevspace{-6pt}}
We consider the single-layer {\emph{linear}} self-attention model as follows{, for which we can show that the \emph{global optimizer} of our regret-loss can automatically lead to a no-regret learning algorithm}: 
\begin{align} 
    g(Z_t; V, K, Q, v_c, k_c, q_c)= \sum_{i=1}^{t} (V\ell_i + v_c) \left((K\ell_i + k_c)^\intercal \cdot (Qc + q_c)\right).  \label{eqn:unconstrained-single-linear-transformer}
\end{align}
\safevspace{-3.3mm}
\begin{restatable}{theorem}{linear}
\label{thm:single-layer-regret-minimizer} 
{Consider the policy space $\Pi = B(0, R_{\Pi},\norm{\cdot})$ for some $R_{\Pi}>0$.} 
The configuration {of a single-layer linear self-attention model {in \Cref{eqn:unconstrained-single-linear-transformer}} $(V,K,Q,v_c, k_c, q_c)$ such} that $K^\intercal (Qc + q_c) = v_c = \pmb{0}_d$ and  $V =  -2R_{\Pi}\Sigma^{-1} \EE \left( \norm{\sum_{t=1}^T \ell_t} \ell_1\ell_2^\intercal \right) \Sigma^{-1}$ is {a global optimal solution} of \Cref{eqn:regret-loss} with $N=1$, $h(x) = x^2$. 
Moreover, every global optimal configuration of \Cref{eqn:regret-loss} within the parameterization class of \Cref{eqn:unconstrained-single-linear-transformer} has the same output function $g$. Additionally, if $\Sigma$ is a diagonal matrix, {then}   plugging any global optimal configuration into \Cref{eqn:unconstrained-single-linear-transformer}, {and projecting the output with $\texttt{Proj}_{\Pi, \norm{\cdot}}$}  
\xyedit{is equivalent to FTRL with an $L_2$-regularizer.} 
\end{restatable}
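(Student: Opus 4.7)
The plan is to parametrize the output of the single-layer attention, use the distributional symmetry of $Z$ to restrict attention to linear-in-sum predictors, and then solve a quadratic first-order optimality condition. Writing $u := Qc+q_c$, $a := K^\intercal u$, and $\beta := k_c^\intercal u$, the model in \eqref{eqn:unconstrained-single-linear-transformer} unfolds into
\begin{align*}
g(Z_t) \,=\, V\Big(\sum_{i=1}^t \ell_i\ell_i^\intercal\Big) a \,+\, \beta V \sum_{i=1}^t \ell_i \,+\, v_c\, a^\intercal \sum_{i=1}^t \ell_i \,+\, t\beta v_c,
\end{align*}
so the realizable output functions are parametrized by the quadruple $(V,v_c,a,\beta)$. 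The agent plays $\pi_{t+1} = \texttt{Proj}_{\Pi,\|\cdot\|}(g(Z_t))$, and since $\Pi = B(0,R_\Pi,\|\cdot\|)$ the regret equals $\sum_{t=1}^T\langle\ell_t,\pi_t\rangle + R_\Pi\|s_T\|$, with $s_T := \sum_{t=1}^T \ell_t$.

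The next step is a symmetry reduction: by exploiting the invariance of the law of $\ell_{1:T}$ under the global sign-flip $\ell\mapsto -\ell$ (because $Z$ is symmetric about zero) and the oddness of $\texttt{Proj}_{\Pi,\|\cdot\|}$ on an origin-centered ball, I will show that the optimum over $(V,v_c,a,\beta)$ is attained inside the subclass $\{a=0,\,v_c=0\}$, where the output collapses to $g(Z_t) = Ws_t$ with $W := \beta V$. Decomposing $g = g_{\mathrm{odd}} + g_{\mathrm{even}}$ along this sign-flip, $g_{\mathrm{odd}}(Z_t)=(\beta V + v_c a^\intercal)\sum_i \ell_i$ is a linear function of $s_t$, while $g_{\mathrm{even}}(Z_t)=V(\sum_i \ell_i\ell_i^\intercal)a + t\beta v_c$ is an even perturbation. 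Coupling each sample $\ell$ with its twin $-\ell$ and applying a parallelogram-law argument, the expected squared regret splits into a contribution driven by $g_{\mathrm{odd}}$ plus a nonnegative term that is minimized at $g_{\mathrm{even}}\equiv 0$; this in turn admits the $(a,v_c)=(0,0)$ configuration as a minimizer.

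With the reduced form $g(Z_t)=Ws_t$, the objective $\E[\text{Regret}^2]$ becomes a quadratic in $W$. Using $\E[\ell_i\ell_j^\intercal]=\mathbbm{1}(i=j)\Sigma$ from the i.i.d.\ symmetric losses, the quadratic part reduces to a bilinear form with $\Sigma$ on either side of $W$, while the cross-term $2R_\Pi\|s_T\|\sum_t\langle\ell_t,Ws_{t-1}\rangle$ collapses, after pooling indices by exchangeability, to a scalar multiple of $\langle W,\,\E[\|s_T\|\,\ell_1\ell_2^\intercal]\rangle$. Setting the gradient to zero yields $W^\star = -2R_\Pi\,\Sigma^{-1}\,\E[\|s_T\|\,\ell_1\ell_2^\intercal]\,\Sigma^{-1}$, matching the theorem's formula for $V$ under the implicit normalization $\beta=1$; any other optimal configuration satisfies $\beta V = W^\star$ and thus produces the same output function, establishing uniqueness of $g$. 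For diagonal $\Sigma$, independence of coordinates of $Z$ combined with a two-step sign-flip on $(\ell_1)_j$ and the symmetric auxiliary $A_j := \sum_{i\ge 2}(\ell_i)_j$ gives $\E[\|s_T\|(\ell_1)_j(\ell_2)_k]=0$ for $j\ne k$, so $W^\star$ is a negative-diagonal matrix, and $\pi_t = \texttt{Proj}_{\Pi,\|\cdot\|}(W^\star s_{t-1})$ coincides with the FTRL iterate $\arg\min_{\pi\in\Pi}\sum_{i<t}\langle\ell_i,\pi\rangle + \pi^\intercal\Lambda\pi$ with $\Lambda := -\tfrac12 (W^\star)^{-1}$, i.e.\ FTRL with an (anisotropic) $L_2$ regularizer.

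The main obstacle is the symmetry-reduction step: rigorously dismissing the ``even'' contributions in the presence of the non-linear projection. On the unprojected output a sign-flip symmetrization is immediate, but $\texttt{Proj}_{\Pi,\|\cdot\|}$ is non-smooth and does not admit a direct Jensen argument at the policy level. I plan to overcome this via the pairwise $(\ell,-\ell)$ coupling described above, which writes $\tfrac12(\text{Regret}(\ell)^2 + \text{Regret}(-\ell)^2)$ as a sum of two decoupled contributions (``odd'' and ``even'' in the predictor) that can be minimized separately, forcing $g_{\mathrm{even}}\equiv 0$ at the optimum.
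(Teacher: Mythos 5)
Your overall plan has the right shape and your final answer and FTRL interpretation match the theorem, but there is a genuine gap at exactly the step you flagged as the main obstacle, and it traces back to how you set up the loss. You write the regret with the \emph{projected} policy $\pi_{t+1}=\texttt{Proj}_{\Pi,\|\cdot\|}(g(Z_t))$ inside. The paper's proof does not: it minimizes $\EE\bigl(\sum_{t=1}^T\langle\ell_t,g(Z_{t-1})\rangle+R_{\Pi}\|\sum_{t}\ell_t\|\bigr)^2$ with the \emph{raw} self-attention output $g(Z_{t-1})$ as the policy; $\texttt{Proj}_{\Pi,\|\cdot\|}$ enters only afterwards, in the separate statement that the projected optimal output coincides with an FTRL iterate. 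This difference is decisive for your argument. With the raw output the $(\ell,-\ell)$ coupling truly decouples: $R(\ell)=R_{\text{even}}(\ell)+R_{\text{odd}}(\ell)$ with $R_{\text{even}}=\sum_t\langle\ell_t,g_{\text{odd}}(Z_{t-1})\rangle+R_{\Pi}\|\sum_t\ell_t\|$ (even in $\ell$) and $R_{\text{odd}}=\sum_t\langle\ell_t,g_{\text{even}}(Z_{t-1})\rangle$ (odd in $\ell$), so $\EE[R^2]=\EE[R_{\text{even}}^2]+\EE[R_{\text{odd}}^2]$ and only the second term depends on $g_{\text{even}}$, whence $g_{\text{even}}\equiv 0$ is optimal and is realizable by taking $a=0$ and $v_c=0$. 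With the projection inside, however, $\pi_t(-\ell)=\texttt{Proj}_{\Pi,\|\cdot\|}(-g_{\text{odd}}(Z_{t-1})+g_{\text{even}}(Z_{t-1}))\neq-\pi_t(\ell)$ unless $g_{\text{even}}=0$ already, so the parallelogram split produces expressions that still entangle $g_{\text{odd}}$ and $g_{\text{even}}$ through the nonlinear projection, and there is no Jensen-type argument to rescue the decoupling. As written, your proposal cannot close this step.

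Once the loss is read without projection, your symmetrization is a clean alternative to the paper's route, which instead imposes stationarity in $\delta$ (yielding $\delta=-\mathbb{A}\Sigma\beta$ in the paper's notation), expands the square, removes odd cross-terms by the sign-symmetry of $Z$, and is left with a nonnegative residual $\tfrac{T(T-1)}{2}\beta^\intercal\EE\bigl[(\ell\ell^\intercal-\Sigma)\mathbb{A}^\intercal\Sigma\mathbb{A}(\ell\ell^\intercal-\Sigma)\bigr]\beta$; proving this vanishes only when the $\mathbb{A}\beta$ contribution is identically zero requires a nondegeneracy/Jacobian argument about $v\mapsto vv^\intercal$ on the support of $Z$. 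Your route sidesteps that calculation, but to obtain the theorem's uniqueness claim you would still need to argue that $\EE[R_{\text{odd}}^2]=0$ forces $g_{\text{even}}\equiv 0$ on the support, which is essentially the same nondegeneracy statement in different clothing. Finally, you leave the cross-term's scalar as ``a scalar multiple'' while asserting the theorem's $-2R_{\Pi}$ coefficient under the normalization $\beta=1$; carrying the first-order condition through actually gives $\mathbb{C}^\star=-R_{\Pi}\Sigma^{-1}\EE[\|\sum_t\ell_t\|\,\ell_1\ell_2^\intercal]\Sigma^{-1}$, so the theorem's displayed $V$ absorbs a factor tied to the unspecified scalar $k_c^\intercal(Qc+q_c)$; you should track that scalar explicitly rather than normalizing it to $1$.
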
  
\safevspace{-2mm}

\Cref{thm:single-layer-regret-minimizer} {not only} shows the \emph{capacity} of self-attention {models}: it can {realize} {online learning} algorithms, {but also shows, more importantly, that minimizing our new regret-loss may \emph{automatically} produce it. 
In particular, one does not need to {hard-code}  the parameters of the Transformer to implement no-regret algorithms.} Under single-layer self-attention parameterization (with softmax), we can also show that a \emph{stationary point} of the loss function (\Cref{eqn:regret-loss}) can lead to FTRL (cf.  \Cref{appendix:detailed-softmax}). Some potential generalizations of the results are also discussed in \Cref{appendix:conj6}.

{\safevspace{-4pt}}

\safevspace{-1mm}
\subsection{Experimental Results for {Regret-Loss Trained Transformers}} 
\label{ssec:experiemnt}
 {\safevspace{-6pt}}

{We now provide {experimental results} for minimizing our \emph{regret-loss} with the Transformer models{, and evaluate in the following environments}: 1) randomly-generated loss sequences (\Cref{fig:single-layer-ftrl1}); 2) loss sequences with certain trends (\Cref{fig:trend-att});  3) repeated games (\Cref{fig:game}); and 4) counterexamples for pre-trained LLMs to be regrettable  (\Cref{fig:counter}). Training setup can be found in \Cref{appendix:training-detail}.} \conf{We provide detailed experimental settings in \Cref{detailed-experiment-setting-ch5}.} We also provide an ablation study for optimizing \Cref{eqn:regret-loss}  in \Cref{appendix:nvariation}. 
 
\paragraph{Randomly generated loss sequences.} 
 We use the same loss vectors as those in \Cref{sec:experiments_online} for randomly generated loss functions, and compare the results with that using GPT-4. The results show that with regret-loss, both the trained single-layer self-attention model and  {the} trained Transformers with multi-layer self-attention structures can achieve comparable regrets as FTRL and GPT-4. The results can be found in \Cref{fig:single-layer-ftrl1}.

\begin{figure}[!t]
   \centering
    \begin{subfigure}[b]{0.36\textwidth}
        \includegraphics[width = \linewidth]{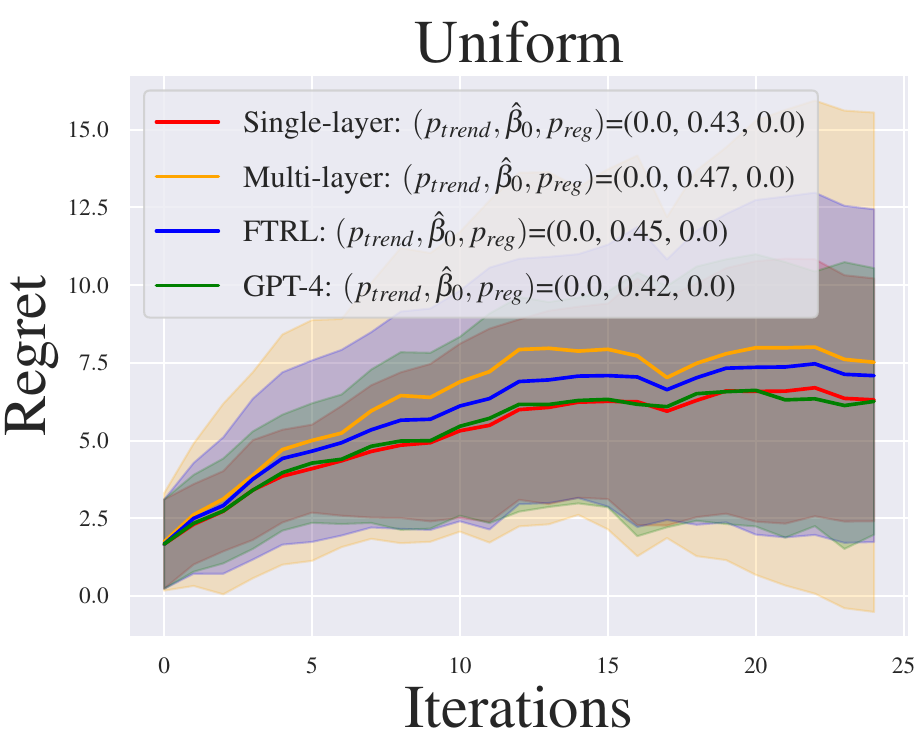}
    \end{subfigure}
    \hspace{60pt}
     \begin{subfigure}[b]{0.36\textwidth}
        \includegraphics[width = \linewidth]{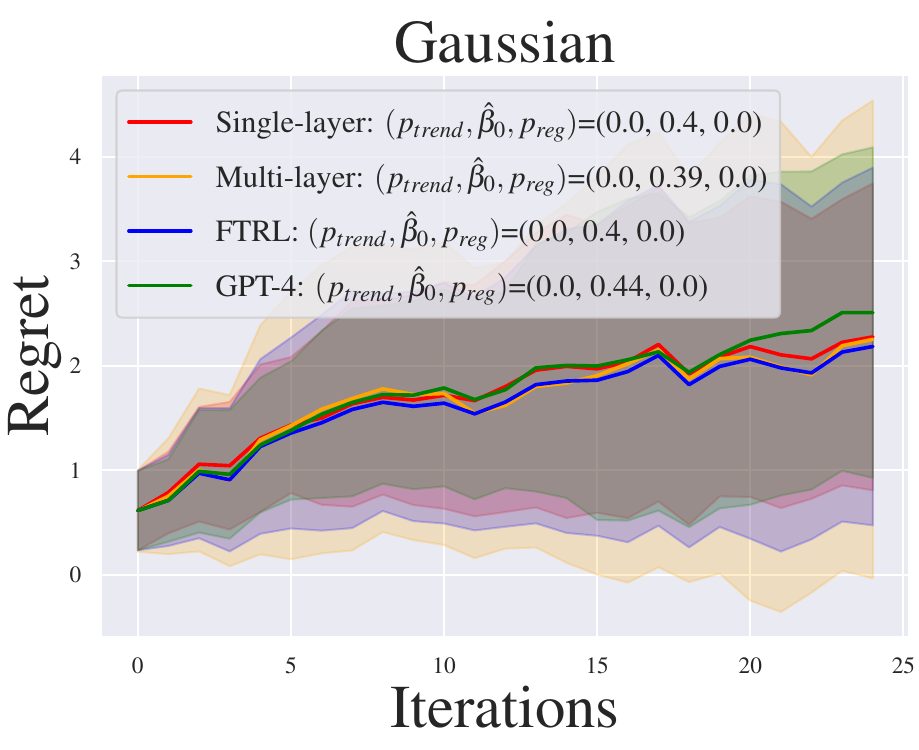}
    \end{subfigure}   
  \caption{Regret performance for the randomly generated loss sequences that are generated by Gaussian with truncation and uniform distribution. 
   No-regret behaviors of single-layer and multi-layer self-attention models are validated by both of our frameworks (low $p$-values and  $\hat{\beta}_0<1$).
  }
          \label{fig:single-layer-ftrl1}
\end{figure}

\begin{figure}[!t]
    \centering
    \hspace{10pt}\begin{subfigure}[b]{0.36\textwidth}
        \includegraphics[width = \linewidth]{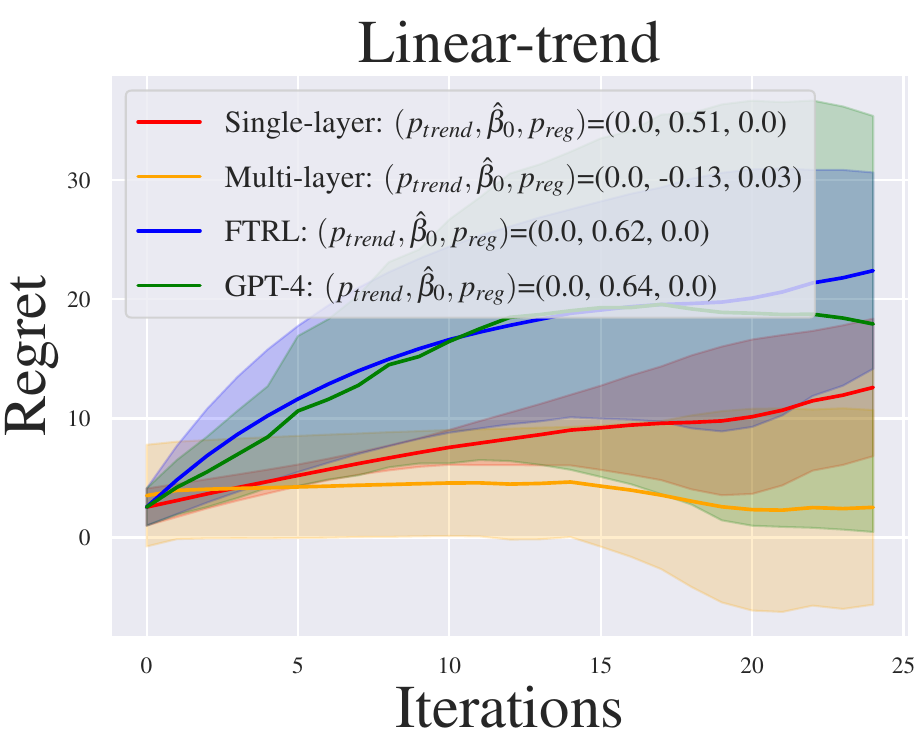}
    \end{subfigure}
    \hspace{56pt}
     \begin{subfigure}[b]{0.36\textwidth}
        \includegraphics[width = \linewidth]{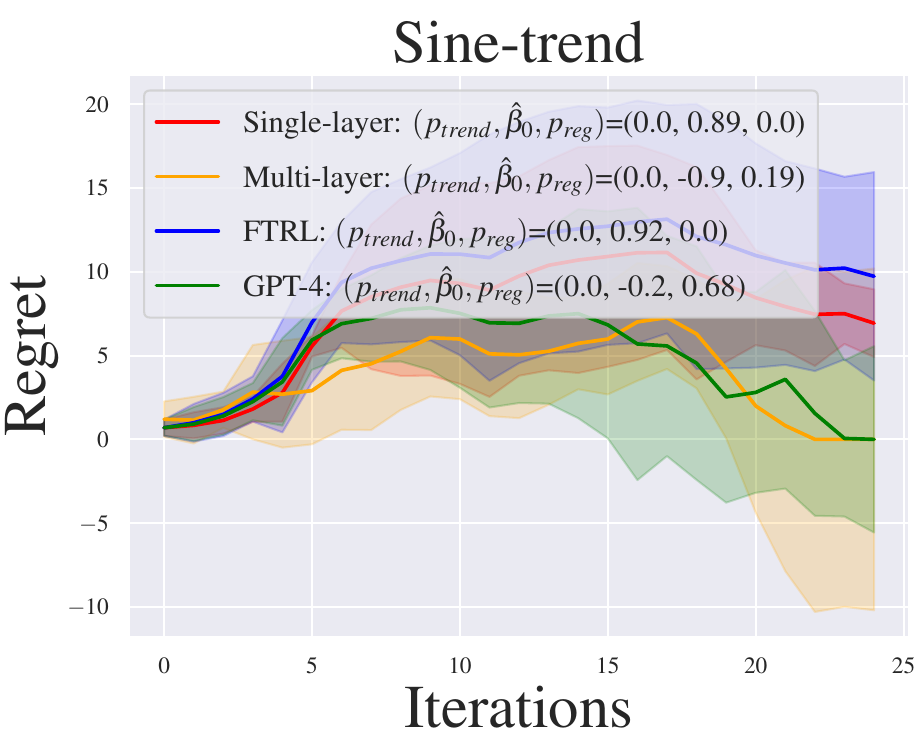}
    \end{subfigure}   
  \caption{Regret performance for the randomly generated loss sequences that are generated by linear-trend and sine-trend.  
   No-regret behaviors of single-layer and multi-layer self-attention models are validated by both of our frameworks (low $p$-values and  $\hat{\beta}_0<1$).}
    \label{fig:trend-att}
\end{figure}

\paragraph{Loss sequences with certain trends.} 
{We investigate the case where the loss sequences have predictable trends such as linear-trend or sine-trend. One might expect that the performance of the trained Transformer would surpass the performance of traditional no-regret learning algorithms such as FTRL, since they may not be an optimal algorithm for the loss sequence with a predictable trend.} We modify the training distribution by changing the distribution of random variable \( Z \) {(which generates the loss vectors $\ell_t$)} to follow two kinds of trends: linear and sine functions. The results, as illustrated in \Cref{fig:trend-att}, show that the trained single-layer self-attention model and the trained Transformer with multi-layer self-attention structures with regret-loss outperformed GPT-4 and FTRL in terms of regret, when the loss sequence is a linear trend. Similarly, \Cref{fig:trend-att} shows that the trained Transformer with multi-layer self-attention structures with regret-loss is comparable to GPT-4 and outperformed FTRL in terms of regret, when the loss sequence is a sine-trend. {Note that the training dataset does not contain the sequence of losses. Nonetheless, by focusing on the overall trend during training, we can attain performance that is either superior to or on par with that of FTRL and GPT-4.}}

\paragraph{Repeated games.}

We then investigate the case of multi-player repeated games. We study 2x2, 3x3x3, 3x3x3x3 games, where each entry of the payoff matrix is sampled randomly from $\operatorname{Unif}([0, 10])$. The results, as illustrated in \Cref{fig:game}, show that the trained single-layer self-attention model and the trained Transformer with multi-layer self-attention structures with regret-loss have a similar performance as that of  FTRL. {However, GPT-4 still outperforms the trained single-layer self-attention model and the trained Transformer with multi-layer self-attention structures in terms of regret. Since {for repeated games (in which the environment faced by the agent can be less adversarial than that in the online setting),} there might be a better algorithm than FTRL (see e.g., \cite{daskalakis2021near}), {while}  our self-attention models have a similar structure as FTRL (\Cref{thm:single-softmax-layer-regret-minimizer} or \Cref{thm:single-layer-regret-minimizer}). {Also, in practical training (with the empirical loss in \Cref{eqn:empirical_regret_loss}), we possibly did not find the exact global minimum or stationary point of the \emph{expected} loss in \Cref{eqn:regret-loss}. Hence, it is possible that GPT-4 may have lower regret than our trained models with the regret-loss.}}

\begin{figure}
\hspace{-12pt}
       \centering
\includegraphics[width=1.02\linewidth]{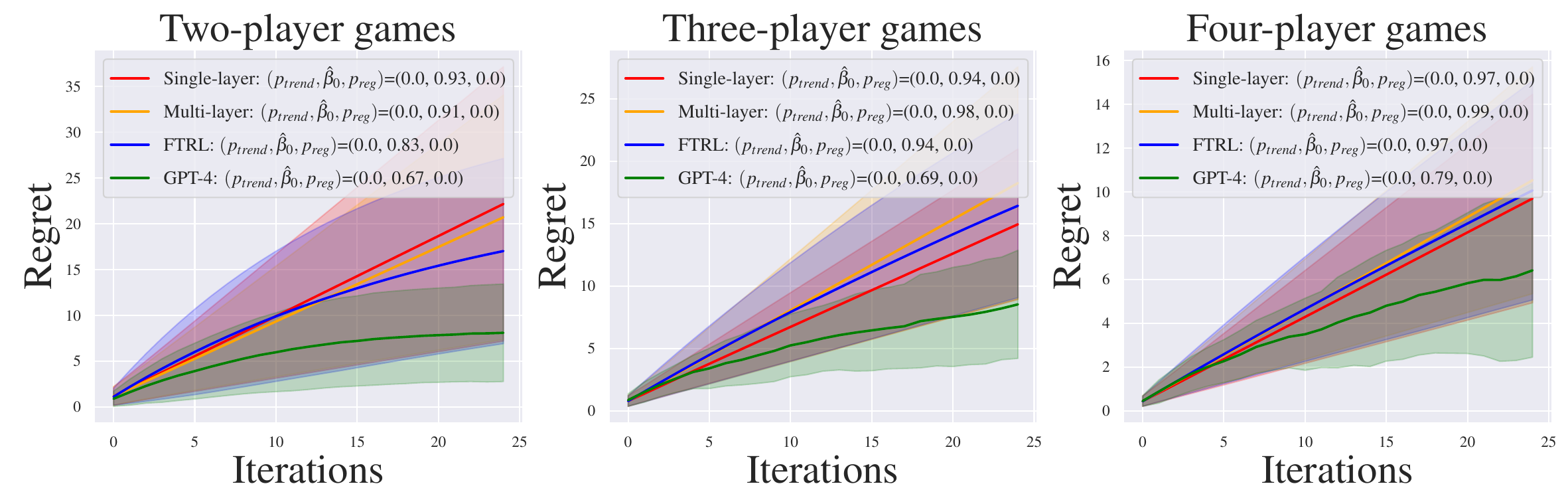}
        \caption{Regret performance for the game with two players, three players, and four players general-sum games. No-regret behaviors of single-layer and multi-layer self-attention models are validated by both of our frameworks (low $p$-values and  $\hat{\beta}_0<1$).}
        \label{fig:game}
\end{figure}

\paragraph{Two scenarios that caused regrettable behaviors of GPT-4.}

 Finally, we investigate the cases that have caused GPT-4 to have regrettable performance in \Cref{sec:experiments_online}. 
 The results, which can be found in \Cref{fig:counter}, show that both the trained single-layer self-attention model and the trained  Transformer with regret-loss can achieve comparable no-regret performance as FTRL, and outperforms that of GPT-4.  This validates that our new unsupervised training loss can address the regrettable cases, as our theory in  \Cref{ssec:general-case} and \ref{ssec:opt} has predicted. 

{\color{black}
\paragraph{Remark on performance discrepancy between single-agent and multi-agent settings.} 
In certain cases above, LLMs can outperform FTRL/FTPL algorithms and single/multi-layer models. This phenomenon is primarily observed when the loss sequence exhibits discernible trends, as seen in the single-agent setting. In \Cref{sec:yes_regret_example}, we explored this behavior using canonical counterexamples for the follow-the-leader algorithm. Specifically, when the loss sequences display obvious or predictable patterns, LLMs can effectively infer the next loss vector based on historical data, enabling near-optimal decisions. This phenomenon can be further formalized through the lens of in-context learning. Conversely, FTRL/FTPL algorithms, constrained by their update rules, tend to produce near-uniform policies in such cases, as do single/multi-layer Transformer models. In \Cref{ssec:additional_yes_REGRET}, we provide ablation studies to support these observations, demonstrating that LLMs leverage trends in the loss sequences by comparing their performance when provided with raw versus summarized historical data. When the loss sequences are summarized (e.g., through aggregation), the resulting loss vectors no longer reflect the trend, leading to significantly diminished performance by the LLMs.  
In contrast, in multi-agent or game settings, the loss sequence trends depend on the behavior of other agents, rendering them inherently less predictable as all agents continually update their behavior policies. This increased unpredictability likely accounts for the comparable or inferior performance of LLMs relative to the FTRL/FTPL algorithms and  single/multi-agent-trained Transformer models in such settings.

\section{Concluding Remarks}\label{sec:conclude}

In this paper, we studied the online decision-making and strategic behaviors of LLMs quantitatively, through the metric of regret. We first examined the no-regret behaviors of several representative pre-trained LLMs in benchmark settings in online learning and game theory. 
We then provided some theoretical insights into the  sublinear regret growth  in the cases we observed, by connecting pre-trained LLMs to the follow-the-perturbed-leader algorithm in online learning, under certain assumptions.  We also identified (simple) cases where pre-trained LLMs failed to be no-regret, and thus proposed a new unsupervised training loss, the \emph{regret-loss}, to provably promote the no-regret behavior of Transformers without the labels of (optimal) actions. We established both experimental and theoretical evidence for the effectiveness of our regret-loss. 

As a first attempt to  rigorously  understand  the online and strategic decision-making behaviors of LLMs through the metric of regret, we  identify the following limitations and list some potential directions for future research: 

\begin{itemize}
    \item There are more than one definitions of (dynamic-)regret in the online learning literature, and we mainly focused on the so-called \emph{external-regret} in the literature. It would be interesting to incorporate other regret metrics, e.g., swap-regret \citep{blum2007external}, which may lead to stronger equilibrium notions in playing repeated games, {and policy regret \citep{arora2012online}, which accounts for adaptive adversaries, into understanding LLM agents' decision-making behaviors.} 
    \item On a related note, no-regret behaviors can sometimes lead to better outcomes in terms of social efficiency \citep{blum2008regret,roughgarden2015intrinsic,nekipelov2015econometrics}. It would thus be interesting to further evaluate the efficiency of no-regret LLM agents in these scenarios, as well as identify new prompts and training losses for LLMs to promote the efficiency of the outcomes. 
    \item {\color{black} Our  \Cref{thm:ftpl-main} towards explaining  why LLMs achieved  sublinear regret is highly hypothetical. Given that LLMs are such complex, random, and black-box systems, there are definitely  behaviors that our \Cref{thm:ftpl-main} cannot fully capture, and there do exist other possible explanations. For example, an alternative in-context-learning-based explanation may be used to account for the \emph{improved} performance of LLMs on specific loss sequences \emph{with trends}  (see \Cref{ssec:additional_yes_REGRET} for more details).  
    However, this explanation may not generalize to the loss sequences \emph{without} obvious trends, complementing our explanations based on the connection to no-regret learning algorithms, which apply to general loss sequences.   
       Hence, it would be interesting to propose and validate other hypotheses for the observed behaviors of LLMs.}  
     \item  Our new regret-loss has exhibited promises in our experiments for training modest-scale Transformers. One limitation is that we have not trained on other larger-scale \emph{language} models and foundation models for decision-making, which is an important ongoing effort.   
    \item To evaluate the performance quantitatively, we focused on online learning and games with \emph{numeric valued}  payoffs. It would be interesting to connect our no-regret-based and game-theoretic framework with existing multi-LLM systems, e.g., debate, collaborative problem-solving, and human/social behavior simulation, with potentially new notions of regret (defined in different spaces) as performance metrics. 
\end{itemize}

\section*{Acknowledgement}
The authors thank Constantinos Daskalakis, Kristian Georgiev, Noah Golowich, Dingwen Kong, Akshay Krishnamurthy, and Aleksander Madry for their helpful feedback. In particular, the authors thank Dingwen Kong for discussing the truncation idea in proving \Cref{lem:asconvergence}, and thank Akshay Krishnamurthy for bringing up a concurrent work that inspired our new experiments for the stochastic bandit setting that strengthened our paper. X.L. and K.Z. acknowledge the support from the U.S. Army Research Laboratory and the U.S. Army Research Office under grant number W911NF-24-1-0085 and NSF CAREER Award-2443704.

\bibliographystyle{iclr2025_conference}
\bibliography{main}

\begin{thebibliography}{139}
\providecommand{\natexlab}[1]{#1}
\providecommand{\url}[1]{\texttt{#1}}
\expandafter\ifx\csname urlstyle\endcsname\relax
  \providecommand{\doi}[1]{doi: #1}\else
  \providecommand{\doi}{doi: \begingroup \urlstyle{rm}\Url}\fi

\bibitem[Abernethy et~al.(2014)Abernethy, Lee, Sinha, and Tewari]{abernethy2014online}
Jacob Abernethy, Chansoo Lee, Abhinav Sinha, and Ambuj Tewari.
\newblock Online linear optimization via smoothing.
\newblock In \emph{Conference on Learning Theory}, pp.\  807--823. PMLR, 2014.

\bibitem[Abernethy et~al.(2015)Abernethy, Lee, and Tewari]{abernethy2015fighting}
Jacob Abernethy, Chansoo Lee, and Ambuj Tewari.
\newblock Fighting bandits with a new kind of smoothness.
\newblock \emph{Advances in Neural Information Processing Systems}, 28, 2015.

\bibitem[Achiam et~al.(2023)Achiam, Adler, Agarwal, Ahmad, Akkaya, Aleman, Almeida, Altenschmidt, Altman, Anadkat, et~al.]{achiam2023gpt}
Josh Achiam, Steven Adler, Sandhini Agarwal, Lama Ahmad, Ilge Akkaya, Florencia~Leoni Aleman, Diogo Almeida, Janko Altenschmidt, Sam Altman, Shyamal Anadkat, et~al.
\newblock Gpt-4 technical report.
\newblock \emph{arXiv preprint arXiv:2303.08774}, 2023.

\bibitem[Aher et~al.(2023)Aher, Arriaga, and Kalai]{aher2022using}
Gati~V Aher, Rosa~I Arriaga, and Adam~Tauman Kalai.
\newblock Using large language models to simulate multiple humans and replicate human subject studies.
\newblock In \emph{International Conference on Machine Learning}, pp.\  337--371. PMLR, 2023.

\bibitem[Ahn et~al.(2023)Ahn, Cheng, Daneshmand, and Sra]{ahn2023transformers}
Kwangjun Ahn, Xiang Cheng, Hadi Daneshmand, and Suvrit Sra.
\newblock Transformers learn to implement preconditioned gradient descent for in-context learning.
\newblock \emph{Advances in Neural Information Processing Systems}, 2023.

\bibitem[Ahn et~al.(2022)Ahn, Brohan, Brown, Chebotar, Cortes, David, Finn, Fu, Gopalakrishnan, Hausman, et~al.]{ahn2022can}
Michael Ahn, Anthony Brohan, Noah Brown, Yevgen Chebotar, Omar Cortes, Byron David, Chelsea Finn, Chuyuan Fu, Keerthana Gopalakrishnan, Karol Hausman, et~al.
\newblock Do as i can, not as i say: Grounding language in robotic affordances.
\newblock \emph{arXiv preprint arXiv:2204.01691}, 2022.

\bibitem[Ahsanullah et~al.(2013)Ahsanullah, Nevzorov, and Shakil]{ahsanullah2013introduction}
Mohammad Ahsanullah, Valery~B Nevzorov, and Mohammad Shakil.
\newblock \emph{An introduction to order statistics}, volume~8.
\newblock Springer, 2013.

\bibitem[Akata et~al.(2023)Akata, Schulz, Coda-Forno, Oh, Bethge, and Schulz]{akata2023playing}
Elif Akata, Lion Schulz, Julian Coda-Forno, Seong~Joon Oh, Matthias Bethge, and Eric Schulz.
\newblock Playing repeated games with large language models.
\newblock \emph{arXiv preprint arXiv:2305.16867}, 2023.

\bibitem[Aky{\"u}rek et~al.(2023)Aky{\"u}rek, Schuurmans, Andreas, Ma, and Zhou]{akyurek2022learning}
Ekin Aky{\"u}rek, Dale Schuurmans, Jacob Andreas, Tengyu Ma, and Denny Zhou.
\newblock What learning algorithm is in-context learning? investigations with linear models.
\newblock \emph{International Conference on Learning Representations}, 2023.

\bibitem[Argyle et~al.(2023)Argyle, Busby, Fulda, Gubler, Rytting, and Wingate]{argyle2023out}
Lisa~P Argyle, Ethan~C Busby, Nancy Fulda, Joshua~R Gubler, Christopher Rytting, and David Wingate.
\newblock Out of one, many: Using language models to simulate human samples.
\newblock \emph{Political Analysis}, 31\penalty0 (3):\penalty0 337--351, 2023.

\bibitem[Arora et~al.(2012{\natexlab{a}})Arora, Dekel, and Tewari]{arora2012online}
Raman Arora, Ofer Dekel, and Ambuj Tewari.
\newblock Online bandit learning against an adaptive adversary: from regret to policy regret.
\newblock In \emph{Proceedings of the 29th International Conference on Machine Learning}, pp.\  1747--1754, 2012{\natexlab{a}}.

\bibitem[Arora et~al.(2012{\natexlab{b}})Arora, Hazan, and Kale]{arora2012multiplicative}
Sanjeev Arora, Elad Hazan, and Satyen Kale.
\newblock The multiplicative weights update method: a meta-algorithm and applications.
\newblock \emph{Theory of computing}, 8\penalty0 (1):\penalty0 121--164, 2012{\natexlab{b}}.

\bibitem[Auer et~al.(2002)Auer, Cesa-Bianchi, Freund, and Schapire]{auer2002nonstochastic}
Peter Auer, Nicolo Cesa-Bianchi, Yoav Freund, and Robert~E Schapire.
\newblock The nonstochastic multiarmed bandit problem.
\newblock \emph{SIAM journal on computing}, 32\penalty0 (1):\penalty0 48--77, 2002.

\bibitem[Bai et~al.(2023)Bai, Chen, Wang, Xiong, and Mei]{bai2023transformers}
Yu~Bai, Fan Chen, Huan Wang, Caiming Xiong, and Song Mei.
\newblock Transformers as statisticians: Provable in-context learning with in-context algorithm selection.
\newblock \emph{Advances in Neural Information Processing Systems}, 2023.

\bibitem[Bakhtin et~al.(2022)Bakhtin, Brown, Dinan, Farina, Flaherty, Fried, Goff, Gray, Hu, et~al.]{meta2022human}
Anton Bakhtin, Noam Brown, Emily Dinan, Gabriele Farina, Colin Flaherty, Daniel Fried, Andrew Goff, Jonathan Gray, Hengyuan Hu, et~al.
\newblock Human-level play in the game of diplomacy by combining language models with strategic reasoning.
\newblock \emph{Science}, 378\penalty0 (6624):\penalty0 1067--1074, 2022.

\bibitem[Balseiro \& Gur(2019)Balseiro and Gur]{balseiro2019learning}
Santiago~R Balseiro and Yonatan Gur.
\newblock Learning in repeated auctions with budgets: {R}egret minimization and equilibrium.
\newblock \emph{Management Science}, 65\penalty0 (9):\penalty0 3952--3968, 2019.

\bibitem[Berge(1877)]{berge1877topological}
Claude Berge.
\newblock \emph{Topological spaces: Including a treatment of multi-valued functions, vector spaces and convexity}.
\newblock Oliver \& Boyd, 1877.

\bibitem[Besbes et~al.(2014)Besbes, Gur, and Zeevi]{besbes2014stochastic}
Omar Besbes, Yonatan Gur, and Assaf Zeevi.
\newblock Stochastic multi-armed-bandit problem with non-stationary rewards.
\newblock \emph{Advances in neural information processing systems}, 27, 2014.

\bibitem[Blum \& Mansour(2007)Blum and Mansour]{blum2007external}
Avrim Blum and Yishay Mansour.
\newblock From external to internal regret.
\newblock \emph{Journal of Machine Learning Research}, 8\penalty0 (6), 2007.

\bibitem[Blum et~al.(2008)Blum, Hajiaghayi, Ligett, and Roth]{blum2008regret}
Avrim Blum, MohammadTaghi Hajiaghayi, Katrina Ligett, and Aaron Roth.
\newblock Regret minimization and the price of total anarchy.
\newblock In \emph{Proceedings of the fortieth annual ACM symposium on Theory of computing}, pp.\  373--382, 2008.

\bibitem[Brookins \& DeBacker(2023)Brookins and DeBacker]{brookins2023playing}
Philip Brookins and Jason~Matthew DeBacker.
\newblock Playing games with {GPT}: What can we learn about a large language model from canonical strategic games?
\newblock \emph{Available at SSRN 4493398}, 2023.

\bibitem[Brown et~al.(2020)Brown, Mann, Ryder, Subbiah, Kaplan, Dhariwal, Neelakantan, Shyam, Sastry, Askell, et~al.]{brown2020language}
Tom Brown, Benjamin Mann, Nick Ryder, Melanie Subbiah, Jared~D Kaplan, Prafulla Dhariwal, Arvind Neelakantan, Pranav Shyam, Girish Sastry, Amanda Askell, et~al.
\newblock Language models are few-shot learners.
\newblock \emph{Advances in Neural Information Processing Systems}, 33:\penalty0 1877--1901, 2020.

\bibitem[Bubeck et~al.(2012)Bubeck, Cesa-Bianchi, et~al.]{bubeck2012regret}
S{\'e}bastien Bubeck, Nicolo Cesa-Bianchi, et~al.
\newblock Regret analysis of stochastic and nonstochastic multi-armed bandit problems.
\newblock \emph{Foundations and Trends{\textregistered} in Machine Learning}, 5\penalty0 (1):\penalty0 1--122, 2012.

\bibitem[Bubeck et~al.(2023)Bubeck, Chandrasekaran, Eldan, Gehrke, Horvitz, Kamar, Lee, Lee, Li, Lundberg, et~al.]{bubeck2023sparks}
S{\'e}bastien Bubeck, Varun Chandrasekaran, Ronen Eldan, Johannes Gehrke, Eric Horvitz, Ece Kamar, Peter Lee, Yin~Tat Lee, Yuanzhi Li, Scott Lundberg, et~al.
\newblock Sparks of artificial general intelligence: Early experiments with gpt-4.
\newblock \emph{arXiv preprint arXiv:2303.12712}, 2023.

\bibitem[Camerer(2011)]{camerer2011behavioral}
Colin~F Camerer.
\newblock \emph{Behavioral game theory: Experiments in strategic interaction}.
\newblock Princeton University Press, 2011.

\bibitem[Cesa-Bianchi \& Lugosi(2006)Cesa-Bianchi and Lugosi]{cesa2006prediction}
Nicolo Cesa-Bianchi and G{\'a}bor Lugosi.
\newblock \emph{Prediction, Learning, and Games}.
\newblock Cambridge University Press, 2006.

\bibitem[Cesa-Bianchi et~al.(1996)Cesa-Bianchi, Long, and Warmuth]{cesa1996worst}
Nicolo Cesa-Bianchi, Philip~M Long, and Manfred~K Warmuth.
\newblock Worst-case quadratic loss bounds for prediction using linear functions and gradient descent.
\newblock \emph{IEEE Transactions on Neural Networks}, 7\penalty0 (3):\penalty0 604--619, 1996.

\bibitem[Chan et~al.(2024)Chan, Chen, Su, Yu, Xue, Zhang, Fu, and Liu]{chan2023chateval}
Chi-Min Chan, Weize Chen, Yusheng Su, Jianxuan Yu, Wei Xue, Shanghang Zhang, Jie Fu, and Zhiyuan Liu.
\newblock Chateval: {T}owards better llm-based evaluators through multi-agent debate.
\newblock \emph{International Conference on Learning Representations}, 2024.

\bibitem[Chen et~al.(2024)Chen, Su, Zuo, Yang, Yuan, Qian, Chan, Qin, Lu, Xie, et~al.]{chen2023agentverse}
Weize Chen, Yusheng Su, Jingwei Zuo, Cheng Yang, Chenfei Yuan, Chen Qian, Chi-Min Chan, Yujia Qin, Yaxi Lu, Ruobing Xie, et~al.
\newblock Agentverse: {F}acilitating multi-agent collaboration and exploring emergent behaviors in agents.
\newblock \emph{International Conference on Learning Representations}, 2024.

\bibitem[Chen et~al.(2023)Chen, Liu, Shan, and Zhong]{chen2023emergence}
Yiting Chen, Tracy~Xiao Liu, You Shan, and Songfa Zhong.
\newblock The emergence of economic rationality of gpt.
\newblock \emph{Proceedings of the National Academy of Sciences}, 120\penalty0 (51):\penalty0 e2316205120, 2023.

\bibitem[Cobbe et~al.(2021)Cobbe, Kosaraju, Bavarian, Chen, Jun, Kaiser, Plappert, Tworek, Hilton, Nakano, et~al.]{cobbe2021training}
Karl Cobbe, Vineet Kosaraju, Mohammad Bavarian, Mark Chen, Heewoo Jun, Lukasz Kaiser, Matthias Plappert, Jerry Tworek, Jacob Hilton, Reiichiro Nakano, et~al.
\newblock Training verifiers to solve math word problems.
\newblock \emph{arXiv preprint arXiv:2110.14168}, 2021.

\bibitem[Cover(1966)]{cover1966behavior}
Thomas~M Cover.
\newblock \emph{Behavior of sequential predictors of binary sequences}.
\newblock Number 7002. Stanford University, Stanford Electronics Laboratories, Systems Theory~…, 1966.

\bibitem[Dai et~al.(2023)Dai, Sun, Dong, Hao, Ma, Sui, and Wei]{dai2022can}
Damai Dai, Yutao Sun, Li~Dong, Yaru Hao, Shuming Ma, Zhifang Sui, and Furu Wei.
\newblock Why can {GPT} learn in-context? language models secretly perform gradient descent as meta-optimizers.
\newblock In Anna Rogers, Jordan Boyd-Graber, and Naoaki Okazaki (eds.), \emph{Findings of the Association for Computational Linguistics: ACL 2023}, pp.\  4005--4019, Toronto, Canada, July 2023. Association for Computational Linguistics.
\newblock \doi{10.18653/v1/2023.findings-acl.247}.
\newblock URL \url{https://aclanthology.org/2023.findings-acl.247}.

\bibitem[Danskin(1966)]{danskin1966theory}
John~M Danskin.
\newblock The theory of max-min, with applications.
\newblock \emph{SIAM Journal on Applied Mathematics}, 14\penalty0 (4):\penalty0 641--664, 1966.

\bibitem[Daskalakis et~al.(2021)Daskalakis, Fishelson, and Golowich]{daskalakis2021near}
Constantinos Daskalakis, Maxwell Fishelson, and Noah Golowich.
\newblock Near-optimal no-regret learning in general games.
\newblock \emph{Advances in Neural Information Processing Systems}, 34:\penalty0 27604--27616, 2021.

\bibitem[Ding et~al.(2022)Ding, Feng, and Rong]{ding2022myopic}
Jingying Ding, Yifan Feng, and Ying Rong.
\newblock Myopic quantal response policy: Thompson sampling meets behavioral economics.
\newblock \emph{arXiv preprint arXiv:2207.01028}, 2022.

\bibitem[Driess et~al.(2023)Driess, Xia, Sajjadi, Lynch, Chowdhery, Ichter, Wahid, Tompson, Vuong, Yu, et~al.]{driess2023palm}
Danny Driess, Fei Xia, Mehdi~SM Sajjadi, Corey Lynch, Aakanksha Chowdhery, Brian Ichter, Ayzaan Wahid, Jonathan Tompson, Quan Vuong, Tianhe Yu, et~al.
\newblock Palm-e: An embodied multimodal language model.
\newblock \emph{International Conference on Machine Learning}, 2023.

\bibitem[Du et~al.(2023)Du, Li, Torralba, Tenenbaum, and Mordatch]{du2023improving}
Yilun Du, Shuang Li, Antonio Torralba, Joshua~B Tenenbaum, and Igor Mordatch.
\newblock Improving factuality and reasoning in language models through multiagent debate.
\newblock \emph{arXiv preprint arXiv:2305.14325}, 2023.

\bibitem[Engel et~al.(2023)Engel, Grossmann, and Ockenfels]{engel2023integrating}
Christoph Engel, Max~RP Grossmann, and Axel Ockenfels.
\newblock Integrating machine behavior into human subject experiments: A user-friendly toolkit and illustrations.
\newblock \emph{Available at SSRN}, 2023.

\bibitem[Erev \& Roth(1998)Erev and Roth]{erev1998predicting}
Ido Erev and Alvin~E Roth.
\newblock Predicting how people play games: {R}einforcement learning in experimental games with unique, mixed strategy equilibria.
\newblock \emph{American Economic Review}, pp.\  848--881, 1998.

\bibitem[Fan et~al.(2023)Fan, Chen, Jin, and He]{fan2023can}
Caoyun Fan, Jindou Chen, Yaohui Jin, and Hao He.
\newblock Can large language models serve as rational players in game theory? a systematic analysis.
\newblock \emph{arXiv preprint arXiv:2312.05488}, 2023.

\bibitem[Feder et~al.(1992)Feder, Merhav, and Gutman]{feder1992universal}
Meir Feder, Neri Merhav, and Michael Gutman.
\newblock Universal prediction of individual sequences.
\newblock \emph{IEEE Transactions on Information Theory}, 38\penalty0 (4):\penalty0 1258--1270, 1992.

\bibitem[Freund \& Schapire(1997)Freund and Schapire]{freund1997decision}
Yoav Freund and Robert~E Schapire.
\newblock A decision-theoretic generalization of on-line learning and an application to boosting.
\newblock \emph{Journal of computer and system sciences}, 55\penalty0 (1):\penalty0 119--139, 1997.

\bibitem[Fu et~al.(2023)Fu, Peng, Khot, and Lapata]{fu2023improving}
Yao Fu, Hao Peng, Tushar Khot, and Mirella Lapata.
\newblock Improving language model negotiation with self-play and in-context learning from ai feedback.
\newblock \emph{arXiv preprint arXiv:2305.10142}, 2023.

\bibitem[Fudenberg \& Kreps(1993)Fudenberg and Kreps]{fudenberg1993learning}
Drew Fudenberg and David~M Kreps.
\newblock Learning mixed equilibria.
\newblock \emph{Games and Economic Behavior}, 5\penalty0 (3):\penalty0 320--367, 1993.

\bibitem[Fudenberg \& Levine(1998)Fudenberg and Levine]{fudenberg1998theory}
Drew Fudenberg and David~K Levine.
\newblock \emph{The theory of learning in games}, volume~2.
\newblock MIT Press, 1998.

\bibitem[Gao \& Pavel(2017)Gao and Pavel]{gao2017properties}
Bolin Gao and Lacra Pavel.
\newblock On the properties of the softmax function with application in game theory and reinforcement learning.
\newblock \emph{arXiv preprint arXiv:1704.00805}, 2017.

\bibitem[Garg et~al.(2022)Garg, Tsipras, Liang, and Valiant]{garg2022can}
Shivam Garg, Dimitris Tsipras, Percy~S Liang, and Gregory Valiant.
\newblock What can transformers learn in-context? a case study of simple function classes.
\newblock \emph{Advances in Neural Information Processing Systems}, 35:\penalty0 30583--30598, 2022.

\bibitem[Giannou et~al.(2023)Giannou, Rajput, Sohn, Lee, Lee, and Papailiopoulos]{giannou2023looped}
Angeliki Giannou, Shashank Rajput, Jy-yong Sohn, Kangwook Lee, Jason~D Lee, and Dimitris Papailiopoulos.
\newblock Looped transformers as programmable computers.
\newblock \emph{International Conference on Machine Learning}, 2023.

\bibitem[Hao et~al.(2023)Hao, Gu, Ma, Hong, Wang, Wang, and Hu]{hao2023reasoning}
Shibo Hao, Yi~Gu, Haodi Ma, Joshua Hong, Zhen Wang, Daisy Wang, and Zhiting Hu.
\newblock Reasoning with language model is planning with world model.
\newblock In Houda Bouamor, Juan Pino, and Kalika Bali (eds.), \emph{Proceedings of the 2023 Conference on Empirical Methods in Natural Language Processing}, pp.\  8154--8173, Singapore, December 2023. Association for Computational Linguistics.
\newblock \doi{10.18653/v1/2023.emnlp-main.507}.
\newblock URL \url{https://aclanthology.org/2023.emnlp-main.507}.

\bibitem[Hazan(2016)]{hazan2016introduction}
Elad Hazan.
\newblock Introduction to online convex optimization.
\newblock \emph{Foundations and Trends{\textregistered} in Optimization}, 2\penalty0 (3-4):\penalty0 157--325, 2016.

\bibitem[Hofbauer \& Sandholm(2002)Hofbauer and Sandholm]{hofbauer2002global}
Josef Hofbauer and William~H Sandholm.
\newblock On the global convergence of stochastic fictitious play.
\newblock \emph{Econometrica}, 70\penalty0 (6):\penalty0 2265--2294, 2002.

\bibitem[Hong et~al.(2024)Hong, Zheng, Chen, Cheng, Zhang, Wang, Yau, Lin, Zhou, Ran, et~al.]{hong2023metagpt}
Sirui Hong, Xiawu Zheng, Jonathan Chen, Yuheng Cheng, Ceyao Zhang, Zili Wang, Steven Ka~Shing Yau, Zijuan Lin, Liyang Zhou, Chenyu Ran, et~al.
\newblock Metagpt: Meta programming for multi-agent collaborative framework.
\newblock \emph{International Conference on Learning Representations}, 2024.

\bibitem[Horton(2023)]{horton2023large}
John~J Horton.
\newblock Large language models as simulated economic agents: What can we learn from homo silicus?
\newblock Technical report, National Bureau of Economic Research, 2023.

\bibitem[Hoyland \& Rausand(2009)Hoyland and Rausand]{hoyland2009system}
Arnljot Hoyland and Marvin Rausand.
\newblock \emph{System reliability theory: {M}odels and statistical methods}.
\newblock John Wiley \& Sons, 2009.

\bibitem[Huang et~al.(2022{\natexlab{a}})Huang, Abbeel, Pathak, and Mordatch]{huang2022language}
Wenlong Huang, Pieter Abbeel, Deepak Pathak, and Igor Mordatch.
\newblock Language models as zero-shot planners: Extracting actionable knowledge for embodied agents.
\newblock In \emph{International Conference on Machine Learning}, pp.\  9118--9147. PMLR, 2022{\natexlab{a}}.

\bibitem[Huang et~al.(2022{\natexlab{b}})Huang, Xia, Xiao, Chan, Liang, Florence, Zeng, Tompson, Mordatch, Chebotar, et~al.]{huang2022inner}
Wenlong Huang, Fei Xia, Ted Xiao, Harris Chan, Jacky Liang, Pete Florence, Andy Zeng, Jonathan Tompson, Igor Mordatch, Yevgen Chebotar, et~al.
\newblock Inner monologue: Embodied reasoning through planning with language models.
\newblock \emph{arXiv preprint arXiv:2207.05608}, 2022{\natexlab{b}}.

\bibitem[Jiang(2023)]{jiang2023latent}
Hui Jiang.
\newblock A latent space theory for emergent abilities in large language models.
\newblock \emph{arXiv preprint arXiv:2304.09960}, 2023.

\bibitem[Kalai \& Vempala(2005)Kalai and Vempala]{kalai2005efficient}
Adam Kalai and Santosh Vempala.
\newblock Efficient algorithms for online decision problems.
\newblock \emph{Journal of Computer and System Sciences}, 71\penalty0 (3):\penalty0 291--307, 2005.

\bibitem[Kasprzak et~al.(2022)Kasprzak, Giordano, and Broderick]{kasprzak2022good}
Miko{\l}aj~J Kasprzak, Ryan Giordano, and Tamara Broderick.
\newblock How good is your gaussian approximation of the posterior? finite-sample computable error bounds for a variety of useful divergences.
\newblock \emph{arXiv preprint arXiv:2209.14992}, 2022.

\bibitem[Kirschner et~al.(2023)Kirschner, Bakhtiari, Chandak, Tkachuk, and Szepesvari]{kirschner2023regret}
Johannes Kirschner, Alireza Bakhtiari, Kushagra Chandak, Volodymyr Tkachuk, and Csaba Szepesvari.
\newblock Regret minimization via saddle point optimization.
\newblock In \emph{Thirty-seventh Conference on Neural Information Processing Systems}, 2023.

\bibitem[Krishnamurthy et~al.(2024)Krishnamurthy, Harris, Foster, Zhang, and Slivkins]{krishnamurthy2024can}
Akshay Krishnamurthy, Keegan Harris, Dylan~J Foster, Cyril Zhang, and Aleksandrs Slivkins.
\newblock Can large language models explore in-context?
\newblock \emph{arXiv preprint arXiv:2403.15371}, 2024.

\bibitem[Laskin et~al.(2023)Laskin, Wang, Oh, Parisotto, Spencer, Steigerwald, Strouse, Hansen, Filos, Brooks, et~al.]{laskin2022context}
Michael Laskin, Luyu Wang, Junhyuk Oh, Emilio Parisotto, Stephen Spencer, Richie Steigerwald, DJ~Strouse, Steven Hansen, Angelos Filos, Ethan Brooks, et~al.
\newblock In-context reinforcement learning with algorithm distillation.
\newblock \emph{International Conference on Learning Representations}, 2023.

\bibitem[Lattimore \& Szepesv{\'a}ri(2020)Lattimore and Szepesv{\'a}ri]{lattimore2020bandit}
Tor Lattimore and Csaba Szepesv{\'a}ri.
\newblock \emph{Bandit algorithms}.
\newblock Cambridge University Press, 2020.

\bibitem[Lee et~al.(2023)Lee, Xie, Pacchiano, Chandak, Finn, Nachum, and Brunskill]{lee2023supervised}
Jonathan~N Lee, Annie Xie, Aldo Pacchiano, Yash Chandak, Chelsea Finn, Ofir Nachum, and Emma Brunskill.
\newblock Supervised pretraining can learn in-context reinforcement learning.
\newblock \emph{Neural Information Processing Systems}, 2023.

\bibitem[Lewkowycz et~al.(2022)Lewkowycz, Andreassen, Dohan, Dyer, Michalewski, Ramasesh, Slone, Anil, Schlag, Gutman-Solo, et~al.]{lewkowycz2022solving}
Aitor Lewkowycz, Anders Andreassen, David Dohan, Ethan Dyer, Henryk Michalewski, Vinay Ramasesh, Ambrose Slone, Cem Anil, Imanol Schlag, Theo Gutman-Solo, et~al.
\newblock Solving quantitative reasoning problems with language models.
\newblock \emph{Advances in Neural Information Processing Systems}, 35:\penalty0 3843--3857, 2022.

\bibitem[Li et~al.(2023{\natexlab{a}})Li, Su, Fan, Han, Xue, and Zheng]{li2023quantifying}
Chao Li, Xing Su, Chao Fan, Haoying Han, Cong Xue, and Chunmo Zheng.
\newblock Quantifying the impact of large language models on collective opinion dynamics.
\newblock \emph{arXiv preprint arXiv:2308.03313}, 2023{\natexlab{a}}.

\bibitem[Li et~al.(2023{\natexlab{b}})Li, Hammoud, Itani, Khizbullin, and Ghanem]{li2023camel}
Guohao Li, Hasan Abed Al~Kader Hammoud, Hani Itani, Dmitrii Khizbullin, and Bernard Ghanem.
\newblock Camel: Communicative agents for" mind" exploration of large scale language model society.
\newblock \emph{Neural Information Processing Systems}, 2023{\natexlab{b}}.

\bibitem[Li et~al.(2023{\natexlab{c}})Li, Patel, and Du]{li2023prd}
Ruosen Li, Teerth Patel, and Xinya Du.
\newblock Prd: Peer rank and discussion improve large language model based evaluations.
\newblock \emph{arXiv preprint arXiv:2307.02762}, 2023{\natexlab{c}}.

\bibitem[Li et~al.(2022)Li, Puig, Paxton, Du, Wang, Fan, Chen, Huang, Aky{\"u}rek, Anandkumar, et~al.]{li2022pre}
Shuang Li, Xavier Puig, Chris Paxton, Yilun Du, Clinton Wang, Linxi Fan, Tao Chen, De-An Huang, Ekin Aky{\"u}rek, Anima Anandkumar, et~al.
\newblock Pre-trained language models for interactive decision-making.
\newblock \emph{Advances in Neural Information Processing Systems}, 35:\penalty0 31199--31212, 2022.

\bibitem[Li et~al.(2023{\natexlab{d}})Li, Yang, and Zhao]{li2023you}
Siyu Li, Jin Yang, and Kui Zhao.
\newblock Are you in a masquerade? exploring the behavior and impact of large language model driven social bots in online social networks.
\newblock \emph{arXiv preprint arXiv:2307.10337}, 2023{\natexlab{d}}.

\bibitem[Li et~al.(2023{\natexlab{e}})Li, Ildiz, Papailiopoulos, and Oymak]{li2023transformers}
Yingcong Li, Muhammed~Emrullah Ildiz, Dimitris Papailiopoulos, and Samet Oymak.
\newblock Transformers as algorithms: Generalization and stability in in-context learning.
\newblock \emph{International Conference on Machine Learning}, 2023{\natexlab{e}}.

\bibitem[Li \& Tewari(2017)Li and Tewari]{li2017beyond}
Zifan Li and Ambuj Tewari.
\newblock Beyond the hazard rate: More perturbation algorithms for adversarial multi-armed bandits.
\newblock \emph{J. Mach. Learn. Res.}, 18:\penalty0 183--1, 2017.

\bibitem[Liang et~al.(2023)Liang, He, Jiao, Wang, Wang, Wang, Yang, Tu, and Shi]{liang2023encouraging}
Tian Liang, Zhiwei He, Wenxiang Jiao, Xing Wang, Yan Wang, Rui Wang, Yujiu Yang, Zhaopeng Tu, and Shuming Shi.
\newblock Encouraging divergent thinking in large language models through multi-agent debate.
\newblock \emph{arXiv preprint arXiv:2305.19118}, 2023.

\bibitem[Lin et~al.(2024)Lin, Bai, and Mei]{lin2023transformers}
Licong Lin, Yu~Bai, and Song Mei.
\newblock Transformers as decision makers: Provable in-context reinforcement learning via supervised pretraining.
\newblock \emph{International Conference on Learning Representations}, 2024.

\bibitem[Littlestone \& Warmuth(1994)Littlestone and Warmuth]{littlestone1994weighted}
Nick Littlestone and Manfred~K Warmuth.
\newblock The weighted majority algorithm.
\newblock \emph{Information and computation}, 108\penalty0 (2):\penalty0 212--261, 1994.

\bibitem[Liu et~al.(2023{\natexlab{a}})Liu, Sferrazza, and Abbeel]{liu2023chain}
Hao Liu, Carmelo Sferrazza, and Pieter Abbeel.
\newblock Chain of hindsight aligns language models with feedback.
\newblock \emph{arXiv preprint arXiv:2302.02676}, 3, 2023{\natexlab{a}}.

\bibitem[Liu et~al.(2023{\natexlab{b}})Liu, Liu, Zhou, Lv, Zhou, and Zhang]{liu2023chatgpt}
Junling Liu, Chao Liu, Peilin Zhou, Renjie Lv, Kang Zhou, and Yan Zhang.
\newblock Is chatgpt a good recommender? a preliminary study.
\newblock \emph{arXiv preprint arXiv:2304.10149}, 2023{\natexlab{b}}.

\bibitem[Liu et~al.(2023{\natexlab{c}})Liu, Iter, Xu, Wang, Xu, and Zhu]{liu2023g}
Yang Liu, Dan Iter, Yichong Xu, Shuohang Wang, Ruochen Xu, and Chenguang Zhu.
\newblock G-eval: Nlg evaluation using gpt-4 with better human alignment.
\newblock \emph{arXiv preprint arXiv:2303.16634}, 2023{\natexlab{c}}.

\bibitem[Liu et~al.(2023{\natexlab{d}})Liu, Van~Roy, and Xu]{liu2023nonstationary}
Yueyang Liu, Benjamin Van~Roy, and Kuang Xu.
\newblock Nonstationary bandit learning via predictive sampling.
\newblock In \emph{International Conference on Artificial Intelligence and Statistics}, pp.\  6215--6244. PMLR, 2023{\natexlab{d}}.

\bibitem[Liu et~al.(2023{\natexlab{e}})Liu, Hu, Zhang, Guo, Ke, Liu, and Wang]{liu2023reason}
Zhihan Liu, Hao Hu, Shenao Zhang, Hongyi Guo, Shuqi Ke, Boyi Liu, and Zhaoran Wang.
\newblock Reason for future, act for now: A principled architecture for autonomous llm agents.
\newblock In \emph{NeurIPS 2023 Foundation Models for Decision Making Workshop}, 2023{\natexlab{e}}.

\bibitem[{LLM Engine}(2023)]{llm-engine}
{LLM Engine}.
\newblock {LLM Engine}, 2023.
\newblock URL \url{https://llm-engine.scale.com}.

\bibitem[Lor{\`e} \& Heydari(2023)Lor{\`e} and Heydari]{lore2023strategic}
Nunzio Lor{\`e} and Babak Heydari.
\newblock Strategic behavior of large language models: Game structure vs. contextual framing.
\newblock \emph{arXiv preprint arXiv:2309.05898}, 2023.

\bibitem[Mahankali et~al.(2023)Mahankali, Hashimoto, and Ma]{mahankali2023one}
Arvind Mahankali, Tatsunori~B Hashimoto, and Tengyu Ma.
\newblock One step of gradient descent is provably the optimal in-context learner with one layer of linear self-attention.
\newblock \emph{International Conference on Learning Representations}, 2023.

\bibitem[Mao et~al.(2020)Mao, Zhang, Zhu, Simchi-Levi, and Ba{\c{s}}ar]{mao2020model}
Weichao Mao, Kaiqing Zhang, Ruihao Zhu, David Simchi-Levi, and Tamer Ba{\c{s}}ar.
\newblock Model-free non-stationary {RL}: Near-optimal regret and applications in multi-agent {RL} and inventory control.
\newblock \emph{arXiv preprint arXiv:2010.03161}, 2020.

\bibitem[McFadden(1976)]{mcfadden1976quantal}
Daniel~L McFadden.
\newblock Quantal choice analysis: A survey.
\newblock \emph{Annals of Economic and Social Measurement, Volume 5, number 4}, pp.\  363--390, 1976.

\bibitem[McKelvey \& Palfrey(1995)McKelvey and Palfrey]{mckelvey1995quantal}
Richard~D McKelvey and Thomas~R Palfrey.
\newblock Quantal response equilibria for normal form games.
\newblock \emph{Games and economic behavior}, 10\penalty0 (1):\penalty0 6--38, 1995.

\bibitem[Min et~al.(2022)Min, Lyu, Holtzman, Artetxe, Lewis, Hajishirzi, and Zettlemoyer]{min2022rethinking}
Sewon Min, Xinxi Lyu, Ari Holtzman, Mikel Artetxe, Mike Lewis, Hannaneh Hajishirzi, and Luke Zettlemoyer.
\newblock Rethinking the role of demonstrations: What makes in-context learning work?
\newblock In Yoav Goldberg, Zornitsa Kozareva, and Yue Zhang (eds.), \emph{Proceedings of the 2022 Conference on Empirical Methods in Natural Language Processing}, pp.\  11048--11064, Abu Dhabi, United Arab Emirates, December 2022. Association for Computational Linguistics.
\newblock \doi{10.18653/v1/2022.emnlp-main.759}.
\newblock URL \url{https://aclanthology.org/2022.emnlp-main.759}.

\bibitem[Mukobi et~al.(2023)Mukobi, Erlebach, Lauffer, Hammond, Chan, and Clifton]{mukobi2023welfare}
Gabriel Mukobi, Hannah Erlebach, Niklas Lauffer, Lewis Hammond, Alan Chan, and Jesse Clifton.
\newblock Welfare diplomacy: Benchmarking language model cooperation.
\newblock \emph{arXiv preprint arXiv:2310.08901}, 2023.

\bibitem[Nekipelov et~al.(2015)Nekipelov, Syrgkanis, and Tardos]{nekipelov2015econometrics}
Denis Nekipelov, Vasilis Syrgkanis, and Eva Tardos.
\newblock Econometrics for learning agents.
\newblock In \emph{ACM Conference on Economics and Computation}, pp.\  1--18, 2015.

\bibitem[Openai(2023)]{openai2023gpt4}
Openai.
\newblock Gpt-4 technical report.
\newblock 2023.

\bibitem[Osband et~al.(2013)Osband, Russo, and Van~Roy]{osband2013more}
Ian Osband, Daniel Russo, and Benjamin Van~Roy.
\newblock (more) efficient reinforcement learning via posterior sampling.
\newblock \emph{Advances in Neural Information Processing Systems}, 26, 2013.

\bibitem[Park et~al.(2025)Park, Han, Guo, Ozdaglar, Zhang, and Kim]{park2025maporl}
Chanwoo Park, Seungju Han, Xingzhi Guo, Asuman Ozdaglar, Kaiqing Zhang, and Joo-Kyung Kim.
\newblock Maporl: Multi-agent post-co-training for collaborative large language models with reinforcement learning.
\newblock \emph{arXiv preprint arXiv:2502.18439}, 2025.

\bibitem[Park et~al.(2022)Park, Popowski, Cai, Morris, Liang, and Bernstein]{park2022social}
Joon~Sung Park, Lindsay Popowski, Carrie Cai, Meredith~Ringel Morris, Percy Liang, and Michael~S Bernstein.
\newblock Social simulacra: Creating populated prototypes for social computing systems.
\newblock In \emph{Proceedings of the 35th Annual ACM Symposium on User Interface Software and Technology}, pp.\  1--18, 2022.

\bibitem[Park et~al.(2023)Park, O'Brien, Cai, Morris, Liang, and Bernstein]{park2023generative}
Joon~Sung Park, Joseph O'Brien, Carrie~Jun Cai, Meredith~Ringel Morris, Percy Liang, and Michael~S. Bernstein.
\newblock Generative agents: Interactive simulacra of human behavior.
\newblock In \emph{Proceedings of the 36th Annual ACM Symposium on User Interface Software and Technology}, UIST '23, New York, NY, USA, 2023. Association for Computing Machinery.
\newblock ISBN 9798400701320.
\newblock \doi{10.1145/3586183.3606763}.
\newblock URL \url{https://doi.org/10.1145/3586183.3606763}.

\bibitem[Qian et~al.(2023)Qian, Cong, Yang, Chen, Su, Xu, Liu, and Sun]{qian2023communicative}
Chen Qian, Xin Cong, Cheng Yang, Weize Chen, Yusheng Su, Juyuan Xu, Zhiyuan Liu, and Maosong Sun.
\newblock Communicative agents for software development.
\newblock \emph{arXiv preprint arXiv:2307.07924}, 2023.

\bibitem[Reed et~al.(2022)Reed, Zolna, Parisotto, Colmenarejo, Novikov, Barth-maron, Gim{\'e}nez, Sulsky, Kay, Springenberg, Eccles, Bruce, Razavi, Edwards, Heess, Chen, Hadsell, Vinyals, Bordbar, and de~Freitas]{reed2022generalist}
Scott Reed, Konrad Zolna, Emilio Parisotto, Sergio~G{\'o}mez Colmenarejo, Alexander Novikov, Gabriel Barth-maron, Mai Gim{\'e}nez, Yury Sulsky, Jackie Kay, Jost~Tobias Springenberg, Tom Eccles, Jake Bruce, Ali Razavi, Ashley Edwards, Nicolas Heess, Yutian Chen, Raia Hadsell, Oriol Vinyals, Mahyar Bordbar, and Nando de~Freitas.
\newblock A generalist agent.
\newblock \emph{Transactions on Machine Learning Research}, 2022.
\newblock ISSN 2835-8856.
\newblock URL \url{https://openreview.net/forum?id=1ikK0kHjvj}.
\newblock Featured Certification, Outstanding Certification.

\bibitem[Robinson \& Goforth(2005)Robinson and Goforth]{robinson2005topology}
David Robinson and David Goforth.
\newblock \emph{The topology of the 2x2 games: a new periodic table}, volume~3.
\newblock Psychology Press, 2005.

\bibitem[Roughgarden(2015)]{roughgarden2015intrinsic}
Tim Roughgarden.
\newblock Intrinsic robustness of the price of anarchy.
\newblock \emph{Journal of the ACM (JACM)}, 62\penalty0 (5):\penalty0 1--42, 2015.

\bibitem[Schick et~al.(2023)Schick, Dwivedi-Yu, Jiang, Petroni, Lewis, Izacard, You, Nalmpantis, Grave, and Riedel]{schick2022peer}
Timo Schick, Jane Dwivedi-Yu, Zhengbao Jiang, Fabio Petroni, Patrick Lewis, Gautier Izacard, Qingfei You, Christoforos Nalmpantis, Edouard Grave, and Sebastian Riedel.
\newblock Peer: A collaborative language model.
\newblock \emph{International Conference on Learning Representations}, 2023.

\bibitem[Shalev-Shwartz(2007)]{shalev2007online}
Shai Shalev-Shwartz.
\newblock \emph{Online learning: Theory, algorithms, and applications}.
\newblock Hebrew University, 2007.

\bibitem[Shalev-Shwartz(2012)]{shalev2012online}
Shai Shalev-Shwartz.
\newblock Online learning and online convex optimization.
\newblock \emph{Foundations and Trends{\textregistered} in Machine Learning}, 4\penalty0 (2):\penalty0 107--194, 2012.

\bibitem[Shalev-Shwartz \& Singer(2007)Shalev-Shwartz and Singer]{shalev2007primal}
Shai Shalev-Shwartz and Yoram Singer.
\newblock A primal-dual perspective of online learning algorithms.
\newblock \emph{Machine Learning}, 69:\penalty0 115--142, 2007.

\bibitem[Shen et~al.(2023)Shen, Song, Tan, Li, Lu, and Zhuang]{shen2023hugginggpt}
Yongliang Shen, Kaitao Song, Xu~Tan, Dongsheng Li, Weiming Lu, and Yueting Zhuang.
\newblock Hugginggpt: Solving {AI} tasks with chatgpt and its friends in huggingface.
\newblock \emph{Neural Information Processing Systems}, 2023.

\bibitem[Shinn et~al.(2023)Shinn, Cassano, Gopinath, Narasimhan, and Yao]{shinn2023reflexion}
Noah Shinn, Federico Cassano, Ashwin Gopinath, Karthik~R Narasimhan, and Shunyu Yao.
\newblock Reflexion: Language agents with verbal reinforcement learning.
\newblock In \emph{Thirty-seventh Conference on Neural Information Processing Systems}, 2023.

\bibitem[{Significant Gravitas}(2023)]{Significant_Gravitas_AutoGPT}
{Significant Gravitas}.
\newblock Autogpt, 2023.
\newblock URL \url{https://github.com/Significant-Gravitas/AutoGPT}.

\bibitem[Srivastava et~al.(2023)Srivastava, Rastogi, Rao, Shoeb, Abid, Fisch, Brown, Santoro, Gupta, Garriga-Alonso, et~al.]{srivastava2023beyond}
Aarohi Srivastava, Abhinav Rastogi, Abhishek Rao, Abu Awal~Md Shoeb, Abubakar Abid, Adam Fisch, Adam~R Brown, Adam Santoro, Aditya Gupta, Adri{\`a} Garriga-Alonso, et~al.
\newblock Beyond the imitation game: Quantifying and extrapolating the capabilities of language models.
\newblock \emph{Transactions on Machine Learning Research}, 2023.

\bibitem[Swan et~al.(2023)Swan, Kido, Roland, and Santos]{swan2023math}
Melanie Swan, Takashi Kido, Eric Roland, and Renato P~dos Santos.
\newblock Math agents: Computational infrastructure, mathematical embedding, and genomics.
\newblock \emph{arXiv preprint arXiv:2307.02502}, 2023.

\bibitem[Tsai et~al.(2023)Tsai, Zhou, Liu, Li, Yu, and Mei]{tsai2023can}
Chen~Feng Tsai, Xiaochen Zhou, Sierra~S Liu, Jing Li, Mo~Yu, and Hongyuan Mei.
\newblock Can large language models play text games well? current state-of-the-art and open questions.
\newblock \emph{arXiv preprint arXiv:2304.02868}, 2023.

\bibitem[Valmeekam et~al.(2023)Valmeekam, Marquez, Olmo, Sreedharan, and Kambhampati]{valmeekam2023planbench}
Karthik Valmeekam, Matthew Marquez, Alberto Olmo, Sarath Sreedharan, and Subbarao Kambhampati.
\newblock Planbench: An extensible benchmark for evaluating large language models on planning and reasoning about change.
\newblock In \emph{Thirty-seventh Conference on Neural Information Processing Systems Datasets and Benchmarks Track}, 2023.

\bibitem[Van~der Vaart(2000)]{van2000asymptotic}
Aad~W Van~der Vaart.
\newblock \emph{Asymptotic statistics}, volume~3.
\newblock Cambridge university press, 2000.

\bibitem[Vaswani et~al.(2017)Vaswani, Shazeer, Parmar, Uszkoreit, Jones, Gomez, Kaiser, and Polosukhin]{vaswani2017attention}
Ashish Vaswani, Noam Shazeer, Niki Parmar, Jakob Uszkoreit, Llion Jones, Aidan~N Gomez, {\L}ukasz Kaiser, and Illia Polosukhin.
\newblock Attention is all you need.
\newblock \emph{Advances in neural information processing systems}, 30, 2017.

\bibitem[Von~Oswald et~al.(2023)Von~Oswald, Niklasson, Randazzo, Sacramento, Mordvintsev, Zhmoginov, and Vladymyrov]{von2023transformers}
Johannes Von~Oswald, Eyvind Niklasson, Ettore Randazzo, Jo{\~a}o Sacramento, Alexander Mordvintsev, Andrey Zhmoginov, and Max Vladymyrov.
\newblock Transformers learn in-context by gradient descent.
\newblock In \emph{International Conference on Machine Learning}, pp.\  35151--35174. PMLR, 2023.

\bibitem[Vovk(1990)]{vovk1990aggregating}
Volodimir~G Vovk.
\newblock Aggregating strategies.
\newblock In \emph{Proceedings of the third Annual Workshop on Computational Learning Theory}, pp.\  371--386, 1990.

\bibitem[Wainwright(2019)]{wainwright2019high}
Martin~J Wainwright.
\newblock \emph{High-dimensional statistics: A non-asymptotic viewpoint}, volume~48.
\newblock Cambridge university press, 2019.

\bibitem[Wang et~al.(2023{\natexlab{a}})Wang, Xie, Jiang, Mandlekar, Xiao, Zhu, Fan, and Anandkumar]{wang2023voyager}
Guanzhi Wang, Yuqi Xie, Yunfan Jiang, Ajay Mandlekar, Chaowei Xiao, Yuke Zhu, Linxi Fan, and Anima Anandkumar.
\newblock Voyager: An open-ended embodied agent with large language models.
\newblock \emph{arXiv preprint arXiv:2305.16291}, 2023{\natexlab{a}}.

\bibitem[Wang et~al.(2023{\natexlab{b}})Wang, Zhu, and Wang]{wang2023large}
Xinyi Wang, Wanrong Zhu, and William~Yang Wang.
\newblock Large language models are implicitly topic models: Explaining and finding good demonstrations for in-context learning.
\newblock \emph{International Conference on Machine Learning 2023 Workshop ES-FoMO}, 2023{\natexlab{b}}.

\bibitem[Wang et~al.(2023{\natexlab{c}})Wang, Jiang, Chen, Yang, Zhou, Cho, Fan, Huang, Lu, and Yang]{wang2023recmind}
Yancheng Wang, Ziyan Jiang, Zheng Chen, Fan Yang, Yingxue Zhou, Eunah Cho, Xing Fan, Xiaojiang Huang, Yanbin Lu, and Yingzhen Yang.
\newblock Recmind: Large language model powered agent for recommendation.
\newblock \emph{arXiv preprint arXiv:2308.14296}, 2023{\natexlab{c}}.

\bibitem[Wang et~al.(2023{\natexlab{d}})Wang, Cai, Liu, Ma, and Liang]{wang2023describe}
Zihao Wang, Shaofei Cai, Anji Liu, Xiaojian Ma, and Yitao Liang.
\newblock Describe, explain, plan and select: Interactive planning with large language models enables open-world multi-task agents.
\newblock \emph{Advances in neural information processing systems}, 2023{\natexlab{d}}.

\bibitem[Wei \& Luo(2021)Wei and Luo]{wei2021non}
Chen-Yu Wei and Haipeng Luo.
\newblock Non-stationary reinforcement learning without prior knowledge: An optimal black-box approach.
\newblock In \emph{Conference on learning theory}, pp.\  4300--4354. PMLR, 2021.

\bibitem[Wei et~al.(2022{\natexlab{a}})Wei, Tay, Bommasani, Raffel, Zoph, Borgeaud, Yogatama, Bosma, Zhou, Metzler, et~al.]{wei2022emergent}
Jason Wei, Yi~Tay, Rishi Bommasani, Colin Raffel, Barret Zoph, Sebastian Borgeaud, Dani Yogatama, Maarten Bosma, Denny Zhou, Donald Metzler, et~al.
\newblock Emergent abilities of large language models.
\newblock \emph{Transactions on Machine Learning Research}, 2022{\natexlab{a}}.

\bibitem[Wei et~al.(2022{\natexlab{b}})Wei, Wang, Schuurmans, Bosma, Xia, Chi, Le, Zhou, et~al.]{wei2022chain}
Jason Wei, Xuezhi Wang, Dale Schuurmans, Maarten Bosma, Fei Xia, Ed~Chi, Quoc~V Le, Denny Zhou, et~al.
\newblock Chain-of-thought prompting elicits reasoning in large language models.
\newblock \emph{Advances in Neural Information Processing Systems}, 35:\penalty0 24824--24837, 2022{\natexlab{b}}.

\bibitem[Wu et~al.(2024{\natexlab{a}})Wu, Zheng, Qiu, Wang, Gu, Shen, Qin, Zhu, Zhu, Liu, Xiong, and Chen]{wu2024surveylargelanguagemodels}
Likang Wu, Zhi Zheng, Zhaopeng Qiu, Hao Wang, Hongchao Gu, Tingjia Shen, Chuan Qin, Chen Zhu, Hengshu Zhu, Qi~Liu, Hui Xiong, and Enhong Chen.
\newblock A survey on large language models for recommendation, 2024{\natexlab{a}}.
\newblock URL \url{https://arxiv.org/abs/2305.19860}.

\bibitem[Wu et~al.(2023)Wu, Bansal, Zhang, Wu, Zhang, Zhu, Li, Jiang, Zhang, and Wang]{wu2023autogen}
Qingyun Wu, Gagan Bansal, Jieyu Zhang, Yiran Wu, Shaokun Zhang, Erkang Zhu, Beibin Li, Li~Jiang, Xiaoyun Zhang, and Chi Wang.
\newblock Autogen: Enabling next-gen llm applications via multi-agent conversation framework.
\newblock \emph{arXiv preprint arXiv:2308.08155}, 2023.

\bibitem[Wu et~al.(2024{\natexlab{b}})Wu, Tang, Mitchell, and Li]{wu2023smartplay}
Yue Wu, Xuan Tang, Tom Mitchell, and Yuanzhi Li.
\newblock Smartplay: A benchmark for llms as intelligent agents.
\newblock In \emph{The Twelfth International Conference on Learning Representations}, 2024{\natexlab{b}}.

\bibitem[Xia et~al.(2024)Xia, Liu, Yue, and Li]{xia2024beyond}
Fanzeng Xia, Hao Liu, Yisong Yue, and Tongxin Li.
\newblock Beyond numeric awards: In-context dueling bandits with llm agents.
\newblock \emph{arXiv preprint arXiv:2407.01887}, 2024.

\bibitem[Xie et~al.(2022)Xie, Raghunathan, Liang, and Ma]{xie2021explanation}
Sang~Michael Xie, Aditi Raghunathan, Percy Liang, and Tengyu Ma.
\newblock An explanation of in-context learning as implicit bayesian inference.
\newblock \emph{International Conference on Learning Representations}, 2022.

\bibitem[Xiong et~al.(2023)Xiong, Ding, Cao, Liu, and Qin]{xiong2023diving}
Kai Xiong, Xiao Ding, Yixin Cao, Ting Liu, and Bing Qin.
\newblock Examining inter-consistency of large language models collaboration: An in-depth analysis via debate.
\newblock In Houda Bouamor, Juan Pino, and Kalika Bali (eds.), \emph{Findings of the Association for Computational Linguistics: EMNLP 2023}, pp.\  7572--7590, Singapore, December 2023. Association for Computational Linguistics.
\newblock \doi{10.18653/v1/2023.findings-emnlp.508}.
\newblock URL \url{https://aclanthology.org/2023.findings-emnlp.508}.

\bibitem[Xu et~al.(2023{\natexlab{a}})Xu, Wang, Li, Luo, Wang, Liu, and Liu]{xu2023exploring}
Yuzhuang Xu, Shuo Wang, Peng Li, Fuwen Luo, Xiaolong Wang, Weidong Liu, and Yang Liu.
\newblock Exploring large language models for communication games: An empirical study on werewolf.
\newblock \emph{arXiv preprint arXiv:2309.04658}, 2023{\natexlab{a}}.

\bibitem[Xu et~al.(2023{\natexlab{b}})Xu, Yu, Fang, Wang, and Wu]{xu2023language}
Zelai Xu, Chao Yu, Fei Fang, Yu~Wang, and Yi~Wu.
\newblock Language agents with reinforcement learning for strategic play in the werewolf game.
\newblock \emph{arXiv preprint arXiv:2310.18940}, 2023{\natexlab{b}}.

\bibitem[Yao et~al.(2023{\natexlab{a}})Yao, Yu, Zhao, Shafran, Griffiths, Cao, and Narasimhan]{yao2023tree}
Shunyu Yao, Dian Yu, Jeffrey Zhao, Izhak Shafran, Thomas~L Griffiths, Yuan Cao, and Karthik Narasimhan.
\newblock Tree of thoughts: Deliberate problem solving with large language models.
\newblock \emph{Advances in Neural Information Processing Systems}, 2023{\natexlab{a}}.

\bibitem[Yao et~al.(2023{\natexlab{b}})Yao, Zhao, Yu, Du, Shafran, Narasimhan, and Cao]{yao2022react}
Shunyu Yao, Jeffrey Zhao, Dian Yu, Nan Du, Izhak Shafran, Karthik Narasimhan, and Yuan Cao.
\newblock React: Synergizing reasoning and acting in language models.
\newblock \emph{International Conference on Learning Representations}, 2023{\natexlab{b}}.

\bibitem[Young(2004)]{young2004strategic}
H~Peyton Young.
\newblock \emph{Strategic learning and its limits}.
\newblock OUP Oxford, 2004.

\bibitem[Zhang et~al.(2024)Zhang, Du, Shan, Zhou, Du, Tenenbaum, Shu, and Gan]{zhang2023building}
Hongxin Zhang, Weihua Du, Jiaming Shan, Qinhong Zhou, Yilun Du, Joshua~B Tenenbaum, Tianmin Shu, and Chuang Gan.
\newblock Building cooperative embodied agents modularly with large language models.
\newblock \emph{International Conference on Learning Representations}, 2024.

\bibitem[Zhang et~al.(2023{\natexlab{a}})Zhang, Frei, and Bartlett]{zhang2023trained}
Ruiqi Zhang, Spencer Frei, and Peter~L Bartlett.
\newblock Trained transformers learn linear models in-context.
\newblock \emph{arXiv preprint arXiv:2306.09927}, 2023{\natexlab{a}}.

\bibitem[Zhang et~al.(2023{\natexlab{b}})Zhang, Zhang, Yang, and Wang]{zhang2023and}
Yufeng Zhang, Fengzhuo Zhang, Zhuoran Yang, and Zhaoran Wang.
\newblock What and how does in-context learning learn? bayesian model averaging, parameterization, and generalization.
\newblock \emph{arXiv preprint arXiv:2305.19420}, 2023{\natexlab{b}}.

\bibitem[Zhao et~al.(2023)Zhao, Wang, Zhang, Jin, Zhu, Chen, and Xie]{zhao2023competeai}
Qinlin Zhao, Jindong Wang, Yixuan Zhang, Yiqiao Jin, Kaijie Zhu, Hao Chen, and Xing Xie.
\newblock Competeai: Understanding the competition behaviors in large language model-based agents.
\newblock \emph{arXiv preprint arXiv:2310.17512}, 2023.

\bibitem[Zimmert \& Seldin(2021)Zimmert and Seldin]{zimmert2021tsallis}
Julian Zimmert and Yevgeny Seldin.
\newblock Tsallis-inf: An optimal algorithm for stochastic and adversarial bandits.
\newblock \emph{The Journal of Machine Learning Research}, 22\penalty0 (1):\penalty0 1310--1358, 2021.

\bibitem[Zinkevich(2003)]{zinkevich2003online}
Martin Zinkevich.
\newblock Online convex programming and generalized infinitesimal gradient ascent.
\newblock In \emph{International Conference on Machine Learning}, pp.\  928--936, 2003.

\end{thebibliography}

\appendix
\newpage
~\\
\centerline{{\fontsize{13.5}{13.5}\selectfont \textbf{Supplementary Materials for}}}

\safevspace{6pt}
\centerline{\fontsize{12}{12}\selectfont \textbf{
	  ``Do LLM Agents Have Regret? A Case Study in Online Learning and Games''}}

\tableofcontents
\clearpage

\icml{\section{Additional Notation} 
We use $\NN$ and $\NN^+$ 
to denote the sets of non-negative and positive integers, respectively. 
For a finite set $\cS$, we use $\Delta(\cS)$ to denote the simplex over $\cS$. For $d\in \NN^+$, we define $[d]:= \{1, 2, \dots, d\}$. {For two vectors $x,y\in\RR^d$, we use $\la x,y\rangle$ to denote the inner product of $x$ and $y$. }We define $\pmb{0}_d$ and $\pmb{1}_d$ as a $d$-dimensional zero or one vector{, {and $\pmb{O}_{d\times d}$ and $I_{d \times d}$ as a $d \times d$-dimensional zero matrix and  identity matrix, respectively. We omit $d$ when it is clear from the context.}} We define $e_i$ as a unit vector {(with proper dimension) }whose  $i$-th coordinate equal to $1$. For $p \in \RR^d, R>0$ and $C \subseteq \RR^d$ is a convex set, define $B(p, R, \norm{\cdot}):= \{x \in \RR^d \mid \norm{x - p} \leq R \}$,  $\texttt{Proj}_{C, \norm{\cdot}}(p) = \argmin_{x \in C} \norm{x - p}${ (which is well defined as $C$ is a convex set), and $\texttt{clip}_R(x):=[ \texttt{Proj}_{B(0, R, \norm{\cdot}_2), \norm{\cdot}_2}(x_i)]_{i \in [d]}$.} Define $\texttt{Softmax}(x):= \left(\frac{e^{x_i}}{\sum_{i \in [d]}e^{x_i}}\right)_{i \in [d]}${ and $\texttt{ReLU}(x) = \max(0, x)$ for $x \in \RR^d$. For $A \in \RR^{m \times n}$ {with $A_i$ denoting its $i$-th column}, we define $\norm{A}_\texttt{op}:= \max_{\norm{x}_2 \leq 1} \norm{Ax}_2$, $\norm{A}_{2, \infty}:= \sup_{i \in [n]} \norm{A_i}_2$,
$\norm{A}_{F}$
as the Frobenius norm, and    $A_{-1} := A_n$ to denote the last column vector of $A$. } We define $\RR^+:= \{x \mid x\geq 0 \}$. {For a set $\Pi$, define $\text{diam}(\Pi, \norm{\cdot}):= \sup_{\pi_1, \pi_2 \in \Pi} \norm{\pi_1 - \pi_2}$.}  {We define $\mathbbm{1}(\cE):=1$ if $\cE$ is true, and $\mathbbm{1}(\cE):=0$ otherwise.} {For a random variable sequence $(X_n)_{n \in \NN}$ and random  variables $X, Y$, we denote $F_X$ as the cumulative distribution function of a random variable $X$, $X_n \overset{p}{\to} X$  if $\forall \epsilon > 0, \lim_{n \to \infty} \PP(|X_n - X| > \epsilon) = 0$, $X_n \overset{d}{\to} X$  if $\lim_{n \to \infty} F_{X_n}(x) = F_X(x)$ for all $x$ where $F_X(x)$ is continuous, $X \overset{d}{=} Y$ if $F_X(x) = F_Y(x)$ for all $x$, $X_n \overset{a.s.}{\to} X$ if $\PP(\lim_{n \to \infty} X_n = X) = 1$, and $\operatorname{esssup}(X) := \inf \{M \in \mathbb{R} : \PP(X > M) = 0\}$.} {For a random variable $X$, we use $\text{supp}(X)$ to denote its support.} {{For functions $f,g:\RR\to \RR$,} we define $g(x) = \cO(f(x))$ if there exist $x_0, M<\infty$ such that $|g(x)| \leq M|f(x)|$ for {all} $x > x_0$. {We use $f'$ to denote the derivative of $f$.} Let $F: \Omega\rightarrow \RR$ be a continuously-differentiable, strictly convex function defined on a convex set $\Omega$. The Bregman divergence associated with $F$ for points $p$, $q$ is defined as $D_F(p, q):=F(p)-F(q)-\langle \nabla F(q), p-q\rangle$. For a sequence $(\ell_{t})_{t\in[T]}$ for some $T\in\NN^+$, we define $\ell_{a:b}:=(\ell_a, \cdots, \ell_b)$ for $1\le a\le b\le T$. If $a>b$, we define  $\ell_{a:b}=\emptyset$.}}

\section{Comparison with Concurrent Work  \cite{krishnamurthy2024can}}

After submitting the first version of our manuscript, we were aware of a concurrent work  \cite{krishnamurthy2024can}, which considered using LLMs to solve multi-arm \emph{stochastic}  bandit problems entirely in-context, 
with a focus on the \emph{exploration} behaviors of LLMs. Specifically, \cite{krishnamurthy2024can} claimed that LLMs may not show robust exploratory behaviors under a variety of prompt configurations, although there does exist some successful prompt configuration  that enabled satisfactory exploratory behaviors. We here provide a detailed comparison between  \cite{krishnamurthy2024can} and the first experimental part of our
paper, i.e., \Cref{sec:empirical_insights} and related appendices. 

\begin{itemize}
    \item \textbf{(Focused settings).} We mainly considered the \emph{full-information} online learning setting with potentially \emph{adversarial} loss vectors, as well as the multi-agent \emph{repeated-game} setting. In contrast, \cite{krishnamurthy2024can} focused on the \emph{stochastic}  setting with 
      \emph{bandit}  feedback,  where the loss vectors at different rounds are drawn i.i.d. from a \emph{fixed}  distribution. Therefore, both the \emph{metrics}  and most \emph{results}  are not directly comparable. For example, i) some failure cases in \cite{krishnamurthy2024can} for stochastic bandits did not appear in our setting (as will be detailed next); ii) for some adversarial loss instances (e.g., those from \cite{feder1992universal}, see the introduction in \Cref{sec:yes_regret_example}), the \emph{summarized history input} that was claimed essential in \cite{krishnamurthy2024can} is not very effective in our settings, while a \emph{raw-history input} as in our experiments can be   more effective (see \Cref{sec:yes_regret_example} and \Cref{fig:feder}); iii)  as studied in \cite{krishnamurthy2024can}, \emph{uniform-like} behaviors constitute one of the main failures in stochastic bandits. However, uniform-like policies do not necessarily correspond to failure cases in our setting, especially when the loss vectors are highly adversarial (cf. examples in \Cref{sec:yes_regret_example}). In particular, such a metric may be irrelevant/inapplicable to validating the no-regret behaviors in our full-information non-stochastic/adversarial settings. 
      These results/facts demonstrated the fundamentally different features in addressing the distinct settings in both works. 
     \item \textbf{(Configuration/Prompt design choices).} Despite the negative results under many prompt configurations, \cite{krishnamurthy2024can} still found one successful prompt configuration that can lead to robust exploratory behaviors in stochastic bandits, which in fact shares many similarities with our  {default prompt configurations}. For example, \cite{krishnamurthy2024can} found that asking the LLMs to output a \emph{distribution}  over the action space (instead of one \emph{single action}) can address the \emph{suffix failure} for stochastic bandits, which was indeed the default \kzedit{prompt we used in our settings}. {Moreover, \kzedit{as a standard technique,  our default} prompt asked the model to have the Chain-of-Thought (CoT) reasoning,  \kzedit{while \cite{krishnamurthy2024can}'s successful prompt also emphasized the importance of CoT.}   
     \cite{krishnamurthy2024can} also showed the importance of \emph{summarizing} the history, i.e., summarizing the mean reward associated with each arm, while we found that when we feed the LLMs with (raw) \emph{full-information feedback in the vector form}, the LLMs may automatically choose to summarize the history and make decisions based on the summarized statistics (cf. the output examples in \Cref{sec:LLM_explain}).
    \item \textbf{(Horizons vs. No-regret behaviors).}  In light of the findings from \cite{krishnamurthy2024can} that LLMs may fail when the problem horizon is long, we conduct experiments on problems with comparable horizons as in \cite{krishnamurthy2024can}. Our results show that for the full-information non-stochastic setting we focused on, LLMs can  still be no-regret with  longer  horizons  
(\Cref{fig:online200} and \Cref{table:long-horizon-500}), 
    under the loss sequences we studied. 
    \item \textbf{(Results in bandit setting \& Failure cases).}     
    As an extension and sanity check of our full-information-setting results, we have also experimented with the (adversarial) bandit setting. \kzedit{This extended setting is more comparable to that in \cite{krishnamurthy2024can}.}
    However, different from the focus therein, we did not ask the LLMs to \emph{directly explore} in context. Instead, we manually input a  \emph{re-weighting} estimate of the full-information loss vector, a standard technique in online learning \citep{auer2002nonstochastic,hazan2016introduction,lattimore2020bandit}, to balance exploration and exploitation. We viewed this approach as a natural way to exploit the no-regret behaviors of LLMs in the full-information setting. In fact, with such a re-weighting, we show in \Cref{tab:500-bandit} that the failure cases in \cite{krishnamurthy2024can} for the bandit setting may not appear{, in the exact hard instance proposed   therein}, even under a relatively \emph{long horizon} of $T=100$. Complementing  \cite{krishnamurthy2024can}, our bandit-setting results may suggest  that such \emph{human-intervened input} may enhance LLMs' decision-making capabilities.  \kzedit{This  
  is perhaps also in
line with the observation in  \cite{krishnamurthy2024can} that some additional ``human intervention''  (i.e., the \emph{summarized} history input therein) may be critical in the (stochastic) bandit
setting.  Specifically, in \Cref{tab:500-bandit}, we validate that although LLMs may fail in bandit-feedback
settings \emph{without interventions}, such a simple re-weighting technique may be useful to handle
exploration tasks by leveraging LLMs’ performance in the full-information setting.}
    }
\end{itemize}

\begin{table}[!t]
\centering
\resizebox{1\textwidth}{!}{
\begin{tabular}{lccccccc}\toprule
\begin{tabular}[c]{@{}l@{}}Hard MAB instance of \\ \cite{krishnamurthy2024can}\end{tabular} & TS     & UCB    & \begin{tabular}[c]{@{}c@{}}Successful case of \\ \cite{krishnamurthy2024can}\end{tabular} & Ours (GPT-4) & Naive (GPT-4) & Ours (GPT-4o) & Naive (GPT-4o) \\ \midrule \\
\begin{tabular}[c]{@{}l@{}}Median reward\\ (higher is better)\end{tabular}                   & $0.47$ & $0.55$ & $0.47$                                                                                    & $0.46$       & $0.46$        & $0.475$       & $0.455$        \\ \hline \\ 
\begin{tabular}[c]{@{}l@{}} 
SuffFailFreq$(T/2)$\\ (lower is better)\end{tabular}             & $0.01$ & $0.02$ & $0.00$                                                                                    & $0.00$       & $0.00$        & $0.00$        & $0.2$          \\ \hline \\
\begin{tabular}[c]{@{}l@{}}$n*$MinFrac\\ (lower is better)\end{tabular}                     & $0.28$ & $0.18$ & $0.33$                                                                                    & $0.27$       & $0.38$        & $0.1$         & $0.09$    \\ \bottomrule
    
\end{tabular}
}
\caption{Comparing Thompson Sampling (TS), Upper Confidence Bound (UCB), and the successful prompt configuration of \cite{krishnamurthy2024can} (from Figure 4 therein) with our approaches (named \emph{Ours} in the table), on the \emph{hard} MAB instance therein. {We also conducted ablation studies by \emph{removing} our re-weighting technique (named \emph{Naive} in the table).} \kzedit{Note that both \emph{Ours} and \emph{Naive} use \emph{distributional output}, as it is the default prompt configuration we used throughout our paper.} Specifically, as introduced in   \cite{krishnamurthy2024can}, for this hard instance, rewards associated with each arm follow a Bernoulli distribution, the horizon is $T=100$, the number of actions is $n=5$, and the reward gap is $0.2$. {For GPT-4, the model adopted by \cite{krishnamurthy2024can}, we have observed similar results with their case using the \emph{distributional}  output, where although the median reward is comparable with the successful cases, \emph{Naive} suffers from the uniform-like failure as indicated by a high $n*$MinFrac value. For GPT-4o, the model not studied by \cite{krishnamurthy2024can}, 
we have a slightly different observation that \emph{Naive} (with distributional output as in our default configurations)  seems to \kzedit{still} suffer from suffix failure, indicated by a slightly high SuffFailFreq$(T/2)$, \kzedit{while} \cite{krishnamurthy2024can} reported  that distributional output \emph{can avoid} such a failure for GPT-4.}
In contrast to \emph{Naive}, our  re-weighting technique  enabled the LLMs to avoid \emph{both} the suffix and the uniform-like failures in this (stochastic) {bandit-feedback case,}   \emph{without} external history summarization, and achieve comparable rewards.}
\label{tab:500-bandit}
\end{table}

\section{Deferred Background}  \label{appendix:def}

\subsection{In-Context Learning}
{In-context learning is an emergent behavior of LLMs \citep{brown2020language}, which means that these models can adapt and learn from a limited number of examples provided within their immediate input context. In in-context learning, the prompt is usually constituted by a length of $T$ in-context (independent) examples $(x_t, y_t)_{t \in [T]}$ and $(T+1)$-th input $x_{T+1}$, so the $\text{LLM}((z_t)_{t \in [T]}, x_{T+1})$ provides the inference of $y_{T+1}$, where $z_t=(x_t,y_t)$.}

\subsection{Online Learning Algorithms}
\label{appendix:FTRL}

\paragraph{Follow-the-regularized-leader (FTRL).} The \emph{follow-the-regularized-leader} algorithm  \citep{shalev2007online} is an iterative method that updates policy based on the observed data and a regularization term. The idea is to choose the next policy that minimizes the sum of the past losses and a regularization term.

Mathematically, given a sequence of loss vectors \( {\ell}_1, {\ell}_2, \dots, {\ell}_t \), the FTRL algorithm updates the policy \( \pi \) at each time step \( t \) as follows:
\[ \pi_{t+1} = \arg\min_{\pi \in \Pi} \left( \sum_{i=1}^{t} \langle {\ell}_i, \pi \rangle + R(\pi) \right),\]
where \( R(\pi) \) is a regularization term. The regularization term \( R(\pi) \) is introduced to prevent overfitting and can be any function that penalizes the complexity of the model. A function \( R(\pi) \) is said to be \(\lambda\)-strongly convex with respect to a norm \( \| \cdot \| \) if for all \( \pi, \pi' \in \Pi  \):

\[ R(\pi) \geq R(\pi') + \langle \nabla R(\pi'), \pi - \pi' \rangle + \frac{\lambda}{2} \| \pi - \pi' \|_2^2. \]

A key property that ensures the convergence and stability of the FTRL algorithm is the strong convexity of the regularization term \( R(\pi) \). Strong convexity of \( R(\pi) \) ensures that the optimization problem in FTRL has a unique solution. The FTRL algorithm's flexibility allows it to encompass a wide range of online learning algorithms, from gradient-based methods like online gradient descent to decision-making algorithms like Hedge \citep{freund1997decision}. 

\paragraph{Connection to online gradient descent (OGD).} The Online Gradient Descent (OGD) \citep{cesa1996worst} algorithm is a special case of the FTRL algorithm when the regularization term is the $L_2$-norm square, i.e., \( R(\pi) = \frac{1}{2} \|\pi\|_2^2 \) and $\Pi=\RR^d$. In OGD, at each time step \( t \), the policy $\pi$ is updated using the gradient of the loss function:
\[ \pi_{t+1} = \pi_t - {\ell}_t.  \]
Therefore, the connection between FTRL and OGD can be seen by observing that the update rule for FTRL with $L_2$-regularization can be derived from the OGD update rule.

\paragraph{Connection to the Hedge algorithm.} The Hedge algorithm \citep{freund1997decision} 
(also referred to as the  Multiplicative Weight Update algorithm \citep{arora2012multiplicative})  is an online learning algorithm designed for problems where the learner has to choose from a set of actions (denoted as $\cA$) at each time step and suffers a loss based on the chosen action. The FTRL framework can be used to derive the Hedge algorithm by considering an entropy regularization term. Specifically, the regularization term is the negative entropy \( R(\pi) = \sum_{j \in [d]} \pi_j \log \pi_j \) (where $d$ is the dimension of policy $\pi$), then the FTRL update rule yields the Hedge algorithm as 
\begin{align*}
    \pi_{(t+1)j} = \pi_{tj} \frac{\exp(-{\ell}_{tj} \pi_{tj})}{\sum_{i \in [d]} \exp(-{\ell}_{ti} \pi_{ti})}
\end{align*}
for $j \in [d]$. 

\paragraph{Follow-the-perturbed-leader (FTPL). }
Given a sequence of loss vectors \( {\ell}_1, {\ell}_2, \dots, {\ell}_{t-1} \), the follow-the-perturbed-leader  algorithm \citep{kalai2005efficient} at each time step \( t \) adds a random perturbation vector \( \epsilon_t \) to the original loss vectors and then selects the best-response action \( a_t \) (that is potentially randomized due to $\epsilon_t$) by solving:
\$
   a_t \in \arg\min_{a \in \cA}~~\epsilon_{ta} + \sum_{i=1}^{t-1}  {\ell}_{ia},  
\$
where the perturbation $\epsilon_t$ is \emph{sampled}  from a pre-defined distribution. Correspondingly, the \emph{policy} \( \pi_t \) is chosen by following equation: 
\begin{align}
   \pi_{t+1} = \EE\left[\arg\min_{\pi \in \Pi}   \left(\langle \epsilon_{t+1}, \pi \rangle + \sum_{i=1}^{t} \langle {\ell}_i, \pi \rangle\right) \right]. \label{eqn:FTPL-def} 
\end{align}

\paragraph{Relationship between FTRL and FTPL.}\label{sec:intro-dr}
{The FTRL and FTPL algorithms are deeply related. For example,} FTPL with perturbations of Gumbel distribution and FTRL with Entropy Regularization (i.e., Hedge) are equivalent. {In general, for the FTPL algorithm with any perturbation distribution, one can always find an FTRL algorithm with a particular regularization such that their update rule is equivalent. However, this relationship does not hold vice versa. For example, \cite{hofbauer2002global} showed that for FTRL with log barrier regularization, there does not exist an equivalent perturbation distribution for FTPL.}

\paragraph{{Restarting} techniques for non-stationary online learning.} {For non-stationary online learning problems, one common technique is \emph{restarting}: one restarts the standard online learning algorithm periodically \citep{besbes2014stochastic} (see also e.g.,  \cite{wei2021non,mao2020model}). After each restarting operation, the algorithm will ignore the previous history and execute as if it is the beginning of the interaction with the environment. Since the variation of the loss sequences is bounded, loss sequences between two consecutive restarting operations can be regarded as being \emph{almost stationary}, which makes achieving an overall sublinear dynamic regret guarantee possible.}

\subsection{Why Focusing on Linear Loss Function?}
\label{rem:wolg}

We note that focusing on the linear loss function $f_t(\pi):=\langle {\ell}_t, \pi  \rangle$ does not lose much of generality. Specifically, 
for the general convex loss function $(f_t)_{t\in[T]}$, we have $f_t(\pi_{\mathscr{A}, t}) - f_t(\pi) \leq \langle \nabla f_t(\pi_{\mathscr{A}, t}), \pi_{\mathscr{A}, t} - \pi \rangle$ for any $\pi \in \Pi$, which indicates   
\$
&\text{Regret}_{\mathscr{A}}\left((f_t)_{t\in[T]}\right) 
\leq \sum_{t=1}^T \EE [ \langle \nabla f_t(\pi_{\mathscr{A}, t}), \pi_{\mathscr{A}, t} \rangle ] - \inf_{\pi \in \Pi} \sum_{t=1}^T \EE [ \langle \nabla f_t(\pi_{\mathscr{A}, t}), \pi\rangle]. 
\$ 
Therefore, one can regard the loss vector $({\ell}_t)_{t \in [T]}$ as ${\ell}_t:= \nabla f_t(\pi_{\mathscr{A}, t})$ for $t \in [T]$, and control the actual regret by studying the linear loss function  \citep{hazan2016introduction}. {The same argument on the general convex $f_t$ can be applied to the dynamic-regret metric as well. In sum, an algorithm designed for online \emph{linear} optimization can be adapted to solve online \emph{convex} optimization, with the understanding that the instance received at round  $t$ corresponds to the gradient of the convex function evaluated at the policy in that round.} 

\subsection{Six Representative General-Sum Games}
\label{appendix:six}
In game theory, there are six representative two-player general-sum games \citep{robinson2005topology}. Firstly, consider \textbf{the win-win game} represented by matrices \( A = \begin{pmatrix} 1 & 4 \\ 1 & 2 \end{pmatrix} \) and \( B = \begin{pmatrix} 1 & 4 \\ 1 & 2 \end{pmatrix} \) for players A and B, respectively. This setup fosters a cooperative dynamic, as both players receive identical payoffs, encouraging strategies that benefit both parties equally.

In contrast, \textbf{the prisoner's dilemma},  depicted by payoff matrices \( A = \begin{pmatrix} 1 & 3 \\ 2 & 4 \end{pmatrix} \) and \( B = \begin{pmatrix} 4 & 3 \\ 2 & 1 \end{pmatrix} \), illustrates the conflict between individual and collective rationality, where players are tempted to pursue individual gain at the collective's expense, often resulting in suboptimal outcomes for both.

In the \textbf{unfair game}, represented by \( A = \begin{pmatrix} 2 & 1 \\ 3 & 4 \end{pmatrix} \) and \( B = \begin{pmatrix} 4 & 3 \\ 1 & 2 \end{pmatrix} \), the asymmetry in the payoff structure places one player at a disadvantage, regardless of the chosen strategy. This imbalance often reflects real-world scenarios where power or information asymmetry affects decision-making.

The \textbf{cyclic game}, with matrices \( A = \begin{pmatrix} 3 & 1 \\ 2 & 4 \end{pmatrix} \) and \( B = \begin{pmatrix} 3 & 4 \\ 2 & 1 \end{pmatrix} \), presents  a scenario where no stable equilibrium exists. The best strategy for each player changes in response to the other's actions, leading to a continuous cycle of strategy adaptation without a clear resolution.

The \textbf{biased game}, depicted by \( A = \begin{pmatrix} 3 & 2 \\ 1 & 4 \end{pmatrix} \) and \( B = \begin{pmatrix} 4 & 2 \\ 1 & 3 \end{pmatrix} \), inherently favors one player, often reflecting situations where external factors or inherent advantages influence outcomes, leading to consistently unequal payoffs.

Finally, the \textbf{second-best game}, with payoff  matrices \( A = \begin{pmatrix} 1 & 2 \\ 3 & 4 \end{pmatrix} \) and \( B = \begin{pmatrix} 1 & 4 \\ 3 & 2 \end{pmatrix} \), encapsulates scenarios where players settle for less-than-optimal outcomes due to constraints like risk aversion or limited options. This often results in players choosing safer, albeit less rewarding, strategies.

Each of these games exemplifies distinct aspects of strategic decision-making and interactions. From cooperative to competitive and fair to biased scenarios, these matrices provide a rich landscape for exploring the nuances of decision-making behavior in game theory.

\section{Deferred Results and Proofs  in \Cref{sec:empirical_insights}
}
\subsection{Intuition Why Pre-Trained Language Models Might  Exhibit No-Regret Behavior}\label{sec:intuition}

{\paragraph{Intuition why pre-trained language models might  exhibit no-regret behavior.} 
Transformer-based LLMs have demonstrated impressive \emph{in-context-learning} and few-/zero-shot learning capabilities \citep{brown2020language,garg2022can,min2022rethinking}. One theoretical explanation is that, trained Transformers can implement the  \emph{gradient descent algorithm} on the testing loss {in certain supervised learning problems} \citep{akyurek2022learning,von2023transformers,dai2022can,ahn2023transformers, zhang2023trained,mahankali2023one}, which is inherently \emph{adaptive} to the loss function used at test time. On the other hand, it is known in online learning that the simple algorithm of \emph{online gradient descent} \citep{zinkevich2003online}  can achieve no-regret. Hence, it seems reasonable to envision the no-regret behavior of such meta-learners in online learning, due to their fast adaptability. However, it is not straightforward due to the fundamental difference between multi-task/meta-learning and online learning settings, as well as the difference between  
\emph{stationary} and \emph{non-stationary/adversarial} environments in decision-making. Next, we provide both experimental and theoretical studies  
to validate this intuition.  
}

{\subsection{Frameworks for No-Regret Behavior Validation}\label{appendix:verify} 
\paragraph{Trend-checking framework.}
We propose the following hypothesis test:
\begin{align*}
H_0&: \text{The sequence } \left(\text{Regret}_{\mathscr{A}}\left((f_\tau)_{\tau\in[t]}\right)/t\right)_{t=1}^\infty \text{ either diverges or converges to a positive constant}
\\
H_1&: \text{The sequence } \left(\text{Regret}_{\mathscr{A}}\left((f_\tau)_{\tau\in[t]}\right)/t\right)_{t=1}^\infty \text{ converges to $0$ \kzedit{or a negative constant}}
\end{align*}
with $H_0$ and $H_1$ denoting the null and alternative hypotheses, respectively. 
The notion of convergence is related to \( T \to \infty \) by definition, making it challenging to verify directly with a finite $T$. As an alternative, we propose a more tractable hypothesis test, albeit a weaker one, that still captures the essence of our objective:
\begin{align*}
    H_0&: \text{The sequence } \left(\text{Regret}_{\mathscr{A}}\left((f_\tau)_{\tau\in[t]}\right)/t\right)_{t \in [T]} \text{ does not exhibit a decreasing pattern}
    \\
    H_1&:  \text{The sequence } \left(\text{Regret}_{\mathscr{A}}\left((f_\tau)_{\tau\in[t]}\right)/t\right)_{t \in [T]} \text{ shows a decreasing pattern}
\end{align*}
where the ``decreasing pattern'' here refers to the case when \emph{more than} $1/2$ of the elements in the sequence satisfies that $\text{Regret}_{\mathscr{A}}\left((f_\tau)_{\tau\in[t]}\right)/t > \text{Regret}_{\mathscr{A}}\left((f_\tau)_{\tau\in[t+1]}\right)/(t+1)$. Note that we will only apply the framework when the sequence $\left(\text{Regret}_{\mathscr{A}}\left((f_\tau)_{\tau\in[t]}\right)/t\right)_{t \in [T]}$ is non-negative, since a negative regret is even more favorable and directly implies no-regret behaviors. 

Ideally, one should check if $\text{Regret}_{\mathscr{A}}\left((f_\tau)_{\tau\in[t]}\right)/t$ approaches zero \kzedit{or some negative constant} as $t$ goes to infinity. With a finite $T$ value, testing these hypotheses provides a method to quantify this -- whether we reject $H_0$ offers a way to measure it. 
To this end, one needs to count the number of $\text{Regret}_{\mathscr{A}}\left((f_\tau)_{\tau\in[t]}\right)/t - \text{Regret}_{\mathscr{A}}\left((f_\tau)_{\tau\in[t+1]}\right)/(t+1) > 0$, for which we use \Cref{prop:testing} below to provide some understanding of {(how small)} the probability it happens under various counts. For example, with the default choice  of $T=25$ in our experiments later,  one can see from \Cref{prop:testing} that: $\PP_{H_0}(\cE(17, 25)) < 0.032,  \PP_{H_0}(\cE(19, 25)) < 0.0035, \PP_{H_0}(\cE(21, 25)) < 0.00014$, i.e., one can easily reject  $H_0$ with high probability. We will report the $p$-value of $H_0$, denoted as $p_{trend}$, as the output of this framework.

\testing*
}
\begin{proof}
    Under the null hypothesis $H_0$, the probability $p$ that \(\text{Regret}_{\mathscr{A}}\left((f_\tau)_{\tau\in[t]}\right)/t - \text{Regret}_{\mathscr{A}}\left((f_\tau)_{\tau\in[t+1]}\right)/(t+1) > 0 \) is less than $\frac{1}{2}$. Therefore, if we consider the event $\cE(s, T)$, we have
    \begin{align}\label{eq:prob}
        \PP_{H_0}(\cE(s, T)) = \sum_{k = s}^{T-1} p^k (1-p)^{T-1-k} \begin{pmatrix}
T-1 \\
k
\end{pmatrix} \leq  \frac{1}{2^{T-1}} \sum_{k = s}^{T-1}  \begin{pmatrix}
T-1 \\
k
\end{pmatrix}
    \end{align}
    since \( s \geq \frac{T-1}{2} \).
\end{proof}

{\color{black}
\paragraph{On the underlying assumption for \Cref{eq:prob}.}
Our \textit{trend-checking} framework was meant to be designed for general sequences $\{a_t\}_{t=1}^T$ for which we \textit{do not know beforehand} how they were generated, since in the online learning setting, by definition, there should be \textit{no} prior assumption on how $\{\text{Regret}_t/t\}_{t=1}^T$ is generated, which very much depends on \textit{both} how the loss sequences and how the policies are generated (by the algorithms).

Our approach implicitly assumes that $(a_{t+1} - a_t)_{t=1}^T$ is mutually independent. We used this assumption since without knowing how $\{\text{Regret}_t/t\}_{t=1}^T$ were generated, {one possible (statistical) assumption to model arbitrarily changing sequences is that at each $t$, some new element is generated randomly and independently, without being affected/biased by any previous elements in the sequence (since we do not know a priori how to model it)}. Meanwhile, it is possible that the assumption might not hold since it depends on how loss sequences are generated or how LLM behaves. However, it is possible that \Cref{eq:prob} still holds approximately. Specifically, we define $$
\Delta_t = \frac{\text{Regret}_t}{t} - \frac{\text{Regret}_{t+1}}{t+1},
$$
and treat $(\Delta_t)_{t=1}^T$ as random variables. We first compute the correlations among those random variables in \Cref{fig:cor} using data from \Cref{sec:experiments_online}, where we can see that the correlations among those random variables are indeed quite small. Meanwhile, this further implies that
\$
\EE\left[ \sum_{t=1}^T \bm{1}[\Delta_t>0] \right] &= \sum_{t=1}^{T}\EE \left[\bm{1}[\Delta_t>0]\right],\\
\text{Var}\left(\sum_{t=1}^T \bm{1}[\Delta_t>0]\right)&\approx \sum_{t=1}^{T}\text{Var} \left(\bm{1}[\Delta_t>0]\right),
\$
i.e., the random variable $\sum_{t=1}^T \bm{1}[\Delta_t>0]$ indeed has the same first-order and second-order moment as in  the case where those random variables $\{\bm{1}[\Delta_t>0]\}_{t\in[T]}$ are independent. Therefore, we regard a Binomial distribution (i.e., assuming $\{\bm{1}[\Delta_t>0]\}_{t\in[T]}$ to be independent) to be an  approximation for the actual behaviors of $\sum_{t=1}^T \bm{1}[\Delta_t>0]$, which finally gives \Cref{eq:prob}. In fact, when binary random variables have weak correlations (but are not necessarily independent), using the Binomial distribution as an approximation for their summation is also used in the Systems Engineering literature \citep{hoyland2009system}.

\begin{figure*}[!t]
\centering\includegraphics[width=0.9\textwidth]{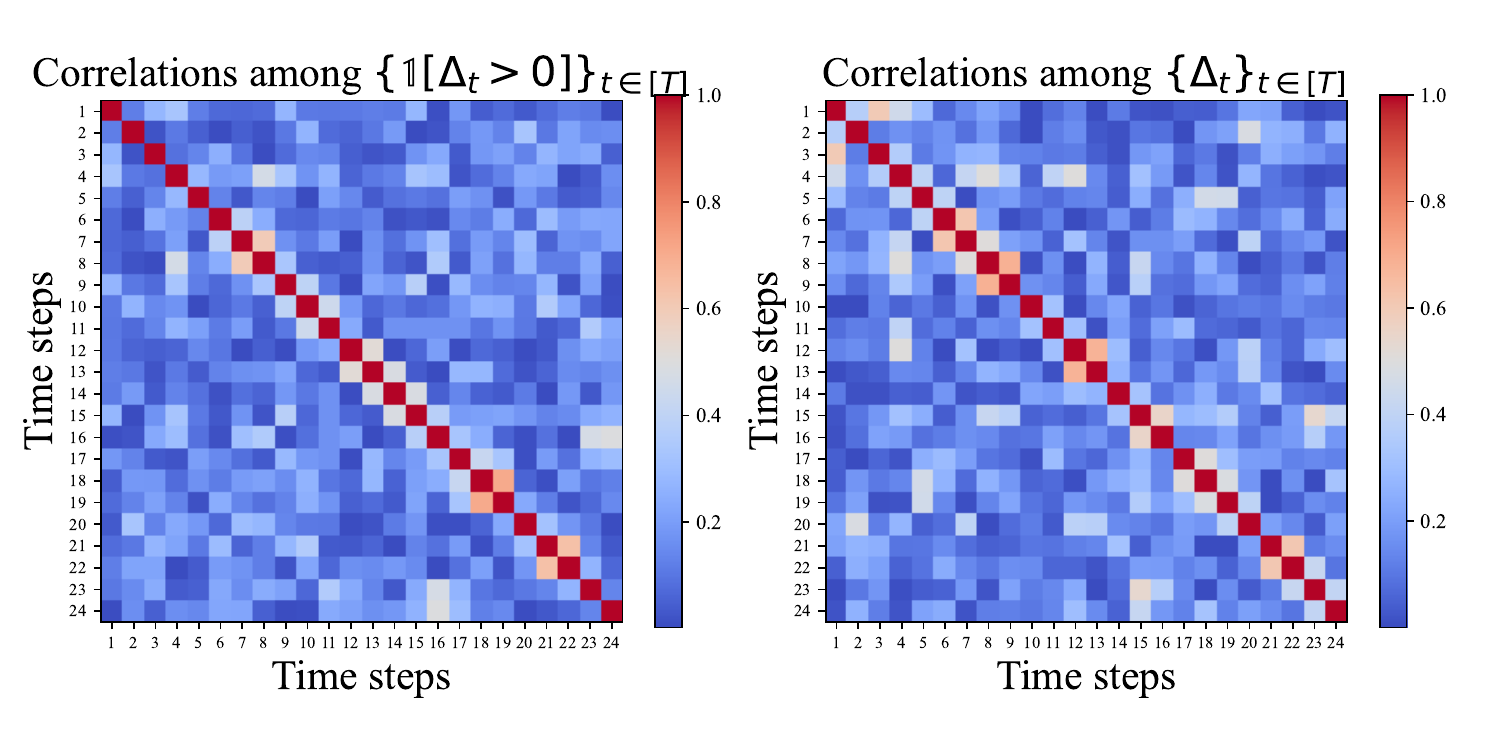}
 \caption{The absolute value of Pearson correlation coefficient for the random variables $\{\bm{1}[\Delta_t>0]\}_{t\in[T]}$ and $\{\Delta_t\}_{t\in[T]}$ using data obtained in \Cref{sec:experiments_online}. }\label{fig:cor}
 \end{figure*}

}

\subsection{Deferred Experiments for Non-stationary Environments in \Cref{sec:experiments_online}} \label{appendix:nonstationary}
We experiment on the setting where  the losses are still changing over time,  but their total variations across time are bounded, more concretely, sublinear in $T$. Correspondingly, we consider the stronger metric of \emph{dynamic regret} here to measure the performance. Note that without constraining the variation of the loss vectors, dynamic regret can be linear w.r.t. $T$ in the worst case. 
Hence, we generate the loss vectors in two different ways:  
1) \emph{Gradual variation.} We firstly sample ${\ell}_1\sim \text{Unif}([0, 10]^d)$. Then for each $t\ge 2$, we uniformly and randomly generate ${\ell}_{t+1}$ under the constraint $\|{\ell}_{t+1}-{\ell}_t\|_\infty\le \frac{1}{\sqrt{t}}$, such that the variations over time are guaranteed to satisfy $\sum_{t=1}^{T-1}\|{\ell}_{t+1}-{\ell}_t\|_{\infty} = {o(T)}$; 
2) \emph{Abrupt variation.} We randomly generate ${\ell}_1\sim\text{Unif}([0, 10]^d)$ and $m$ time indices $\{t_{i}\}_{i\in[m]}$ from $\{1, 2, \cdots, T\}$. At each time step $t_i$ for $i\in[m]$, the sign of the loss vector ${\ell}_{t_i}$ is flipped, i.e.,  we let 
${\ell}_{t_i}\leftarrow 10\cdot\pmb{1}_d-{\ell}_{t_i}$. 
For the specific choice of $T=25$ in our experiments, we choose $m=3$.  
For both cases, the average dynamic regret results are presented  in \Cref{table:no_regret_statistics}. GPT-4 achieves sublinear dynamic regret and outperforms \emph{FTRL/FTPL with Restart}, a standard variant of FTRL/FTPL for non-stationary online learning (see e.g., \cite{besbes2014stochastic}). {We refer to \Cref{sec:intro-dr} for a detailed introduction of FTRL/FTPL {with Restart}.}

\begin{table*}[]
\centering
\resizebox{1\textwidth}{!}{\begin{tabular}{|cc|c|c|c|c|}
\hline
\multicolumn{2}{|c|}{Dynamic regret}                                                                                   & GPT-4                                                                                                               & GPT-3.5 Turbo                                                                                                       & FTRL              & FTPL              \\ \hline
\multicolumn{1}{|c|}{\multirow{2}{*}{\begin{tabular}[c]{@{}c@{}}Full \\ information\end{tabular}}} & Gradual variation & \begin{tabular}[c]{@{}c@{}}$12.61 \pm 7.01$\\ $(p_{trend}, \hat{\beta}_0, p_{reg})=(0.0, 0.58, 0.0)$\end{tabular}   & \begin{tabular}[c]{@{}c@{}}$19.09 \pm 11.33$\\ $(p_{trend}, \hat{\beta}_0, p_{reg})=(0.0, 0.83, 0.0)$\end{tabular}  & $36.58 \pm 24.51$ & $35.19 \pm 22.51$ \\ \cline{2-6} 
\multicolumn{1}{|c|}{} & Abrupt variation  & \begin{tabular}[c]{@{}c@{}}$30.0 \pm 19.91$\\ $(p_{trend}, \hat{\beta}_0, p_{reg})=(0.01, 0.87, 0.0)$\end{tabular}  & \begin{tabular}[c]{@{}c@{}}$33.65 \pm 22.51$\\ $(p_{trend}, \hat{\beta}_0, p_{reg})=(0.08, 0.96, 0.0)$\end{tabular} & $36.52 \pm 27.68$ & $36.24 \pm 28.22$ \\ \hline
\multicolumn{1}{|c|}{\multirow{2}{*}{Bandit}}                                                      & Gradual variation & \begin{tabular}[c]{@{}c@{}}$21.39 \pm 10.86$\\ $(p_{trend}, \hat{\beta}_0, p_{reg})=(0.0, 0.78, 0.0)$\end{tabular}  & \begin{tabular}[c]{@{}c@{}}$28.42 \pm 21.6$\\ $(p_{trend}, \hat{\beta}_0, p_{reg})=(0.0, 0.83, 0.0)$\end{tabular}   & $37.64 \pm 21.97$ & $36.37 \pm 20.7$  \\ \cline{2-6} 
\multicolumn{1}{|c|}{} & Abrupt variation  & \begin{tabular}[c]{@{}c@{}}$35.94 \pm 28.93$\\ $(p_{trend}, \hat{\beta}_0, p_{reg})=(0.42, 0.95, 0.0)$\end{tabular} & \begin{tabular}[c]{@{}c@{}}$30.76 \pm 25.48$\\ $(p_{trend}, \hat{\beta}_0, p_{reg})=(0.92, 1.01, 0.0)$\end{tabular} & $36.52 \pm 27.68$ & $ 38.82\pm 26.17$ \\ \hline
\end{tabular}}
\caption{
Dynamic regret of {GPT-3.5 Turbo/}GPT-4 in a non-stationary environment with either full-information or bandit feedback. Every experiment is conducted with 25 rounds. No-regret behaviors of  {GPT-3.5 Turbo/}GPT-4 are validated by both of our frameworks (low $p$-values and  $\hat{\beta}_0<1$). {The only exception is GPT-3.5 Turbo on loss sequence with abrupt variations under bandit feedback. This indicates that GPT-3.5 Turbo may not be capable of dealing with an abruptly changing environment with limited feedback, although the average regret achieved eventually  is still lower than that of other baselines.}}  
\label{table:no_regret_statistics}
\end{table*}

\clearpage
\subsection{Deferred Experiments for Bandit-feedback Environments in \Cref{sec:experiments_online}} \label{appendix:bandit}

Although pre-trained LLMs have achieved good performance in online learning with full-information feedback, it is unclear whether they can still maintain no-regret with only bandit feedback. 
For such problems, we modify the prompt and protocol of interactions slightly, where we still ask the LLM agent to provide a policy $\pi_t$ at time step $t$, then sample one $a_t\sim\pi_t(\cdot)$. {In the bandit setting, the LLM agent can only access $(a_t,\ell_{ta_t})$. Instead of directly feeding it to the agent, we feed an estimate of the loss vector $\hat{\ell}_t\in\RR^d$, where $\hat{\ell}_{t}(a) \leftarrow \frac{\ell_{t}(a)}{\pi_{t}(a)}\mathbbm{1}(a_t = a)$ for all $j \in [d]$.} Note that such an operation of \emph{re-weighting} the loss (when the loss is non-negative) by the inverse of the probability is standard in online learning when adapting full-information-feedback no-regret algorithms to the bandit-feedback ones \citep{auer2002nonstochastic}. 
Later, we will also show the benefits of such operations (cf.  \Cref{sec:theoretical_insights}). We compare the performance of pre-trained LLMs with that of the counterparts of FTRL with bandit feedback,  e.g., EXP3 \citep{auer2002nonstochastic} and the bandit-version of FTPL \citep{abernethy2015fighting}, in both \Cref{fig:bandit-feedback} and \Cref{table:no_regret_statistics}, where GPT-4 consistently achieves lower regret.
\arxivconfirmed{ 
\begin{figure*}[!t]
\hspace{-8pt}	\centering\includegraphics[width=1.02\textwidth]{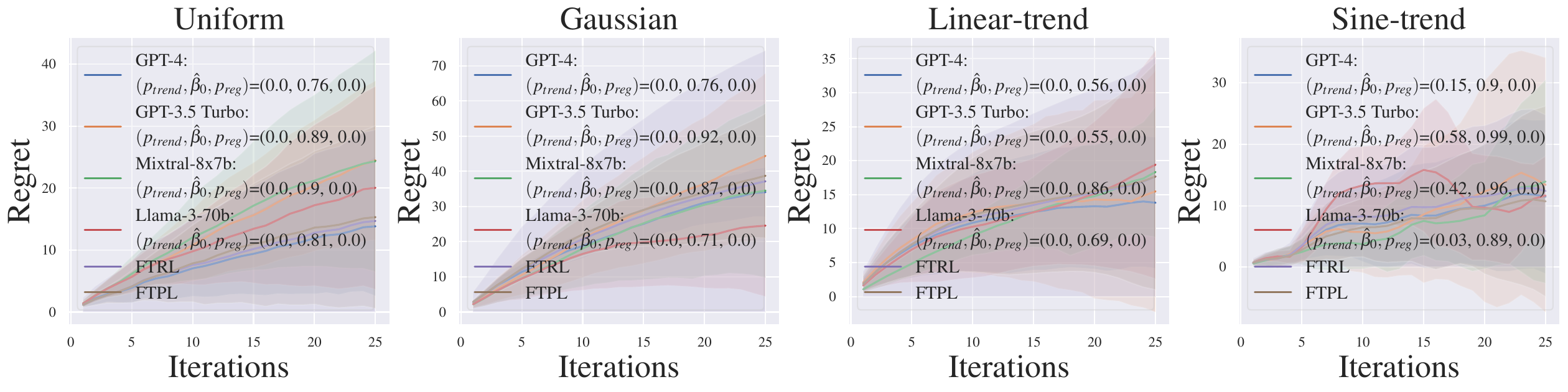}
 \caption{Regret of pre-trained LLMs  for online learning with bandit feedback in $4$ different settings. It performs comparably and sometimes even better  than well-known no-regret learning  algorithms, {variants of FTRL and  FTPL with bandit-feedback}.}\label{fig:bandit-feedback}
 \end{figure*}
}
\clearpage

\subsection{Additional Figures for \Cref{sec:experiments_games}}

  \begin{figure*}[!h]
\hspace{-8pt}	\includegraphics[width=1.02\textwidth]{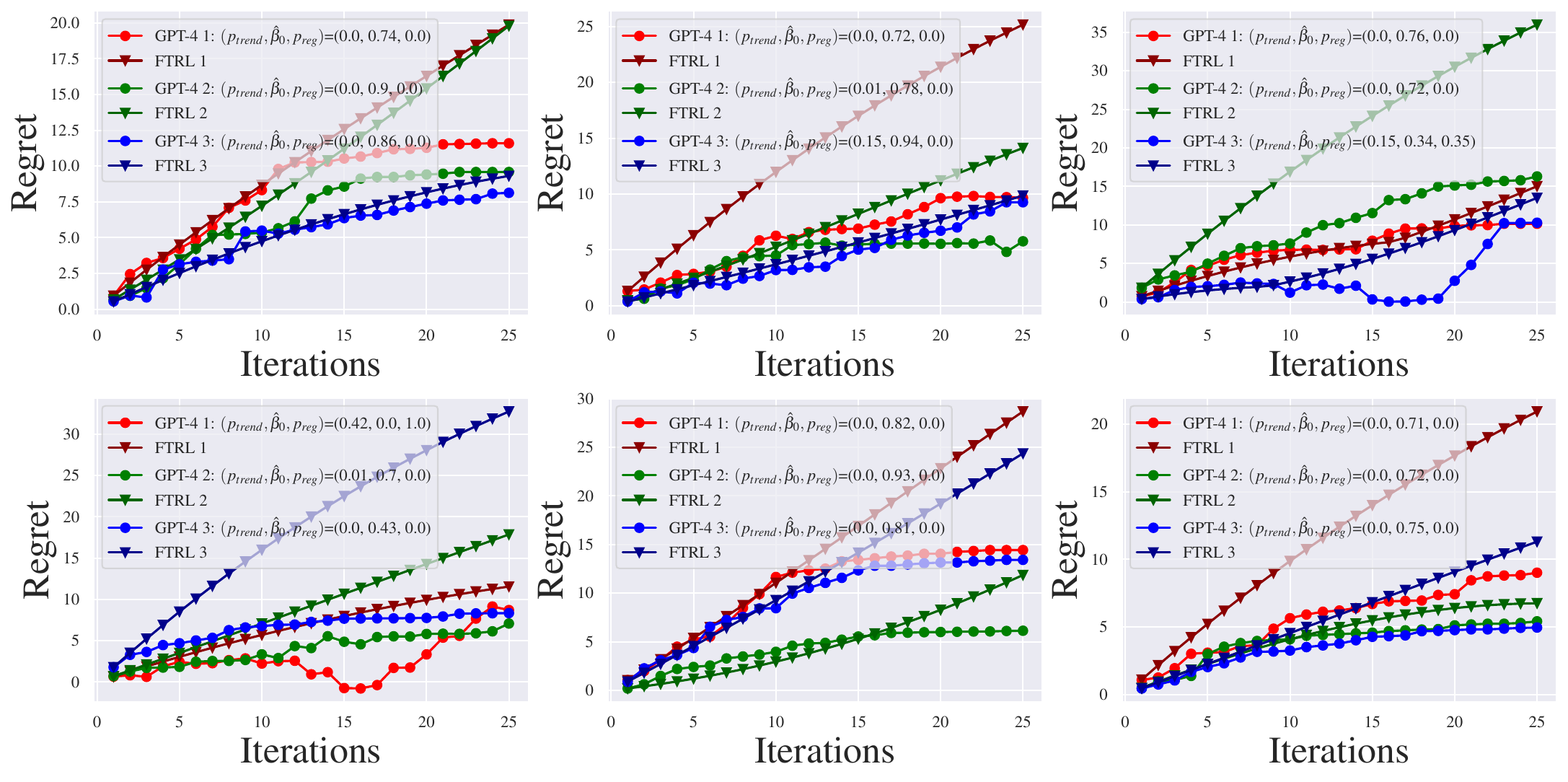}
\caption{Regret of GPT-4 and the FTRL algorithm in $6$ randomly generated  three-player  general-sum games. GPT-4 has comparable (even better)  no-regret properties when compared with the FTRL algorithm{, according to the frameworks in \Cref{sec:framework_test} and the graphical trends.}}
\label{fig:333}
\end{figure*}
 \begin{figure*}[!h]
\hspace{-8pt}	\includegraphics[width=1.02\textwidth]{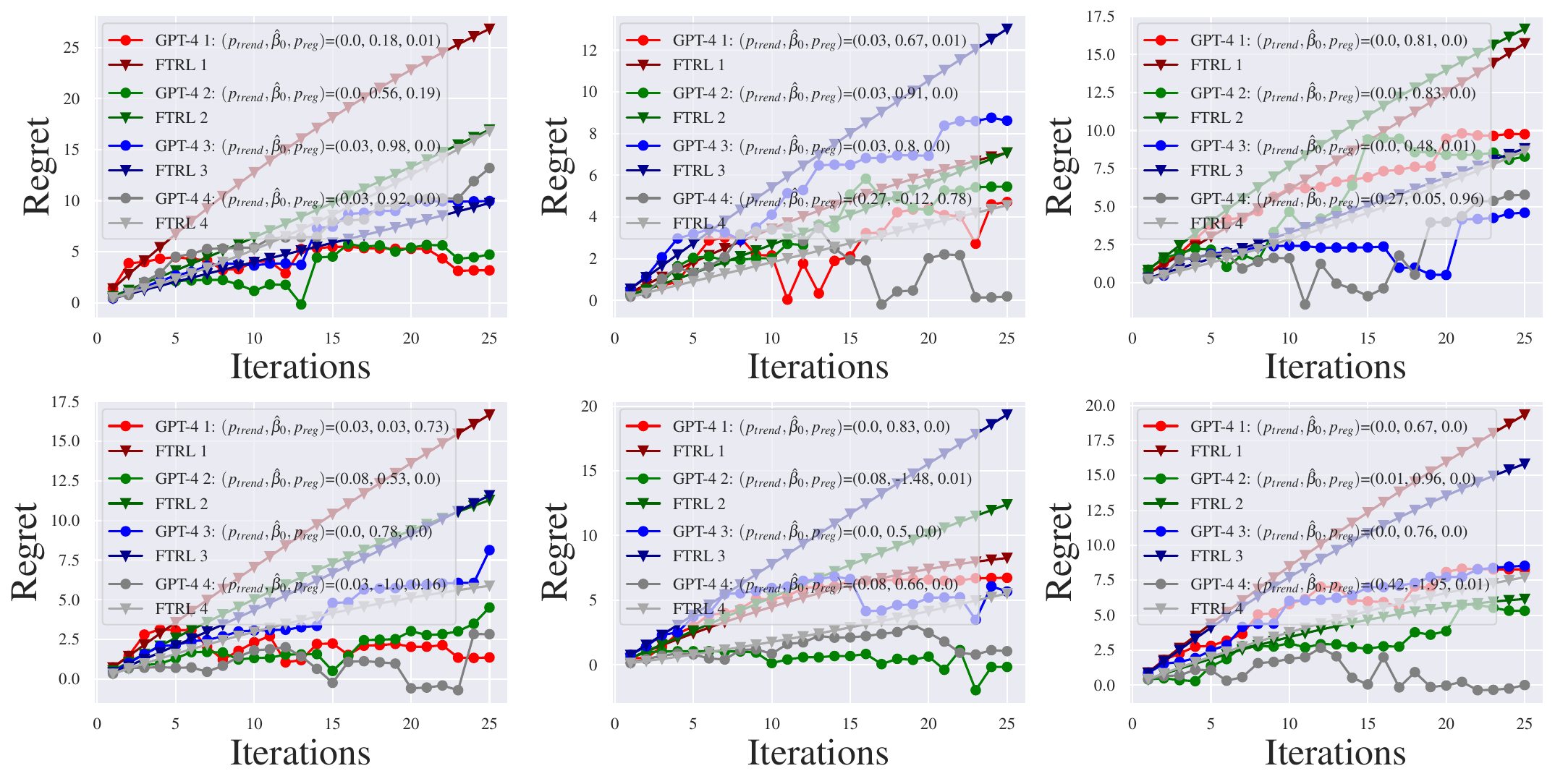}
\caption{Regret of GPT-4 and the FTRL algorithm in $6$ randomly generated four-player general-sum games. GPT-4 has comparable (even better)  no-regret properties when compared with the FTRL algorithm,  {according to the frameworks in \Cref{sec:framework_test} and the graphical trends.}} 
\label{fig:3333}
\end{figure*}

\clearpage 

\subsection{Additional Results for \Cref{sec:yes_regret_example}}
\label{ssec:additional_yes_REGRET}
For \textit{\textbf{Example 2}}, we evaluate LLMs on both the \(c = 100\) and \(c = 200\) cases. The results and comparisons are presented in \Cref{fig:feder} using a temperature of $0$ to minimize the randomness for such fixed problem instances, where we can confirm that GPT-4 with raw history identifies the pattern and is able to achieve decreasing, negative regret during the first \(c = 100\) or  \(c = 200\) rounds, whereas FTRL, FTPL, and GPT-4 with only summarized history cannot detect the trend and then make adaptive decisions. Meanwhile, after first $c$ rounds, the LLM with raw history can identify that the pattern for the loss vectors has changed to adjust its policy, and its regret grows more slowly than the LLM with only summarized history. 

\kzedit{Such observations further demonstrate the fundamental differences in the stochastic settings considered in \cite{krishnamurthy2024can} and our non-stochastic settings: the summarized history, an essential factor for the successful configuration in \cite{krishnamurthy2024can}, can be good statistics in the i.i.d. setting (as a good estimate of the \emph{mean} of the losses), while it loses information and can be highly ineffective in the non-stochastic settings that are highly adversarial \citep{feder1992universal}. In contrast, with raw history, GPT-4 was able to better identify the pattern of the sequence and make good predictions to achieve even negative regret  values.}

\begin{figure}[!h]
    \centering
    \includegraphics[width=0.7\linewidth]{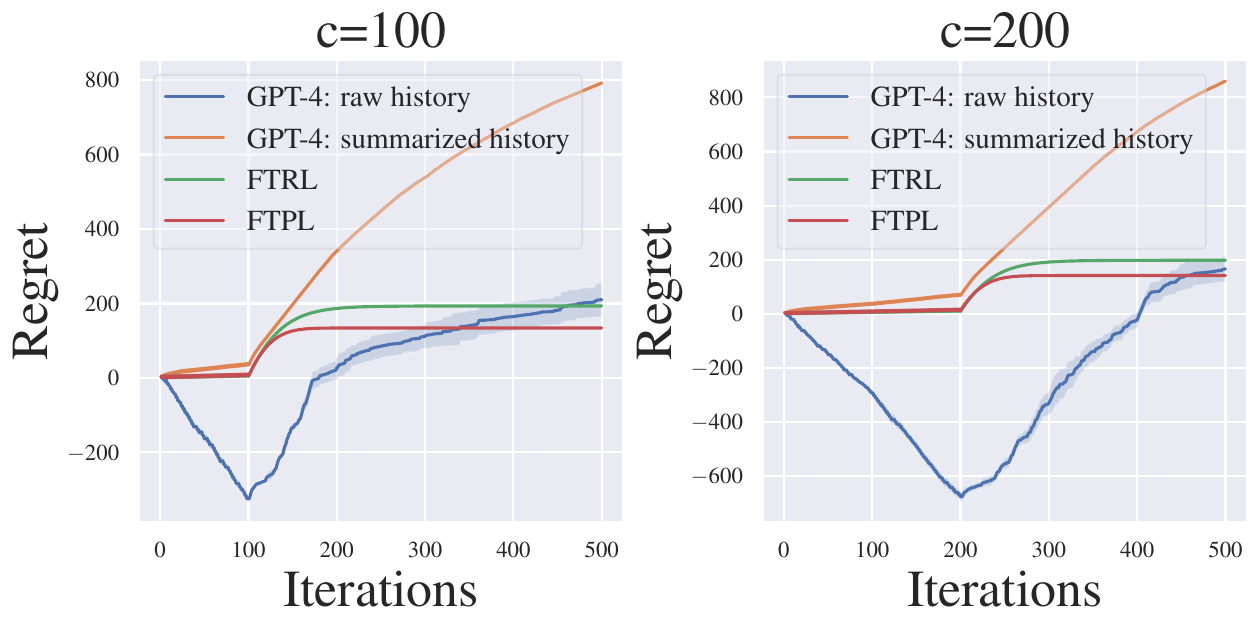}
    \caption{Comparing LLMs on \textit{\textbf{Example 2}} in \Cref{sec:yes_regret_example}  with raw history as the input and summarized history as the input.}
    \label{fig:feder}
\end{figure}

{\color{black}
\paragraph{Explaining the better performance of LLMs on losses with trends via in-context learning.} 
LLMs' in-context-learning capability of being able to \emph{infer} the underlying \emph{trend} in the above case might offer one explanation for the observations above. 
Specifically, the task of predicting $\ell_{T+1}$ given past loss sequences  $\ell_{1:T}$ could be understood as an in-context learning problem as follows:  the demonstration/in-context dataset is given by the following input and label pairs
$ 
D=\{x_t, y_t\}_{t\in[T-1]},
$ where $x_t=\ell_{1:t}$ and $y_t=\ell_{t+1}$ for each $t\in[T-1]$. Then, LLMs given such demonstration/context $D$ will make prediction based on $x_T=\ell_{1:T}$ (to predict $y_T$, i.e., the next loss vector $\ell_{T+1}$). In other words, in-context learning, in this case, is firstly learning the \emph{trend} from the $T-1$ pairs of inputs and labels,  and then making a prediction {\color{black} of the next loss. 
Hence, when there exists an underlying pattern, in-context-learning can accurately predict the next loss (when raw history is given), and thus achieves good no-regret performance. This perspective may offer an explanation of why LLMs can achieve better performance than FTRL/FTPL when the loss sequences have an obvious trend. Note that, this may also be used to explain why raw-history-based input outperforms the summarized-history-based input in the experiments above -- the latter loses such a ``context'' information, as the mean of the history losses is not sufficient to predict/infer the underlying trend (even when there exists one). Finally, note that, this ``trend prediction'' explanation does not apply to general loss sequences, for which our explanation in \Cref{sec:theoretical_insights} that connects LLMs' behaviors to FTPL still applies.}

\subsection{Ablation Study on the Prompt}\label{appendix:ablation}

\paragraph{Ablation study on online learning.} To systematically understand the effects of our prompt on the final performance of the LLM agent, we create three different variants of our prompt and report the regret by using different prompts in \Cref{fig:ablation}. Specifically, for \textbf{Ablation1}, we remove examples to illustrate the game rules. For \textbf{Ablation2}, we remove the number of iterations. {For \textbf{Ablation3}, we incorporate some \emph{hints} for the LLM for decision-making, including the hints to suggest it to pay attention to the loss history, to behave more greedily at the end of an episode, and also to explain the reason of its decision step-by-step. The latter hint is a popular technique in prompt engineering known as the \emph{Chain-of-Thought} prompting \citep{wei2022chain}}. Finally, we recall that $d$ is the number of actions in all prompts. 

\begin{figure}[!h]
    \centering
    \includegraphics[width=1.02\textwidth]{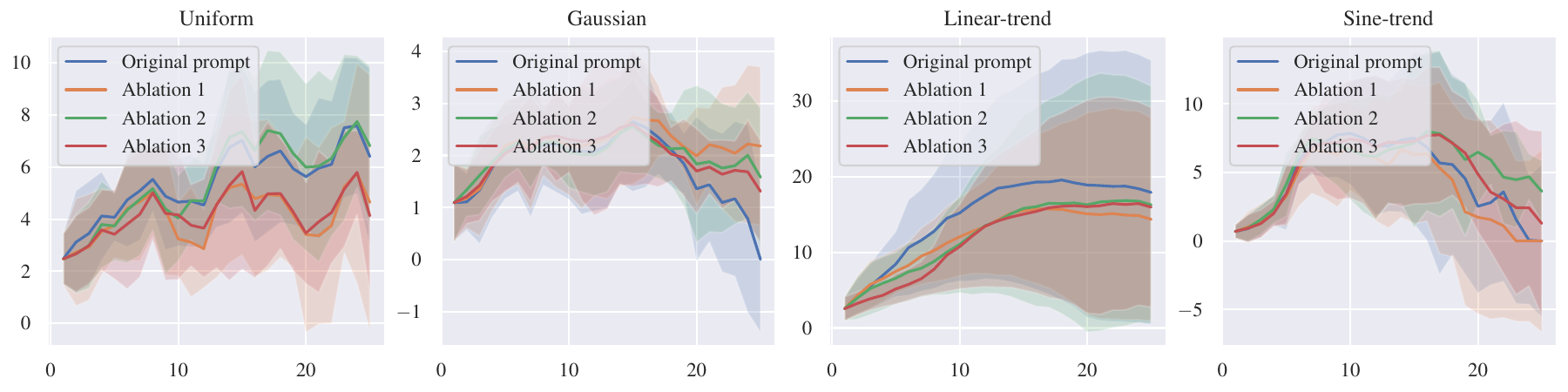}
    \caption{Ablation study on our prompt design.}
    \label{fig:ablation}
\end{figure}

\safevspace{20pt}
\VerbatimInput[label=\fbox{\textbf{Original prompt}}]{./original.tex}
\safevspace{0.3cm}
\VerbatimInput[label=\fbox{\textbf{Ablation1}: no examples}]{./ablation1.tex}
\safevspace{0.3cm}
\VerbatimInput[label=\fbox{\textbf{Ablation2}: no round information}]{./ablation2.tex}
\safevspace{0.3cm}
\VerbatimInput[label=\fbox{\textbf{Ablation3}: adding hints}]{./ablation3.tex}

We can see in \Cref{fig:ablation} that the performances of LLM agents are consistent under different variants of the prompts.

\paragraph{Ablation study on repeated games.} For the game setting, we also investigate whether explicitly informing LLM agents that they are \texttt{``playing a repeated matrix game with some other opponents''} would affect the performance. Therefore, we evaluate three different prompts by informing LLM agents that they are \texttt{playing a matrix game}, \texttt{solving multi-arm bandit}, or solving general \texttt{decision-making} problems,  in the first line of the prompt. We show the performance of such three prompts in \Cref{fig:ablation-game}, where it is seen that LLM agents' performance on repeated games is consistent among these variants of the prompts. 

\begin{figure*}[!t]
\centering\includegraphics[width=0.36\textwidth]{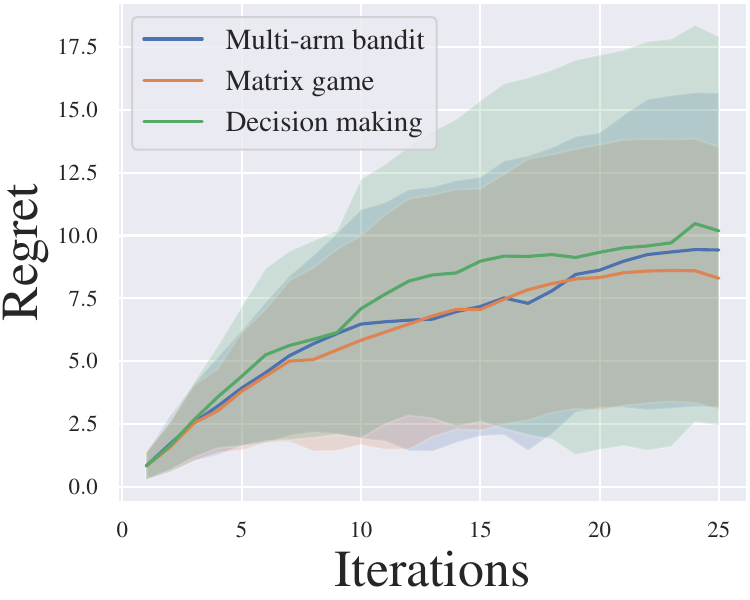}
 \caption{Regret of GPT-4 for repeated games under $3$ different prompt ablations. Its performance is consistent among three different prompts.}\label{fig:ablation-game}
 \end{figure*}

\subsection{Results for GPT-4 Turbo}
{When preparing our paper, we are aware of a more advanced version of GPT-4, GPT-4 Turbo. Although GPT-4 has successfully exhibited no-regret behaviors in most problems, we also  evaluate and report the performance of its more advanced version here. Specifically, we compare GPT-4 and GPT-4 Turbo for the repeated game scenario in \Cref{fig:gpt-4-turbo}, where we validate that GPT-4 Turbo performs either comparably with or better than GPT-4.}

\begin{figure*}[!t]
    \centering
    \begin{minipage}{0.33\textwidth}
        \centering
        \includegraphics[width=\linewidth]{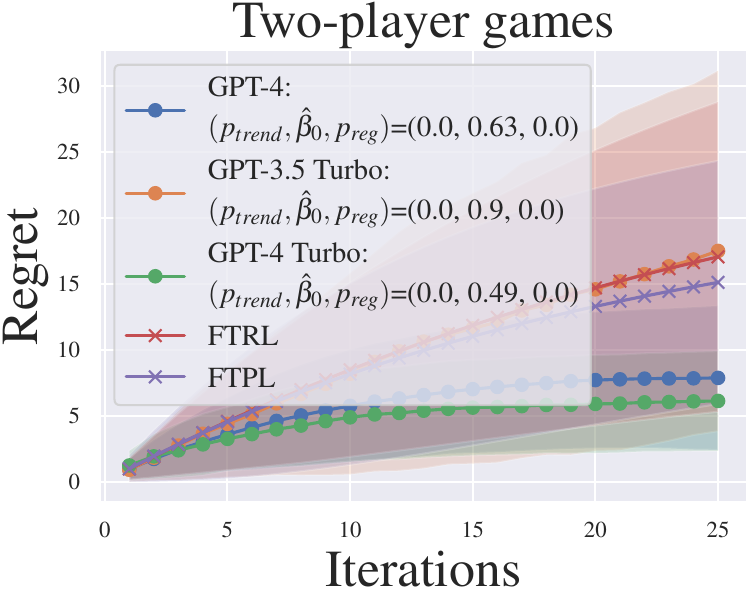}
    \end{minipage}\hfill
    \begin{minipage}{0.33\textwidth}
        \centering
        \includegraphics[width=\linewidth]{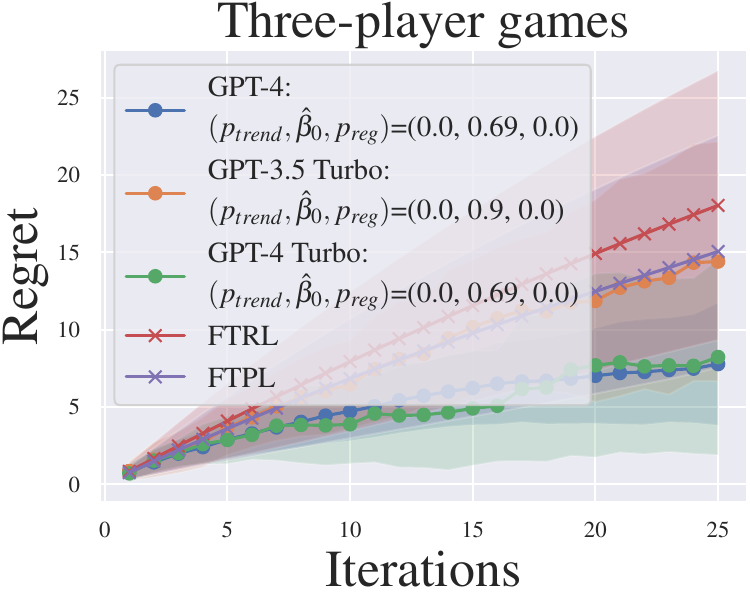}
    \end{minipage}
    \begin{minipage}{0.33\textwidth}
        \centering
        \includegraphics[width=\linewidth]{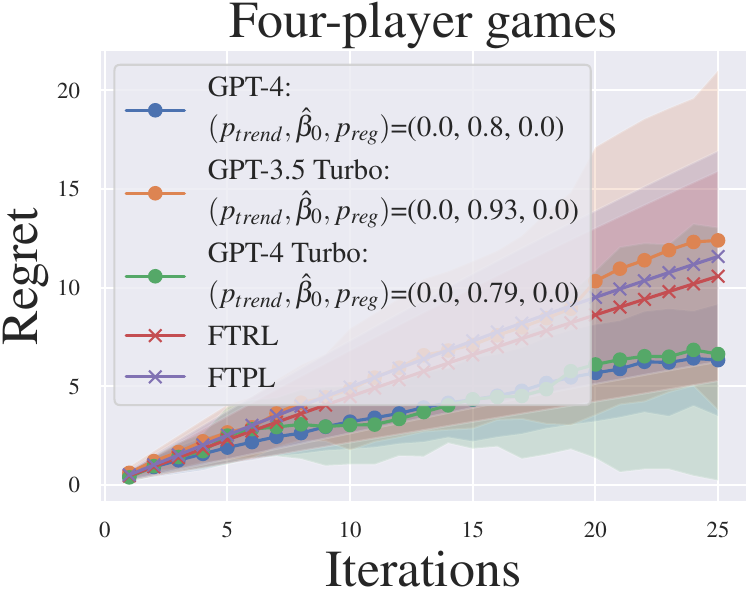}
    \end{minipage}
\caption{Comparisons of GPT-4 Turbo with GPT-4 and GPT-3.5 Turbo. {Due to the symmetry of agents in randomly generated games, we report the regret of one agent for ease of presentation. We can see that GPT-4 Turbo performs on par with or better than GPT-4.}}
\label{fig:gpt-4-turbo}    
\end{figure*}

\subsection{LLM Agents' Explanation  on Their Output Policies}\label{sec:LLM_explain}

We here present two examples of LLMs' explanations of their output policies, showing how they \emph{reason} in their decision-making. 
\safevspace{16pt}
\VerbatimInput[label=\fbox{\textbf{Output Example 1}}]{./output.tex}
\safevspace{16pt} 
\VerbatimInput[label=\fbox{\textbf{Output Example 2}}]{./output2.tex}

As observed from the output, the LLM tends to use the history of the reward vectors by looking at the ``sum/average of the accumulated rewards'', and tends to introduce randomization in decision-making, in order to be ``less predictable and exploitable''. These are several key components in achieving no-regret in online learning and games \citep{cesa2006prediction}, explaining the empirical evidence we had in \Cref{sec:empirical_insights}.

{\color{black} 
\subsection{Case Studies on Real-world Applications} 
In this subsection, we evaluate the sequential decision-making abilities of LLMs in realistic scenarios from the perspective of regret and dynamic regret. While several studies have explored sequential decision-making using synthetic scenarios \citep{krishnamurthy2024can,wu2023smartplay, xia2024beyond,akata2023playing} or real-world data scenarios \citep{liu2023chatgpt,wang2023recmind,wu2024surveylargelanguagemodels}, none have explicitly analyzed regret or dynamic regret. As a result, the \emph{theoretical optimality} of such a sequential decision-making process remains unclear.

Our first case study investigates single-agent sequential decision-making using real-world data, leveraging the same dataset and experimental setup as \citep{liu2023chatgpt}. The second case study explores a two-player negotiation scenario, providing insights into dynamic interactions and their impact on decision-making performance.

\subsubsection{Sequential Recommendation}
We consider the task of sequential recommendation, a task that people have been employing LLMs to solve with success \citep{liu2023chatgpt, wang2023recmind, wu2024surveylargelanguagemodels}. Note that how existing literature \citep{liu2023chatgpt} uses LLMs to solve this task fits exactly into our online learning framework, where humans feed a history of items the user have interacted with to the LLM and then ask the LLM to recommend the item (or several items) the user may want to interact next. The entire process carries on repeatedly. 

Formally, the problem is as follows. Given a sequence of history items the user has interacted with $(x_{1}, x_2, \cdots, x_{t-1})$, where each $x_i\in D$ for $i\in[t-1]$ and $D$ is the collection of all items, the LLM needs to recommend $n$ items that the user might interact with in the next step $t$. Typically, the LLM should also give a priority on the $n$ items it recommends. For simplicity here, we here assume they are of equal priority. In other words, at step $t$, the LLM will take an action $a_t\subseteq D$ with $|a_t|=n$, hoping what the user will interact at step $t$ belongs to $a_t$. Hence, the loss is given by $\ell_t(a_t, x_t) :=\bm{1}[x_t\not\in a_t]$. Correspondingly, the regret by our definition is given by 
\$
\text{Regret}(x_{1:T})=\sum_{t=1}^T \ell_t(x_t, a_t) -\min_{a} \sum_{t=1}^T \ell_t(x_t, a).
\$

We refer to \citep{liu2023chatgpt} for a more detailed introduction. Meanwhile, we use the real-world data and follow the experimental setup of \citep{liu2023chatgpt}. 

In the left one of \Cref{fig:rec}, we can observe that LLMs can achieve expressively low and sublinear regret on such a real-world application with real-world data. As a comparison, in the right one of \Cref{fig:rec}, we replace the real-world data with synthetic data generated in a uniformly random way (it is worth mentioning that the prompt setting still follows the setup of sequential recommendation of \cite{liu2023chatgpt}), where we can see that LLMs can still be no-regret. However, interestingly, LLMs perform better on real-world data, which validates that real-world applications can exhibit certain trends/structures, for which LLMs can exploit and achieve superior performance as we have shown in our paper through synthetic problems with trends.

\begin{figure*}[!t]
\centering\includegraphics[width=0.8\textwidth]{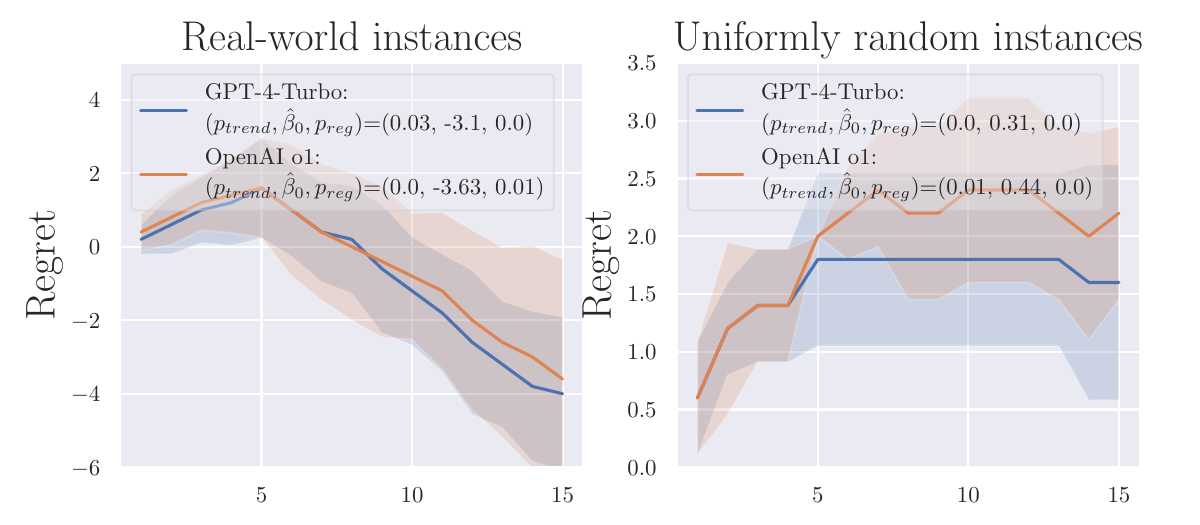}
 \caption{{\color{black}We evaluate GPT-4-Turbo and OpenAI o1 on both real-world data and uniformly random synthetic data, where we can see both models can still achieve sublinear regret.}}\label{fig:rec}
 \end{figure*}

\subsubsection{Interactive  Negotiation}
The experiment was designed to simulate negotiation scenarios between two LLMs, designated as LLM A and LLM B, across multiple turns. The primary objectives were to analyze multi-agent sequential decision-making processes and quantify regret. For each repetition, an LLM generated unique negotiation topics. Based on these topics, the LLM also created the context, objectives, and relevant background information to design engaging and interactive negotiation scenarios.  

\paragraph{Negotiation Process.} The negotiation process was executed in a turn-based manner, with each turn comprising three steps:

\begin{enumerate}
    \item \textbf{Intention Generation}: Each LLM defined its goal for the turn, specifying what it aimed to achieve with its response.
    
    \item \textbf{Response Generation}: Based on the defined intention and the dialogue history, each LLM generated a response.
    
    \item \textbf{Alternative Response Generation}: Three distinct alternative replies were produced for each original response. These alternatives represented diverse negotiation strategies or perspectives while preserving the original intention.
\end{enumerate}

\paragraph{Response Evaluation.} After the dialogue concluded, all responses—both original and alternatives—were evaluated using a scoring scale from 1 to 10 based on the following criteria for each turn:

\begin{itemize}
    \item \textbf{Clarity}: How clear and understandable the reply is.
    \item \textbf{Relevance}: How pertinent the reply is to the negotiation topic and the defined intention.
    \item \textbf{Engagement}: How engaging or persuasive the reply is in fostering further dialogue.
    \item \textbf{Alignment with the Stated Intention}: How well the conversation aligns with the turn's stated intention following the reply. For alternative replies, this was assessed by hypothetically replacing the original reply with an alternative and evaluating the alignment based on the entire conversation.
\end{itemize}

Each response was scored using an LLM as the evaluator. Although human evaluation would be preferable, the use of an LLM as a scorer was chosen for scalability. This approach is common in the LLM domain and is sometimes referred to as G-eval (where ``G'' stands for GPT) \citep{liu2023g}.

\paragraph{Dynamic Regret Analysis. }
Finally, dynamic regret was calculated to measure suboptimality by comparing the scores of the original replies against the highest-scoring alternative responses. Since calculating regret typically requires hindsight knowledge of the best possible responses, which requires rollout of every possible dialogues, we decide to analyze on dynamic regret. Dynamic regret analysis provided a quantitative measure of decision-making effectiveness across turns. This analysis offered insights into how regret dynamics can inform improved decision-making strategies in real-world negotiation contexts.

\paragraph{Example.}
Here is an example from our simulation:

{\color{black} \textbf{Step 1: Generate Topics and Backgrounds.} Topics and backgrounds were generated using a language model.}

\begin{mdframed}[backgroundcolor=gray!10, linewidth=1pt]
{\color{black}\textit{Topic}: The Trade-Off Negotiation Between Eco-Tech Innovator and Traditional Manufacturing Tycoon

\textit{Background of Player A: }
Eco-Tech Innovator (Jordan Green). Jordan Green is the CEO of a rapidly growing startup, EcoWave Technologies, which specializes in developing sustainable energy solutions and eco-friendly manufacturing processes. With a background in environmental science and engineering, Jordan is passionate about reducing carbon footprints and promoting renewable energy sources. Their innovative products, such as biodegradable materials and energy-efficient machinery, have garnered attention and accolades within the green tech community. However, despite the startup's promise, EcoWave faces challenges in scaling production and reaching wider markets due to limited financial resources and manufacturing capabilities.

\textit{Background of Player B: }
Traditional Manufacturing Tycoon (Robert Steele). Robert Steele is the owner of Steele Industries, a well-established manufacturing company known for its mass production of consumer goods. With decades of experience in the industry, Robert has built a reputation for efficiency and profitability, but his company has been criticized for its environmental impact and reliance on outdated practices. Facing increasing pressure from regulators and consumers to adopt sustainable practices, Robert is aware that failing to adapt could threaten the future of his business. However, he is also concerned about the costs and risks associated with transitioning to new technologies and processes, which could disrupt his current operations.

\textit{Detailed Situation:} Jordan and Robert have agreed to meet to discuss a potential partnership. Robert is interested in integrating EcoWave's sustainable technologies into his manufacturing processes, which could help Steele Industries improve its environmental footprint and public image. However, he is also hesitant about the costs associated with adopting these new technologies and is particularly concerned about how this might affect his established workforce and production timelines. On the other hand, Jordan sees this as a golden opportunity to scale EcoWave's innovative solutions and gain access to Robert's extensive distribution networks. However, Jordan is wary of compromising on key values regarding environmental integrity and sustainable practices, fearing that a partnership with a traditional manufacturer could dilute EcoWave's mission. The negotiation aims to balance Robert's need for cost-effectiveness and operational continuity with Jordan's commitment to sustainability and innovation. Both players must navigate their differing priorities, explore potential compromises, and consider the long-term implications of their decisions on their businesses and the environment.}
\end{mdframed}
{\color{black} \textbf{Step 2: Generate Conversation.}} We iterated between {\color{black} \textbf{Step 2-1}} and {\color{black} \textbf{Step 2-2}} to create a conversation with a length of 20 turns. In this setup, each LLM (A and B) contributed 10 responses based on the evolving conversation history.

{\color{black} \textbf{Step 2-1: Intention Generation and Reply Generation.}} We provided the initial turn’s intention for LLM A: 

\begin{mdframed}[backgroundcolor=gray!10, linewidth=1pt]
{\color{black}\textit{Intention}: In my next response, I aim to outline potential trade-offs that both I and Robert might consider in order to reach a mutually beneficial agreement. I will also suggest ways we can ensure that the partnership aligns with their core values of sustainability and operational efficiency. This will help set the stage for a constructive dialogue that addresses our respective concerns and priorities. 

\textit{Reply of Player A based on the intention: }  Thank you for meeting with me, Robert. I believe we have a unique opportunity to collaborate that could benefit both EcoWave and Steele Industries. I understand your concerns about the costs and risks of integrating new technologies, but I think we can explore some trade-offs. For instance, if we can demonstrate the long-term cost savings and efficiency gains from adopting sustainable practices, it might help alleviate some of your concerns. Additionally, I'm open to discussing a phased implementation approach that allows for gradual integration without disrupting your current operations. How do you feel about that, and what trade-offs might you be willing to consider to make this partnership work?}
\end{mdframed}

{\color{black} \textbf{Step 2-2: Alternative Generation.}} The LLM also generated three alternative responses based on the aforementioned intention. 

\begin{mdframed}[backgroundcolor=gray!10, linewidth=1pt]
{\color{black}\textit{Alternative 1: } I appreciate your concerns, Robert, and it's great to see your commitment to a smooth transition. Instead of jumping straight into a pilot project, how about we organize a joint workshop with your key team members? This would allow us to collaboratively explore areas where EcoWave's technologies can fit into your existing operations. We can analyze potential impacts, address any workforce concerns, and identify specific processes that could benefit from our innovations. This way, we ensure that everyone is on board and feels included in the transition. What are your thoughts on setting up a workshop to discuss this further?

\textit{Alternative 2: } I appreciate your concerns, Robert, and I think it's important for us to find a starting point that feels comfortable for you. Instead of a full-scale integration right away, how about we explore a collaborative approach where we can jointly assess some specific production lines that may benefit from our technologies? We could conduct a thorough analysis of these areas together, identifying potential efficiency improvements and cost benefits through a detailed study. This way, we can base our decisions on solid data while ensuring that your existing operations remain stable. Additionally, I'd be more than willing to work closely with your team to address any training needs that may arise. Would this kind of collaborative assessment resonate with you?

\textit{Alternative 3: } I appreciate your insights, Robert, and I recognize the importance of ensuring a smooth transition. Instead of jumping straight into a pilot project, how about we start by conducting a joint assessment of your current operations? This would allow us to pinpoint specific areas where EcoWave's technologies could fit seamlessly. By collaborating on this assessment, we can address your concerns and tailor our solutions to your unique processes. Additionally, I'd be happy to share case studies from other manufacturers who have successfully integrated our technologies, demonstrating the benefits they achieved without significant disruptions. What do you think about this collaborative approach?}
\end{mdframed}

\begin{wrapfigure}{r}{0.35\textwidth}  
\centering
  \includegraphics[width=0.35\textwidth]{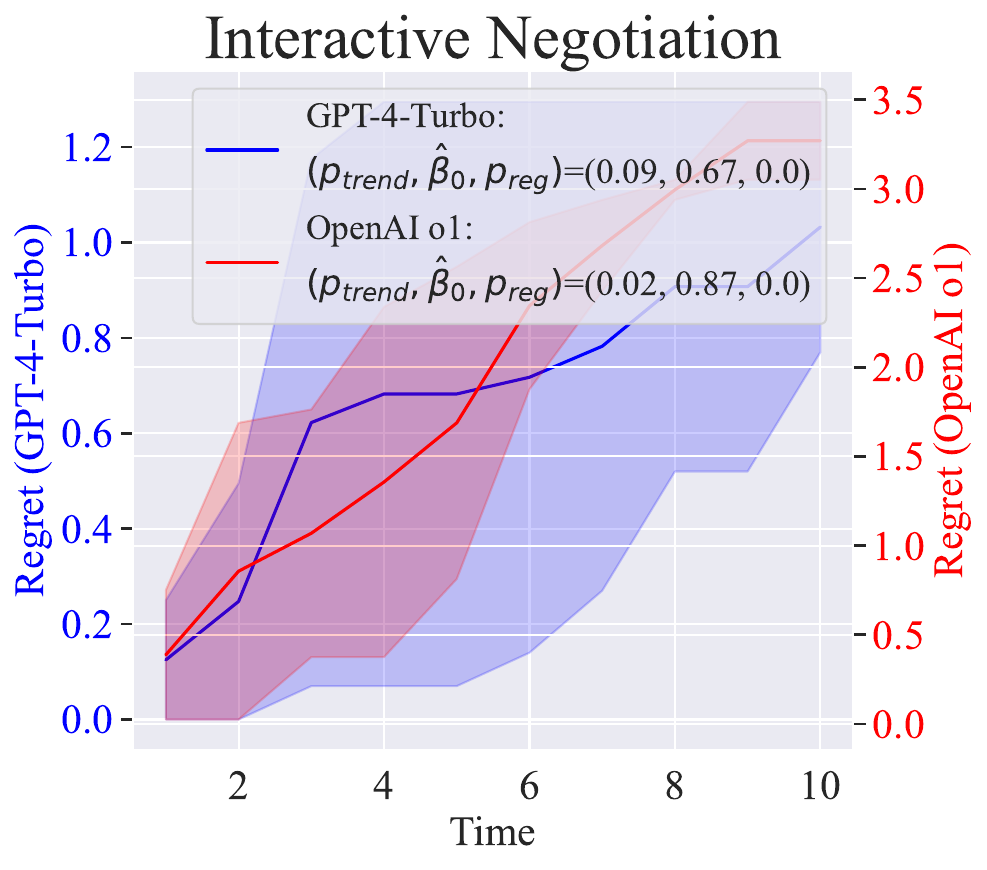}
  \vspace{-10mm}
  \caption{{\color{black}We evaluate GPT-4-Turbo and OpenAI o1 on two-player Negotiation, where we can see both models can achieve no-regret in the regression framework.}}
  \label{fig:nego}
\end{wrapfigure}

\textbf{Step 3: Evaluation.} Each reply and its alternatives were evaluated using G-eval.

\paragraph{Experiment Setting.} We set the maximum number of turns to 10 and evaluated two language models: GPT-4-Turbo and OpenAI’s o1 model. For each model, we tested performance across 10 different scenarios. 

\paragraph{Results.} Interestingly, both models demonstrated no-dynamic-regret behavior within the regression framework. Both models also exhibited low \(p\)-values, with GPT-4-Turbo achieving 0.09 and OpenAI o1 achieving 0.02. The detailed results are shown in \Cref{fig:nego}.

}
}

\safevspace{80pt}

\icml{\subsection{Detailed Environment Settings for the Experiments in \Cref{sec:experiments_online}}
\label{ssec:environment}

We now present the experimental results on the no-regret behavior of pre-trained LLMs in online learning in: 1) arbitrarily changing environments, 2) non-stationary environments, and 3) bandit-feedback environments. 

\paragraph{Online learning in arbitrarily changing environment.} 
We first consider the setting with  arbitrarily changing environments, which are instantiated as follows:  
1) \emph{Randomly-generated loss sequences}. At every timestep, we generate a random loss vector ${\ell}_t\sim \operatorname{Unif}([0, 10]^d)$ or ${\ell}_t\sim \cN(5\pmb{1}_d, I)$ 
with clipping {to $[0,10]$} 
to ensure the boundedness, such that the loss vectors of different timesteps can be arbitrarily different; 
2) \emph{Loss sequences with a predictable trend.} Although many real-world environments may change, they often change following certain patterns. Therefore, we consider two representative trends, a \emph{linear} trend and a \emph{periodic}  (sinusoid) trend. For the linear trend, we sample $a, b\sim\text{Unif}([0, 10]^d)$ and let $\ell_t = (b-a)\frac{t}{T}+a$  for each $t\in[T]$. For the periodic trend, we sample $a, b\sim\text{Unif}([0, 10]^d)$ and let $\ell_t = 5(1+\sin(at+b))$ for each $t\in[T]$. In the experiments, we choose $d=2$.  The average regret (over multiple randomly generated instances) performance is presented in \Cref{fig:online},  
where we compare GPT-4 with well-known no-regret algorithms, FTRL with entropy regularization {and FTPL with Gaussian perturbations} (with tuned parameters). It is seen that these pre-trained LLMs can indeed achieve no-regret and often have smaller regrets than these baselines.

\paragraph{Online learning (in non-stationary environment).} 
We then experiment on the setting where  the losses are still changing over time,  but their total variations across time are bounded, more concretely, sublinear in $T$. Correspondingly, we consider the stronger metric of \emph{dynamic regret} here to measure the performance. Note that without constraining the variation of the loss vectors, dynamic regret can be linear w.r.t. $T$ in the worst case. 
Hence, we generate the loss vectors in two different ways:  
1) \emph{Gradual variation.} We firstly sample ${\ell}_1\sim \text{Unif}([0, 10]^d)$. Then for each $t\ge 2$, we uniformly and randomly generate ${\ell}_{t+1}$ under the constraint $\|{\ell}_{t+1}-{\ell}_t\|_\infty\le \frac{1}{\sqrt{t}}$, such that the variations over time are guaranteed to satisfy $\sum_{t=1}^{T-1}\|{\ell}_{t+1}-{\ell}_t\|_{\infty} = {o(T)}$; 
2) \emph{Abrupt variation.} We randomly generate ${\ell}_1\sim\text{Unif}([0, 10]^d)$ and $m$ time indices $\{t_{i}\}_{i\in[m]}$ from $\{1, 2, \cdots, T\}$. At each time step $t_i$ for $i\in[m]$, the sign of the loss vector ${\ell}_{t_i}$ is flipped, i.e.,  we let 
${\ell}_{t_i}\leftarrow 10\pmb{1}_d-{\ell}_{t_i}$. 
For the specific choice of $T=25$ in our experiments, we choose $m=3$.  
For both cases, the average dynamic regret results are presented  in \Cref{table:no_regret_statistics}. It can be seen that GPT-4 achieves sublinear dynamic regret and outperforms \emph{FTRL/FTPL with Restart}, a standard variant of FTRL/FTPL for non-stationary online learning (see e.g., \cite{besbes2014stochastic}). {We refer to \Cref{sec:intro-dr} for a more detailed introduction of FTRL/FTPL {with Restart}.} 

{
\paragraph{Extension to bandit-feedback settings.} 
Although pre-trained LLMs have achieved good performance in online learning with full-information feedback, it is unclear whether they can still maintain no-regret with only bandit feedback. 
For such problems, we modify the prompt and protocol of interactions slightly, where we still ask the LLM agent to provide a policy $\pi_t$ at time step $t$, then sample one $a_t\sim\pi_t(\cdot)$. {In the bandit setting, the LLM agent can only access $(a_t,\ell_{ta_t})$. Instead of directly feeding it to the agent, we feed an estimate of the loss vector $\hat{\ell}_t\in\RR^d$, where $\hat{\ell}_{t}(a) \leftarrow \frac{\ell_{t}(a)}{\pi_{t}(a)}\mathbbm{1}(a_t = a)$ for all $j \in [d]$.} Note that such an operation of \emph{re-weighting} the loss by the inverse of the probability is standard in online learning when adapting full-information-feedback no-regret algorithms to the bandit-feedback ones \citep{auer2002nonstochastic}. 
Later, we will also show the benefits of such operations (cf.  \Cref{sec:theoretical_insights}). We compare the performance of pre-trained LLMs with that of the counterparts of FTRL with bandit feedback,  e.g., EXP3 \citep{auer2002nonstochastic} and the bandit-version of FTPL \citep{abernethy2015fighting}, in both \Cref{fig:bandit-feedback} and \Cref{table:no_regret_statistics}, where GPT-4 consistently achieves lower regret.}
{ 
\begin{figure*}[!t]
\hspace{-8pt}	\centering\includegraphics[width=1.02\textwidth]{figs/sec3-1-bandit.pdf}
 \caption{Regret of {GPT-3.5 Turbo/}GPT-4 for online learning with bandit feedback in $4$ different settings. It performs comparably and sometimes even better  than well-known no-regret learning  algorithms, {variants of FTRL and  FTPL with bandit-feedback}.}\label{fig:bandit-feedback}
 \end{figure*}
}
}

\conf{\subsection{Detailed Environment Settings for the Experiments in \Cref{sec:experiments_online}}
\label{ssec:environment}
\paragraph{Online learning in arbitrarily changing environment.} 
We first consider the setting with arbitrarily changing environments, with the following instantiations:  
1) \emph{Randomly-generated loss sequences}. At every timestep, we generate a random loss vector ${\ell}_t\sim \operatorname{Unif}([0, 10]^d)$ or ${\ell}_t\sim \cN(5\cdot\bm{1}_d, I)$ 
with clipping {to $[0,10]$} 
to ensure the boundedness, such that the loss vectors of different timesteps can be arbitrarily distinct; 
2) \emph{Loss sequences with predictable trend.} Although real-world environments can change arbitrarily, they could often exhibit certain patterns. Therefore, we consider two representative trends, a \emph{linear} trend and a \emph{periodic}  (sinusoid) trend. For the linear trend, we sample $a, b\sim\text{Unif}([0, 10]^d)$ and let $\ell_t = (b-a)\frac{t}{T}+a$ for each $t\in[T]$. For the periodic trend, we sample $a, b\sim\text{Unif}[0, 10]^d$ and let $\ell_t = 5(1+\sin(at+b))$ for each $t\in[T]$. In the experiments, we choose $d=2$.  The average regret (over multiple randomly generated instances) performance is presented in \Cref{fig:online}, 
where we compare GPT-4 with well-known no-regret algorithms, FTRL with entropy regularization {and FTPL with Gaussian perturbations} (with tuned parameters). It is seen that these pre-trained LLMs can indeed achieve no-regret and often have smaller regrets than baselines.

\paragraph{Online learning (in non-stationary environment).} 
We then experiment on the setting when the losses are still changing over time but their variations across time are bounded, more concretely, sublinear in $T$. Correspondingly, we consider the stronger metric of \emph{dynamic regret} here to measure the performance. Note that without constraining the variation of the loss vectors, dynamic regret can be linear w.r.t $T$ in the worst case. 
Hence, we generate the loss vectors in two different ways:  
1) \emph{Gradual variation.} We firstly sample ${\ell}_1\sim \text{Unif} [0, 10]^d$. Then for each $t\ge 2$, we uniformly and randomly generate ${\ell}_{t+1}$ under the constraint $\|{\ell}_{t+1}-{\ell}_t\|_\infty\le \frac{1}{\sqrt{t}}$, such that the variations over time are guaranteed to satisfy $\sum_{t=1}^{T-1}\|{\ell}_{t+1}-{\ell}_t\|_{\infty} = \cO(\sqrt{T})$; 
2) \emph{Abrupt variation.} We randomly generate ${\ell}_1\sim\text{Unif}[0, 10]^d$ and $m$ time indices $\{t_{i}\}_{i\in[m]}$ from $\{1, 2, \cdots, T\}$. At each time step $t_i$ for $i\in[m]$, the sign of the loss vector ${\ell}_{t_i}$ is flipped, i.e.,  we let 
${\ell}_{t_i}\leftarrow 10-{\ell}_{t_i}$. 
For the specific choice of $T=25$ in our experiments, we choose $m=3$.  
For both cases, the average dynamic regret results are presented { in \Cref{table:no_regret_statistics}. It can be seen that GPT-4 achieves sublinear dynamic regret and outperforms Restart FTRL/FTPL.} \kz{if some results are NOT GOOD, we should mention here also. just dont forget to explain everything.} \cp{which results are bad btw?}

{
\paragraph{Extension to bandit-feedback settings.} 
Although pre-trained LLMs have achieved good performance in online learning with full-information feedback, it is unclear whether they can still maintain no-regret with only bandit feedback. 
For such problems, we modify the prompt and protocol of interactions slightly, where we still ask the LLM agent to provide a policy $\pi_t$ at time step $t$, but manually sample one $a_t\sim\pi_t$ and then inform the agent of the sampled action $a_t$, together with the loss corresponding to  that action, i.e., {$\ell_{tj} \leftarrow \frac{\ell_{tj}}{\pi_{tj}}\mathbbm{1}(a_t = j)$ for all $j \in [d]$ instead of providing $\ell_{ta_t}$.} Note such an operation of \emph{re-weighting} the loss by the inverse of the probability is standard in online learning when adapting full-information-feedback no-regret algorithms to the bandit-feedback ones. 
Later, we will also show the provable benefits of such operations (cf.  \Cref{sec:theoretical_insights}). We compare the performance with the counterparts of FTRL in the bandit-feedback setting, e.g., EXP3 \citep{auer2002nonstochastic} and the bandit-version of FTPL \citep{abernethy2015fighting} in both \Cref{fig:bandit-feedback} and \Cref{table:no_regret_statistics}, where GPT-4 consistently achieves lower regret.}\cp{if we change more I will copy and paste to appendix - just lmk.}.} 

\conf{\subsection{Detailed Environment Settings for the Experiments in \Cref{sec:experiments_games}}
\label{ssec:environment-game}
\paragraph{{Representative  games.}}
{We first test on all $6$ representative two-player general-sum games (\emph{win-win, prisoner's dilemma, unfair, cyclic, biased,} and \emph{second best}) studied in \cite{robinson2005topology} (see a detailed introduction of these games in \Cref{appendix:six})}. For each type of the game, we conduct $20$ repeated experiments.

\paragraph{{Randomly generated  games.}} 
To further validate the no-regret behavior of LLM agents, we also test on $50$ randomly generated three-player general-sum games, and $50$ randomly generated four-player general-sum games, where each entry of the payoff matrix is sampled randomly from $\operatorname{Unif}[0, 10]$. These are larger and more challenging settings than the two-player and structured cases above.  

We summarize experimental results in \Cref{fig:average}, which 
 are similar to the above: for all types of games, GPT-4 agents achieve sublinear regret, which is comparable with that obtained by FTRL for most games (See \Cref{fig:333} and \Cref{fig:3333} for
more results).   

\begin{figure*}[!t]
	\includegraphics[width=0.93\textwidth]{figs/3_by_3_by_3_by_3.pdf}
\caption{Regret of GPT-4 agents and the FTRL algorithm in $6$ randomly generated four-player general-sum games. GPT-4 agents have comparable (even better)  no-regret properties when compared with the FTRL algorithm. }
\label{fig:3333}
\end{figure*}

\begin{figure*}[!t]	\centering\includegraphics[width=0.95\textwidth]{NeurIPS/figs/game_022724.pdf}
	\caption{Demonstration of the prompts used for multi-player repeated games. A human moderator does not provide the game's payoff matrices to the LLM agents. Instead, at each round, the human moderator provides each player's own payoff vector history.}
 \label{fig:game-prompt}
 \end{figure*}
 \begin{figure*}[!t]
	\includegraphics[width=0.93\textwidth]{figs/3_by_3_by_3.pdf}
\caption{Regret of GPT-4 agents and the FTRL algorithm in $6$ randomly generated  three-player  general-sum games. GPT-4 agents have comparable (even better)  no-regret properties when compared with the FTRL algorithm. \kz{larger legend/axis fonts, the same as caption. check/change overall.} \cp{we need to change the number of the experiment we did. (add additional results)}}
\label{fig:333}
\end{figure*}

}
\conf{\subsection{Detailed Environment Settings for the Experiments in \Cref{sec:yes_regret_example}}
\label{ssec:environment-counter}
To begin with, we consider a well-known example that \emph{follow-the-leader} (FTL) algorithm \cite{shalev2012online} suffers from linear regret \cite{hazan2016introduction}, where $\ell_{11}=5, \ell_{12}=0$ and $\ell_{t(2-t\%2)} = 10, \ell_{t(1+t\%2)} = 0$ for $t\ge 2$ where $\%$ is the modulo operation. Interestingly, GPT-4 agent can easily identify the pattern for the loss sequence that the optimal action alternates, thus accurately predicting the loss it will receive and achieving near zero regret in \Cref{fig:FTL}. 

 \begin{figure}[!t]
\centering\includegraphics[width=0.45\textwidth]{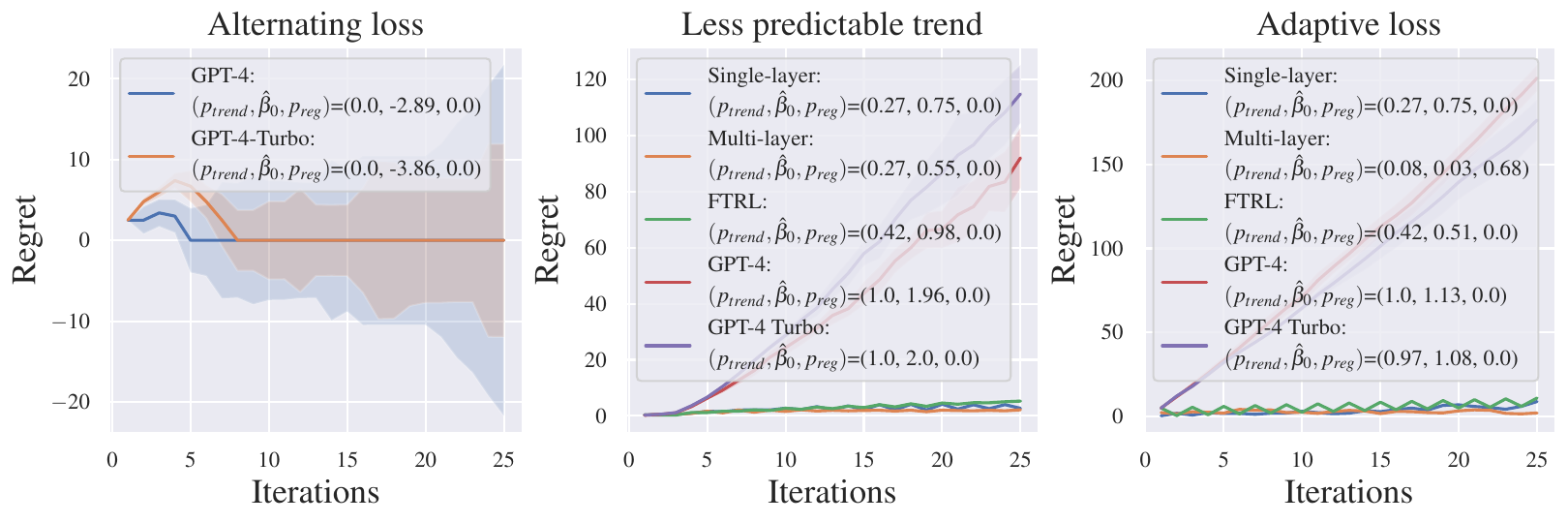}
	\caption{\kz{move this back?} Regret of both GPT-4 and GPT-4-Turbo under the seminal counter-example for FTL.}\label{fig:FTL}
 \end{figure}

\paragraph{Less predictable loss sequence. } Inspired by FTL observation, we design a new loss sequence that is \emph{similar but less predictable}. Specifically,
we construct the following (simple) loss sequence with $2$ actions such that ${\ell}_{t(1+t\%2)} = \min(25/t, 10), {\ell}_{t(2-t\%2)} \sim\text{Unif}[9, 10]$  for $t \in [25]$. 

\paragraph{Adaptive loss sequence.} We also develop a simpler but more \emph{adaptive} loss sequence that takes the full power of the adversary in the online learning setup. After the GPT-4 agent provides $\pi_t$, we choose ${\ell}_t$ such that ${\ell}_{t(\argmax_{i}\pi_{ti})} = 10$ and ${\ell}_{t(3-\argmax_{i}\pi_{ti})} = 0$. We report the regret averaged for the later two settings over $20$ repeated experiments using GPT-4 and more advanced GPT-4 Turbo in \Cref{fig:counter}, where linear regret is confirmed by both trend-checking and regression-based frameworks.

\begin{figure*}[!t]
	\centering\includegraphics[width=0.99\textwidth]{figs/sec3-1-bandit.pdf}
	\caption{We report regret of GPT-4 for online learning with bandit feedback in $4$ different settings. It performs comparably or better than the bandit version of well-known no-regret algorithms, FTRL, FTPL.}\label{fig:bandit-feedback}
 \end{figure*}}

\vspace{5mm}
\section{Deferred Results and Proofs   in \Cref{sec:theoretical_insights}}
\label{appendix:explanation_theoretical_insights}
\conf{\subsection{Pre-Trained LLMs Have  Similar  Regret as Humans (Who Generate Data)}

Recently, a growing literature has evidenced that the intelligence level 
of LLM agents are determined by, and in fact mimic, those of human beings who generate the data for pre-training the models \citep{park2022social,argyle2023out,horton2023large}.  
The key rationale was that, LLMs (with Transformer parameterization) can approximate the \emph{pre-training data distribution} very well  \citep{xie2021explanation,zhang2023and,lee2023supervised}. 
In such a context, one can expect that LLM agents can achieve similar regret as human decision-makers who generate the pre-training data, as we formally state below.

\begin{restatable}{observation}{regretnotraining}
\label{thm:regret-notraining}
    An LLM {agent} is said to be {pre-}trained with an \emph{$\epsilon$-decision error} if, for any arbitrary $t$ and loss sequences $({\ell}_i)_{i \in [t]}$, the following condition holds: 
    \begin{align*}
        \sup_{\pi \in \Pi} \big| P_{\text{data}}(\pi \given  ({\ell}_i)_{i \in [t]}) - P_{\text{LLM}}(\pi \given  ({\ell}_i)_{i \in [t]})\big|  \leq \epsilon, 
    \end{align*}
    where $P_{\text{data}}$ and $P_{\text{LLM}}$ are the {pre-training data distribution and the pre-trained LLM model}, respectively. 
Then the regret of {an LLM agent}  with $\epsilon$-decision error is bounded as: 
{\begin{align*}
    &\text{(D-)Regret}_{\text{LLM}}\left(({\ell}_t)_{t \in [T]} \right)
    \\
    &\in  \left[\text{(D-)Regret}_{\text{data}}\left(({\ell}_t)_{t \in [T]}\right) 
    \pm \epsilon \\  orm{{\ell}_t} \sup_{\pi \in \Pi} \\  orm{\pi}\right],
\end{align*} 
    where $[a \pm b] := [a-b, a+b]$. }
\end{restatable} 
{
\Cref{thm:regret-notraining} shows that the pre-trained LLM-agent's regret can be controlled by that of the pre-training dataset and the decision error $\epsilon$. A small $\epsilon$ can be achievable if LLM is constructed with the Transformer architecture \citep{zhang2023and,lin2023transformers}. 
}
}

\subsection{Pre-Trained LLMs Have Similar  Regret as Humans (Who Generate Data)}\label{app:observa}

{We first provide a direct observation based on some existing speculation on the capability of Transformer-based LLMs.} 
Recently, a growing literature has evidenced that the intelligence level 
of LLM agents are determined by, and in fact mimic, those of human beings who generate the data for pre-training the models \citep{park2022social,argyle2023out,horton2023large}.  
The key rationale was that, LLMs (with Transformer parameterization) can approximate the \emph{pre-training data distribution} very well  \citep{xie2021explanation,zhang2023and,lee2023supervised}. 
In such a context, one can expect that LLM agents can achieve similar regret as human decision-makers who generate the pre-training data, as we formally state below. 

\begin{restatable}{observation}{regretnotraining}
\label{thm:regret-notraining}
    An LLM {agent} is said to be {pre-}trained with an \emph{$\epsilon$-decision error} if, for any arbitrary $t$ and loss sequences $({\ell}_i)_{i \in [t]}$, the following condition holds:
    \begin{align*}
        \sup_{\pi \in \Pi} \quad \big| P_{\text{data}}(\pi \given  ({\ell}_i)_{i \in [t]}) - P_{\text{LLM}}(\pi \given  ({\ell}_i)_{i \in [t]})\big|  \leq \epsilon, 
    \end{align*}
    where $P_{\text{data}}$ and $P_{\text{LLM}}$ are the {pre-training data distribution and the decision policy distribution of the  pre-trained LLM}, respectively. 
Then, the regret of {an LLM agent}  with $\epsilon$-decision error is bounded as: 
\begin{align*}
&\text{(D-)Regret}_{\text{LLM}}\left(({\ell}_t)_{t \in [T]} \right) \in  \left[\text{(D-)Regret}_{\text{data}}\left(({\ell}_t)_{t \in [T]}\right) 
    \pm \epsilon \norm{{\ell}_t} \sup_{\pi \in \Pi} \norm{\pi}\right],
\end{align*} 
    where $[a \pm b] := [a-b, a+b]$. 
\end{restatable}
\Cref{thm:regret-notraining} shows that the pre-trained LLM-agent's regret can be controlled by that of the pre-training dataset and the decision error $\epsilon$. A small $\epsilon$ can be achieved if LLM is constructed by a rich function class, e.g., the Transformer architecture \citep{zhang2023and,lin2023transformers}. 

\begin{proof}[Proof of \Cref{thm:regret-notraining}]
    For given $(\ell_{t})_{t \in [T]}$,  
    \begin{align*}
        \sum_{t=1}^T  \int_{\pi_t \in \Pi} P_{\text{LLM}}(\pi_t \mid (\ell_{i})_{i \in [t-1]})\langle \ell_t,   \pi_t \rangle d\pi_t  \leq \sum_{t=1}^T  \int_{\pi_t \in \Pi} \left(P_{\text{data}}(\pi_t \mid (\ell_{i})_{i \in [t-1]}) + \epsilon\right)\langle \ell_t,  \pi_t \rangle  d\pi_t 
    \end{align*}
    holds, where we use the convention of  $P_{\text{LLM}}(\pi_t \mid (\ell_{0})):=P_{\text{LLM}}(\pi_t)$ and $P_{\text{data}}(\pi_t \mid (\ell_{0})):=P_{\text{data}}(\pi_t)$. Hence,  
    \begin{align*}
        \text{Regret}_{\text{LLM}}((\ell_{t})_{t \in [T]}) &=\sum_{t=1}^T  \int_{\pi_t \in \Pi} P_{\text{LLM}}(\pi_t \mid (\ell_{i})_{i \in [t-1]}) \langle \ell_t,  \pi_t \rangle d\pi_t - \inf_{\pi \in \Pi} \sum_{t=1}^T \langle \ell_t, \pi \rangle
        \\& \leq\sum_{t=1}^T  \int_{\pi_t \in \Pi} \left(P_{\text{data}}(\pi_t \mid (\ell_{i})_{i \in [t-1]}) + \epsilon\right)\langle \ell_t,  \pi_t \rangle d\pi_t     -  \inf_{\pi \in \Pi} \sum_{t=1}^T \langle \ell_t, \pi \rangle
        \\
        &=  \sum_{t=1}^T  \int_{\pi_t \in \Pi} \left(P_{\text{data}}(\pi_t \mid (\ell_{i})_{i \in [t-1]})\right)\langle \ell_t,  \pi_t \rangle d\pi_t   -  \inf_{\pi \in \Pi} \sum_{t=1}^T \langle \ell_t, \pi \rangle +\sum_{t=1}^T  \int_{\pi_t \in \Pi}\langle \ell_t,  \epsilon \pi_t\rangle d\pi_t 
        \\
        &\leq  \text{Regret}_{\text{data}}((\ell_{t})_{t \in [T]}) + \epsilon \|\ell\|_p \|\pi\|_q T
    \end{align*}
where $\frac{1}{p} + \frac{1}{q} = 1$ and $p, q \geq 1$. Similarly, we can establish the lower bound for $\text{Regret}_{\text{LLM}}((\ell_t)_{t \in [T]})$. To prove the result for the  dynamic-regret case, we can simply change the term $\inf_{\pi \in \Pi} \sum_{t=1}^T \langle \ell_t, \pi \rangle$ in the above derivation to $\sum_{t=1}^T \inf_{\pi \in \Pi}\langle \ell_t, \pi \rangle$.
\end{proof}

\subsection{Background and Motivations for (Generalized) Quantal Response}\label{app:quantal-new}

Formally, the quantal response is defined as follows:
\begin{definition}[Quantal response]
	Given a loss vector \( \ell \in \mathbb{R}^d \), a noise distribution \( \epsilon \sim P_{\text{noise}} \), and $\eta>0$, the quantal response is defined as 
	\$ 
	P_{\text{quantal}}^\eta\big(a \biggiven \ell\big) = \PP\left( a \in \argmin_{a'\in\cA}~~z(a')  \right),\qquad \text{where~~ $z=\ell + \eta \epsilon$.}
	\$
	In essence, this implies that humans are rational but with respect to (w.r.t.) the latent variable \(z\), a perturbed version of \( \ell \), instead of \( \ell \) per se. This addition of noise to the actual loss vector characterizes the {bounded rationality} of humans in decision-making.
\end{definition}

\paragraph{Further motivations for generalized quantal response.} {Note that a \emph{dynamic} version of quantal response in \Cref{quantal} also has implications from behavior economics, and has been recently used to model human behaviors in sequential decision-making \citep{ding2022myopic} (in stochastic and stationary environments).} {Indeed, such a response against multiple loss vectors is believed to be natural, and has also been widely adopted in well-known no-regret learning algorithms of \emph{smooth/stochastic fictitious play} \citep{fudenberg1993learning} and \emph{follow-the-perturbed-leader}  \citep{kalai2005efficient}, 
whose formal definitions can be found in  \Cref{appendix:FTRL}. Finally, note that the response model in \Cref{quantal} does not necessarily involve a \emph{sequential} decision-making process, i.e., the set of losses may not come from the history of an online learning process. 
\subsection{The Example Instantiating \Cref{assump:decomp} }\label{app:example}
\arxiv{  
\begin{example}[An example instantiating   \Cref{assump:decomp}]\label{example:data}
We consider a common decision-making task that may generate the training data, \emph{recommender systems}. An instance of the text data could be: ``On September 29, 2023, user X clicked movie A three times, movie B eight times, and movie C five times''. This sentence corresponds to $x_{N_{i-1}+1:N_{i}}$ for some $i\in[t]$ and serves as a natural language depiction of the numerical $\ell_i$. 
The corresponding label $x_{N_{t}+1:N_{t+1}}$ can be obtained by some user survey: ``User X's favorite movie is movie B''.  Meanwhile, $z$ represents user X's latent, genuine preference for each movie -- information that is private to the user, and cannot be observed or collected in the pre-training dataset. In this example, \Cref{assump:decomp} suggests that  $x_{1:N_{t}}$, which records the frequency of interactions with each movie, serves as an \emph{imperfect estimate} of the user's latent, genuine preference for the movies, while the text   $x_{N_t+1:N_{t+1}}$  depicts the user's favorite movie only based on her latent $z$. 
\end{example}}

\conf{\subsection{Deferred Definition of Generalized Quantal Response in \Cref{sec:theoretical_insights}}
\label{appendixsec:GQR}

\paragraph{Generalized quantal response gives rise to follow-the-perturbed-leader.} 
A seminal model for human behaviors is the \emph{quantal response} model, which assumes that humans are often not \emph{perfect} decision-makers, and their bounded rationality can be modeled through unseen latent variables that influence the decision-making process  \citep{mcfadden1976quantal,mckelvey1995quantal}. Formally, the quantal response is defined as follows:

\begin{definition}[Quantal response]
	Given a loss vector \( l \in \mathbb{R}^d \), a noise distribution \( \epsilon \sim P_{\text{noise}} \), and $\eta>0$, the quantal response is defined as 
	\$ 
	P_{\text{quantal}}^\eta(a \given \ell) = \PP \left( a \in \argmin_{i\in[d]}~~(\ell + \eta \epsilon)[i] \right).
	\$
	In essence, this implies that humans are rational but with respect to the latent variable \( \ell + \eta\epsilon \) instead of \( \ell \). This addition of noise to the actual loss vector characterizes the {bounded rationality} of humans in decision-making.
\end{definition}

Traditional quantal response formulations primarily focused on scenarios with a single loss vector. In online learning, given the \emph{history} information, the decision-maker (either human or  LLM agent) at each time $t$ is faced with \emph{multiple} loss vectors. Hence, we propose the following generalization to model human behavior in online decision-making. 
 
\begin{definition}[Quantal response against multiple losses]\label{quantal} 
	Given a set of loss vectors \( ({\ell}_i)_{i \in [t]} \), a noise distribution \( P_{\text{noise}} \), and $\eta_t>0$, the generalized quantal response is defined as
$P_{\text{quantal}}^{\eta_t}\left(a \given  ({\ell}_i)_{i \in [t]}\right):=  P_{\text{quantal}}^{\eta_t}\left( a \bigggiven  \sum_{i=1}^{t} {\ell}_{i}\right).$ 
\end{definition} 

In simpler terms, the generalized quantal response 
{is defined as} the standard quantal response against {some scaled} summation of the losses. {Note that such a \emph{dynamic} version of quantal response also has implications from behavior economics, and has been recently used to model human behaviors  in sequential decision-making \citep{ding2022myopic} (in stochastic and stationary environments).} Indeed, there is a direct relationship between our Definition \ref{quantal} and a well-known no-regret learning algorithm in online learning, \emph{follow-the-perturbed-leader}  \citep{kalai2005efficient}, whose formal definition can be found in  \Cref{appendix:FTRL}.  

 \begin{restatable}{observation}{FTPL}
 \label{thm:informal_generalized_quantal_FTPL}
	Suppose at each time step $t$, the {decision-maker} {(i.e., human or LLM agent)} response follows \Cref{quantal}, then the decision-making process is equivalent to using the FTPL algorithm with proper perturbation.
 \end{restatable}}

\subsection{Alignment of \Cref{assump:decomp}  with Quantal Response}\label{app:quantal}
Before presenting the technical lemma, based on \Cref{assump:decomp}, we denote the (potentially unkown) mappings that decode semantic information in \Cref{assump:decomp} {into numeric values} as $f$, $g$, such that $f(x_{N_{i-1}+1:N_{i}})=\ell_i\in\RR^d$ for each $i\in[t]$ and $g(x_{N_t+1:N_{t+1}})=a\in\cA$.
\begin{restatable}{lemma}{languagequantal}\label{lem:quantal}
Fix $t\in[T]$, $\sigma>0$. If we model the noise of data collection to be i.i.d. Gaussian distribution in the numeric value space, i.e., 
$$
\PP\left(\left\{f(x_{N_{i-1}+1:N_{i}})\right\}_{i\in[t]}\biggiven  z\right) = \prod_{i=1}^{t} \PP\left(f(x_{N_{i-1}+1:N_{i}})\biggiven z\right)\propto \prod_{i=1}^{t} \exp\left(-\frac{\|f(x_{N_{i-1}+1:N_{i}})- z\|_2^2}{2\sigma^2}\right),
$$
the prior distribution of the latent variable $z$ is also Gaussian, i.e., $z\sim \cN(\pmb{0}_d, \sigma^2 I)$, and the text labels satisfy that
$\PP(g(x_{N_t+1:N_{t+1}})\given z) = \mathbbm{1}\left(g(x_{N_t+1:N_{t+1}})\in\argmin_{a\in\cA}z_a\right)$, then we have 
$$
\PP\left(g(x_{N_t+1:N_{t+1}})\biggiven x_{1:N_t}\right)=P_{quantal}^{\sigma\sqrt{t+1}}\left(g(x_{N_t+1:N_{t+1}})\biggiven \left\{f(x_{N_{i-1}+1:N_{i}})\right\}_{i\in[t]}\right),$$
with $P_{noise}=\cN(\pmb{0}_d, I)$ in \Cref{quantal}, i.e., the action $a=g(x_{N_t+1:N_{t+1}})$ extracted from the text  $x_{N_t+1:N_{t+1}}$ is a quantal response w.r.t. the loss vectors $\left(f(x_{N_{i-1}+1:N_{i}})\right)_{i\in[t]}$.
\end{restatable}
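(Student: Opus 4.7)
}
The plan is to reduce the quantity $\PP(g(x_{N_t+1:N_{t+1}})\,|\,x_{1:N_t})$ to a posterior-predictive integral over the latent $z$, compute that posterior explicitly via standard Gaussian--Gaussian conjugacy, and then recognize the resulting distribution as exactly the generalized quantal response in \Cref{quantal} with the claimed temperature $\sigma\sqrt{t+1}$.

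First, using \Cref{assump:decomp} I would marginalize over $z$ and over $\ell_{1:t}$. Because $x_{N_{i-1}+1:N_i}$ is (by construction of $f$) a deterministic/tokenized encoding of $\ell_i = f(x_{N_{i-1}+1:N_i})$, conditioning on $x_{1:N_t}$ is equivalent to conditioning on $\ell_{1:t}$. Likewise the action $a = g(x_{N_t+1:N_{t+1}})$ is a function of $z$ only, through the label channel $\PP(g(x_{N_t+1:N_{t+1}})=a \,|\, z) = \mathbbm{1}(a\in\argmin_{a'}z_{a'})$. The conditional independence structure in \Cref{assump:decomp} therefore gives
\begin{equation*}
\PP\bigl(a \,\bigm|\, \ell_{1:t}\bigr) \;=\; \int \mathbbm{1}\!\left(a\in\argmin_{a'}z_{a'}\right) p(z\,|\,\ell_{1:t})\,dz.
\end{equation*}

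Next I would compute $p(z\,|\,\ell_{1:t})$. With prior $z\sim\cN(\pmb 0_d,\sigma^2 I)$ and i.i.d.\ likelihoods $\ell_i\,|\,z\sim\cN(z,\sigma^2 I)$, completing the square in the log-posterior
$-\tfrac{1}{2\sigma^2}\|z\|_2^2 - \sum_{i=1}^t \tfrac{1}{2\sigma^2}\|\ell_i-z\|_2^2$
yields a Gaussian posterior with mean $\mu_t = \tfrac{1}{t+1}\sum_{i=1}^t \ell_i$ and covariance $\Sigma_t = \tfrac{\sigma^2}{t+1}I$. This is the only nontrivial computation, and I expect it to be the main (though routine) step -- the key is to be careful that the prior contributes one extra ``pseudo-observation'' so the denominator is $t+1$, which is precisely what produces the $\sqrt{t+1}$ factor later.

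Finally, writing a draw from this posterior as $z \stackrel{d}{=} \mu_t + \tfrac{\sigma}{\sqrt{t+1}}\epsilon$ with $\epsilon\sim\cN(\pmb 0_d, I)$, and using the scale-invariance of $\argmin$ (multiplying the argument by the positive constant $t+1$ does not change the minimizer), I get
\begin{equation*}
\argmin_{a'} z_{a'} \;=\; \argmin_{a'}\Bigl(\,\sum_{i=1}^{t}\ell_i \;+\; \sigma\sqrt{t+1}\,\epsilon\,\Bigr)_{a'}.
\end{equation*}
Substituting this back into the posterior-predictive integral gives exactly $P_{\text{quantal}}^{\sigma\sqrt{t+1}}(a\,|\,\ell_{1:t})$ with $P_{\text{noise}}=\cN(\pmb 0_d, I)$ as defined in \Cref{quantal}, which is the claim. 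No concentration or approximation arguments are needed; the identity is exact because both the prior and the likelihoods are Gaussian with matching covariance $\sigma^2 I$.
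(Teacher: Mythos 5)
Your proof follows the same route as the paper's: reduce $\PP\bigl(g(x_{N_t+1:N_{t+1}})\bigm|x_{1:N_t}\bigr)$ to a posterior-predictive integral over $z$ using the conditional-independence factorization in \Cref{assump:decomp} together with the fact that $\ell_{1:t}$ is recovered deterministically from $x_{1:N_t}$ via $f$, compute the Gaussian posterior $z\mid\ell_{1:t}\sim\cN\bigl(\tfrac{1}{t+1}\sum_i\ell_i,\tfrac{\sigma^2}{t+1}I\bigr)$ by conjugacy, and conclude by the scale-invariance of $\argmin$. The only cosmetic difference is that the paper carries out the $x\to\ell$ reduction more explicitly via Dirac deltas and a two-step conditional-independence chain, whereas you state it more compactly; the substance is identical.
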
 

\begin{proof}
	Note that 
	\$\PP(z\given x_{1:N_t}) = \int_{\ell_{1:t}}\PP(z, \ell_{1:t}\given x_{1:N_t})d{\ell_{1:t}}= \int_{\ell_{1:t}}\PP( \ell_{1:t}\given x_{1:N_t})\PP(z\given x_{1:N_t}, \ell_{1:t})d\ell_{1:t}.
	\$
	For $\PP( \ell_{1:t}\given x_{1:N_t})$, since we have assumed the existence of function $f$ to  decode $\ell_{1:t}$ from $x_{1:N_t}$, it holds that
	\$
	\PP( \ell_{1:t}\given x_{1:N_t})=\prod_{i=1}^t\delta\left(\ell_{i}-f(x_{N_{i-1}+1:N_{i}})\right),
	\$
	where we use $\delta$ to denote the $d$-dimensional Dirac-delta function. 
	For $\PP(z\given x_{1:N_t}, \ell_{1:t})$, by \Cref{assump:decomp}, it holds that
	\$
	\PP(z, x_{1:N_t}, \ell_{1:t})=\PP(z, \ell_{1:t})\PP(x_{1:N_t}\given \ell_{1:t}),
	\$ 
	which leads to $\PP(x_{1:N_t}\given \ell_{1:t})=\PP(x_{1:N_t}\given \ell_{1:t}, z)$ by Bayes rule. This implies that the random variable $x_{1:N_t}$ and $z$ are independent conditioned on $\ell_{1:t}$. Therefore, it holds that $\PP(z\given x_{1:N_t}, \ell_{1:t})=\PP(z\given\ell_{1:t})$. Finally, we can compute 
	\$
	\PP(z\given x_{1:N_t}) &= \int_{\ell_{1:t}}\PP(z, \ell_{1:t}\given x_{1:N_t})d{\ell_{1:t}}= \int_{\ell_{1:t}}\prod_{i=1}^t\delta(\ell_{i}-f(x_{N_{i-1}+1:N_{i}}))\PP(z\given \ell_{1:t})d\ell_{1:t}\\
	&=\PP\left(z\given \left(\ell_i=f(x_{N_{i-1}+1:N_{i}})\right)_{i\in[t]}\right).
	\$
 Based on this, we conclude that 
	\$
	\PP(g(x_{N_t+1:N_{t+1}})\given x_{1:N_t})&=\int_z\PP(g(x_{N_t+1:N_{t+1}})\given z, x_{1:N_t})\PP(z\given x_{1:N_t})dz\\
	&=\int_z\PP(g(x_{N_t+1:N_{t+1}})\given z)\PP(z\given \{\ell_i=f(x_{N_{i-1}+1:N_{i}})\}_{i\in[t]})dz\\
	&=\PP\left(g(x_{N_t+1:N_{t+1}})\given \left(\ell_i=f(x_{N_{i-1}+1:N_{i}})\right)_{i\in[t]}\right)
	\$ 
	where the first equality is by the independence between $x_{N_{t}+1:N_{t+1}}$ and $x_{1:N_t}$ conditioned on $z$, due to \Cref{assump:decomp}. Therefore, it suffices to consider the probability of $\PP(a\given \ell_{1:t})$ only, in order  to analyze $\PP(g(x_{N_t+1:N_{t+1}})\given x_{1:N_t})$, where we recall the definition that $a=g(x_{N_t+1:N_{t+1}})$. 
	Since $z \sim \cN(\pmb{0}_d, \sigma^2 I)$, and ${\ell}_i\mid z \sim \cN(z, \sigma^2 I)$, we have 
\begin{align}\label{eq:normal-pos}
z \mid \ell_{1:t} \sim \cN\left(\frac{1}{t+1}\sum_{i\in[t]}{\ell}_i, \frac{\sigma^2}{t+1}I\right),
\end{align}
by the posterior distribution of  Gaussian distribution. Now we conclude that  
\$
\PP(a\given \ell_{1:t}) &= \int_z \PP(a\given z, \ell_{1:t})\PP(z\given \ell_{1:t})dz=\int_z \PP(a\given z)\PP(z\given \ell_{1:t})dz\\
 &=\int_z \mathbbm{1}(a\in \argmin_{a^\prime\in\cA} z_{a^\prime})\PP(z\given \ell_{1:t})dz=\int_z \mathbbm{1}\left(a\in \argmin_{a^\prime\in\cA} \left(\frac{\sigma}{\sqrt{t+1}}\epsilon+\frac{1}{t+1}\sum_{i\in[t]}\ell_i\right)_{a^\prime}\right)\PP(\epsilon)d\epsilon\\
 &=\int_z \mathbbm{1}\left(a\in \argmin_{a^\prime\in\cA} \left(\sigma\sqrt{t+1}\epsilon+\sum_{i\in[t]}\ell_i\right)_{a^\prime}\right)\PP(\epsilon)d\epsilon=\PP\left(a\in \argmin_{a^\prime\in\cA} \left(\sigma\sqrt{t+1}\epsilon+\sum_{i\in[t]}\ell_i\right)_{a^\prime}\right)\\
 &=P_{quantal}^{\sigma\sqrt{t+1}}(a\given \ell_{1:t}),
\$
where $\PP(\epsilon)=\cN(\pmb{0}_d, I)$. This completes the proof.
\end{proof}

\subsection{Relationship between FTPL and \Cref{quantal}}
\begin{fact}\label{fact:quantal}
    Performing generalized quantal response of \Cref{quantal} at every iteration $t\in[T]$ w.r.t. history loss vectors $\ell_{1:t-1}$ is essentially executing an FTPL algorithm.
\end{fact}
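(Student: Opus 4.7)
}
The plan is to directly match the two stochastic rules term by term. Recall that FTPL, as stated in \Cref{appendix:FTRL} (in particular \eqref{eqn:FTPL-def}), selects at round $t$ a (randomized) action via
\[
a_t \in \arg\min_{a \in \cA} \Bigl( \epsilon_{ta} + \sum_{i=1}^{t-1} \ell_i(a) \Bigr),
\]
where $\epsilon_t$ is drawn from a prescribed perturbation distribution, while the generalized quantal response of \Cref{quantal}, when applied to the \emph{history} of losses $\ell_{1:t-1}$ with temperature $\eta_t$ and noise $\epsilon \sim P_{\text{noise}}$, is the distribution of
\[
a \in \arg\min_{a' \in \cA} \Bigl( \eta_t\, \epsilon(a') + \sum_{i=1}^{t-1} \ell_i(a') \Bigr).
\]

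First I would set $\epsilon_t := \eta_t \epsilon$ with $\epsilon \sim P_{\text{noise}}$ as the perturbation in FTPL at round $t$. Since rescaling a noise variable amounts to using a different perturbation distribution and FTPL places no restriction on that distribution, this is a legal instantiation of FTPL. Second, I would observe that under this identification the two argmin problems are pointwise identical for every realization of $\epsilon$, hence they induce the same (randomized) action. Taking expectations over $\epsilon$ then shows that the resulting policies coincide: the mixed action $\pi_t$ produced by FTPL per \eqref{eqn:FTPL-def} equals $P^{\eta_t}_{\text{quantal}}(\cdot \mid \ell_{1:t-1})$. Repeating this for every round $t \in [T]$ gives the claim, since the two procedures generate the same distribution over trajectories when driven by the same (rescaled) noise sequence.

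The only subtle point — and what I view as the main (minor) obstacle — is the treatment of ties in the argmin, where the two definitions could in principle differ in how probability mass is distributed among minimizers. The cleanest way to handle this is to assume, as is standard in FTPL analyses, that $P_{\text{noise}}$ is absolutely continuous with respect to the Lebesgue measure on $\RR^d$ (e.g., Gaussian, Gumbel, or any continuous density), so that with probability one the argmin is a singleton and the ambiguity vanishes. Under this mild regularity, the identification above is exact, and the equivalence between generalized quantal response and FTPL follows. For discrete or atomic noise one would additionally fix a common tie-breaking rule, but this does not affect the substance of the equivalence.
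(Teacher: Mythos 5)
Your proof takes essentially the same route as the paper's: both identify the generalized quantal response against $\ell_{1:t-1}$ with FTPL by setting the perturbation $\epsilon_t \overset{d}{=} \eta_{t-1}\epsilon$ (note the paper indexes the temperature as $\eta_{t-1}$, not $\eta_t$, since the response at round $t$ is against $t-1$ past losses) and observing the resulting argmin distributions coincide. Your additional remark about handling ties via absolute continuity of $P_{\text{noise}}$ is a sound clarification that the paper leaves implicit.
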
 
\begin{proof}
Before we move to the proof, we will define the random variable which has distribution $P_{\text{noise}}$ as $Z_{\text{noise}}$. {Note that at round {$t\geq 2$ (as the policy at round $t=1$ is fixed)}, we have} 
    
    \begin{align}\label{eq:ftpl-quantal}
        P_{\text{quantal}}^{\eta_{t-1}}(a \given  \ell_{1:t-1}) := \PP\left( a \in \argmin_{a' \in \cA} ~\left(\sum_{i=1}^{t-1} {\ell}_{i} + \eta_{t-1} \epsilon\right)(a')\right)
    \end{align}
    which is exactly the case when $\epsilon_t$ in \Cref{eqn:FTPL-def} 
    satisfies $\epsilon_t \overset{d}{=} \eta_{t-1}\epsilon$.
\end{proof}

\subsection{Formal Statement and Proof of \Cref{thm:ftpl-main}}\label{app:ftpl}
\begin{restatable}{theorem}{ftplmain}\emph{(Emergence of no-regret behavior){\bf.}} 
    \label{thm:ftpl-main-full}
Under the assumptions of Lemma \ref{lem:quantal}, suppose the function class of $\text{LLM}_{\theta}$ is  expressive enough {such that {for all $t\in[T]$,}  $\max_{\theta\in\Theta} \EE_{x_{1:N_{t+1}}\sim P^{text}_t}\sum_{j=1}^{N_{t+1}}\log\text{LLM}_\theta\left(x_{j}\given x_{1:j-1}\right)=\max_{\left\{q_j\in \left\{\cV^{j-1}\rightarrow\Delta(\cV)\right\}\right\}_{j\in[N_{t+1}]}}\EE_{x_{1:N_{t+1}}\sim P^{text}_t}\sum_{j=1}^{N_{t+1}}\log q_j\left(x_{j}\given x_{1:j-1}\right)$, where we define $q_1(x_1\given x_{1:0}):=q_1(x_1)$}, and 
$\theta^\star$ maximizes \Cref{eq:pretrain}. Then,   
	there exist (simple) algorithms using $\text{LLM}_{\theta^\star}$ to achieve no (dynamic) regret for (non-stationary) online learning with full-information/bandit feedback. 
\arxiv{To be specific, for (2) and (4), by defining the variation bound $\sum_{t=1}^{T-1}\|\ell_{t+1}-\ell_t\|_\infty\le V_T$ such that $V_T\le T$ and $V_T=\Theta(T^{\rho})$ for some $\rho\in(0, 1)$, it holds that for large enough $T$, $d$: 
\safevspace{6pt}

	 (1) For online learning with full-information feedback, $\text{Regret}_{\text{LLM}{_{\theta^\star}}} \left(({\ell}_t)_{t \in [T]}\right) \leq \cO\left(\sqrt{T \log d}\right)$;  
	 
	 (2) For non-stationary online learning  with full-information feedback, $\text{D-Regret}_{\text{LLM}{_{\theta^\star}}}\left(({\ell}_t)_{t \in [T]}\right) \leq \cO\left((\log d\ V_T)^{1/3} T^{2/3}\right);$   
	 
	 (3) For online learning with bandit feedback, $\EE\left[\text{Regret}_{\text{LLM}{_{\theta^\star}}}\left(({\ell}_t)_{t \in [T]}\right)\right] \le \cO\left((\log d)^{1/2} dT^{1/2 + 1/\log T}\log T\right);$ 
	 
	 (4) For non-stationary online learning  with bandit feedback, 
	 $\EE\left[\text{D-Regret}_{\text{LLM}{_{\theta^\star}}}\left(({\ell}_t)_{t \in [T]}\right)\right]\leq$\\$  \cO\left((T^2d^2 V_T)^{1/3}(\log d)^{1/2}T^{1/\log T}\log T \right).$}  
\end{restatable}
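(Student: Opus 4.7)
The plan is to reduce the pre-trained LLM's behavior to an instance of follow-the-perturbed-leader (FTPL) with Gaussian noise, and then invoke (together with standard reductions for the non-stationary and bandit variants) the known regret bounds for FTPL. First, I would use the expressivity hypothesis together with the fact that the population cross-entropy is maximized exactly at the true conditional distribution: since the maximum over $\Theta$ equals the unconstrained maximum over all token-level conditionals, the maximizer $\theta^\star$ must satisfy $\mathrm{LLM}_{\theta^\star}(\cdot\mid x_{1:j-1}) = \PP(\cdot\mid x_{1:j-1})$ almost surely under $P_t^{text}$ for every $j$. Concatenating these one-step conditionals and decoding through $g$, this implies that the action distribution induced by $\mathrm{LLM}_{\theta^\star}$ given input context $x_{1:N_t}$ equals $\PP(g(x_{N_t+1:N_{t+1}})\mid x_{1:N_t})$, which by Lemma~\ref{lem:quantal} coincides with the generalized quantal response $P_{\text{quantal}}^{\sigma\sqrt{t+1}}(\cdot\mid (\ell_i)_{i\in[t]})$ with Gaussian noise $P_{\text{noise}}=\cN(\pmb{0}_d,I)$.

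Next, by Fact~\ref{fact:quantal}, running $\mathrm{LLM}_{\theta^\star}$ sequentially on the history is identical in distribution to running FTPL with Gaussian perturbations of scale $\eta_t=\sigma\sqrt{t+1}$ at round $t{+}1$. To make this scale tunable at inference time (so that we can hit the optimal $\eta$ for each regime), I would pre-process the inputs by multiplying the loss vectors by a constant $c>0$ before feeding them to the LLM, which is equivalent to running FTPL with scale $\sigma\sqrt{t+1}/c$: argmin of $c\sum_i\ell_i+\sigma\sqrt{t+1}\,\epsilon$ equals argmin of $\sum_i\ell_i+(\sigma\sqrt{t+1}/c)\,\epsilon$. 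With this mechanism in hand, the four claimed bounds follow from textbook FTPL analyses:
\begin{itemize}
\item \emph{(1) Full-information, static regret.} Picking $c$ so that the effective $\eta\asymp\sqrt{T/\log d}$ yields the standard $\cO(\sqrt{T\log d})$ bound for FTPL with Gaussian perturbations, via the usual ``be-the-leader'' plus stability decomposition.
\item \emph{(2) Full-information, dynamic regret.} I would partition $[T]$ into epochs of length $\Delta_T$ and \emph{restart} the LLM (by clearing the history context) at each epoch boundary. Within every epoch, static regret is $\cO(\sqrt{\Delta_T\log d})$, so summing and adding the intra-epoch variation bound gives a total of $\cO(\sqrt{T\Delta_T\log d}+(T/\Delta_T)\cdot V_T\cdot \Delta_T)$, which upon optimizing $\Delta_T\asymp T^{2/3}(\log d)^{1/3}V_T^{-2/3}$ yields $\cO((\log d\cdot V_T)^{1/3}T^{2/3})$.
\item \emph{(3) Bandit, static regret.} Instead of feeding the true $\ell_t$, I would feed the importance-weighted estimator $\hat\ell_t(a)=\ell_t(a)\mathbbm{1}(a_t=a)/\pi_t(a)$ (with $\pi_t$ mixed with a small uniform exploration), so that $\mathbb E[\hat\ell_t\mid\text{history}]=\ell_t$. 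FTPL with Gaussian perturbation on unbiased estimators, combined with truncation/geometric-resampling to tame the variance of $\hat\ell_t$, gives the stated $\tilde\cO(dT^{1/2+1/\log T})$ bound.
\item \emph{(4) Bandit, dynamic regret.} Combine (2) and (3): restart every $\Delta_T$ rounds and run the bandit reduction within each epoch; tuning $\Delta_T$ as a function of $V_T$, $d$, $T$ delivers the stated rate.
\end{itemize}

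The main obstacle will be step (3) (and, by extension, (4)): the perturbation scale $\eta_t=\sigma\sqrt{t+1}$ is dictated by the prior/noise ratio from pre-training and cannot be retuned as freely as in textbook FTPL because the Bayesian posterior in \eqref{eq:normal-pos} couples the scale to the \emph{pre-training} variance, not to the planner's choice. The input rescaling trick above circumvents this, but then one must check that the rescaled estimator $c\hat\ell_t$ remains compatible with the sub-Gaussian tails needed to control the FTPL stability term; I expect to need a clipping step on $\hat\ell_t$ and a careful choice of the exploration probability so that the bias introduced by clipping and the variance inflated by importance weights are simultaneously $o(T)$. A secondary subtlety is that Lemma~\ref{lem:quantal} only delivers the quantal response exactly when the text decoders $f,g$ are applied; some care is therefore required to argue that an input of $c\ell_t$ (or $\hat\ell_t$) at test time lies within the support of $f$'s pre-image, or else to replace exact equality by an approximation error that is absorbed into the $\tilde\cO(\cdot)$.
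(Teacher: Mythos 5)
Your reduction is the same as the paper's: expressivity $\Rightarrow$ $\text{LLM}_{\theta^\star}$ matches the conditional $\PP(\cdot\mid x_{1:N_t})$, then \Cref{lem:quantal} gives the generalized quantal response, then \Cref{fact:quantal} converts it to FTPL with $\cN(\pmb{0}_d,\sigma^2 t\, I)$ perturbations; items (2) and (4) are handled by restarting (\Cref{lem:nonstationary}), just as you propose. So the skeleton is the right one. Two deviations are worth flagging.

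First, the rescaling trick you introduce to ``retune'' the perturbation scale is not needed, and its framing suggests a misreading of how the FTPL bound is proved here. The paper's \Cref{lem:ftpl-full} establishes $\text{Regret}_{\text{FTPL}} \le 4(\sigma + 1/\sigma)\sqrt{T\log d}$ directly for the schedule $\zeta_t = \sigma\sqrt{t}$, for \emph{any} fixed $\sigma>0$: the increasing $\sqrt{t}$ noise scale that emerges from the Bayesian posterior is exactly a valid anytime schedule, so the $\cO(\sqrt{T\log d})$ rate is obtained without any inference-time tuning. Feeding the model $c\ell_t$ instead of $\ell_t$ would only affect the constant via the $\sigma\mapsto\sigma/c$ substitution, not the rate, and it introduces the support-mismatch concern you raise at the end. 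That concern disappears once you drop the rescaling.

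Second, for the bandit case (3)--(4), your proposed truncation/geometric-resampling analysis is a genuinely different technique from the paper's. The paper uses the importance-weighted estimator $\hat\ell_t(a)=\ell_t(a)\mathbbm{1}(a_t=a)/\pi_t(a)$ and bounds the regret via a Bregman-divergence decomposition (the proof of \Cref{lem:ftpl-bandit}), quoting \cite{li2017beyond} Theorem~8 and \cite{abernethy2014online} Lemma~10, and then tunes the exponent $\alpha=2/\log T$ inside that decomposition, not the perturbation scale. This handles the $\zeta_t = \sigma\sqrt{t}$ schedule directly, with no clipping and no explicit exploration mixing. Your ``main obstacle'' --- that the perturbation scale is dictated by pre-training and cannot be retuned --- is therefore not an obstacle in the paper's analysis; the tuning knob lives elsewhere. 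Your route may well work, but it adds bias-variance bookkeeping (clipping bias, exploration rate) that the Bregman argument avoids, and you would need to verify that after all that the same $\tilde\cO(dT^{1/2+1/\log T})$ rate survives. In short: same high-level plan, but you should drop the rescaling step, and be aware that the bandit lemma the paper actually invokes does not require it.
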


\conf{To be specific,

	 (1) For online learning with full-information feedback, $\text{Regret}_{\text{LLM}{_{\theta^\star}}} (({\ell}_i)_{i \in [T]}) \leq \cO(\sqrt{T \log d})$; 
	 
	 (2) For non-stationary online learning  with full-information feedback, $\text{D-Regret}_{\text{LLM}{_{\theta^\star}}}(({\ell}_i)_{i \in [T]}) \leq  \cO((d\log d\ V_T)^{1/3} T^{2/3})$;
	 
	 (3) For adversarial bandits, \(\EE[\text{Regret}_{\text{LLM}{_{\theta^\star}}}(({\ell}_i)_{i \in [T]})] \le \cO((\log d)^{1/2}T^{1/2 + 2\log\log T/\log T}) \);
	 
	 (4) For non-stationary bandits,
	 \(\EE[\text{D-Regret}_{\text{LLM}{_{\theta^\star}}}(({\ell}_i)_{i \in [T]})]\) \(\leq  \cO((d\log d\ V_T)^{1/3} T^{2/3+2\log\log T/\log T})\),
	 
	 where we define $V_T := \sum_{t=1}^{T-1}\|\ell_{t+1}-\ell_t\|_\infty$.}
\begin{proof}
Note that
\$
&\max_{\left\{q_j\in \left\{\cV^{j-1}\rightarrow\Delta(\cV)\right\}\right\}_{j\in[N_{t+1}]}}\EE_{x_{1:N_{t+1}}\sim P^{text}_t}\sum_{j=1}^{N_{t+1}}\log q_j\left(x_{j}\given x_{1:j-1}\right)\\
&=\max_{q\in \Delta(\cV^{N_{t+1}})}\EE_{x_{1:N_{t+1}}\sim P^{text}_t}\log q(x_{1:N_{t+1}})\\
&=\max_{q\in \Delta(\cV^{N_{t+1}})}-\text{KL}( P_t^{text}\given \given q)+\EE_{x_{1:N_{t+1}}\sim P^{text}_t}[P_t^{text}(x_{1:N_{t+1}})],
\$
where $\text{KL}(q\given \given p)$  denotes the Kullback–Leibler divergence between two distributions $p,q$. Now we define $\text{LLM}_\theta(x_{1:N_{t+1}})=\prod_{t=1}^{N_{t+1}}\text{LLM}_\theta(x_j\given x_{1:j-1})$. 
It is easy to verify that $\text{LLM}_\theta(x_{1:N_{t+1}})\in\Delta(\cV^{N_{t+1}})$, i.e., it also defines a valid joint distribution over tokens. Therefore, we have 
\$
\max_{\theta\in\Theta} \EE_{x_{1:N_{t+1}}\sim P^{text}_t}\sum_{j=1}^{N_{t+1}}\log\text{LLM}_\theta\left(x_{j}\given x_{1:j-1}\right)=\max_{\theta\in\Theta} \EE_{x_{1:N_{t+1}}\sim P^{text}_t}\log\text{LLM}_\theta(x_{1:N_{t+1}}).
\$
Now, due to our assumption that
\$
&\max_{\theta\in\Theta} \EE_{x_{1:N_{t+1}}\sim P^{text}_t}\sum_{j=1}^{N_{t+1}}\log\text{LLM}_\theta\left(x_{j}\given x_{1:j-1}\right)\\
&=\max_{\left\{q_j\in \left\{\cV^{j-1}\rightarrow\Delta(\cV)\right\}\right\}_{j\in[N_{t+1}]}}\EE_{x_{1:N_{t+1}}\sim P^{text}_t}\sum_{j=1}^{N_{t+1}}\log q_j\left(x_{j}\given x_{1:j-1}\right),
\$
we conclude that
\$
\min_{\theta\in\Theta}\text{KL}( P_t^{text}\given\given \text{LLM}_\theta)=\min_{q\in\Delta(\cV^{N_{t+1}})}\text{KL}(P_t^{text}\given\given q)=0,
\$
which implies that $\text{LLM}_{\theta^\star}=P_t^{text}$. Correspondingly, if we define $\text{LLM}_{\theta^\star}(x_{N_t+1: N_{t+1}}\given x_{1:N_t})$ to be the distribution induced by the joint distribution $\text{LLM}_{\theta^\star}(x_{1:N_{t+1}})$, it holds that
\$
\text{LLM}_{\theta^\star}(x_{N_t+1: N_{t+1}}\given x_{1:N_t})=\PP(x_{N_t+1: N_{t+1}}\given x_{1:N_t}).
\$
In other words, intuitively, $\text{LLM}_{\theta^\star}$ has learned the corresponding \emph{pre-training}  distribution perfectly. Note that this has been a common assumption in the Bayesian perspective of ICL \citep{xie2021explanation,lee2023supervised,zhang2023and}. Therefore, to analyze the actions taken by $\text{LLM}_{\theta^\star}$, it suffices to consider $\PP(g(x_{N_t+1: N_{t+1}})\given x_{1:N_t})$, which is equal to $P_{quantal}^{\sigma\sqrt{t+1}}\left(g(x_{N_t+1:N_{t+1}})\biggiven \left\{f(x_{N_{i-1}+1:N_{i}})\right\}_{i\in[t]}\right)$ by \Cref{lem:quantal}. Therefore, we proved that $\text{LLM}_{\theta^\star}$ is essentially mimicking the well-known no-regret algorithm, FTPL with {perturbation distribution as $\cN(\pmb{0}_d, \sigma^2tI)$ for round  $t\in[T]$,  according to \Cref{eq:ftpl-quantal} of \Cref{fact:quantal}}, for which we can establish the corresponding regret guarantee for each case:%

(1) Combining the above result with \Cref{lem:ftpl-full}, we can derive the regret bound for online learning with full-information feedback. 

(2) Combining the above result with \Cref{lem:ftpl-full} and \Cref{lem:nonstationary}, we get that
\$
\text{D-Regret}_{\text{LLM}{_{\theta^\star}}}(({\ell}_i)_{i \in [T]}) \le \min_{\Delta_T\in[T]}\frac{2T}{\Delta_T}C\sqrt{\Delta_T\log d} + 2\Delta_T V_T,
\$
for some constant $C$. We firstly consider the following problem
\$
\min_{u>0}\frac{2T}{u}C\sqrt{u\log d} + 2u V_T,
\$
where the optimal solution is $u^\star = \left(\frac{C^2T^2\log d}{4V_T^2}\right)^{1/3}$. Therefore, if we have $u^\star\in [1, T]$, we can choose $\Delta_T=\lceil u^\star \rceil$, which results in a regret bound of 
\$
\text{D-Regret}_{\text{LLM}{_{\theta^\star}}}(({\ell}_i)_{i \in [T]}) \le \frac{2T}{\sqrt{u^\star}}C\sqrt{\log d} + 4u^\star V_T=\cO\left((\log d\ V_T)^{1/3} T^{2/3}\right).
\$
Now we check the conditions for $u^\star\in[1, T]$. It is direct to see that since $V_T\le T$, $u^\star\ge 1$ holds as long as $d$ is sufficiently large. To ensure $u^\star\le T$, we get the condition $V_T\ge C\sqrt{\frac{ \log d}{4T}}$, which holds as long as $T$ is large enough.

(3) Combining the above result with \Cref{lem:ftpl-bandit}, we can prove a regret guarantee for online learning with bandit feedback. 
 
(4) Combining this result with \Cref{lem:ftpl-bandit} and \Cref{lem:nonstationary}, it holds that
\$
\EE[\text{D-Regret}_{\text{LLM}{_{\theta^\star}}}(({\ell}_i)_{i \in [T]})] \le \min_{\Delta_T\in[T]}\frac{2T}{\Delta_T}C(\log d)^{\frac{1}{2}}d \Delta_T^{\frac{1}{2} + \frac{1}{\log T}}\log \Delta_T + 2\Delta_T V_T,
\$
for some constant $C$. By adopting a similar analysis as that of (2), we choose $u^\star = \left(\frac{C^\prime T^2 d^2}{V_T^2}\right)^{1/3}$ for some constant $C^\prime$. If $u^\star\in[1, T]$, we choose $\Delta_T=\lceil u^\star \rceil$ and derive the following regret:
\$
\EE[\text{D-Regret}_{\text{LLM}{_{\theta^\star}}}(({\ell}_i)_{i \in [T]})] \le \cO\left((T^2d^2 V_T)^{1/3}(\log d)^{1/2}T^{1/\log T}\log T\right). 
\$ 
Now we check the condition of $u^\star\in[1, T]$. Note that since $V_T\le T$, $u^\star\ge 1$ holds as long as $d$ is sufficiently large. For $u^\star\le T$, we have $V_T\ge \sqrt{\frac{C^\prime d^2}{T}}$, which holds as long as $T$ is large enough.

Now, we present \Cref{lem:ftpl-full} - \Cref{lem:nonstationary}. Before proceeding, we assume $\|\ell_t\|_\infty \le B=1$ for simplicity of presentations hereafter. The results and proof are not affected by the constant bound $B$. 

\begin{lemma}[Regret guarantee of FTPL with full-information feedback]\label{lem:ftpl-full}
	Suppose the noise distribution of FTPL satisfies that $\epsilon_t\sim \cN(\pmb{0}_d, \zeta_t^2 I)$ in \Cref{eqn:FTPL-def} and $\zeta_t = \sigma\sqrt{t}$, then for online learning with full-information feedback, 
	\$
	\text{Regret}_{\text{FTPL}}(({\ell}_i)_{i \in [T]})\le 4\left(\sigma + \frac{1}{\sigma}\right)\sqrt{T\log d }=\cO(\sqrt{T\log d }).
	\$
\end{lemma}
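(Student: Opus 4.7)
The plan is to use the classical potential-based (smoothed-min) analysis of FTPL, adapted to a time-varying noise scale. Define the \emph{Gaussian-smoothed min} potential
\[
\Phi_t(L) := \mathbb{E}_{\epsilon\sim \cN(\pmb{0}_d, I)}\Bigl[\min_{a\in\cA}\bigl(L_a + \zeta_t\epsilon_a\bigr)\Bigr], \qquad L\in\RR^d,
\]
with cumulative losses $L_t := \sum_{i=1}^t \ell_i$ (and $L_0 = \pmb{0}_d$). A direct differentiation under the integral shows $\nabla\Phi_t(L) = \mathbb{E}_\epsilon\bigl[e_{a^\star(L,\epsilon)}\bigr]$, so the FTPL policy coincides with $\pi_t = \nabla\Phi_{t-1}(L_{t-1})$ (and $\pi_1 = \nabla\Phi_0(\pmb{0})$ which is uniform by symmetry). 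Since linearity gives $\la \ell_t,\pi_t\rangle = \la \ell_t,\nabla\Phi_{t-1}(L_{t-1})\rangle$, we can compare this to the one-step increment of the potential and obtain a telescoping bound.

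The regret is decomposed by writing $\Phi_T(L_T) - \Phi_0(\pmb 0) = \sum_t[\Phi_{t-1}(L_t)-\Phi_{t-1}(L_{t-1})] + \sum_t[\Phi_t(L_t)-\Phi_{t-1}(L_t)]$, which yields
\[
\text{Regret}_{\text{FTPL}} \;=\; \underbrace{\bigl[\Phi_T(L_T)-\min_a L_{T,a}\bigr]}_{\text{(I)}}\;-\;\Phi_0(\pmb 0)\;+\;\underbrace{\sum_{t=1}^T\bigl[\la \ell_t,\pi_t\rangle - \bigl(\Phi_{t-1}(L_t)-\Phi_{t-1}(L_{t-1})\bigr)\bigr]}_{\text{(II): stability}}\;-\;\underbrace{\sum_{t=1}^T\bigl[\Phi_t(L_t)-\Phi_{t-1}(L_t)\bigr]}_{\text{(III): growth of noise}}.
\]
Term (I) is $\leq 0$ by Jensen's inequality ($\Phi_T(L_T)\le \min_a L_{T,a}$). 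The $-\Phi_0(\pmb 0)$ term equals $\zeta_1\,\mathbb{E}[\max_a(-\epsilon_a)] = O(\sigma\sqrt{\log d})$ using the standard Gaussian-max bound $\mathbb{E}[\max_{a\in[d]}|\epsilon_a|]\le \sqrt{2\log(2d)}$. For (III), the same Jensen/scaling trick shows $\Phi_{t-1}(L)-\Phi_t(L)\le (\zeta_t-\zeta_{t-1})\cdot O(\sqrt{\log d})$, so $-\text{(III)} \le (\zeta_T-\zeta_0)\cdot O(\sqrt{\log d}) = O(\sigma\sqrt{T\log d})$.

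The crux is controlling the stability term (II). A second-order Taylor expansion of $\Phi_{t-1}$ around $L_{t-1}$ along $\ell_t$ (noting $\nabla\Phi_{t-1}(L_{t-1})=\pi_t$) gives
\[
\la \ell_t,\pi_t\rangle - \bigl(\Phi_{t-1}(L_t)-\Phi_{t-1}(L_{t-1})\bigr) = -\tfrac{1}{2}\,\ell_t^\top \nabla^2\Phi_{t-1}(\xi_t)\,\ell_t,
\]
for some $\xi_t$ on the segment between $L_{t-1}$ and $L_t$. Since $\Phi_{t-1}$ is concave (expectation of a min of linear functionals), this quantity is non-negative. The key lemma is a smoothness estimate for the Gaussian-smoothed min: one can show that $\|\nabla^2\Phi_{t-1}(L)\|_{\text{op}}\le C/\zeta_{t-1}$ for a universal constant $C$, by differentiating under the expectation and using that the Gaussian density with variance $\zeta^2$ is bounded by $O(1/\zeta)$ (this is the standard Lipschitz bound on the argmin-probability map under Gaussian perturbations). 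With $\|\ell_t\|_\infty\le B=1$, this yields $\text{(II)}\le \sum_{t=1}^T O(1/\zeta_{t-1}) = \sum_t O(1/(\sigma\sqrt{t-1}))=O(\sqrt{T}/\sigma)$.

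Putting the four pieces together gives $\text{Regret}_{\text{FTPL}}\le O(\sigma\sqrt{T\log d}) + O(\sqrt{T}/\sigma)$, which is $\leq 4(\sigma+1/\sigma)\sqrt{T\log d}$ after absorbing constants, as claimed. The main obstacle is the Hessian/smoothness bound on the Gaussian-smoothed min in Step (II): all other pieces are direct consequences of Jensen's inequality and the Gaussian-max tail. A convenient way to prove this smoothness is via Stein's identity, which gives $\nabla^2\Phi_{t-1}(L) = \zeta_{t-1}^{-1}\,\mathbb{E}[\epsilon\,\nabla^\top\mathbf{1}_{a^\star(L,\epsilon)}]$ in a distributional sense, yielding the $1/\zeta_{t-1}$ scaling; alternatively, one can appeal to the known fact that Gaussian convolution makes $\min_a$ into an $O(1/\zeta)$-smooth concave function in $L_\infty$-norm.
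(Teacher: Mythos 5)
The paper's proof is a one-liner: it invokes Theorem 8 of \cite{abernethy2014online} to get $\text{Regret}_{\text{FTPL}} \le \sqrt{2\log d}\,(\zeta_T + \sum_{t=1}^T \zeta_t^{-1}\|\ell_t\|_\infty^2)$ and plugs in $\zeta_t = \sigma\sqrt t$. You instead attempt to re-derive that cited theorem from first principles, using the Gaussian-smoothed-min potential and the usual (stability) $+$ (penalty) decomposition. This is exactly the approach used by Abernethy et al., so your plan is on the right track, and the handling of term (I), term (III), and the $-\Phi_0$ term via Jensen and the Gaussian-max bound is fine up to an off-by-one in the noise index (the update at round $t$ uses $\epsilon_t\sim\cN(0,\zeta_t^2 I)$, so $\pi_t=\nabla\Phi_t(L_{t-1})$, not $\nabla\Phi_{t-1}(L_{t-1})$; with $\zeta_0=0$ your $\Phi_0$ is not differentiable at $\pmb 0$, so $\pi_1=\nabla\Phi_0(\pmb 0)$ is ill-defined).

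The genuine gap is in the stability term (II). You assert $\|\nabla^2\Phi_{t-1}(L)\|_{\text{op}}\le C/\zeta_{t-1}$ and then conclude $\ell_t^\top\nabla^2\Phi_{t-1}(\xi_t)\,\ell_t\le C/\zeta_{t-1}$ using only $\|\ell_t\|_\infty\le 1$. But an operator-norm bound controls the quadratic form through $\|\ell_t\|_2^2$, which can be as large as $d$ under $\|\ell_t\|_\infty\le 1$; this route therefore produces a bound of order $d\sqrt T/\sigma$, not $\sqrt T/\sigma$. The correct statement — and the actual crux that Abernethy et al.\ prove — is an $\ell_\infty$-smoothness estimate of the form $\ell^\top\nabla^2\Phi(L)\,\ell\le\frac{c\sqrt{\log d}}{\zeta}\|\ell\|_\infty^2$, which is established by bounding the $\ell_1$-mass of the Hessian rows/columns (equivalently, the $\ell_1\!\to\!\ell_\infty$ operator norm) through a Gaussian-maxima argument; this is where the $\sqrt{\log d}$ in the stability term comes from, and it is not a constant you can ``absorb.'' Your alternative phrasing, that Gaussian convolution makes $\min_a$ an ``$O(1/\zeta)$-smooth concave function in $L_\infty$-norm,'' is precisely the claim that needs proof (and, as written, is missing the $\sqrt{\log d}$ factor that the lemma's final bound requires). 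If you include that sharper smoothness estimate and fix the time-indexing of the potential, your derivation reproduces the cited inequality $\sqrt{2\log d}\,(\zeta_T + \sum_t \zeta_t^{-1})$, after which the plug-in step matches the paper.
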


\begin{proof}
	By Theorem 8 of \cite{abernethy2014online}, we have
 \begin{align*}
	\text{Regret}_{\text{FTPL}}(({\ell}_i)_{i \in [T]})\le \sqrt{2\log d}\left(\eta_T + \sum_{t=1}^T\frac{1}{\eta_t} \|\ell_t\|_\infty^2\right).     
 \end{align*}
Therefore, plugging $\zeta_t = \sigma\sqrt{t}$ and $\|\ell_t\|_\infty^2 \le 1$ provides 
 \begin{align*}
    \text{Regret}_{\text{FTPL}}(({\ell}_i)_{i \in [T]})\le \sqrt{2\log d}\left(\sigma\sqrt{T} + \sum_{t=1}^T\frac{1}{\sigma\sqrt{t}} \right)\le 4\left(\sigma + \frac{1}{\sigma}\right)\sqrt{T\log d }, 
 \end{align*} 
 completing the proof.
\end{proof}
\begin{lemma}[Regret guarantee of FTPL with bandit feedback]\label{lem:ftpl-bandit}
		Suppose the noise distribution of FTPL satisfies that $\epsilon_t\sim \cN(\pmb{0}_d, \zeta_t^2 I)$ in \Cref{eqn:FTPL-def} and $\zeta_t = \sigma\sqrt{t}$, then for online learning with bandit feedback,
  \begin{align*}
	\EE[\text{Regret}_{\text{FTPL}}(({\ell}_i)_{i \in [T]})]\le \cO((\log d)^{\frac{1}{2}}d T^{\frac{1}{2} + \frac{1}{\log T}}\log T).    
  \end{align*}
\end{lemma}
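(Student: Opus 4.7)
The plan is to reduce the bandit setting to a full-information one by feeding FTPL an unbiased importance-weighted loss estimator, and then apply the full-information bound of \Cref{lem:ftpl-full} to this estimator, with a supplementary variance/exploration argument to control its magnitude.

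Concretely, at round $t$ let $\pi_t$ be the FTPL distribution induced by cumulative estimated losses $\hat L_{t-1} := \sum_{s<t}\hat\ell_s$ and Gaussian perturbation $\cN(\pmb{0}_d,\zeta_t^2 I)$ with $\zeta_t=\sigma\sqrt t$, sample $a_t\sim\pi_t$, observe $\ell_t(a_t)$, and form the standard importance-weighted estimator $\hat\ell_t(a):=\ell_t(a)\,\mathbbm{1}(a_t=a)/\pi_t(a)$. Conditional on the history, $\EE[\hat\ell_t]=\ell_t$, so both terms of the bandit regret rewrite in expectation as the corresponding full-information quantities evaluated on $(\hat\ell_t)_t$. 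Hence the algorithm on the ground-truth sequence has the same expected regret as FTPL run in the full-information model on the random sequence $(\hat\ell_t)_t$.

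Next, I would reuse the potential-based FTPL decomposition underlying \Cref{lem:ftpl-full}, which for Gaussian smoothing yields a smoothness constant $\cO(\sqrt{\log d}/\zeta_t)$ for the expected-max potential and therefore
\[
\text{Regret}_{\text{FTPL}}((\hat\ell_t)_t)\;\le\; C\sqrt{\log d}\,\zeta_T\;+\;C\sqrt{\log d}\sum_{t=1}^T \frac{\|\hat\ell_t\|_\infty^2}{\zeta_t}
\]
for a universal $C$. The only new difficulty is that $\|\hat\ell_t\|_\infty\le 1/\pi_t(a_t)$ is no longer bounded by $1$; however, taking conditional expectation and summing over $a$ gives the standard variance bound $\EE[\|\hat\ell_t\|_\infty^2\mid\hat L_{t-1}]\le d/\min_a\pi_t(a)$, which reduces the problem to obtaining a uniform polynomial lower bound on $\min_a\pi_t(a)$.

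The main obstacle, which I expect to be the hardest step, is this exploration issue: Gaussian-FTPL probabilities can be tiny once $\hat L_{t-1}$ is large, and the recursion is self-referential because $\hat L_{t-1}$ itself depends on $1/\pi_s(a_s)$. I would handle it by the standard device of mixing in a small uniform exploration, i.e.\ actually playing $\tilde\pi_t=(1-\gamma)\pi_t+\gamma\,\text{Unif}(\cA)$ for a suitably tuned $\gamma$, so that $\tilde\pi_t(a)\ge\gamma/d$ in the estimator denominator and hence $\EE[\|\hat\ell_t\|_\infty^2]\le d^2/\gamma$. Plugging into the decomposition with $\zeta_t=\sigma\sqrt t$ and accounting for the $T\gamma$ bias introduced by mixing yields
\[
\EE[\text{Regret}]\;\lesssim\;\sqrt{\log d}\Bigl(\sigma\sqrt T+\tfrac{d^2\sqrt T}{\sigma\gamma}\Bigr)\;+\;T\gamma .
\]
Optimizing $\gamma$, together with a dyadic peeling over the magnitude of $\hat L_{t-1}$ to sharpen the $\min_a\pi_t(a)$ bound at the tails (this is what produces the $\log T$ factor and the residual $T^{1/\log T}$ term, following the style of bandit FTPL analyses), delivers the claimed $\cO\bigl((\log d)^{1/2}\,d\,T^{1/2+1/\log T}\log T\bigr)$ rate. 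The remaining bookkeeping — verifying that the mixing bias is dominated by the stated bound and that the FTPL smoothness estimate applies to $\tilde\pi_t$ up to constants — is routine.
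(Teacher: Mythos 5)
The high-level reduction you start from — running full-information FTPL on importance-weighted estimates $\hat\ell_t(a)=\ell_t(a)\mathbbm{1}(a_t=a)/\pi_t(a)$ and tying expected bandit regret to the full-information regret against the estimator sequence — is exactly the paper's starting point (the paper works with the smoothed potential $\Phi_t(G)=\EE_\epsilon\Phi(G+\zeta_t\epsilon)$ and the gradient identity $\pi_t=\nabla\Phi_t(\hat G_t)$). The divergence is in how the stability term is controlled, and here your argument has a genuine gap.

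You propose bounding the per-round stability via $\EE[\|\hat\ell_t\|_\infty^2\mid \hat G_{t-1}]\le d^2/\gamma$ after mixing in a $\gamma$-uniform exploration layer, then paying a $T\gamma$ bias. Plugging this into the potential decomposition gives
\[
\EE[\text{Regret}]\;\lesssim\;\sqrt{\log d}\Bigl(\sigma\sqrt T+\tfrac{d^2\sqrt T}{\sigma\gamma}\Bigr)+T\gamma,
\]
and optimizing $\gamma$ yields a bound of order $d\,T^{3/4}(\log d)^{1/4}$, not the claimed $d\,T^{1/2+1/\log T}(\log d)^{1/2}\log T$. The worst-case bound $\|\hat\ell_t\|_\infty^2\le 1/\min_a\tilde\pi_t(a)^2$ followed by a conditional expectation simply discards the cancellation between the $1/\pi_t(a_t)$ blow-up of the estimator and the probability $\pi_t(a_t)$ with which it is incurred; that cancellation is precisely what lets Gaussian FTPL attain a near-$\sqrt T$ bandit rate with no forced exploration at all. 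The appeal to ``dyadic peeling'' to recover the rate is not a recognized device here and no concrete argument is supplied for why it would close this $T^{1/4}$ gap.

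The missing ingredient is a tighter control of the expected Bregman divergence $\EE[D_{\Phi_t}(\hat G_t,\hat G_{t-1})]$ rather than a bound on $\EE\|\hat\ell_t\|_\infty^2$. The paper invokes Theorem~8 of \cite{li2017beyond}, which exploits exactly this self-bounding cancellation for Gaussian perturbations and gives, for any $\alpha\in(0,1)$,
\[
\EE\bigl[D_{\Phi_t}(\hat G_t,\hat G_{t-1})\bigr]\;\le\;\zeta_t^{\alpha-1}\,\frac{4d}{\alpha(1-\alpha)}
\]
with no exploration parameter to trade off. Summing over $t$, combining with the $\cO(\sqrt{T\log d})$ under-/over-estimation terms from the potential telescoping (Lemma~10 of \cite{abernethy2014online}), and tuning $\alpha=2/\log T$ is what produces the stated $T^{1/2+1/\log T}\log T$ factor. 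Without that local-norm / self-bounding estimate (or an equivalent), your plan as written cannot reach the claimed bound.
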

\begin{proof}
	The proof of the bandit problem is more complex. We first define the following notation. We denote $G_t = \sum_{t^\prime=1}^t -\ell_{t^\prime}$, $\hat{G}_t= \sum_{t^\prime=1}^t -\hat{\ell}_{t^\prime}$,  $\Phi(G) = \max_{\pi}\langle \pi, G\rangle$, $\Phi_t(G) = \EE_{\epsilon\sim\cN(\pmb{0}_d, I)} \Phi(G+\zeta_t\epsilon)$, and $D_{\Phi_t}$ to be the Bregman divergence with respect to $\Phi_{t}$, {where we recall the construction of the empirical estimator $\hat{\ell}_{t^\prime}$ of $\ell_{t^\prime}$ in \Cref{sec:experiments_online}}. By \cite{li2017beyond}, $\pi_t = \nabla\Phi_t(\hat{G}_t)$.
	 Now due to the convexity of $\Phi$,
  \begin{align*}
	\Phi(G_T) = \Phi(\EE[\hat{G}_T])\le \EE[\Phi(\hat{G}_T)].
  \end{align*}
	 Therefore,
\begin{align*}
	\EE[\text{Regret}_{\text{FTPL}}(({\ell}_i)_{i \in [T]})] &= \Phi(G_T)-\EE\left[\sum_{t=1}^T\langle \pi_t, -\hat{\ell}_t\rangle\right]\le \EE\left[\Phi(\hat{G}_T)-\sum_{t=1}^T\langle \pi_t, -\hat{\ell}_t\rangle \right].    
\end{align*} 
	 By recalling the definition of the Bregman divergence, we have
  \begin{align*}
	 -\sum_{t=1}^{T}\langle \pi_t, -\hat{\ell}_t\rangle &= -\sum_{t=1}^T\langle \nabla\Phi_t(\hat{G}_t), -\hat{\ell}_t\rangle=-\sum_{t=1}^T\langle \nabla\Phi_t(\hat{G}_t), \hat{G}_t-\hat{G}_{t-1}\rangle\\
	 &=\sum_{t=1}^T D_{\Phi_t}(\hat{G}_t, \hat{G}_{t-1}) + \Phi_t(\hat{G}_{t-1}) -\Phi_t(\hat{G}_{t}).      
  \end{align*}
	 Therefore,
  \begin{align*}
	 &\EE\left[\text{Regret}_{\text{FTPL}}(({\ell}_i)_{i \in [T]})\right] 
  \\
  &\le \underbrace{\EE\left[\sum_{t=1}^T D_{\Phi_t}(\hat{G}_t, \hat{G}_{t-1})\right]}_{(i)} + \underbrace{\EE\left[\sum_{t=1}^T\Phi_t(\hat{G}_{t-1})-\Phi_{t-1}(\hat{G}_{t-1})\right]}_{(ii)} + \underbrace{\EE\left[\Phi(\hat{G}_T)-\Phi_T(\hat{G}_T)\right]}_{(iii)},         
  \end{align*}
	where $(iii) \leq 0$ due to the convexity of $\Phi$. For $(ii)$, we use  Lemma 10 of \cite{abernethy2014online} to obtain   
$$\EE\left[\sum_{t=1}^T \Phi_t(\hat{G}_{t-1})-\Phi_{t-1}(\hat{G}_{t-1})\right]\le \zeta_T\EE_{\epsilon}[\Phi(\epsilon)]\le \cO(\sqrt{2T\log d}).$$ 
	 For $(i)$, by Theorem 8 of \cite{li2017beyond}, for any $\alpha\in(0, 1)$, the following holds:
	 \begin{align*}
	 \EE\left[\sum_{t=1}^T D_{\Phi_t}(\hat{G}_t, \hat{G}_{t-1})\right]&\le \sum_{t=1}^T\zeta_t^{\alpha-1}\frac{4d}{\alpha(1-\alpha)}
  \\
  &\le \frac{4d}{\alpha(1-\alpha)}\cO(T^{\frac{1+\alpha}{2}}).
	 \end{align*}
	 By tuning $\alpha = \frac{2}{\log T}$, we proved that $\EE[\text{Regret}_{\text{FTPL}}(({\ell}_i)_{i \in [T]})]\le \cO((\log d)^{\frac{1}{2}}d T^{\frac{1}{2} + \frac{1}{\log T}}\log T)$.
\end{proof}
\begin{lemma}\label{lem:nonstationary}
	Denote the variation of loss vectors as $L_T = \sum_{t=1}^{T-1}\|\ell_{t+1}-\ell_t\|_\infty$. Suppose there exists an algorithm $\mathscr{A}$ for online learning with full-information feedback with regret guarantee that $\text{Regret}_{\mathscr{A}}((\ell_i)_{i\in[T]})\le f(T, d)$ for some function $f$, where $T$ denotes the horizon and $d$ denotes the policy dimension. Then, there exists another algorithm $\mathscr{A}^\prime$ that can achieve
	\begin{align*}
	\text{D-Regret}_{\mathscr{A}^\prime}(({\ell}_i)_{i \in [T]})\le \min_{\Delta_T\in[T]}\left(\frac{T}{\Delta_T}+1\right)f(\Delta_T, d)+2\Delta_T L_T.	    
	\end{align*}
	Similarly, suppose there exists an algorithm $\mathscr{B}$ for online learning with bandit feedback   with regret guarantee that $\EE\left[\text{Regret}_{\mathscr{B}}((\ell_i)_{i\in[T]})\right]\le g(T, d)$  for some function $g$; then there exists another algorithm $\mathscr{B}^\prime$ that can achieve 
 \begin{align*}
	\EE[\text{D-Regret}_{\mathscr{B}^\prime}(({\ell}_i)_{i \in [T]})]\le \min_{\Delta_T\in[T]}\left(\frac{T}{\Delta_T}+1\right)g(\Delta_T, d)+2\Delta_T L_T.    
 \end{align*}
\end{lemma}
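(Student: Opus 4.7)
The plan is to use the standard \emph{restart} (a.k.a. \emph{epoching}) reduction from dynamic regret to static regret. I will define $\mathscr{A}'$ as the procedure that partitions the horizon $[T]$ into $\lceil T/\Delta_T\rceil \le T/\Delta_T + 1$ consecutive blocks of length at most $\Delta_T$, and within each block runs a fresh copy of $\mathscr{A}$ (ignoring all information from previous blocks). The same construction yields $\mathscr{B}'$ from $\mathscr{B}$ in the bandit setting. Taking the minimum over $\Delta_T \in [T]$ at the end gives the stated bound.

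To analyze the dynamic regret, I would index the blocks by $k = 1,\dots,\lceil T/\Delta_T\rceil$, with block $k$ spanning time steps $[s_k, e_k]$. Write $\pi_k^\star \in \arg\min_{\pi \in \Pi} \sum_{t=s_k}^{e_k} \langle \ell_t, \pi\rangle$ for the best fixed action within block $k$, and $\pi_t^\star \in \arg\min_{\pi \in \Pi} \langle \ell_t, \pi \rangle$ for the per-step optimum. Decompose
\[
\text{D-Regret}_{\mathscr{A}'} \;=\; \sum_{k} \underbrace{\Bigl[\sum_{t=s_k}^{e_k} \langle \ell_t, \pi_{\mathscr{A}',t}\rangle - \sum_{t=s_k}^{e_k} \langle \ell_t, \pi_k^\star\rangle\Bigr]}_{\text{static regret on block }k} \;+\; \sum_{k} \underbrace{\Bigl[\sum_{t=s_k}^{e_k} \langle \ell_t, \pi_k^\star\rangle - \sum_{t=s_k}^{e_k} \langle \ell_t, \pi_t^\star\rangle\Bigr]}_{\text{approximation gap on block }k}.
\]
The first sum is bounded by $\lceil T/\Delta_T\rceil \cdot f(\Delta_T,d) \le (T/\Delta_T + 1) f(\Delta_T,d)$ directly from the regret guarantee of $\mathscr{A}$.

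For the approximation gap, I would use the usual trick: since $\pi_k^\star$ is optimal on block $k$, it is no worse than $\pi_{s_k}^\star$, so the gap on block $k$ is at most
\[
\sum_{t=s_k}^{e_k} \langle \ell_t, \pi_{s_k}^\star - \pi_t^\star\rangle \;=\; \sum_{t=s_k}^{e_k} \langle \ell_t - \ell_{s_k}, \pi_{s_k}^\star - \pi_t^\star\rangle \;\le\; \sum_{t=s_k}^{e_k} \|\ell_t - \ell_{s_k}\|_\infty \cdot \|\pi_{s_k}^\star - \pi_t^\star\|_1,
\]
where the equality uses that $\pi_{s_k}^\star$ minimizes $\langle \ell_{s_k}, \cdot\rangle$ (so the $\ell_{s_k}$ piece is nonpositive). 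Since policies lie in the simplex, $\|\pi_{s_k}^\star - \pi_t^\star\|_1 \le 2$, and by telescoping, $\|\ell_t - \ell_{s_k}\|_\infty \le \sum_{\tau=s_k}^{t-1} \|\ell_{\tau+1}-\ell_\tau\|_\infty \le V_k$, where $V_k$ is the total variation inside block $k$. This yields a per-block gap of at most $2\Delta_T V_k$. Summing $\sum_k V_k \le L_T$ gives $2\Delta_T L_T$, and combining with the static-regret bound completes the proof for the full-information case. The bandit case is identical after taking expectations, replacing $f$ with $g$.

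The main (and really only) obstacle is writing the approximation-gap inequality correctly so that the factor of $\Delta_T$ (not $\Delta_T^2$) appears: the key is to expand $\ell_t - \ell_{s_k}$ as a telescoping sum of consecutive differences, noting that $\sum_{t=s_k}^{e_k} (t - s_k) \le \Delta_T^2$ is \emph{not} the right bound, and instead to use $\sum_{t=s_k}^{e_k} \|\ell_t - \ell_{s_k}\|_\infty \le \Delta_T \cdot V_k$ by bounding each summand by the full within-block variation $V_k$. Everything else is bookkeeping and a minimization over $\Delta_T$.
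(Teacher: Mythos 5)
Your proof is correct and uses the same restart/epoching reduction as the paper; the only difference is that where the paper cites Besbes et al.\ (2014, Eq.\ (6)) for the per-block approximation bound $\min_{j}\bigl(\sum_{t\in\mathcal{T}_k}\ell_t\bigr)_j-\sum_{t\in\mathcal{T}_k}\ell_t^\star\le 2\Delta_T V_k$, you re-derive it from scratch via $\|\ell_t-\ell_{s_k}\|_\infty\le V_k$ and $\|\pi_{s_k}^\star-\pi_t^\star\|_1\le 2$, which makes the argument self-contained but otherwise identical. One small slip worth noting: the step you label an equality, $\sum_t\langle\ell_t,\pi_{s_k}^\star-\pi_t^\star\rangle=\sum_t\langle\ell_t-\ell_{s_k},\pi_{s_k}^\star-\pi_t^\star\rangle$, is actually a one-sided inequality ($\le$) obtained by discarding the nonpositive term $\langle\ell_{s_k},\pi_{s_k}^\star-\pi_t^\star\rangle$---harmless, since the chain of bounds still closes.
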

\begin{proof}
We denote $\mathscr{A}^\prime$ as the algorithm that restarts $\mathscr{A}$ every $\Delta_T$ iterations. We break the time index $[T]$ into $m$ batches $\cT_{1:m}$ of size $\Delta_T$ (except for, possibly the last batch). Denote $\ell_i^\star := \min_{j\in[d]}\ell_{ij}$. By Equation (6)  of \cite{besbes2014stochastic}, it holds that for each $k\in[m]$
\$
\min_{j\in[d]}\left(\sum_{t\in\cT_k} \ell_t\right)_{j}-\sum_{t\in\cT_k}\ell^\star_t\le 2\Delta_T L_k,
\$
where we define $L_k=\sum_{t\in\cT_k}\|\ell_{t+1}-\ell_t\|_\infty$.
Therefore, we have 
\#
\label{eq:d-regret}\text{D-Regret}_{\mathscr{A}^\prime}(({\ell}_i)_{i \in [T]})&\le \min_{j\in[d]}\left(\sum_{t\in[T]} \ell_t\right)_{j}-\sum_{t\in[T]}\ell^\star_t  + \sum_{k\in[m]}\text{Regret}_{\mathscr{A}}((\ell_i)_{i\in[\cT_k]})\\
&\le 2\Delta_T(\sum_{k\in[m]} L_k) + (T/\Delta_T + 1)g(\Delta_T, d)\nonumber.
\#
By Equation (4) of \cite{besbes2014stochastic} that $\sum_{k\in[m]} L_k\le L_T$ and this inequality holds for any $\Delta_T\in[T]$, we proved $\text{D-Regret}_{\mathscr{A}^\prime}(({\ell}_i)_{i \in [T]})\le \min_{\Delta_T\in[T]}\left(\frac{T}{\Delta_T}+1\right)f(\Delta_T, d)+2\Delta_T L_T$.

Similarly, if we take the expectation for \Cref{eq:d-regret}, it holds that 
\$
\EE[\text{D-Regret}_{\mathscr{B}^\prime}(({\ell}_i)_{i \in [T]})]&\le \min_{j\in[d]}\left(\sum_{t\in[T]} \ell_t\right)_{j}-\sum_{t\in[T]}\ell^\star_t + \sum_{k\in[m]}\EE[\text{Regret}_{\mathscr{B}}((\ell_i)_{i\in[\cT_k]})]\\
&\le \min_{\Delta_T\in[T]}\left(\frac{T}{\Delta_T}+1\right)g(\Delta_T, d)+2\Delta_T L_T,
\$
thus completing the proof.
\end{proof}
Combining the results above completes the proof for 
\Cref{thm:ftpl-main}. 
\end{proof}
\subsubsection{Implications of \Cref{thm:ftpl-main} for Repeated Games}\label{app:game}
\begin{remark}[Implication for playing repeated games]\label{remark:cce}
First, we note that the no-regret guarantee in the online setting is stronger than and thus implies that in the game setting, since regret by definition handles arbitrary/adversarial environments, while in playing games the opponents are not necessarily as adversarial.  Second, it is folklore that if all players in the repeated game follow no-regret learning algorithms, then the time-average policies of all players during learning constitute an approximate {\bf \emph{coarse correlated equilibrium}} of the game \citep{cesa2006prediction}. Hence, the results (1) and (2) in \Cref{thm:ftpl-main} imply that a coarse correlated equilibrium will emerge in the long run from the interactions of the LLM agents (under certain assumptions as in the theorem). 
\end{remark}

\conf{
\begin{remark}
\kz{either update or remove, due to the same remark in the main paper?}\cp{I will do it after everything is done. I will make conference version again}Although \cite{lee2023supervised,lin2023transformers} have shown pre-trained LLM agents can solve stochastic bandit provably in light of the equivalence to posterior sampling, it cannot be used for adversarial bandit since posterior sampling can perform almost as badly as a worst-performing agent in some non-stationary environments \citep{liu2023nonstationary}. In contrast, due to the equivalence to FTPL, our approach solves adversarial bandit problems with simple modifications. 
\end{remark}}
\conf{
\begin{remark}[\kzedit{Pre-training with relaxed data assumptions}]\label{app:relaxtion}
\kzedit{Note that the pre-training (text) data are so far assumed to be related to \emph{decision-making} problems (though not necessarily sequential ones), see  \Cref{assump:decomp} and \Cref{example:data} for instance. It can also be generalized to the text datasets involving \emph{Question-Answering} (Q-A), a typical task in natural language processing, where the \emph{true/fact} answer, sampled answers from different human users (with possibly wrong or biased answers), correspond to the latent $z$ (and associated maximizer $a$) and $\ell_{1:t}$, respectively. Moreover, in practice, the pre-training data may also involve  \emph{non-decision-making/Q-A} texts, given the diversity of the datasets. 
For such scenarios, we will make the assumptions on the data distribution \emph{conditioned on the prompt for decision-making}. Specifically, when interacting with the LLM, human users will provide prompts (see e.g., our \Cref{fig:game-prompt}), to induce it to make decisions. This will query the \emph{conditional}  distribution of $\PP\left(g(x_{N_t+1:N_{t+1}})\biggiven x_{1:N_t},\text{decision-making prompt}\right)$ to generate the control action. 
Correspondingly, \Cref{assump:decomp} will thus only need to be made on 
$$
	\PP\left(z, \ell_{1:t}, x_{1:N_{t+1}},\text{decision-making prompt}\right), 
 $$ 
while we do not need to make such assumptions on other prompts, e.g., corpora that are not related to decision-making.}\kz{one question -- do we also need to change the pre-training loss? I think no? since that loss is for GENERAL next-token prediction, which will eventually become the ``joint distribution'' of ALL tokens (including the non-decision-making ones)? given this, we can always recover such ``conditional'' distributions, right? just to confirm.}\cp{exactly!}
}
\conf{
\cpedit{
\begin{remark}[Illustration of \Cref{assump:decomp} in the form of natural languages]For simplicity, we have initially considered the sequence $(\ell_{i}){i\in[T]}$ and the variable $a$ to be numerical values. However, in practical scenarios, these inputs and outputs are frequently represented in natural language. Despite this, we can still convert $(\ell{i})_{i\in[T]}$ and $a$ into textual data through the utilization of an embedding layer. Consider a common application within decision-making tasks, such as a recommendation system. An instance of structured text could be: ``On September 29, 2023, user X interacted with movie A three times, movie B eight times, and movie C five times.'' This sentence serves as a natural language depiction of the numerical data $\ell_i$. By employing an embedding layer, we can transform these textual interactions into numerical $\ell_i$ values. Similarly, we can express $z$ in a natural language format. The corresponding true preference $a^\star$ might be derived from a user survey, where user X states, ``User X's favorite movie is movie B." Here, $z$ encapsulates the user X's latent, genuine preferences for each movie—information that is personal to each user and absent from the pre-training dataset. Under this framework, \Cref{assump:decomp} suggests that $a^\star$ mirrors the user's preferred movie based on $z$, while $\ell_i$, which accounts for the frequency of interactions with each movie, serves as an imperfect indicator of the user's latent, genuine preferences for the movies. This can be also converted to the numerical values by an embedding layer. 
\end{remark}}
}

\subsection{Extending \Cref{thm:ftpl-main} with Relaxed Assumptions}\label{sec:data_relaxation}

\subsubsection{Relaxation under More General Data Distributions}
 
We first remark on the possibility of relaxing the Gaussian assumptions on the data distributions. 

\begin{remark}[{Relaxing the Gaussian  distribution assumption}] 
In the proof of \Cref{lem:quantal}, to obtain the result that the action is a quantal response w.r.t. $\ell_{1:T}$, one  does not necessarily require  \emph{both} the prior distribution of $z$ and the conditional distribution of 
${\ell}_i$ given $z$ to be Gaussian. Instead, for \emph{any}  joint distribution  $\PP(z, \ell_{1:T})$, as long as its posterior distribution satisfies \Cref{eq:normal-pos}, it would suffice. It is a combined effect of  both the prior and the conditional distributions.
\end{remark}

More formally, we can extend \Cref{thm:ftpl-main} to the case with a much more general prior task distribution than the Gaussian one,  where the key is that \Cref{eq:normal-pos} only needs to hold approximately. 

\begin{theorem}\label{prop:extension}
In \Cref{thm:ftpl-main}, we can relax the assumption on $\PP(z)$ to one where we only require $\PP(z)$ to be i.i.d for each coordinate of $z$ and $0<\PP(z_j)<\infty$, $|\nabla\PP(z_j)|<\infty$ for any $j\in[d]$, $z_j\in \RR$, and the bounds for (1) and (2) of \Cref{thm:ftpl-main} still hold, with only a degradation of  $\cO(d^2\log T)$. 
\end{theorem}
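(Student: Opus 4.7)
The plan is to trace through the proof of \Cref{thm:ftpl-main} (specifically \Cref{lem:quantal}) and identify exactly where Gaussianity of $\PP(z)$ was used. In the original proof, the Gaussian prior combined with the Gaussian likelihood $\PP(\ell_i\given z)$ gave the exact posterior $z\given \ell_{1:t}\sim \cN\!\left(\tfrac{1}{t+1}\sum_i \ell_i,\tfrac{\sigma^2}{t+1}I\right)$, which was the crucial ingredient for interpreting $\text{LLM}_{\theta^\star}$ as FTPL with Gaussian perturbation of scale $\sigma\sqrt{t+1}$. Under the relaxed assumptions, the posterior is no longer exactly Gaussian, but the likelihood concentrates at rate $\sigma^2/t$, so a Bernstein--von Mises style argument should give an approximate Gaussian posterior whose deviation is controllable coordinate-by-coordinate.

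First I would exploit the product structure: since $\PP(z)=\prod_{j\in[d]}\PP(z_j)$ and the Gaussian likelihood also factors, the posterior factors as
\[
\PP(z_j\given \ell_{1:t})\;\propto\;\PP(z_j)\,\exp\!\left(-\frac{t(z_j-\bar{\ell}_j)^2}{2\sigma^2}\right),\qquad \bar{\ell}_j:=\frac{1}{t}\sum_{i=1}^t \ell_{ij}.
\]
The bounded-density and bounded-gradient hypotheses $0<\PP(z_j)<\infty$, $|\nabla\PP(z_j)|<\infty$ let me do a first-order expansion of $\log\PP(z_j)$ on the effective window of width $\cO(\sigma/\sqrt{t})$ around $\bar{\ell}_j$, yielding a Laplace-type approximation
\[
\PP(z_j\given \ell_{1:t})\;=\;\cN\!\left(\bar{\ell}_j+\tfrac{c_{t,j}}{t},\,\tfrac{\sigma^2}{t}\right)\;+\;\text{error},
\]
where $c_{t,j}$ is a bounded constant depending on $\PP'/\PP$ evaluated at $\bar{\ell}_j$. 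A standard calculation then bounds the total variation between the true posterior coordinate and this Gaussian surrogate by $\cO(1/t)$, so by the coupling inequality the full $d$-dimensional posterior is within $\cO(d/t)$ TV of a product Gaussian.

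Next I would push this perturbation through the quantal-response/FTPL identification. Because the induced action distribution of $\text{LLM}_{\theta^\star}$ at round $t$ is a pushforward of the posterior (via the argmin map), its TV distance to the distribution used by exact FTPL with perturbation $\cN(\pmb{0}_d,\sigma^2 t\,I)$ (with a harmless $\cO(1/t)$ mean shift that is equivalent to a shifted history) is at most $\cO(d/t)$. Since losses lie in $[-B,B]$, each round contributes at most $\cO(dB/t)$ of extra instantaneous regret relative to exact FTPL. Summing across $t=1,\dots,T$ gives an extra $\sum_{t=1}^T \cO(d\cdot d/t) = \cO(d^2 \log T)$ term (one factor of $d$ from the TV bound via the union bound over coordinates, another from scaling the per-round regret gap), and adding this to the original \Cref{lem:ftpl-full} bound produces (1); inserting the same per-round bound into the restarting/batching argument of \Cref{lem:nonstationary} gives (2).

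The main obstacle will be the Laplace approximation step: with only boundedness of $\PP(z_j)$ and $|\nabla \PP(z_j)|$ (no log-concavity, no higher-order smoothness, and no tail control), I cannot use the usual saddle-point expansion. The fix I plan is to split $\RR$ into a near-region $|z_j-\bar{\ell}_j|\le r_t$ with $r_t=\Theta(\sigma\sqrt{\log t/t})$ and a far-region, control the near-region by a linear expansion of $\PP(z_j)$ (using only $|\nabla\PP(z_j)|<\infty$), and handle the far-region by the Gaussian tail which decays as $t^{-\omega(1)}$. This yields the $\cO(1/t)$ per-coordinate TV bound without any further regularity on $\PP(z_j)$, and propagating it through the regret analysis yields the claimed $\cO(d^2\log T)$ degradation.
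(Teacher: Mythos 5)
Your high-level plan matches the paper's: exploit the product structure, approximate each coordinate posterior by a Gaussian via a Bernstein--von Mises argument, take a union over $d$ coordinates to get a TV bound $\cO(d/t)$, then sum the resulting per-round regret gaps to land on $\cO(d^2\log T)$. The key difference is that the paper does \emph{not} hand-roll the BvM step; it verifies the hypotheses of an off-the-shelf, non-asymptotic BvM theorem (Theorem~6.1 of Kasprzak et al.) and reads the bound off directly. Your route of a bespoke Laplace expansion with a near/far truncation is more self-contained and elementary, which is a reasonable trade-off in principle, but it also exposes you to a rate question that the paper silently delegates to the cited theorem.

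That rate question is a genuine gap. In the near region $|z_j-\bar\ell_j|\le r_t$ with $r_t\asymp\sigma\sqrt{\log t/t}$, a \emph{first-order} Taylor expansion of $\PP(z_j)$ using only $|\nabla\PP(z_j)|<\infty$ controls the deviation $\PP(z_j)-\PP(\bar\ell_j)$ by $\cO(|z_j-\bar\ell_j|)=\cO(r_t)$, not $\cO(r_t^2)$. Even if you center the surrogate at $\bar\ell_j+\sigma^2(\log\PP)'(\bar\ell_j)/t$ so the linear term is absorbed, what remains is $|(\nabla\PP)(\xi)-(\nabla\PP)(\bar\ell_j)|\cdot|z_j-\bar\ell_j|$, which without Lipschitz continuity (or a bounded second derivative) of $\nabla\PP$ has no quantitative rate beyond $o(|z_j-\bar\ell_j|)$. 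A direct integration against the $\sigma/\sqrt t$-width Gaussian therefore delivers per-coordinate TV of order $1/\sqrt t$ (or at best $o(1/\sqrt t)$ with an unspecified modulus), not $\cO(1/t)$, and summing $\sum_t d\cdot d/\sqrt t$ would inflate the degradation to $\cO(d^2\sqrt T)$. To recover $\cO(1/t)$ through your decomposition you either need strictly more regularity of $\PP$ than stated, or you need to appeal to the cited theorem exactly as the paper does and inherit whatever quantitative smoothness its hypotheses impose.

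Separately, the ``harmless mean shift equivalent to a shifted history'' step needs more than a gesture. The shift $c_{t,j}/t=\sigma^2(\log\PP)'(\bar\ell_j)/t$ is evaluated at the time-varying empirical mean $\bar\ell_j$, so the effective offset to the cumulative loss is a different vector at each round $t$; this is not a fixed change of reference point for FTPL and its effect on regret needs its own argument. The paper avoids this entirely by comparing the posterior directly to $\cN(\bar\ell_j,\sigma^2/t)$, with the prior's entire effect (including the mean displacement) charged to the single TV error term delivered by the cited theorem. If you want to keep the shifted surrogate as an intermediate object, you must additionally bound either (i) the TV between the shifted and unshifted Gaussians uniformly over rounds --- which is $\Theta(1/\sqrt t)$ and hence too large --- or (ii) the regret of FTPL under a bounded, time-varying additive perturbation of the cumulative loss vector, which is an extra lemma the paper does not need.
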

The key idea of the proof is that when $t$ is large enough, the prior distribution does not affect the posterior distribution, which is also referred to as the \emph{Bernstein–von Mises theorem} \citep{van2000asymptotic}.

\begin{proof}
Since we extend \Cref{thm:ftpl-main} to settings with general task prior distribution only requiring the coordinates to be i.i.d, from now on, we consider the  $j$-th coordinate only. To begin with, fix $t\in[T]$, we define the log-likelihood of the posterior as 
\$
L_t(z_j) := \log \prod_{i=1}^t \frac{1}{\sigma^d(2\pi)^{d/2}}e^{-\frac{1}{2\sigma^2}(\ell_{ij}-z_j)^2}=-t\log \sigma-\frac{t}{2}\log2\pi-\sum_{i=1}^t\frac{1}{2\sigma^2}(\ell_{ij}-z_j)^2.
\$
Then, the MLE estimator $\hat{z}_{j, t}$ is defined as 
\$
\hat{z}_{j, t} :=\arg\max_{z_j\in\RR}L_t(z_j)=\frac{1}{t}\sum_{i=1}^t \ell_{ij}.
\$
We also define $\hat{J}_t:\RR\to\RR$ as:
\$
\hat{J}_t(z_j):=-\frac{\nabla^2 L_t(z_j)}{t} = \frac{1}{\sigma^2}.
\$

For Assumption 1 of \cite{kasprzak2022good} to hold, any $\delta>0$, $M_2>0$ suffices. 

For Assumption 2 of \cite{kasprzak2022good} to hold, we can choose $\hat{M}_1= \max_{z_j\in[-\delta, 1+\delta]}\frac{1}{\PP(z_j)}$
 
For Assumption 7 of \cite{kasprzak2022good} to hold, we choose $\delta$ to be $\sigma$.

For Assumption 8 of \cite{kasprzak2022good} to hold, one can choose $M_2=\frac{\sigma}{2}$.

For Assumption 9 of \cite{kasprzak2022good} to hold, we have 
\$
\kappa\le - \sup_{(z_j-\hat{z}_j)^2\ge \delta}\frac{L_t(z_j)-L_t(\hat{z}_{j, t})}{t}=-\frac{1}{2\sigma^2 t}\sup_{(z_j-\hat{z}_{j, t})^2\ge \delta}\sum_{i=1}^t(\ell_{ij}-\hat{z}_{j, t})^2- (\ell_{ij}-z_j)^2=\frac{1}{4\sigma}.
\$

For Assumption 10 of \cite{kasprzak2022good} to hold, we choose $M_1 = \sup_{z_j\in[-\delta, 1+\delta]}\left|\frac{\nabla\PP(z_j)}{\PP(z_j)}\right|$, $\tilde{M}_1=\sup_{z_j\in[-\delta, 1+\delta]}|\PP(z_j)|$ since we have assumed that $0<\PP(z_j)<\infty$, $|\nabla\PP(z_j)|<\infty$.

By Theorem 6.1 of \cite{kasprzak2022good}, we have
\begin{align*}
\int_{z_j}| &\PP(z_j/\sqrt{t}+\hat{z}_j\given (\ell_{ij})_{i\in[t]}) -Ce^{-\frac{1}{2\sigma^2}z_j^2}|dz_j 
\\
&= \sqrt{t}\int_{z_j} | \PP(z_j\given (\ell_{ij})_{i\in[t]}) -\cN(\hat{z}_j, \frac{\sigma^2}{t}) |dz_j\le D_1t^{-1/2} + D_2t^{1/2}e^{-t\kappa} + 2\hat{\cD}(t, \delta),    
\end{align*}
where $C$ is the normalization constant and 
\$
D_1 &= \frac{\sqrt{\tilde{M}_1\hat{M}_1}}{\sigma}\left(\frac{\sqrt{3}\sigma^2}{2\left(1-\sqrt{\hat{\cD}(t, \delta)}\right)} M_2 + M_1\right)\\
D_2 &= \frac{2\hat{M_1} \hat{J}_t^p(\hat{z}_j, \delta)}{(2\pi)^{1/2}(1-\hat{\cD}^p(t, \delta))}\\
\hat{\cD}(t, \delta)&=e^{-\frac{1}{2} (\sqrt{t}-1)^2}\\
\hat{J}_t^p(\hat{z}_j, \delta) &= \frac{1}{\sigma^2} + \frac{\delta M_2}{3}.
\$
Therefore, we conclude that the  TV distance  between $z$ (conditioned on $(\ell_{i})_{i\in[t]}$) and $\cN\left(\hat{z}, \frac{\sigma^2}{t}\right)$ satisfies  that 
\$
\int_{z} \bigg| \PP(z\given (\ell_{i})_{i\in[t]}) -\cN\left(\hat{z}, \frac{\sigma^2}{t}\right)\bigg|dz\le \sum_{j=1}^d\int_{z_j} \bigg| \PP(z_j\given (\ell_{ij})_{i\in[t]}) -\cN\left(\hat{z}_j, \frac{\sigma^2}{t}\right) \bigg|dz_j \le \cO(d/t),
\$
due to the independence of $(z_j)_{j\in[d]}$ conditioned on $\ell_{1:t}$. Now we denote algorithm $\hat{\text{FTPL}}$ to be the FTPL algorithm w.r.t. the noise distribution $\PP(z\given (\ell_{i})_{i\in[t]})$, and $\text{FTPL}$ to be the algorithm w.r.t. the noise distribution $\cN(\hat{z}, \frac{\sigma^2}{t})$.
Therefore, we have
\$  
&\big|\text{Regret}_{\text{FTPL}}((\ell)_{i\in[T]})-\text{Regret}_{\hat{\text{FTPL}}}((\ell)_{i\in[T]})\big|\le \sum_{t=1}^T d\|\pi_t-\hat{\pi}_t\|_\infty\\
&\quad  \le d\sum_{t=1}^T\int_{z} \big| \PP(z\given (\ell_{i})_{i\in[t]}) -\cN(\hat{z}, \frac{\sigma^2}{t}) \big|dz=\cO(d^2\log T).
\$ 
In other words, using $\PP(z\given (\ell_{i})_{i\in[t]})$ as the noise distribution only increases the regret by $\cO(d^2\log T)$. Similarly, it is easy to see that
\$
\big|\text{D-Regret}_{\text{FTPL}}((\ell)_{i\in[T]})-\text{D-Regret}_{\hat{\text{FTPL}}}((\ell)_{i\in[T]})\big|\le \cO(d^2\log T),
\$
which completes the proof.
\end{proof}

\subsubsection{Relaxation under Decision-Irrelevant Pre-Training Data}\label{subsec:relax_decision_irrelevant}

We then remark on the possible relaxation when the training data may not all come from decision-making tasks. 

\begin{remark}[{Pre-training with relaxed  data assumptions}]\label{app:relaxtion}
{Note that the pre-training (text) data are so far assumed to be related to \emph{decision-making} problems (though not necessarily \emph{sequential ones}), see  \Cref{assump:decomp} and \Cref{example:data} for instance. It can also be generalized to the text datasets involving \emph{Question-Answering} (Q-A), a typical task in natural language processing, where the \emph{true/fact} answer, sampled answers from different human users (with possibly wrong or biased answers), correspond to the latent $z$ (and associated maximizer $a$) and $\ell_{1:t}$, respectively. Moreover, in practice, the pre-training data may also involve  \emph{non-decision-making/Q-A} texts, given the diversity of the datasets. For such scenarios, we will make the assumptions on the data distribution \emph{conditioned on the prompt for decision-making}. Specifically, when interacting with the LLM, human users will provide prompts (see e.g., our \Cref{fig:game-prompt}), to induce it to make decisions. This will query the \emph{conditional}  distribution of 
$$\PP\left(g(x_{N_t+1:N_{t+1}})\biggiven x_{1:N_t},\text{decision-making prompt}\right)$$ 
to generate the control action. Correspondingly, \Cref{assump:decomp} will thus only need to be made on 
$$
	\PP\left(z, \ell_{1:t}, x_{1:N_{t+1}},\text{decision-making prompt}\right), 
 $$ 
while we do not need to make such assumptions on other prompts, e.g., corpora that are not related to decision-making.
}
\end{remark}

\conf{
\subsection{Comparison to Recent Literature}
\label{appendix-ssec-comparison}
 \cpedit{Similar assumptions and objectives have also been considered in the very recent work of  \cite{lee2023supervised, lin2023transformers} {for studying in-context reinforcement learning (RL) property of Transformers under supervised pre-training}. \cite{lee2023supervised} established its equivalence to \emph{posterior sampling} \citep{osband2013more}{, an important RL algorithm with provable regret guarantees when the environments are \emph{stationary}}, and \cite{lin2023transformers} generalized the study of the settings with algorithm distillation as in \cite{laskin2022context}.} However, their results cannot imply the no-regret guarantee in our online learning setting, due to the known facts that posterior sampling can perform poorly under potentially \emph{adversarial} or \kzedit{\emph{non-stationary}}  environments 
 \citep{zimmert2021tsallis,liu2023nonstationary}. 
 In contrast, we here establish the equivalence {of the pre-trained LLM}  to the FTPL algorithm {(under different pre-training distribution specifications)}, with the ability to handle arbitrary loss sequences, even though the LLMs are only trained on a fixed distribution of \emph{stationary} {online learning} problems.}  
 
 \subsection{Comparison with  \cite{lee2023supervised, lin2023transformers, liu2023reason}}\label{app:comparison}
 \arxiv{
{Similar assumptions and pre-training objectives have also been considered in the very recent work of  \cite{lee2023supervised, lin2023transformers,liu2023reason} {for studying in-context reinforcement learning \iclr{(RL)} property of Transformers/LLM-agents under supervised pre-training}. \cite{lee2023supervised} established its equivalence to \emph{posterior sampling} \citep{osband2013more}{, an important RL algorithm with provable regret guarantees when the environments are \emph{stationary}}, and \cite{lin2023transformers} generalized the study to the setting of algorithm distillation as in   \cite{laskin2022context}.} {\cite{liu2023reason} adopted the similar data generation assumption as \cite{lee2023supervised} without assuming optimal labels are available in the pre-training datasets, but leverages external oracles for \emph{planning}. Consequently, the resulting LLM agent would still perform the posterior sampling algorithm.} However, these results cannot directly imply the no-regret guarantee in our online learning setting, due to the known fact that posterior sampling can perform poorly under potentially \emph{adversarial} or {\emph{non-stationary}}  environments \citep{zimmert2021tsallis,liu2023nonstationary}.  
In contrast, we here establish the equivalence {of the pre-trained LLM}  to the FTPL algorithm {(under different pre-training data distribution specifications)}, with the ability to handle arbitrary loss sequences, even though the LLMs are only trained on a fixed/stationary distribution of texts (tasks).} 

 \subsection{Details of Estimating the Parameters of Our Hypothetical Model}
 
 {To further validate our model and data distribution assumptions, we also propose to estimate the parameter $\{\eta_t\}_{t\in[T-1]}$ in \Cref{quantal}, using data from interacting with LLMs {(following the same protocol as before)},  
with $P_{{noise}}$ being a standard normal distribution {(note that we do not need to estimate $\eta_0$ by \Cref{quantal})}. 
\arxiv{Specifically, given $n$ episodes of the LLM agent's behavior $\{({\ell}_{t}^{(j)},\pi_t^{(j)})_{t\in[T]}\}_{j\in [n]}$, motivated by our \Cref{lem:quantal} and \Cref{thm:ftpl-main}, 
we estimate $\{\eta_t\}_{t\in[T-1]}$  by solving the following problem  
\$
\sigma^\star \in \arg\min_{\sigma>0}\quad  \sum_{t\in[T-1]}\sum_{j\in[n]}\Big\|\pi_{t+1}^{(j)}-P^{\sigma\sqrt{t+1}}_{quantal}\Big(\cdot\bigggiven {\ell}_{1:t}^{(j)}\Big)\Big\|_1,\qquad \quad 
\eta_t^\star =  \sigma^\star\sqrt{t+1},\quad  \forall t\in[T-1].
\$
We solve this single-variable optimization problem by grid search over $[0, 10]$.} We then} run the generalized quantal response model with the estimated $\{\eta_t^\star\}_{t\in[T-1]}$ on another \emph{unseen test set}, and compare it with the behavior of the actual LLM agents. {We use all the interaction data from \Cref{sec:experiments_online} and split it in half for training and testing.

We also use the same framework to understand the regrettable behaviors in \Cref{sec:yes_regret_example}. This analysis uses all the  data from \Cref{sec:yes_regret_example}. We first find that such fitting procedures do not yield good predictions for LLMs on those counter-examples. Therefore, we resort to a more expressive model by directly fitting each $\eta_{t}$ as
\$\eta_{t}^\star \in \arg\min_{\eta_{t}>0}\sum_{j\in[n]}\Big\|\pi_{t+1}^{(j)}-P^{\eta_{t}}_{quantal}\Big(\cdot\bigggiven {\ell}_{1:t}^{(j)}\Big)\Big\|_1\$ {separately for each $t\in[T-1]$}. 
{Even under the expressive model, LLMs fail to follow the generalized quantal response for the counter-examples with noisy alternating or adaptive loss sequences, as \Cref{fig:calibration} shows the gap between GPT-4 (dynamic) regret and the our model's (dynamic) regret. }

\conf{
\subsection{Details on Calibration}\label{sec:calibration}
Given $N$ episodes of the LLM agent's behavior $\{({\ell}_{t}^j, \pi_t^j)_{t\in[T]}\}_{j\in [N]}$, we propose to calibrate $\{\eta_t\}_{t\in[T]}$ by solving the following problem for each $t\in[T]$
\$
\eta_t^\star \in \arg\min_{\eta^t}\sum_{j\in[N]}\Big\|\pi_t^j-P^{\eta_t}_{quantal}\Big(\cdot\given \sum_{t^\prime=1}^{t-1}{\ell}_{t^\prime}\Big)\Big\|_1.
\$
We solve this single-variable optimization problem by grid search over $[0, 10]$.}
\section{Deferred Results and Proofs   in  \Cref{sec:trained-transformer}} 

\subsection{Basic Lemmas}
\begin{lemma}[Double iterated limit]
\label{lem:double-seq} 
    {For a sequence $(a_{mn})_{m,n\in\NN^+}$,} suppose that $\lim_{m,n \to \infty} a_{mn} = L$. Then the following are equivalent:
    \begin{itemize}
        \item For each $m$, $\lim_{n \to \infty} a_{mn}$ exists;
        \item $\lim_{m\to \infty} \lim_{n \to \infty} a_{mn} = L$.
    \end{itemize}
\end{lemma}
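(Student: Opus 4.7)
My plan is to prove the two implications separately, relying only on the $\epsilon$-$N$ characterization of the hypothesis $\lim_{m,n\to\infty} a_{mn} = L$: for every $\epsilon>0$ there exists $M\in\NN^+$ with $|a_{mn}-L|<\epsilon$ whenever $m,n\geq M$.

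For the forward direction, I assume that for every $m\in\NN^+$ the inner limit $b_m := \lim_{n\to\infty} a_{mn}$ exists, and aim to show $\lim_{m\to\infty} b_m = L$. Fix an arbitrary $\epsilon>0$ and invoke the double-limit hypothesis to pick $M$ with $|a_{mn}-L|<\epsilon/2$ for all $m,n\geq M$. For any fixed $m\geq M$, I then pass $n\to\infty$ in this inequality; since $a_{mn}\to b_m$ as $n\to\infty$, the strict bound becomes the weak bound $|b_m - L|\leq \epsilon/2 < \epsilon$. As $m\geq M$ was arbitrary, this shows $b_m \to L$, proving the first bullet implies the second.

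For the reverse direction, the very expression ``$\lim_{m\to\infty}\lim_{n\to\infty} a_{mn} = L$'' presupposes that the inner limit $\lim_{n\to\infty} a_{mn}$ is well-defined for each $m\in\NN^+$ (otherwise the outer iterated limit is not a meaningful quantity). Hence the existence statement in the first bullet follows immediately from the meaningfulness of the second bullet, with no further argument required.

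The only delicate point is the limit-passage step inside the inequality in the forward direction: I rely on the standard fact that if a convergent sequence is eventually $\epsilon/2$-close to $L$, then its limit is at most $\epsilon/2$-close to $L$, which is why I chose the buffer $\epsilon/2$ rather than $\epsilon$ when invoking the double-limit hypothesis. No other subtlety arises, so I expect the proof to be short and the main content to be the forward implication.
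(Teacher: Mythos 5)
The paper states this lemma as a standard fact from real analysis and gives no proof of its own, so there is nothing to compare your argument against; it stands on its own. Your forward implication is correct: picking $M$ so that $|a_{mn}-L|<\epsilon/2$ for $m,n\geq M$ and then letting $n\to\infty$ with $m\geq M$ fixed yields $|b_m-L|\leq\epsilon/2<\epsilon$, which is exactly the standard argument, and you correctly flagged the reason for the $\epsilon/2$ buffer (strict inequality weakens to non-strict when passing to the limit).

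One small caveat on the reverse implication: strictly speaking, the outer limit $\lim_{m\to\infty}$ in the second bullet only requires the inner limit $\lim_{n\to\infty}a_{mn}$ to exist for all \emph{sufficiently large} $m$, whereas the first bullet asserts existence for \emph{each} $m\in\NN^+$. So the implication from bullet two to bullet one is not purely a matter of presupposition if one reads ``each $m$'' literally. In the uses this lemma sees in the paper the inner limit is constructed to exist for every $m$, so this gap is immaterial in context, but if you want to be airtight you should either (a) read the first bullet as ``for all sufficiently large $m$,'' or (b) note explicitly that the lemma is invoked only in situations where the inner limit is known to exist for every $m$ independently. Otherwise the proof is complete and correct.
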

\begin{lemma}[Hoeffding's inequality]\label{lem:Hoeffeding}
Let \( X_1, X_2, \ldots, X_n \) be independent random variables bounded by the intervals \( [a_i, b_i] \),  respectively. Define \( \bar{X} = \frac{1}{n} \sum_{i=1}^{n} X_i \) and let \( \mu = \mathbb{E}[\bar{X}] \) be the expected value of \( \bar{X} \). Then, for any \( t > 0 \), 
\[
\PP(|\bar{X} - \mu| \geq t) \leq 2 \exp\left(-\frac{2n^2t^2}{\sum_{i=1}^{n}(b_i - a_i)^2}\right).
\]
\end{lemma}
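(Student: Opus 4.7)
The plan is to follow the classical Chernoff-plus-MGF route. First I would reduce the two-sided tail to a one-sided tail by a union bound: bounding $\PP(\bar{X} - \mu \geq t)$ and $\PP(\mu - \bar{X} \geq t)$ separately (the argument for one yields the other by replacing $X_i$ with $-X_i$, whose range length is unchanged), and pay a factor of $2$ at the end.

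Next I would apply Markov's inequality to the exponential: for any $\lambda > 0$,
\[
\PP(\bar{X} - \mu \geq t) = \PP\!\left(e^{\lambda \sum_{i=1}^n (X_i - \EE X_i)} \geq e^{\lambda n t}\right) \leq e^{-\lambda n t}\,\EE\!\left[e^{\lambda \sum_{i=1}^n (X_i - \EE X_i)}\right].
\]
Independence then factorizes the moment generating function as $\prod_{i=1}^n \EE[e^{\lambda(X_i - \EE X_i)}]$. This reduces the whole problem to controlling each per-coordinate MGF of a bounded, centered random variable.

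The core step — and where the actual work sits — is Hoeffding's lemma: for $Y_i := X_i - \EE X_i$ bounded in $[a_i - \EE X_i,\, b_i - \EE X_i]$ with $\EE Y_i = 0$, one shows $\EE[e^{\lambda Y_i}] \leq \exp\bigl(\lambda^2 (b_i - a_i)^2/8\bigr)$. The standard route is: use convexity of $y \mapsto e^{\lambda y}$ to upper bound $e^{\lambda Y_i}$ by a linear interpolation between the endpoints of its range, take expectation (using $\EE Y_i = 0$), then write the result as $e^{\varphi(u)}$ for an explicit $\varphi$ depending on $\lambda$ and $b_i - a_i$, and verify $\varphi(u) \leq u^2/8$ by checking $\varphi(0)=\varphi'(0)=0$ and $\varphi''(u) \leq 1/4$. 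This is the main technical obstacle; everything else is algebra.

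Finally, combining the pieces gives
\[
\PP(\bar{X} - \mu \geq t) \leq \exp\!\left(-\lambda n t + \frac{\lambda^2}{8}\sum_{i=1}^n (b_i - a_i)^2\right),
\]
which I would minimize in $\lambda > 0$ by taking $\lambda^\star = 4 n t / \sum_{i=1}^n (b_i - a_i)^2$, producing the exponent $-2 n^2 t^2 / \sum_{i=1}^n (b_i - a_i)^2$. The symmetric argument for the lower tail plus the union bound yields the factor of $2$ in front, completing the proof.
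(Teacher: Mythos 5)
The paper does not prove this lemma; it simply records Hoeffding's inequality as a classical auxiliary fact (alongside the double-iterated-limit and uniform-convergence lemmas) and invokes it in the generalization-gap argument. Your outline is the standard textbook proof — Chernoff/MGF reduction, factorization via independence, Hoeffding's lemma for each centered bounded variable, then optimization over $\lambda$ — and the algebra checks out: $\lambda^\star = 4nt/\sum_i (b_i-a_i)^2$ indeed yields the exponent $-2n^2t^2/\sum_i(b_i-a_i)^2$, and the two-sided bound follows by the union bound with the sign flip leaving interval lengths unchanged. Nothing to contrast against the paper here, since the paper supplies no proof of its own.
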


\begin{lemma}[Uniform convergence  $\Longrightarrow$ Interchanging limit and infimum]
\label{lem:change-lim-inf}
If $(f_n: X \to \RR)_{n \in \NN^+}$ is a sequence of continuous functions that uniformly converge to a function $f: X \to \RR$ on the domain $X$, then $\lim_{n \to \infty} \inf_{x \in X} f_n(x) = \inf_{x \in X} f(x)$ holds. 
\end{lemma}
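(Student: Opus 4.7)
The plan is to exploit uniform convergence directly through two-sided bounds on $f_n$ by $f$, then pass to the infimum on both sides. Fix $\epsilon > 0$. By uniform convergence, there exists $N \in \NN^+$ such that for all $n \geq N$ and all $x \in X$,
\[
f(x) - \epsilon \;\leq\; f_n(x) \;\leq\; f(x) + \epsilon.
\]
Taking infimum over $x \in X$ in each of the three quantities (using that $\inf$ is monotone and that adding a constant commutes with $\inf$) yields
\[
\inf_{x \in X} f(x) - \epsilon \;\leq\; \inf_{x \in X} f_n(x) \;\leq\; \inf_{x \in X} f(x) + \epsilon.
\]
Hence $\bigl|\inf_{x \in X} f_n(x) - \inf_{x \in X} f(x)\bigr| \leq \epsilon$ for all $n \geq N$, which is exactly the claim $\lim_{n\to\infty}\inf_{x \in X} f_n(x) = \inf_{x \in X} f(x)$.

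The key observation that makes this work cleanly is that one does not need the infima to be attained: even when $\inf_{x} f(x)$ is only approached by a minimizing sequence, the two-sided envelope by $f \pm \epsilon$ transfers to the infima without any compactness assumption on $X$ or continuity usage beyond what is already stated (in fact continuity of $f_n$ is not strictly required here; it is inherited from the hypotheses of the surrounding theorem). One subtle point to note is that both $\inf_x f(x)$ and $\inf_x f_n(x)$ could a priori be $-\infty$, but uniform convergence with finite $\epsilon$ guarantees that $\inf_x f_n(x)$ and $\inf_x f(x)$ differ by at most $\epsilon$, so either both are finite or both are $-\infty$ (and in the latter case the statement still holds in the extended real line).

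There is no substantial obstacle; this is essentially a one-line argument once the two-sided uniform bound is written down. The only thing to be careful about is the direction of monotonicity when taking infimum of the upper bound: $\inf_x (f(x) + \epsilon) = \inf_x f(x) + \epsilon$ because $\epsilon$ is a constant, and similarly for the lower bound. No compactness, no attainment, and no use of continuity are needed for the proof itself.
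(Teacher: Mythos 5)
Your proof is correct and is the standard two-sided envelope argument for this fact; the paper states \Cref{lem:change-lim-inf} as a "basic lemma" without providing a proof, so there is nothing to compare against. Your side remark that continuity of the $f_n$ is not actually needed is also accurate — the hypothesis is carried in the lemma statement only because it holds in the context where the lemma is applied, but the argument requires nothing beyond uniform convergence and the monotonicity of the infimum.
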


\subsection{{Deferred Proof for} the Arguments in  \Cref{ssec:regret-loss}} 
\label{appendix:mathrigor}

In this section, we prove some properties of $\cL(\theta, k, N)$ {under certain  regularity conditions of $f,h$}. Throughout this subsection, we will assume the following condition holds.

\begin{condition}
\label{cond}
For $h: \RR \to \RR^+$ and $f: \RR \times \NN^+ \to \RR^+$, suppose $h(\cdot)$ and $f(\cdot, k)$ are both continuous and non-decreasing functions for any $k \in \NN^+$. The derivative  $h':\RR \to \RR$ is also a continuous function. {Moreover,  $f$ satisfies that $ \log f(R_1 , k_1) - \log f(R_1, k_2) \geq \log f(R_2, k_1) - \log f(R_2, k_2)$ for $R_1 \geq R_2$ and $k_1 \geq k_2$}{, i.e., $\log f$ is supermodular}.  Lastly, $f$ is a function such that $\lim_{k \to \infty} \frac{f(R_1, k)}{f(R_2, k)} = \infty \cdot \mathbbm{1}(R_1 > R_2) + \mathbbm{1}(R_1 = R_2)$, with the convention of $\infty \cdot 0= 0$. Lastly, $(\ell_t^{(j)})_{t\in [T], j \in [N]}$  {are continuous random variables supported on $[-B, B]^{T \times N}$}.     
\end{condition}

\begin{claim}[Iterated limit of $\cL(\theta, k, N)$ is the same as  double limit of $\cL(\theta, k, N)$] It holds that: 
\begin{align*}
    \lim_{N \to \infty} \lim_{k \to \infty}\cL(\theta, k, N) = \lim_{N, k \to \infty} \cL(\theta, k, N) = \lim_{k \to \infty} \lim_{N \to \infty}\cL(\theta, k, N) =  h\left(\max_{\ell_1, \dots, \ell_T} \emph{Regret}_{\emph{LLM}_\theta} ((\ell_t)_{t\in[T]})\right).
    \end{align*}
\end{claim}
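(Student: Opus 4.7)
Write $R_j := \text{Regret}_{\text{LLM}_\theta}((\ell_t^{(j)})_{t\in[T]})$ for brevity and set $R^*_N := \max_{j\in[N]} R_j$ and $R^* := \sup_{\ell_1,\dots,\ell_T \in [-B,B]} \text{Regret}_{\text{LLM}_\theta}((\ell_t)_{t\in[T]})$. The strategy is to first analyze the inner $k \to \infty$ limit pointwise, push this through the expectation via dominated convergence, and then send $N \to \infty$ using a sampling argument; finally invoke Lemma \ref{lem:double-seq} to cover the double limit and the remaining iterated limit. My first step is to show, for each realization of $(\ell_t^{(j)})$, that the random ratio $W_{k,N} := \big(\sum_{j\in[N]} h(R_j)f(R_j,k)\big)/\big(\sum_{j\in[N]} f(R_j,k)\big)$ converges to $h(R^*_N)$ as $k \to \infty$. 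Dividing numerator and denominator by $f(R^*_N,k)$ and applying the Condition~\ref{cond} assumption $\lim_{k\to\infty} f(R_1,k)/f(R_2,k) = \infty\cdot \mathbbm 1(R_1>R_2) + \mathbbm 1(R_1=R_2)$ (with the convention $\infty\cdot 0 = 0$) makes every non-maximal summand vanish and leaves the value $h(R^*_N)$.

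Next I would remove the expectation. Because the regret is uniformly bounded (the loss sequence lies in $[-B,B]^T$ and $\text{diam}(\Pi,\|\cdot\|)$ is finite), there exists $M>0$ such that $R_j \in [-M,M]$ almost surely, and therefore $W_{k,N}$ is sandwiched between $\min_j h(R_j)$ and $\max_j h(R_j)$, both of which lie in $h([-M,M])$, a compact subset of $\RR^+$. Dominated convergence then gives $\lim_{k\to\infty} \cL(\theta,k,N) = \EE[h(R^*_N)]$ for every fixed $N$. To take $N\to\infty$, I would use that $\ell \mapsto R(\ell)$ is continuous on the compact cube $[-B,B]^T$ (continuity of $\text{LLM}_\theta$ in its input and of the regret functional), so $R^* = \sup_\ell R(\ell)$ is attained at some $\ell^\star$. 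Since the sampling distribution is continuous with support equal to $[-B,B]^{T\times N}$, every open neighborhood of $\ell^\star$ is hit infinitely often almost surely, hence $R^*_N \uparrow R^*$ almost surely. Continuity of $h$ and another bounded-convergence argument yield $\EE[h(R^*_N)] \to h(R^*)$, establishing $\lim_{N\to\infty}\lim_{k\to\infty} \cL(\theta,k,N) = h(R^*)$.

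Finally, I would handle the double limit and the reverse iterated limit. For the double limit, a direct $\epsilon/2$ argument suffices: for $N \geq N_0(\epsilon)$, $\EE[h(R^*_N)]$ is within $\epsilon/2$ of $h(R^*)$ by Step~3, and the deviation $|\cL(\theta,k,N) - \EE[h(R^*_N)]| \leq \EE|W_{k,N} - h(R^*_N)|$ can be made $< \epsilon/2$ uniformly in $N$ once $k \geq k_0(\epsilon)$, using the monotonicity of $f(R_1,k)/f(R_2,k)$ in $k$ (supplied by the supermodularity of $\log f$) together with the boundedness of $R_j$. For the other iterated limit, I would verify that for each fixed $k$ the limit $\lim_{N\to\infty} \cL(\theta,k,N)$ exists: this follows from the strong law of large numbers applied separately to $N^{-1}\sum_j h(R_j) f(R_j,k)$ and $N^{-1}\sum_j f(R_j,k)$, together with bounded convergence, yielding the value $\EE[h(R)f(R,k)]/\EE[f(R,k)]$. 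With the double limit and one inner limit established, Lemma \ref{lem:double-seq} immediately gives the remaining iterated-limit identity. The step I expect to be the main obstacle is Step~3, where continuity of $R(\cdot)$ in the raw loss vectors and full support of the sampling distribution must be used carefully to guarantee $R^*_N \to R^*$ almost surely; justifying the uniform-in-$N$ control in Step~4 via the supermodularity hypothesis also requires some care.
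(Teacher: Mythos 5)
Your plan for the two iterated limits matches the paper's strategy closely: send $k\to\infty$ pointwise by dividing by $f(R^*_N,k)$ and invoking the limit condition on $f$, pass the limit through the expectation with dominated convergence, then send $N\to\infty$ using full support and continuity of the regret functional to get $R^*_N\to R^*$ a.s.\ (the paper cites the extreme-value fact $\max_{j\le N}$ of i.i.d.\ samples converges to the essential supremum); for the other order, compute $\lim_{N\to\infty}\cL(\theta,k,N)=\EE[h(R)f(R,k)]/\EE[f(R,k)]$ via the SLLN and finish with Lemma~\ref{lem:double-seq}. So Steps~1, 3 and the use of Lemma~\ref{lem:double-seq} are essentially the paper's Steps~1 and~3.

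The genuine divergence, and the place where your plan has a gap, is the double limit $\lim_{N,k\to\infty}\cL(\theta,k,N)$. You propose an $\varepsilon/2$ decomposition that requires $\sup_{N}\big|\cL(\theta,k,N)-\EE[h(R^*_N)]\big|\to 0$ as $k\to\infty$, and you attribute this uniformity to the supermodularity of $\log f$. But supermodularity only gives you that $\cL(\theta,k,N)$ is \emph{non-decreasing in $k$ for each fixed $N$} (this is exactly Lemma~\ref{lem:inc} in the paper, used for a different purpose in Claim~\ref{claim:double-limit}); it does not by itself control the rate of $k$-convergence uniformly over $N$. In fact the rate at which $W_{k,N}\to h(R^*_N)$ is governed by the gap between the top and near-top order statistics of the sample, which shrinks as $N$ grows, so you cannot appeal to a Dini-type argument either (the index set $\NN$ is not compact). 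The paper sidesteps this by \emph{not} going through the iterated limit at all for this step: it proves an almost-sure truncation lemma (Lemma~\ref{lem:asconvergence}) showing that, for any threshold $\varepsilon$, the contribution of samples with $h(R_j)<(1-\varepsilon)h(R^*)$ to the weighted ratio tends to $0$ along any $(N,k)\to\infty$, sandwiches the ratio between $(1-\varepsilon)h(R^*)$ and $h(R^*)$, and only then applies bounded convergence. If you want to keep your route, you would need to actually prove the uniform-in-$N$ claim (which is not immediate and may require its own truncation argument, at which point you have essentially reconstructed Lemma~\ref{lem:asconvergence}); otherwise you should adopt the paper's a.s.\ joint-limit argument for the double limit and only use the $\varepsilon/2$ / Lemma~\ref{lem:double-seq} machinery for the remaining iterated limit, which is what the paper does.
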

\begin{proof}
\textbf{{\color{blue}Step 1.} Proving}  $\lim_{N \to \infty} \lim_{k \to \infty} \cL(\theta, k, N) = h\left(\max_{\ell_1, \dots, \ell_T} \text{Regret}_{\text{LLM}_\theta}((\ell_t)_{t\in[T]})\right)$. 

Firstly, as both $h$ and $f$ are non-negative (\Cref{cond}), and $\EE_{(\ell_t^{(j)})_{t\in [T], j \in [N]}} \left[ h(\max_{j \in [N]}\text{Regret}_{\text{LLM}_\theta}((\ell_t^{(j)})_{t\in[T]})) \right]$ exists, we have by dominated convergence theorem that 
\begin{align*}
    \lim_{k \to \infty} \cL(\theta, k, N) &= \EE \lim_{k \to \infty} \left[\frac{\sum_{j \in [N]} h(R_{\text{LLM}_\theta}((\ell_t^{(j)})_{t\in[T]})) f(R_{\text{LLM}_\theta}((\ell_t^{(j)})_{t\in[T]}), k)}{\sum_{j \in [N]}f(R_{\text{LLM}_\theta}((\ell_i^{(j)})_{t\in[T]}), k)}\right] 
    \\
    &= \EE_{(\ell_t^{(j)})_{t\in [T], j \in [N]}} \left[ h(\max_{j \in [N]}R_{\text{LLM}_\theta}((\ell_t^{(j)})_{t\in[T]})) \right]
\end{align*}
where {$R_{\text{LLM}_\theta}$ denotes an abbreviation of $\text{Regret}_{\text{LLM}_\theta}$.} 
By \cite[Chapter 11]{ahsanullah2013introduction},  
we have  $h(\max_{j \in [N]}\text{Regret}_{\text{LLM}_\theta}((\ell_t^{(j)})_{t\in[T]}))  \overset{p}{\to} h(\max_{\ell_1, \dots, \ell_T} \text{Regret}_{\text{LLM}_\theta}((\ell_t)_{t\in[T]}))$ when $N \to \infty$. Hence, we have 
$\lim_{N \to \infty} \lim_{k \to \infty} \cL(\theta, k, N) = h(\max_{\ell_1, \dots, \ell_T} \text{Regret}_{\text{LLM}_\theta}((\ell_t)_{t\in[T]}))$ holds. 

\safevspace{6pt}
\textbf{{\color{blue}Step 2.} Proving}   $\lim_{N, k \to \infty} \cL(\theta, k, N) = h(\max_{\ell_1, \dots, \ell_T} \text{Regret}_{\text{LLM}_\theta}((\ell_t)_{t\in[T]}))$. 

Now, we will calculate $\lim_{N, k \to \infty} \cL(\theta, k, N)$. 

\begin{lemma}
\label{lem:asconvergence} For any $0 < \epsilon < 1$,   it follows that  
        $$\lim_{N, k \to \infty} \frac{\sum_{i=1}^N f(X_i, k) H(X_i) \mathbbm{1}(H(X_i) < 1 -\epsilon)}{\sum_{i=1}^N f(X_i, k) H(X_i) \mathbbm{1}(H(X_i) > 1 -\epsilon/2)} = 0$$ and 
        $$\lim_{N,k \to \infty} \frac{\sum_{i=1}^N f(X_i, k) \mathbbm{1}(H(X_i) < 1 -\epsilon)}{\sum_{i=1}^N f(X_i, k) \mathbbm{1}(H(X_i) > 1 -\epsilon/2)} = 0$$ hold with probability $1$, where $X_i$'s are i.i.d.  random variables, $\operatorname{esssup}(H(X_i)) = 1$, and $H : \RR \to \RR^+$  is a continuous non-decreasing function. 
    \end{lemma}
    \begin{proof}[Proof of \Cref{lem:asconvergence}]
    Since $f(\cdot, k), H$ are non-negative and non-decreasing functions, we have 
        \begin{align*}
 \frac{\sum_{i=1}^N f(X_i, k) H(X_i) \mathbbm{1}(H(X_i) < 1 -\epsilon)}{\sum_{i=1}^N f(X_i, k) H(X_i) \mathbbm{1}(H(X_i) > 1 -\epsilon/2)} \leq \frac{(1-\epsilon)f(H^{-1}(1-\epsilon), k) |\{i \in [N]\mid (H(X_i)< 1-\epsilon)\}|}{(1- \epsilon/2)f(H^{-1}(1-\epsilon/2), k)|\{i \in [N]\mid (H(X_i)> 1-\epsilon/2)\}|}
        \end{align*}
        and we know that \begin{align*}
            \frac{|\{i \in [N]\mid (H(X_i)< 1-\epsilon)\}|}{|\{i \in [N]\mid (H(X_i)> 1-\epsilon/2)\}|} \overset{a.s.}{\to} \frac{F(1-\epsilon)}{1-F(1-\epsilon/2)}
        \end{align*}
        as $N \to \infty$,  where $F$ is the cumulative distribution function of random variable $H(X)$. Therefore, we have 
        \begin{align*}
0 &\leq \lim_{N, k \to \infty} \frac{\sum_{i=1}^N f(X_i, k) H(X_i) \mathbbm{1}(H(X_i) < 1 -\epsilon)}{\sum_{i=1}^N f(X_i, k) H(X_i) \mathbbm{1}(H(X_i) > 1 -\epsilon/2)}
\\&\leq  \lim_{N, k \to \infty} \frac{(1-\epsilon)f(H^{-1}(1-\epsilon), k)) |\{i \in [N]\mid (H(X_i)< 1-\epsilon)\}|}{(1-\epsilon/2)f(H^{-1}(1-\epsilon/2), k))|\{i \in [N]\mid (H(X_i)> 1-\epsilon/2)\}|}
 \\
 &\underset{a.s.}{\leq} \lim_{N,k \to \infty}  \frac{(1-\epsilon)f(H^{-1}(1-\epsilon), k))}{(1-\epsilon/2)f(H^{-1}(1-\epsilon/2), k))}\frac{F(1-\epsilon)}{1-F(1-\epsilon/2)} = 0.
        \end{align*}
        By a similar argument, we have 
        $$\lim_{N,k \to \infty} \frac{\sum_{i=1}^N f(X_i, k) \mathbbm{1}(H(X_i) < 1 -\epsilon)}{\sum_{i=1}^N f(X_i, k) \mathbbm{1}(H(X_i) > 1 -\epsilon/2)} = 0$$
        with probability 1. 
    \end{proof}
    
    {One key idea in the proof above is the use of some \emph{truncation}  level $\epsilon$ for $H(X)$ {with}   $\text{esssup}(H(X)) = 1$. } 
    By \Cref{lem:asconvergence}, we have $$\lim_{N, k \to \infty} \frac{\sum_{i=1}^N f(X_i, k) H(X_i) \mathbbm{1}(H(X_i) > 1 -\epsilon)}{\sum_{i=1}^N f(X_i, k) H(X_i)} = \lim_{N, k \to \infty} \frac{\sum_{i=1}^N f(X_i, k) \mathbbm{1}(H(X_i) > 1 -\epsilon)}{\sum_{i=1}^N f(X_i, k)} = 1,$$
    since 
    \begin{align*}
        0 \leq  \frac{\sum_{i=1}^N f(X_i, k) \mathbbm{1}(H(X_i) < 1 -\epsilon)}{\sum_{i=1}^N f(X_i, k)} \leq \frac{\sum_{i=1}^N f(X_i, k) \mathbbm{1}(H(X_i) < 1 -\epsilon)}{\sum_{i=1}^N f(X_i, k) \mathbbm{1}(H(X_i) > 1 -\epsilon/2)} 
    \end{align*}
    holds with probability 1. 
    Therefore, for any $0<\epsilon < 1$, we have 
    \begin{align*}
        &\lim_{N, k \to \infty} \cL(\theta, k, N) = \EE \lim_{N, k \to \infty} \left[\frac{\sum_{j \in [N]} h(R_{\text{LLM}_\theta}((\ell_t^{(j)})_{t\in[T]})) f(R_{\text{LLM}_\theta}((\ell_t^{(j)})_{t\in[T]}), k)}{\sum_{j \in [N]}f(R_{\text{LLM}_\theta}((\ell_i^{(j)})_{t\in[T]}), k)}\right]
        \\
        &=  h\left(\max_{\ell_1, \dots, \ell_T} R_{\text{LLM}_\theta}((\ell_t)_{t\in[T]})\right)
        \\
        &\, \, \quad \times \EE \lim_{N, k \to \infty} \left[\frac{\sum_{j \in [N]} \frac{h(R_{\text{LLM}_\theta}((\ell_t^{(j)})_{t\in[T]}))}{h(\max_{\ell_1, \dots, \ell_T} R_{\text{LLM}_\theta}((\ell_t)_{t\in[T]}))}  f(R_{\text{LLM}_\theta}((\ell_t^{(j)})_{t\in[T]}), k) \mathbbm{1}(\frac{h(R_{\text{LLM}_\theta}((\ell_t^{(j)})_{t\in[T]}))}{h(\max_{\ell_1, \dots, \ell_T} R_{\text{LLM}_\theta}((\ell_t)_{t\in[T]}))} > 1-\epsilon)}{\sum_{j \in [N]}f(R_{\text{LLM}_\theta}((\ell_i^{(j)})_{t\in[T]}), k)\mathbbm{1}(\frac{h(R_{\text{LLM}_\theta}((\ell_t^{(j)})_{t\in[T]}))}{h(\max_{\ell_1, \dots, \ell_T} R_{\text{LLM}_\theta}((\ell_t)_{t\in[T]}))} > 1-\epsilon)}\right]
        \\
        &\geq (1-\epsilon) h(\max_{\ell_1, \dots, \ell_T} R_{\text{LLM}_\theta}((\ell_t)_{t\in[T]}))
    \end{align*}
which implies $\lim_{N, k \to \infty} \cL(\theta, k, N) =  h(\max_{\ell_1, \dots, \ell_T} \text{Regret}_{\text{LLM}_\theta}((\ell_t)_{t\in[T]}))$ 
since  
$$\cL(\theta, k, N) \leq h\left(\max_{\ell_1, \dots, \ell_T} \text{Regret}_{\text{LLM}_\theta}((\ell_t)_{t\in[T]})\right)$$ 
by definition of $\cL$,   {the 
 fact that $h$ is non-decreasing,} and by setting  $\epsilon \to 0$ to obtain 
 $$
 \cL(\theta, k, N) \geq h\left(\max_{\ell_1, \dots, \ell_T} \text{Regret}_{\text{LLM}_\theta}((\ell_t)_{t\in[T]})\right).
 $$
 Here, we used the fact that $(\ell_t)_{t \in [T]}$ has a continuous distribution, $\text{Regret}_{\text{LLM}_\theta}((\ell_t)_{t\in[T]})$ is a continuous function, and the non-decreasing property and continuity of $h$ (\Cref{cond}), which lead to:   
 \begin{align} 
     \text{esssup}\left(h\left(\text{Regret}_{\text{LLM}_\theta}((\ell_t)_{t\in[T]})\right)\right)= \max_{\ell_1, \dots, \ell_T}h\left(\text{Regret}_{\text{LLM}_\theta}((\ell_t)_{t\in[T]})\right) = h\left(\max_{\ell_1, \dots, \ell_T} \text{Regret}_{\text{LLM}_\theta}((\ell_t)_{t\in[T]})\right). \label{eqn:esssupmax}
 \end{align}
\Cref{eqn:esssupmax} will be used frequently in the overall proof in \Cref{appendix:mathrigor}. 

\safevspace{6pt}
\textbf{{\color{blue}Step 3.} Proving}    $\lim_{k \to \infty} \lim_{N \to \infty} \cL(\theta, k, N) = h\left(\max_{\ell_1, \dots, \ell_T} \text{Regret}_{\text{LLM}_\theta}((\ell_t)_{t\in[T]})\right)$.  

Lastly, if $N \to \infty$, similarly by dominated convergence theorem we have
\begin{align*}
    \lim_{N \to \infty} \cL(\theta, k, N) &= \EE \lim_{N \to \infty} \left[\frac{\sum_{j \in [N]} h\left(R_{\text{LLM}_\theta}\left((\ell_t^{(j)})_{t\in[T]}\right)\right) f(R_{\text{LLM}_\theta}((\ell_t^{(j)})_{t\in[T]}), k)}{\sum_{j \in [N]}f\left(R_{\text{LLM}_\theta}\left((\ell_i^{(j)})_{t\in[T]}\right), k\right)}\right] 
    \\
    &= \frac{\EE \left[h\left(R_{\text{LLM}_\theta}\left((\ell_t^{(j)})_{t\in[T]}\right)\right) f\left(R_{\text{LLM}_\theta}\left((\ell_t^{(j)})_{t\in[T]}\right),k\right)\right]}{\EE \left[f\left(R_{\text{LLM}_\theta}\left((\ell_i^{(j)})_{t\in[T]}\right), k\right)\right]}.
\end{align*}
Thus, $\lim_{N \to \infty}  \cL(\theta, k, N)$ always exists for every $k$. Now, we use the known property of double iterated limit (\Cref{lem:double-seq}), and obtain that  $\lim_{k \to \infty} \lim_{N \to \infty}\cL(\theta, k, N) =  h(\max_{\ell_1, \dots, \ell_T} \text{Regret}_{\text{LLM}_\theta}((\ell_t)_{t\in[T]}))$. 
\end{proof}

\begin{claim}[Uniform convergence of $\cL(\theta, k, N)$ (with respect to $k$ and $N$)]
\label{claim:uniform-all}
    $\cL(\theta, k, N)$ uniformly converges to $h(\max_{\ell_1, \dots, \ell_T} \emph{Regret}_{\emph{LLM}_\theta}((\ell_t)_{t\in[T]}))$ {on the domain $\Theta$}. 
\end{claim}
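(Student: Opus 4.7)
The plan is to upgrade the pointwise convergence already established in the previous claim to uniform convergence on the compact parameter space $\Theta$, by combining continuity, compactness, and a monotonicity-based Dini-type argument. As a preliminary, I would record that $\theta \mapsto \text{Regret}_{\text{LLM}_\theta}((\ell_t)_{t\in[T]})$ is continuous in $\theta$ uniformly over $(\ell_t)_{t\in[T]} \in [-B,B]^{T\times d}$, since $\text{LLM}_\theta$ is Lipschitz in $\theta$ on $\Theta$ (e.g., via \Cref{prop:transformer-lipschitz} for the Transformer parameterization), and the inner products defining regret are Lipschitz on a bounded set. Together with continuity of $h$ and $f(\cdot,k)$ from \Cref{cond}, this makes both $\cL(\theta, k, N)$ (a bounded expectation of a continuous integrand) and the target $h\!\left(\max_{\ell_1,\dots,\ell_T} \text{Regret}_{\text{LLM}_\theta}((\ell_t)_{t\in[T]})\right)$ continuous in $\theta \in \Theta$.

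The key observation I would then exploit is a monotonicity-in-$k$ property for $\cL(\theta, k, N)$. Writing $R_j := \text{Regret}_{\text{LLM}_\theta}((\ell_t^{(j)})_{t\in[T]})$ and $w_j(k) := f(R_j, k)/\sum_i f(R_i, k)$, the supermodularity of $\log f$ assumed in \Cref{cond} implies that the weight vector $(w_j(k))_{j\in[N]}$ is stochastically increasing in $k$ along the ordering of the $R_j$'s. Since $h$ is non-decreasing, the sample-wise weighted average $\sum_j w_j(k) h(R_j)$ is non-decreasing in $k$, so $\cL(\theta, k, N)$ is non-decreasing in $k$ for each $\theta,N$. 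In addition, the upper bound $\cL(\theta, k, N) \leq h\!\left(\max_{\ell_1,\dots,\ell_T}\text{Regret}_{\text{LLM}_\theta}((\ell_t)_{t\in[T]})\right)$ holds by monotonicity of $h$.

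With these ingredients, I would split the double limit into two stages. First, define $\overline{\cL}(\theta,k) := \lim_{N\to\infty}\cL(\theta,k,N)$, which exists by Step~3 of the preceding proof, is continuous in $\theta$ (dominated convergence plus boundedness), and is non-decreasing in $k$ with pointwise limit $h\!\left(\max_{\ell_1,\dots,\ell_T}\text{Regret}_{\text{LLM}_\theta}((\ell_t)_{t\in[T]})\right)$ on the compact set $\Theta$. Dini's theorem then yields $\overline{\cL}(\theta,k) \to h\!\left(\max_\ell \text{Regret}_{\text{LLM}_\theta}\right)$ uniformly in $\theta$. Second, for each fixed $k$, the convergence $\cL(\theta,k,N) \to \overline{\cL}(\theta,k)$ is a uniform law of large numbers: the integrand is bounded (on $[-B,B]^{T\times d}$ with continuous $h,f$) and Lipschitz in $\theta$ on $\Theta$, so a standard bracketing/$\varepsilon$-net argument combined with Hoeffding's inequality (\Cref{lem:Hoeffeding}) gives uniform convergence in $\theta$. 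Combining the two stages via a triangle inequality and an appropriate diagonal choice of $(k,N)$ yields the desired uniform convergence on $\Theta$.

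\paragraph{Main obstacle.} The delicate point is the order and uniformity of the two limits: $\cL(\theta,k,N)$ need not be monotone in $N$, so Dini's theorem cannot be applied directly to the double-indexed family. The proposed fix is to first pass $N\to\infty$ (giving a clean monotone-in-$k$ family where Dini applies), and then argue that the finite-$N$ error is uniform in $\theta$ via concentration. The technical subtlety is that $f(R,k)$ may grow quickly in $k$ (e.g., $\exp(kR)$), so one must verify that the Lipschitz constants controlling the uniform LLN depend on $k$ in a way that can still be absorbed by choosing $N$ sufficiently large after $k$ is fixed — this is why the diagonal selection of $(k,N)$ is arranged so that $k$ is chosen first.
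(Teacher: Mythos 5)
Your route is genuinely different from the paper's, and mostly sound, but it has two imprecisions worth flagging. The paper does \emph{not} use Dini's theorem; it instead establishes a non-asymptotic bound (their Lemma~\ref{lem:nonasymptotic-convergence}) showing $h(\max)-\cL(\theta,k,N) \lesssim A(k,h,\epsilon)\bigl(\tfrac{1}{1-F_{h,R_\theta}(1-\epsilon/2)}+\tilde{\cO}(1/\sqrt{N})\bigr)$, and then makes the right-hand side $\theta$-free by observing that $p(\epsilon):=\min_{\theta\in\Theta}\bigl(1-F_{h,R_{\text{LLM}_\theta}}(1-\epsilon/2)\bigr)>0$ by continuity-on-a-compactum. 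Both proofs lean on compactness of $\Theta$, but at different places: the paper uses it to ensure a uniformly positive tail mass; you use it to invoke Dini. Your route buys conceptual clarity (monotonicity plus Dini is a clean package, and the monotonicity of $\cL$ in $k$ is exactly what the paper already proves as \Cref{lem:inc}), whereas the paper's route gives a quantitative rate, which is also what it needs later for \Cref{thm:generalization-error}.

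Two things to tighten. First, $\cL(\theta,k,N)$ is already an expectation (the outer $\EE$ in \Cref{eqn:regret-loss}), not an empirical average, so ``uniform law of large numbers'' is the wrong name for stage two: what you actually need is that $\EE\bigl[\frac{\sum_j h(R_j)f(R_j,k)}{\sum_j f(R_j,k)}\bigr]\to\frac{\EE[h(R)f(R,k)]}{\EE[f(R,k)]}$ \emph{uniformly in $\theta$}. This is a deterministic convergence of expectations; the correct route is an a.s./$L^1$ LLN for the random ratio (with the denominator bounded away from zero on the compact support, which needs a check for examples like $f(x,k)=\max(x,0)^k$), then an $\varepsilon$-net over $\Theta$ exploiting the $\theta$-Lipschitzness to get uniformity, then push through the outer expectation. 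Second, ``an appropriate diagonal choice of $(k,N)$'' does not by itself establish the required double limit $\lim_{N,k\to\infty}\sup_\theta|\cdot|=0$, which demands control for \emph{all} large $(k,N)$ simultaneously, not just along one sequence. The fix is available to you: because $\cL(\theta,k,N)$ is non-decreasing in $k$ and bounded above by $h(\max_\ell\text{Regret}_{\text{LLM}_\theta})$, once you have $\sup_\theta\bigl(h(\max)-\cL(\theta,K,N)\bigr)<\epsilon$ for some $K$ and all $N\ge N_0$, the same bound holds for every $k\ge K$ and $N\ge N_0$. State that step explicitly rather than appealing to a diagonal; without it the argument only yields an iterated limit, which is a weaker conclusion than what \Cref{claim:uniform-all} asserts.
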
 
\begin{proof}
We will provide a similar analysis as  \Cref{lem:asconvergence} as follows: 
    \begin{lemma}
    \label{lem:nonasymptotic-convergence}
    For any $0 < \epsilon < 1$, $0 < \delta < 1$, and $k\in\NN^+$, we  have 
    $$\frac{\sum_{i=1}^N f(X_i, k) \mathbbm{1}(H(X_i) < 1 -\epsilon)}{\sum_{i=1}^N f(X_i, k) \mathbbm{1}(H(X_i) > 1 -\epsilon)} = \tilde{\cO}\left(A(k, H, \epsilon)\left(\frac{1}{1- F_{H, X}(1-\epsilon/2)} + \frac{1}{\sqrt{N}}\right)\right)$$ with probability at least $1-\delta$, where $X_i$'s are i.i.d. random variables,  $\operatorname{esssup}(H(X_i)) = 1$, $H : \RR \to \RR^+$ is a continuous non-decreasing function,   $A(k, t, \epsilon):= 
    \frac{(1-\epsilon)f((t/\operatorname{esssup}(t(X)))^{-1}(1-\epsilon), k)}{(1-\epsilon/2)f((t/\operatorname{esssup}(t(X)))^{-1}(1-\epsilon/2), k)}$, for any non-decreasing function $t:\RR \to \RR^+$, and $F_{t, X}$ is a cumulative distribution function of random variable $t(X)/\text{esssup}(t(X))$. 
    \end{lemma}
    
    \begin{proof}[Proof of \Cref{lem:nonasymptotic-convergence}]
    With  the same  argument as the proof of \Cref{lem:asconvergence}, we have 
        \begin{align*}
 \frac{\sum_{i=1}^N f(X_i, k) \mathbbm{1}(H(X_i) < 1 -\epsilon)}{\sum_{i=1}^N f(X_i, k) \mathbbm{1}(H(X_i) > 1 -\epsilon/2)} \leq \frac{f(H^{-1}(1-\epsilon), k)|\{i \in [N]\mid (H(X_i)< 1-\epsilon)\}|}{f(H^{-1}(1-\epsilon/2), k)|\{i \in [N]\mid  (H(X_i)> 1-\epsilon/2)\}|}. 
        \end{align*}
        It holds that $\frac{1}{N} |\{i \in [N]\mid (H(X_i)< 1-\epsilon)\}| = F_{H, X}(1-\epsilon) + \tilde{\cO}(1/\sqrt{N})$ with probability at least $1-\delta/2$ due to Hoeffding's inequality (\Cref{lem:Hoeffeding}). Similarly, we have $\frac{1}{N} |\{i \in [N]\mid (H(X_i)> 1-\epsilon/2)\}| = 1-F_{H, X}(1-\epsilon/2) + \tilde{\cO}(1/\sqrt{N})$ with probability at least $1-\delta/2$. Therefore, 
        \begin{align*}
            \frac{|\{i \in [N]\mid (H(X_i)< 1-\epsilon)\}|}{|\{i \in [N]\mid (H(X_i)> 1-\epsilon/2)\}|} = \frac{F_{H, X}(1-\epsilon)}{1-F_{H, X}(1-\epsilon/2)} + \tilde{\cO}(\sqrt{1/N}) \leq \frac{1}{1- F_{H, X}(1-\epsilon/2)} +  \tilde{\cO}(\sqrt{1/N}), 
        \end{align*}
        with probability at least $1-\delta$. Finally, we have  
        \begin{align*}
 \frac{\sum_{i=1}^N f(X_i, k)  \mathbbm{1}(H(X_i) < 1 -\epsilon)}{\sum_{i=1}^N f(X_i, k)  \mathbbm{1}(H(X_i) > 1 -\epsilon)} < \frac{\sum_{i=1}^N f(X_i, k)\mathbbm{1}(H(X_i) < 1 -\epsilon)}{\sum_{i=1}^N f(X_i, k)  \mathbbm{1}(H(X_i) > 1 -\epsilon/2)} &\leq A(k, H, \epsilon) \left(\frac{1}{1- F_{H, X}(1-\epsilon/2)} +  \tilde{\cO}(\frac{1}{\sqrt{N}})\right) .
        \end{align*}
        \end{proof}
        Note that $\lim_{k \to \infty} A(k, H, \epsilon) = 0$, since $\lim_{k \to \infty} \frac{f(R_1, k)}{f(R_2, k)} = \infty \cdot \mathbbm{1}(R_1 > R_2) + \mathbbm{1}(R_1 = R_2)$. 
By \Cref{lem:nonasymptotic-convergence} with $H ( R_{\text{LLM}_\theta}((\ell_t)_{t\in[T]}))= \frac{h(R_{\text{LLM}_\theta}((\ell_t)_{t\in[T]}))}{h(\max_{\ell_1, \dots, \ell_T} R_{\text{LLM}_\theta}((\ell_t)_{t\in[T]}))}$, we have  
{\small
\begin{align*}
&\frac{\sum_{i=1}^N f(R_{\text{LLM}_\theta}((\ell_t^{(i)})_{t\in[T]}), k)  \mathbbm{1}\left(\frac{h(R_{\text{LLM}_\theta}((\ell_t^{(i)})_{t\in[T]}))}{h(\max_{\ell_1, \dots, \ell_T} R_{\text{LLM}_\theta}((\ell_t)_{t\in[T]}))} \geq 1 -\epsilon\right)}{\sum_{i=1}^N f(R_{\text{LLM}_\theta}((\ell_t^{(i)})_{t\in[T]}), k)} 
            \\
&\quad =\frac{1}{1+\frac{\sum_{i=1}^N f(R_{\text{LLM}_\theta}((\ell_t^{(i)})_{t\in[T]}), k)  \mathbbm{1}\left(\frac{h(R_{\text{LLM}_\theta}((\ell_t^{(i)})_{t\in[T]}))}{h(\max_{\ell_1, \dots, \ell_T} R_{\text{LLM}_\theta}((\ell_t)_{t\in[T]}))} < 1 -\epsilon\right)}{\sum_{i=1}^N f(R_{\text{LLM}_\theta}((\ell_t^{(i)})_{t\in[T]}), k)  \mathbbm{1}\left(\frac{h(R_{\text{LLM}_\theta}((\ell_t^{(i)})_{t\in[T]}))}{h(\max_{\ell_1, \dots, \ell_T} R_{\text{LLM}_\theta}((\ell_t)_{t\in[T]}))} \geq 1 -\epsilon\right)}}  \\
&\geq \frac{1}{1 + A(k, H, \epsilon) (\frac{1}{1- F_{H, R_{\text{LLM}_\theta}((\ell_t)_{t\in[T]})} (1-\epsilon/2)} +  \tilde{\cO}(\sqrt{1/N}))},
\end{align*} 
} 
  
\noindent where we recall the shorthand notation of $R_{\text{LLM}_\theta}=\text{Regret}_{\text{LLM}_\theta}$. Note that $A(k, H, \epsilon)=A(k, h, \epsilon)$ and $F_{H, R_{\text{LLM}_\theta}}=F_{h, R_{\text{LLM}_\theta}}$ hold by the definitions  of $F_{t, X}$ and $A(k, t, \epsilon)$ in \Cref{lem:nonasymptotic-convergence}. 
    Therefore, 
\begin{align*}
    1 &\geq \frac{\sum_{i=1}^N f(R_{\text{LLM}_\theta}((\ell_t^{(i)})_{t\in[T]}), k) \frac{h(R_{\text{LLM}_\theta}((\ell_t^{(i)})_{t\in[T]}))}{h(\max_{\ell_1, \dots, \ell_T} R_{\text{LLM}_\theta}((\ell_t)_{t\in[T]}))}}{\sum_{i=1}^N f(R_{\text{LLM}_\theta}((\ell_t^{(i)})_{t\in[T]}), k)} 
    \\&\geq \frac{\sum_{i=1}^N f(R_{\text{LLM}_\theta}((\ell_t^{(i)})_{t\in[T]}), k) \frac{h(R_{\text{LLM}_\theta}((\ell_t^{(i)})_{t\in[T]}))}{h(\max_{\ell_1, \dots, \ell_T} R_{\text{LLM}_\theta}((\ell_t)_{t\in[T]}))} \mathbbm{1}(\frac{h(R_{\text{LLM}_\theta}((\ell_t^{(i)})_{t\in[T]}))}{h(\max_{\ell_1, \dots, \ell_T} R_{\text{LLM}_\theta}((\ell_t)_{t\in[T]}))} \geq 1 -\epsilon)}{\sum_{i=1}^N f(R_{\text{LLM}_\theta}((\ell_t^{(i)})_{t\in[T]}), k) \mathbbm{1}(\frac{h(R_{\text{LLM}_\theta}((\ell_t^{(i)})_{t\in[T]}))}{h(\max_{\ell_1, \dots, \ell_T} R_{\text{LLM}_\theta}((\ell_t)_{t\in[T]}))} \geq 1 -\epsilon)} \\
    &\qquad  \times\frac{1}{1 + A(k, h, \epsilon) (\frac{1}{1- F_{h, R_{\text{LLM}_\theta}((\ell_t)_{t\in[T]})}(1-\epsilon/2)} +  \tilde{\cO}(\sqrt{1/N}))}
    \\
    &\geq \frac{1- \epsilon}{1 + A(k, h, \epsilon) (\frac{1}{1- F_{h, R_{\text{LLM}_\theta}((\ell_t)_{t\in[T]})}(1-\epsilon/2)} +  \tilde{\cO}(\sqrt{1/N}))}
\end{align*}
with probability at least $1-\delta$. 

Now, for any $\epsilon >0$ and $\delta > 0$,  we have 
\begin{align*}
    0 &\leq h\left(\max_{\ell_1, \dots, \ell_T} R_{\text{LLM}_\theta}((\ell_t)_{t\in[T]})\right) - \cL(\theta, k, N) 
    \\
    &\leq h\left(\max_{\ell_1, \dots, \ell_T} R_{\text{LLM}_\theta}((\ell_t)_{t\in[T]})\right) \left(1- \frac{(1-\delta)(1- \epsilon)}{1 + A(k, h, \epsilon) (\frac{1}{1- F_{h, R_{\text{LLM}_\theta}((\ell_t)_{t\in[T]})}(1-\epsilon/2)} +  \tilde{\cO}(\sqrt{1/N}))}\right).
\end{align*}
Note that  
$$1 - F_{h, R_{\text{LLM}_\theta}((\ell_t)_{t\in[T]})}(1-\epsilon/2) = \PP\left(h \left(\text{Regret}_{\text{LLM}_\theta}\left((\ell_t)_{t\in[T]}\right)\right) > (1-\epsilon/2)h\left(\max_{\ell_1, \dots, \ell_T} \text{Regret}_{\text{LLM}_\theta}\left((\ell_t)_{t\in[T]}\right)\right) \right)$$ 
is a continuous function of  $\theta$, since we assume $\text{LLM}_\theta$ is a continuous function {of $\theta$}, $(\ell_t)_{t \in [T]}$ has a continuous distribution, and $\text{Regret}_{\text{LLM}_\theta}((\ell_t)_{t \in [T]})$ is a continuous function of  $\text{LLM}_\theta$ and $(\ell_t)_{t \in [T]}$. Since we consider a  compact $\Theta$ (as several recent works on analyzing Transformers  \citep{bai2023transformers, lin2023transformers}), we have $p(\epsilon):= \min_{\theta \in \Theta} 1 - F_{h, R_{\text{LLM}_\theta}((\ell_t)_{t\in[T]})}(1-\epsilon/2) > 0$. Therefore, 
\begin{align}
    \left(1- \frac{(1-\delta)(1- \epsilon)}{1 + A(k, h, \epsilon) (\frac{1}{1- F_{h,R_{\text{LLM}_\theta}}(1-\epsilon/2)} +  \tilde{\cO}(\sqrt{1/N}))}\right) \leq \left(1- \frac{(1-\delta)(1- \epsilon)}{1 + A(k, h, \epsilon) (\frac{1}{p(\epsilon)} +  \tilde{\cO}(\sqrt{1/N}))}\right), \label{eqn:rhs-is-uniform}
\end{align} 
and we know that $\lim_{N, k \to \infty} 1 + A(k, h, \epsilon) (\frac{1}{p(\epsilon)} +  \tilde{\cO}(\sqrt{1/N})) = 1$, which is not dependent on $\theta$. Thus, we can conclude that $\lim_{N, k \to \infty} \sup_{\theta \in \Theta} |h(\max_{\ell_1, \dots, \ell_T} \text{Regret}_{\text{LLM}_\theta}((\ell_t)_{t\in[T]})) - \cL(\theta, k, N)| = 0${, as we can choose arbitrarily small $\epsilon,\delta$}. 
\end{proof}

\begin{claim}[Double iterated limit of supremum]
\label{claim:double-limit} It holds that:
    $$
    \lim_{N \to \infty} \lim_{k \to \infty} \sup_{\theta \in \Theta} ~~\Big| \cL(\theta, k ,N) - h\left(\max_{\ell_1, \dots, \ell_T} \emph{Regret}_{\text{LLM}_\theta}((\ell_t)_{t\in[T]})\right)\Big| = 0.$$ 
\end{claim}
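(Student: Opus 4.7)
The plan is to reduce the iterated limit to the joint uniform convergence already established in Claim \ref{claim:uniform-all} by invoking Lemma \ref{lem:double-seq}. Define
$$a_{k,N} := \sup_{\theta \in \Theta} \left| \cL(\theta, k, N) - h\Big(\max_{\ell_1,\ldots,\ell_T} \text{Regret}_{\text{LLM}_\theta}((\ell_t)_{t\in[T]})\Big)\right|.$$
Claim \ref{claim:uniform-all} gives $\lim_{N,k \to \infty} a_{k,N} = 0$. By Lemma \ref{lem:double-seq} (with the indices identified as $m=N$, $n=k$), it therefore suffices to show that for every fixed $N$, the inner limit $\lim_{k \to \infty} a_{k,N}$ exists. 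The strategy is to prove this by monotonicity: show $k \mapsto \cL(\theta, k, N)$ is non-decreasing for every $\theta$ and every realization of the samples, which transfers to $k \mapsto a_{k,N}$ being non-increasing and hence convergent.

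The core step is monotonicity. Fix $\theta$ and a realization of $(\ell_t^{(j)})_{t\in[T],j\in[N]}$, and write $R^{(j)} := \text{Regret}_{\text{LLM}_\theta}((\ell_t^{(j)})_{t\in[T]})$, reindexing so that $R^{(1)} \leq \cdots \leq R^{(N)}$; then $h(R^{(j)})$ is also non-decreasing in $j$. For $k_2 \geq k_1$, the supermodularity of $\log f$ in Condition \ref{cond} gives $q_j := f(R^{(j)}, k_2)/f(R^{(j)}, k_1)$ non-decreasing in $j$. Writing $p_j(k) := f(R^{(j)},k)/\sum_i f(R^{(i)},k)$, one checks $p_j(k_2) = q_j p_j(k_1)/\bar{q}$ with $\bar{q} := \sum_i q_i p_i(k_1)$, so that
$$
\sum_{j} p_j(k_2) h(R^{(j)}) - \sum_{j} p_j(k_1) h(R^{(j)}) = \sum_j \Big(\tfrac{q_j}{\bar{q}}-1\Big) p_j(k_1) h(R^{(j)}).
$$
The coefficients $w_j := q_j/\bar{q}-1$ are non-decreasing in $j$ and satisfy $\sum_j w_j p_j(k_1) = 0$, hence change sign once at some $j^\star$; pairing each term with $h(R^{(j^\star)})$ shows the sum is non-negative. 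Taking expectation over the samples preserves the inequality, giving $\cL(\theta, k_2, N) \geq \cL(\theta, k_1, N)$.

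Combining monotonicity in $k$ with the trivial bound $\cL(\theta,k,N) \leq h\big(\max_{\ell_1,\ldots,\ell_T} \text{Regret}_{\text{LLM}_\theta}((\ell_t)_{t\in[T]})\big)$ (which follows because $h$ is non-decreasing and each summand of the convex combination in $\cL$ lies below $h(\max \text{Regret}_\theta)$), one deduces that for every $\theta$ the non-negative quantity $h(\max \text{Regret}_\theta) - \cL(\theta,k,N)$ is non-increasing in $k$. Taking $\sup_\theta$ preserves monotonicity, so $a_{k,N}$ is non-increasing and bounded below by $0$; the monotone convergence of real sequences yields the existence of $\lim_{k \to \infty} a_{k,N}$ for every $N$. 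Feeding this into Lemma \ref{lem:double-seq} together with $\lim_{N,k\to \infty} a_{k,N}=0$ from Claim \ref{claim:uniform-all} produces $\lim_{N\to\infty}\lim_{k\to\infty} a_{k,N}=0$, which is exactly the desired identity.

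The main obstacle is verifying the monotonicity step cleanly: supermodularity of $\log f$ must be unpacked into the fact that the weights $p_j(k)$ shift toward larger $R^{(j)}$ as $k$ grows (in the stochastic-dominance sense), and one must then confirm that this shift only increases the weighted average of the non-decreasing sequence $h(R^{(j)})$. Once this is in place, the remainder of the argument is essentially an invocation of the double-sequence lemma and therefore routine.
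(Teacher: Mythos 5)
Your proof is correct and follows essentially the same route as the paper's: establish that $k\mapsto\cL(\theta,k,N)$ is non-decreasing (the paper's Lemma \ref{lem:inc}), combine this with the trivial upper bound $\cL(\theta,k,N)\leq h(\max_{\ell}\text{Regret}_{\text{LLM}_\theta})$ to get existence of the inner limit, and then invoke Lemma \ref{lem:double-seq} together with Claim \ref{claim:uniform-all}. The only cosmetic difference is in the proof of monotonicity: you sort the $R^{(j)}$ and use a single-crossing/rearrangement argument for the weight shift, while the paper symmetrizes over pairs $(i,j)$ and applies log-supermodularity term by term — two equivalent ways to establish the same Chebyshev-type correlation inequality.
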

\begin{proof}
    Since $h(\max_{\ell_1, \dots, \ell_T} \text{Regret}_{\text{LLM}_\theta}((\ell_t)_{t\in[T]})) \geq  \cL(\theta, k ,N)$, we will prove $$
    \lim_{N \to \infty} \lim_{k \to \infty} \sup_{\theta \in \Theta}~~  h\left(\max_{\ell_1, \dots, \ell_T} \text{Regret}_{\text{LLM}_\theta}((\ell_t)_{t\in[T]})\right) - \cL(\theta, k ,N)  = 0.
    $$ 

    \begin{lemma}\label{lem:inc}
        $\frac{\sum_{i=1}^N f(X_i, k_1) h(X_i)}{\sum_{i=1}^N f(X_i, k_1)} \leq \frac{\sum_{i=1}^N f(X_i, k_2) h(X_i)}{\sum_{i=1}^N f(X_i, k_2)} $ holds if ${0<}k_1 \leq k_2$ for any real-valued $(X_i)_{i \in [N]}$. 
    \end{lemma}
    \begin{proof}
        By multiplying $({\sum_{i=1}^N f(X_i, k_1)})({\sum_{i=1}^N f(X_i, k_2)})$ on both sides of the formula, we know that it is equivalent to $\sum_{1 \leq i\neq j \leq N} f(X_i, k_1) h(X_i) f(X_j, k_2) \leq \sum_{1 \leq i\neq j \leq N}f(X_i, k_1) h(X_j) f(X_j, k_2)$. This is equivalent to 
        \begin{align*}
            \sum_{1 \leq i\neq j \leq N} ( f(X_i, k_1) f(X_j, k_2) - f(X_j, k_1) f(X_i, k_2) )(h(X_i) - h(X_j)) \leq 0, 
        \end{align*}
        which is true since if $X_i \geq X_j$, $( f(X_i, k_1) f(X_j, k_2) - f(X_j, k_1) f(X_i, k_2) ) \leq 0$ due to the log-increasing
        difference of $f$ (\Cref{cond}), as $ \log f(X_j, k_1) - \log  f(X_j, k_2) \geq \log f(X_i, k_1) - \log  f(X_i, k_2)$ if $X_i \geq  X_j$. 
    \end{proof}
    
    Therefore, $\cL(\theta, k, N)$ is a  non-decreasing function of $k$ if $N$ is fixed, which indicates that 
    $$
    \lim_{k \to \infty} ~~\sup_{\theta \in \Theta}~~ h\left(\max_{\ell_1, \dots, \ell_T} \text{Regret}_{\text{LLM}_\theta}((\ell_t)_{t\in[T]})\right) - \cL(\theta, k ,N)
    $$  
    exists, as $\cL(\theta, k ,N)$ is also bounded. Therefore, by \Cref{lem:double-seq} and \Cref{claim:uniform-all}, we know that  
    $$
    \lim_{N \to \infty} \lim_{k \to \infty}~~ \sup_{\theta \in \Theta} ~~\Big| \cL(\theta, k ,N) - h\left(\max_{\ell_1, \dots, \ell_T} \text{Regret}_{\text{LLM}_\theta}((\ell_t)_{t\in[T]})\right)\Big|
    $$ 
    exists and this value should be 0. 
\end{proof}
\begin{claim}
\label{claim:limliminf}
    It holds that 
    $$
    \lim_{N, k \to \infty} \inf_{\theta \in \Theta} \cL(\theta, k, N) =  \lim_{N \to \infty} \lim_{k \to \infty} \inf_{\theta \in \Theta} \cL(\theta, k, N) = \inf_{\theta \in \Theta} h\left(\max_{\ell_1, \dots, \ell_T} \emph{Regret}_{\text{LLM}_\theta}((\ell_t)_{t\in[T]})\right).
    $$ 
\end{claim}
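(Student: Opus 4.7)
The plan is to derive both equalities essentially as corollaries of the uniform convergence result in \Cref{claim:uniform-all} combined with the monotonicity property already established in \Cref{lem:inc}. The key elementary observation is that for any two bounded functions $u,v:\Theta\to\RR$, one has $|\inf_\theta u(\theta)-\inf_\theta v(\theta)|\le \sup_\theta|u(\theta)-v(\theta)|$, so uniform closeness passes directly to closeness of infima.

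For the first equality $\lim_{N,k\to\infty}\inf_\theta \cL(\theta,k,N)=\inf_\theta h(\max_{\ell_1,\dots,\ell_T}\text{Regret}_{\text{LLM}_\theta}((\ell_t)_{t\in[T]}))$, I would simply invoke \Cref{claim:uniform-all}: given $\epsilon>0$, pick $(k_0,N_0)$ such that $\sup_\theta|\cL(\theta,k,N)-h(\max_{\ell_1,\dots,\ell_T}\text{Regret}_{\text{LLM}_\theta}((\ell_t)_{t\in[T]}))|<\epsilon$ for all $k\ge k_0,N\ge N_0$, and apply the inequality above. This is the same idea as \Cref{lem:change-lim-inf}, just phrased for a double index.

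For the iterated equality, I would first argue that the inner limit $\lim_{k\to\infty}\inf_\theta \cL(\theta,k,N)$ actually exists for each fixed $N$. By \Cref{lem:inc}, $\cL(\theta,k,N)$ is non-decreasing in $k$ pointwise in $\theta$, hence $\inf_\theta \cL(\theta,k,N)$ is non-decreasing in $k$; it is also bounded above by $\inf_\theta h(\max_{\ell_1,\dots,\ell_T}\text{Regret}_{\text{LLM}_\theta}((\ell_t)_{t\in[T]}))$ since, as noted in the proof of Step~2 in \Cref{appendix:mathrigor}, $\cL(\theta,k,N)\le h(\max_{\ell_1,\dots,\ell_T}\text{Regret}_{\text{LLM}_\theta}((\ell_t)_{t\in[T]}))$ for every $\theta,k,N$. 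Monotone convergence then produces the inner limit.

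With existence of the inner limit in hand, I would appeal to \Cref{lem:double-seq} on the scalar sequence $a_{N,k}:=\inf_\theta \cL(\theta,k,N)$: the first equality above shows that $\lim_{N,k\to\infty}a_{N,k}$ equals the stated infimum, and the existence of $\lim_{k\to\infty}a_{N,k}$ for each $N$ is exactly the hypothesis of \Cref{lem:double-seq}, so the iterated limit $\lim_{N\to\infty}\lim_{k\to\infty}a_{N,k}$ exists and coincides with the double limit. I do not anticipate any serious obstacle; the only minor subtlety is making sure the monotonicity-in-$k$ argument from \Cref{lem:inc} is applied \emph{after} taking the infimum over $\theta$, which is valid because monotonicity is preserved under pointwise infimum. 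Nothing here requires revisiting the truncation-based arguments used to establish uniform convergence.
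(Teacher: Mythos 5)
Your proposal is correct and follows essentially the same route as the paper: uniform convergence of $\cL(\theta,k,N)$ to $h(\max \text{Regret})$ gives the double limit of the infima (the paper cites \Cref{lem:change-lim-inf} directly; you re-prove it via the elementary bound $|\inf u - \inf v|\le\sup|u-v|$, which is the same content), and monotonicity in $k$ from \Cref{lem:inc} plus boundedness gives existence of the inner limit, after which \Cref{lem:double-seq} yields the iterated limit. Your explicit remark that monotonicity is preserved under pointwise infimum is a small clarification worth having, since the paper's phrasing glosses over that step.
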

\begin{proof}
    Firstly, by \Cref{lem:change-lim-inf}, we have $\lim_{N, k \to \infty} \inf_{\theta \in \Theta} \cL(\theta, k, N)  =  \inf_{\theta \in \Theta} h(\max_{\ell_1, \dots, \ell_T} \text{Regret}_{\text{LLM}_\theta}((\ell_t)_{t\in[T]}))$. Plus, we already know that $\cL(\theta, k, N)$ is a monotonically  non-decreasing function of $k$ for any fixed  $N$ (\Cref{lem:inc}),  and it is bounded,  $\lim_{k \to \infty}\inf_{\theta \in \Theta} \cL(\theta, k, N)$ always exists. Therefore, by \Cref{lem:double-seq}
    ,  we also have $ \lim_{N \to \infty} \lim_{k \to \infty} \inf_{\theta \in \Theta} \cL(\theta, k, N) = \inf_{\theta \in \Theta} h(\max_{\ell_1, \dots, \ell_T} \text{Regret}_{\text{LLM}_\theta}((\ell_t)_{t\in[T]}))$.
\end{proof}
\subsection{Definition of the Empirical Loss Function}
{\begin{definition}[Empirical loss function] \label{def:empirical-loss}
    We define the empirical loss $\hat{\cL}$ computed with $N_T$ samples as follows:
\begin{align}
    &\hat{\cL}(\theta, k, N, N_T) :=      \frac{1}{N_T} \sum_{s = 1}^{N_T} \left[\frac{\sum_{j \in [N]} h\left(\emph{Regret}_{\emph{LLM}_\theta}((\ell_{s, t}^{(j)})_{t\in[T]})\right) f\left(\emph{Regret}_{\emph{LLM}_\theta}((\ell_{s, t}^{(j)})_{t\in[T]}), k\right)}{\sum_{j \in  [N]}f\left(\emph{Regret}_{\emph{LLM}_\theta}((\ell_{s, t}^{(j)})_{t\in[T]}), k\right)}\right]\label{eqn:empirical_regret_loss}
\end{align}
where $(\ell_{s,t}^{(j)})_{j \in [N], t \in [T]}$ denotes the $s$-th sample of $(\ell_{t}^{(j)})_{j \in [N], t \in [T]}$ for estimating $\cL(\theta, k, N)$. 
\end{definition}} 
\subsection{Deferred Proofs of \Cref{thm:generalization-error} and \Cref{coro:regret_guarantee}}
\label{appendix:proof-generalization-error}

\begin{restatable}{theorem}{gengap}
\label{thm:generalization-error}
\emph{(Generalization gap){\bf.}} Suppose $\emph{LLM}_\theta$ is Lipschitz-continuous  with respect to the model parameter $\theta$, then 
for any $0 < \epsilon < 1/2$, with  probability at least $1-\epsilon$, we have
\begin{align}
       \cL\left(\hat{\theta}_{k, N, N_T}, k, N\right) &-  \inf_{\theta \in \Theta} \cL(\theta, k, N)  \leq \tilde{\cO}\left(\sqrt{\frac{d_\theta+\log(1/\epsilon)}{N_T}}\right), \label{eqn:generalization-gap}  
    \end{align}     
    \normalsize 
    for any $N$ and sufficiently large $k$, {where $d_\theta$ is the {dimension} of the  parameter $\theta$.}
\end{restatable}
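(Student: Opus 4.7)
The plan is to follow the classical empirical risk minimization (ERM) template: decompose the excess loss by the standard ``symmetrization'' inequality, then control the resulting uniform deviation via a covering argument exploiting the Lipschitzness of $\text{LLM}_\theta$ in $\theta$. Concretely, write $\theta^\star \in \arg\min_{\theta\in\Theta}\cL(\theta,k,N)$. Since $\hat{\theta}_{k,N,N_T}$ minimizes $\hat{\cL}(\cdot,k,N,N_T)$, we have the familiar bound
\begin{align*}
\cL(\hat{\theta}_{k,N,N_T},k,N) - \inf_{\theta\in\Theta}\cL(\theta,k,N) \le 2\sup_{\theta\in\Theta}\big|\cL(\theta,k,N) - \hat{\cL}(\theta,k,N,N_T)\big|,
\end{align*}
so it suffices to upper bound the uniform deviation on the right by $\tilde{\cO}(\sqrt{(d_\theta+\log(1/\epsilon))/N_T})$.

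Next I would establish that, for each fixed $k$, the per-sample integrand
$$
\Phi_\theta(\ell^{(1:N)}_{1:T}) \;:=\; \frac{\sum_{j\in[N]} h(\text{Regret}_{\text{LLM}_\theta}((\ell_t^{(j)})_{t\in[T]}))\,f(\text{Regret}_{\text{LLM}_\theta}((\ell_t^{(j)})_{t\in[T]}),k)}{\sum_{j\in[N]} f(\text{Regret}_{\text{LLM}_\theta}((\ell_t^{(j)})_{t\in[T]}),k)}
$$
is (i) uniformly bounded, since $\ell_t^{(j)} \in [-B,B]^d$ makes $\text{Regret}_{\text{LLM}_\theta}$ lie in a compact interval and $h$ is continuous, and (ii) Lipschitz in $\theta$ with some constant $L_{k}$. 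Part (ii) follows by composing the assumed Lipschitzness of $\text{LLM}_\theta \mapsto \pi_{\text{LLM}_\theta,t}$ in $\theta$ with the Lipschitzness of the linear map $\pi\mapsto \la \ell_t,\pi\rangle$, the continuous derivative $h'$, the continuity of $f(\cdot,k)$, and the fact that the denominator is bounded away from zero on the compact domain (since $f$ is positive and continuous). The Lipschitz constant $L_k$ may grow with $k$, but only polynomially, so it will be absorbed into the $\tilde{\cO}$.

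Now apply the covering/union-bound argument. Since $\Theta\subset\RR^{d_\theta}$ is compact, it admits an $\eta$-cover $\Theta_\eta$ of cardinality $|\Theta_\eta|\le (C/\eta)^{d_\theta}$ for some absolute $C$. On each net point $\theta_0\in\Theta_\eta$, Hoeffding's inequality (\Cref{lem:Hoeffeding}) applied to the i.i.d.\ samples $(\ell^{(1:N)}_{s,1:T})_{s\in[N_T]}$ yields
$$
\PP\!\left(\,\big|\cL(\theta_0,k,N) - \hat{\cL}(\theta_0,k,N,N_T)\big| \ge t\right) \le 2\exp(-c\,N_T t^2)
$$
for a constant $c$ depending on the uniform bound in (i). Union-bounding over $\Theta_\eta$ and taking $t=\tilde{\cO}(\sqrt{(d_\theta\log(1/\eta)+\log(1/\epsilon))/N_T})$ controls the deviation on the net with probability $1-\epsilon$. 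Using Lipschitzness (ii), the off-net error is at most $2L_k\eta$. Choosing $\eta = 1/(L_k\sqrt{N_T})$ balances the two terms and gives
$$
\sup_{\theta\in\Theta}\big|\cL(\theta,k,N) - \hat{\cL}(\theta,k,N,N_T)\big| \;\le\; \tilde{\cO}\!\left(\sqrt{\frac{d_\theta + \log(1/\epsilon)}{N_T}}\right),
$$
from which \Cref{eqn:generalization-gap} follows. The main obstacle I anticipate is obtaining (ii) with an explicit, well-controlled Lipschitz constant $L_k$: the quotient structure and the possibly exponential $f$ (e.g., $f(x,k)=\exp(kx)$) could make the naive bound blow up in $k$. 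I would handle this either by restricting the statement to ``sufficiently large but fixed $k$'' (as the theorem's hypothesis allows) and absorbing $L_k$ into the $\tilde{\cO}$, or by exploiting that the ratio in $\Phi_\theta$ is a softmax-like convex combination whose derivative can be bounded by the diameter of $h\circ\text{Regret}$ times a quantity that is $\cO(k)$ rather than $\cO(e^k)$ -- the latter giving only polynomial dependence on $k$, which is harmless inside $\tilde{\cO}$.
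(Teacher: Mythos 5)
Your overall skeleton matches the paper's: decompose the excess risk into twice the uniform deviation, establish that the per-sample integrand is bounded and Lipschitz in $\theta$ via the composed Lipschitzness of $\text{LLM}_\theta$ and $\text{Regret}_{\text{LLM}_\theta}$, then run a covering-number plus Hoeffding argument to get the $\tilde{\cO}(\sqrt{(d_\theta+\log(1/\epsilon))/N_T})$ rate. The decisive gap is in how you handle the Lipschitz constant of the softmax-like ratio, and both of your proposed remedies are wrong in a way that would invalidate the theorem as stated.

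First, the theorem asserts a bound with a single $\tilde{\cO}$ constant holding ``for any $N$ and sufficiently large $k$,'' and \Cref{coro:regret_guarantee} subsequently takes $\lim_{k\to\infty}$ of this bound. A Lipschitz constant $L_k$ that grows polynomially (or even linearly) in $k$ is \emph{not} ``harmless inside $\tilde{\cO}$'' — the $\tilde{\cO}$ suppresses only logarithmic factors in $N_T$, $d_\theta$, $1/\epsilon$, not unbounded growth in the auxiliary parameter $k$. If $L_k$ grew with $k$ your net resolution $\eta = 1/(L_k\sqrt{N_T})$ would force the covering-term $d_\theta\log(1/\eta)$ to pick up a $\log L_k$ that blows up under $k\to\infty$, and worse, your off-net error $2L_k\eta$ would no longer balance against the Hoeffding term without pulling a $k$-dependent factor in front of the whole bound. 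So your fallback of ``restrict to fixed $k$ and absorb $L_k$'' proves a strictly weaker statement than what the paper needs.

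Second, the substantive insight you are missing is the paper's \Cref{claim:lipshitz-general}: the map
\[
x \mapsto \frac{\sum_{n\in[N]} x_n f(x_n,k)}{\sum_{n\in[N]} f(x_n,k)}
\]
restricted to a box $\|x\|_\infty \le R$ is $2$-Lipschitz in $\|\cdot\|_\infty$ \emph{uniformly in $k$}, once $k$ exceeds a threshold $\beta_R$ depending only on $R$. The paper does not obtain this by differentiating the softmax weights (which, as you worry, gives a naive $\cO(k)$ bound); it argues via Berge's Maximum Theorem that the worst-case Lipschitz ratio $F(R,k)$ is continuous in $k$ on the extended real line and tends to $F(R,\infty)\le 1$, since the weighted average degenerates to $\max_n x_n$, which is $1$-Lipschitz. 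Compactness of the box then gives $F(R,k)\le 2$ for all $k>\beta_R$. This $k$-uniform $2$-Lipschitzness, composed with the $C_h$-Lipschitzness of $h$ on the compact range of the regret and the $C_{\text{Reg}} = BC_{\text{LLM}}T$-Lipschitzness of the regret in $\theta$, gives a $k$-free constant $C_{\text{cost}} = 2C_h C_{\text{Reg}}$, which is what makes the covering argument close with a $k$-independent bound. Without this claim (or a direct $k$-uniform gradient bound playing the same role), the proof does not establish the stated theorem.
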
  
{Through a careful use of Berge's Maximum Theorem \citep{berge1877topological}, we prove that the right-hand side of \Cref{eqn:generalization-gap} does \emph{not} depend on $k$ and $N$, which allows us to take the limit of $\lim_{N \to \infty} \lim_{k \to \infty}$ without affecting the generalization bound.}

Before proving the theorem, we remark on what LLM structure enjoys the Lipschitz-continuity. We provide two auxiliary results in the following proposition. 
 The first result is from \citep[Section J.1]{bai2023transformers}, which is about the Lipschitzness of Transformers. 
    The second result is regarding  {processing the output} of Transformers. 
    {In particular, the output of Transformers is usually not directly used, but passed through some matrix multiplication (by some matrix $A$), followed by some projection $\texttt{Operator}$ (to be specified later).} 
    
    \begin{proposition}
    \label{prop:transformer-lipschitz}
        The {$L$-layer} Transformer  $\texttt{TF}_{\theta}$  {as defined in \Cref{ssec:self-tf}} is $C_{\texttt{TF}}$-Lipschitz continuous with respect to $\theta$ with $C_{\texttt{TF}}:= L \left((1+B_{\texttt{TF}}^2)(1+B_{\texttt{TF}}^2R^3)\right)^L B_{\texttt{TF}}R(1 + B_{\texttt{TF}}R^2 + B_{\texttt{TF}}^3R^2)$, i.e., 
        \begin{align*}
            \norm{\texttt{TF}_{\theta_1}(Z) - \texttt{TF}_{\theta_2}(Z)}_{2, \infty} \leq C_{\texttt{TF}}\norm{\theta_1 - \theta_2}_{\texttt{TF}}
        \end{align*}  
                {where $\|\cdot\|_{\texttt{TF}}$ is as defined in \Cref{equ:def_TF_norm}, and $R,Z,B_{\texttt{TF}}$ are as introduced in \Cref{ssec:self-tf}.} Moreover, the function $\texttt{Operator}(A \cdot \texttt{TF}_\theta(\cdot)_{-1})$ is $\norm{A}_{\texttt{op}} C_{\texttt{TF}}$-Lipschitz continuous with respect to $\theta$, i.e.,
        \begin{align*}
            \norm{\texttt{Operator}(A \cdot \texttt{TF}_{\theta_1}(Z)_{-1}) - \texttt{Operator}(A \cdot \texttt{TF}_{\theta_2}(Z)_{-1})}_{2} \leq \norm{A}_{\texttt{op}} C_{\texttt{TF}}\norm{\theta_1 - \theta_2}_{\texttt{TF}}.
        \end{align*}  
Here,  $\texttt{Operator}$ is either the projection operator onto some convex set, or the \texttt{Softmax} function. 
    \end{proposition}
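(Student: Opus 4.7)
The plan is to establish both claims by composing the Lipschitz constants of the constituent operations in the Transformer pipeline, deferring the core of the first claim to the existing analysis in Bai et al. 2023.

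For the first claim, we would follow the inductive argument in \citep[Section J.1]{bai2023transformers} over the number of layers $L$. At each layer $l$, the output $Z^{(l)}$ is produced by the composition
\[
Z^{(l)} = \texttt{clip}_R\Big(\texttt{ResNet}\big(\texttt{MLP}_{\theta^{(la)}},\, \texttt{ResNet}\big(\texttt{M-(L)SA}_{\theta^{(lm)}},\, Z^{(l-1)}\big)\big)\Big),
\]
and we would bound $\|Z_1^{(l)}-Z_2^{(l)}\|_{2,\infty}$ in terms of $\|Z_1^{(l-1)}-Z_2^{(l-1)}\|_{2,\infty}$ and $\|\theta_1-\theta_2\|_{\texttt{TF}}$. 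Three facts drive the recursion: (i) $\texttt{clip}_R$ is $1$-Lipschitz; (ii) since the input after clipping satisfies $\|Z\|_{2,\infty}\le R$, both $\texttt{M-(L)SA}_{\theta^{(lm)}}$ and $\texttt{MLP}_{\theta^{(la)}}$ are jointly Lipschitz in input and parameters with constants polynomial in $R$ and $B_{\texttt{TF}}$ (using $\|\theta\|_{\texttt{TF}}\le B_{\texttt{TF}}$ and the sub-multiplicative behavior of operator norms through the $Q,K,V$ factors); (iii) the residual connection contributes additively. Unrolling the induction yields the claimed constant $C_{\texttt{TF}} = L((1+B_{\texttt{TF}}^2)(1+B_{\texttt{TF}}^2 R^3))^L B_{\texttt{TF}}R(1+B_{\texttt{TF}}R^2+B_{\texttt{TF}}^3 R^2)$, matching the statement.

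For the second claim, we append three operations after $\texttt{TF}_\theta$, each of which is either $1$-Lipschitz or $\|A\|_{\texttt{op}}$-Lipschitz. First, extracting the last column, $Z\mapsto Z_{-1}$, satisfies $\|Z_{1,-1}-Z_{2,-1}\|_2\le \|Z_1-Z_2\|_{2,\infty}$, hence is $1$-Lipschitz from the $(2,\infty)$-norm to the $2$-norm. Second, the linear map $x\mapsto Ax$ is $\|A\|_{\texttt{op}}$-Lipschitz in $L_2$ by definition of the operator norm. Third, $\texttt{Operator}$ is $1$-Lipschitz in $L_2$: if $\texttt{Operator}$ is a projection onto a convex set, this is the standard non-expansiveness of projections, while if $\texttt{Operator}=\texttt{Softmax}$, it is $1$-Lipschitz in $L_2$ because its Jacobian $\mathrm{diag}(p)-pp^\top$ has spectral norm bounded by $1$ on the simplex. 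Composing these three with the result of the first claim gives
\[
\|\texttt{Operator}(A\cdot\texttt{TF}_{\theta_1}(Z)_{-1})-\texttt{Operator}(A\cdot\texttt{TF}_{\theta_2}(Z)_{-1})\|_2 \le \|A\|_{\texttt{op}}\,C_{\texttt{TF}}\,\|\theta_1-\theta_2\|_{\texttt{TF}},
\]
which is the desired bound.

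The main obstacle is the first claim: the inductive bookkeeping for the Lipschitz constant through $L$ layers requires carefully tracking how perturbations in $Q,K,V$ and in the MLP weights interact with the quadratic dependence on inputs in the attention score, and how clipping at level $R$ tames what would otherwise be unbounded growth across layers. Since this is exactly the content of \citep[Section J.1]{bai2023transformers}, our plan is to invoke that result directly and devote the self-contained part of our argument to the straightforward composition in the second claim. The only care needed on our side is to verify that the ambient norm used in Bai et al. (the $(2,\infty)$ column-wise norm) aligns with our extraction of $Z_{-1}$, which it does by the definition of $\|\cdot\|_{2,\infty}$.
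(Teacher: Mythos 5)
Your proposal is correct and takes essentially the same approach as the paper: the first claim is deferred to \citep[Section J.1]{bai2023transformers}, and the second claim follows from the non-expansiveness of projections and of \texttt{Softmax} (the paper cites \citep[Corollary 3]{gao2017properties} for the latter rather than re-deriving the Jacobian bound $\|\mathrm{diag}(p)-pp^\top\|_{\texttt{op}}\le 1$, but these are equivalent). The only difference is that you make explicit the two intermediate $1$-Lipschitz and $\|A\|_{\texttt{op}}$-Lipschitz steps (last-column extraction and left-multiplication by $A$) that the paper leaves implicit in its one-line composition.
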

   
\begin{proof}
    The first result is from \citep[Section J.1]{bai2023transformers}. The second result comes from 
    \begin{itemize}
    \item If $\texttt{Operator}$ is a projection onto the convex set, then $\norm{\texttt{Operator}(x) - \texttt{Operator}(y)}_2 \leq \norm{x-y}_2$; 
    \item If $\texttt{Operator}$ is \texttt{Softmax}, then $\norm{\texttt{Softmax}(x) - \texttt{Softmax}(y)}_2 \leq \norm{x-y}_2$ \citep[Corollary 3]{gao2017properties}.
\end{itemize}
Note that the only condition that we require for $\texttt{Operator}$ is its non-expansiveness. 
\end{proof}
    
\begin{proof}[Proof of \Cref{thm:generalization-error}]

    Let $C_{\text{LLM}}$ denote the Lipschitz-continuity constant for $\text{LLM}_\theta$ with respect to some   norm $\norm{\cdot}_{\text{LLM}}$, where $\norm{\cdot}_{\text{LLM}}$ denotes any norm  defined on the parameter space of $\text{LLM}$ (e.g., the norm $\|\cdot\|_{\texttt{TF}}$ above in \Cref{prop:transformer-lipschitz}). Now, we prove that regret is also a Lipschitz-continuous function with respect to the LLM's parameter. 
    
    \begin{lemma}[Lipschitzness of regret]
    \label{lem:LipReg}
    The function $\text{Regret}_{\text{LLM}_\theta}$ is $C_{\texttt{Reg}}:=  B C_{\text{LLM}} T$-Lipschitz {continuous with respect to $\theta$}, i.e., 
    \begin{align*}
         \Big|\emph{Regret}_{\text{LLM}_{\theta_1}} &((\ell_t)_{t\in[T]}) - \emph{Regret}_{\text{LLM}_{\theta_2}}((\ell_t)_{t\in[T]})\Big| \leq C_{\texttt{Reg}}\norm{\theta_1 - \theta_2}_{\text{LLM}}.
    \end{align*}
    \end{lemma}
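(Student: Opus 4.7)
\textbf{Proof proposal for \Cref{lem:LipReg}.} The plan is to exploit the fact that the $\inf$ term in the definition of regret does not depend on $\theta$, so it cancels when we take the difference between $\text{Regret}_{\text{LLM}_{\theta_1}}$ and $\text{Regret}_{\text{LLM}_{\theta_2}}$. This reduces the problem to controlling $\sum_{t=1}^T \langle \ell_t, \pi_{\text{LLM}_{\theta_1},t} - \pi_{\text{LLM}_{\theta_2},t}\rangle$, which should then be handled by a straightforward term-by-term triangle inequality combined with Hölder (or Cauchy--Schwarz) and the assumed Lipschitzness of $\text{LLM}_\theta$.

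Concretely, I would first write
\[
\text{Regret}_{\text{LLM}_{\theta_1}}((\ell_t)_{t\in[T]}) - \text{Regret}_{\text{LLM}_{\theta_2}}((\ell_t)_{t\in[T]}) = \sum_{t=1}^T \langle \ell_t, \pi_{\text{LLM}_{\theta_1}, t} - \pi_{\text{LLM}_{\theta_2}, t}\rangle,
\]
noting that the comparator $\inf_{\pi\in\Pi}\sum_{t=1}^T\langle\ell_t,\pi\rangle$ is independent of $\theta$. Then I would take absolute values, apply the triangle inequality over $t\in[T]$, and bound each term by $\|\ell_t\|_\infty \cdot \|\pi_{\text{LLM}_{\theta_1},t} - \pi_{\text{LLM}_{\theta_2},t}\|_1$ (or use the dual-norm version consistent with the paper's convention). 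Since $\|\ell_t\|_\infty\le B$ by assumption on the loss domain, each summand is at most $B\,\|\pi_{\text{LLM}_{\theta_1},t} - \pi_{\text{LLM}_{\theta_2},t}\|$. Finally, by the assumed $C_{\text{LLM}}$-Lipschitzness of $\text{LLM}_\theta$ (which yields $\|\pi_{\text{LLM}_{\theta_1},t} - \pi_{\text{LLM}_{\theta_2},t}\| \le C_{\text{LLM}}\|\theta_1-\theta_2\|_{\text{LLM}}$ for each $t$), summing over the $T$ rounds gives the claimed bound $BC_{\text{LLM}}T\cdot\|\theta_1-\theta_2\|_{\text{LLM}}$.

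There is no serious obstacle here; the only delicate point is being consistent about which norm on $\ell_t$ is used (the paper takes $\|\ell_t\|_\infty\le B$) and making sure the induced norm on the policy iterates matches what the Lipschitz constant $C_{\text{LLM}}$ is defined with respect to (e.g., $\|\cdot\|_2$ for Transformers as in \Cref{prop:transformer-lipschitz}, in which case one uses Cauchy--Schwarz with $\|\ell_t\|_2\le\sqrt{d}\,B$, absorbing the dimension factor into the constant, or one simply redefines $B$ accordingly as is standard). This lemma is then the key per-sample Lipschitz estimate that will feed into a standard covering-number / uniform convergence argument over the compact parameter set $\Theta$ to prove the generalization bound $\tilde{\cO}(\sqrt{d_\theta/N_T})$ in \Cref{thm:generalization-error}, and ultimately, via \Cref{claim:double-limit} and \Cref{claim:limliminf}, the regret guarantee in \Cref{coro:regret_guarantee}.
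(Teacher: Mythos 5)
Your proposal is correct and follows essentially the same route as the paper's proof: cancel the $\theta$-independent $\inf$ comparator, apply the triangle inequality and H\"older across the $T$ rounds, and invoke the $C_{\text{LLM}}$-Lipschitzness of $\text{LLM}_\theta$ at each input $Z_{t-1}$ to accumulate the $BC_{\text{LLM}}T$ constant. Your aside on norm consistency is apt: the paper's displayed chain is slightly loose (it writes an equality where an inequality belongs, and implicitly pairs $\|\ell_t\|_\infty\le B$ with the policy-difference norm), but this does not affect the result.
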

    \begin{proof}
    By definition, we have 
          \begin{align*}
        \Big|\text{Regret}_{\text{LLM}_{\theta_1}} &((\ell_t)_{t\in[T]}) - \text{Regret}_{\text{LLM}_{\theta_2}}((\ell_t)_{t\in[T]})\Big| = \left|\sum_{t =1}^T \langle \ell_t, \text{LLM}_{\theta_1}(Z_{t-1}) - \text{LLM}_{\theta_2}(Z_{t-1})\rangle \right|
        \\
        & = B\sum_{t =1}^T  \norm{\text{LLM}_{\theta_1}(Z_{t-1}) - \text{LLM}_{\theta_2}(Z_{t-1})} 
        \\
        &\leq B C_{\text{LLM}} T \norm{\theta_1 - \theta_2}_{\text{LLM}}
    \end{align*} 
where $Z_t := (\ell_1, \dots, \ell_t, c)$ for all $t \in [T]$ and $Z_0 = (c)$ where $c$ is a $d$-dimensional vector. 
\end{proof}

Now, we will prove the Lipschitzness of 
\begin{align}
    C\left((\ell_{t}^{(j)})_{t \in [T], j \in [N]}, k, \theta\right):= \frac{\sum_{j \in [N]} h(\text{Regret}_{\text{LLM}_\theta}((\ell_t^{(j)})_{t\in[T]})) f(\text{Regret}_{\text{LLM}_\theta}((\ell_t^{(j)})_{t\in[T]}), k)}{\sum_{j \in [N]}f(\text{Regret}_{\text{LLM}_\theta}((\ell_t^{(j)})_{t\in[t]}), k)}     
    \label{eqn:loss}
\end{align}
with respect to the model parameter $\theta$. 

    \begin{claim}
    \label{claim:lipshitz-general}
        For any $R>0$, there exists $\beta_R>0$  such that if $\beta > \beta_R$, we have 
        \begin{align*}
            \left| \frac{\sum_{n \in [N]} x_n  f(x_n, \beta)}{\sum_{n \in [N]} f(x_n, \beta) } - \frac{\sum_{n \in [N]} y_n f(y_n, \beta)}{\sum_{n \in [N]} f(y_n, \beta) }\right| \leq 2 \norm{x - y}_\infty
        \end{align*}
        for every $x, y \in \RR^n$ such that $|x_i| \leq R, |y_i| \leq R$ for all $i \in [N]$.
    \end{claim}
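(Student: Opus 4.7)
\textbf{Proof plan for \Cref{claim:lipshitz-general}.} Write $g(x,\beta) := \frac{\sum_n x_n f(x_n,\beta)}{\sum_n f(x_n,\beta)} = \sum_n x_n p_n(x,\beta)$, where $p_n(x,\beta) := f(x_n,\beta)/\sum_m f(x_m,\beta)$ is a probability distribution over $[N]$. The key intuition is that $g(\cdot,\beta)$ is a soft-max-type weighted average whose weights concentrate on the argmax indices of $x$ as $\beta$ grows, so $g(x,\beta) \to M(x) := \max_n x_n$ as $\beta \to \infty$. Since $M$ is $1$-Lipschitz in $\|\cdot\|_\infty$, one expects $g(\cdot,\beta)$ to be close to $1$-Lipschitz for large $\beta$, and the factor $2$ in the bound is built in as slack to absorb the finite-$\beta$ smoothing error.

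The first step would be to establish the uniform approximation $\eta(\beta):=\sup_{x \in [-R,R]^N}|M(x)-g(x,\beta)| \to 0$ as $\beta \to \infty$. This parallels Step 2 in the proof of \Cref{claim:uniform-all}: for any $\epsilon>0$, split the indices into those with $f(x_n,\beta)/f(x_{n^*_x},\beta) > 1-\epsilon$ (on which $x_n$ is $\epsilon$-close to $M(x)$) and the rest; the condition $\lim_{k\to\infty} f(R_1,k)/f(R_2,k) = \infty$ for $R_1 > R_2$ together with log-supermodularity (\Cref{lem:inc}-style monotonicity in $\beta$) forces the latter group to have vanishing total mass, uniformly over the compact cube $[-R,R]^N$.

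Next, I would use the identity
\begin{align*}
g(x,\beta)-g(y,\beta) \,=\, \bigl(M(x)-M(y)\bigr) + \bigl[M(y)-g(y,\beta)\bigr] - \bigl[M(x)-g(x,\beta)\bigr],
\end{align*}
which combined with $1$-Lipschitzness of $M$ yields the ``additive'' bound $|g(x,\beta)-g(y,\beta)| \le \|x-y\|_\infty + 2\eta(\beta)$. Choosing $\beta_R$ so that $2\eta(\beta) \le \|x-y\|_\infty$ immediately gives the claim whenever $\|x-y\|_\infty \ge 2\eta(\beta)$.

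The main obstacle is the complementary regime $\|x-y\|_\infty < 2\eta(\beta)$, where the additive error $2\eta(\beta)$ cannot be absorbed. Here I would close the argument by a local Lipschitz bound on $g(\cdot,\beta)$. Direct computation (invoking differentiability of $f(\cdot,\beta)$ in its first argument, which holds for the canonical choices $f(x,k)=x^k$ and $f(x,k)=e^{kx}$ in the paper, with a standard mollification-and-limit argument for the general case) gives
\begin{align*}
\frac{\partial g}{\partial x_i}(x,\beta) \,=\, p_i(x,\beta)\Bigl[1 + (\log f)'(x_i,\beta)\bigl(x_i-g(x,\beta)\bigr)\Bigr],
\end{align*}
so $\sum_i |\partial g/\partial x_i| \le 1 + 2\sum_i p_i(x,\beta)\,|(\log f)'(x_i,\beta)|\,|x_i-g(x,\beta)|$. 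The concentration of $p_n$ around argmax indices, together with the elementary inequality $\sup_{u\ge 0} u e^{-u} \le e^{-1}$ (illustrated by the $f(x,k)=e^{kx}$ case, where the $L^1$-gradient norm is bounded by $1+2/e < 2$), would let me conclude $\|\nabla_x g(x,\beta)\|_1 \le 2$ uniformly over $[-R,R]^N$ for $\beta$ large enough. The mean value theorem then delivers the $2\|x-y\|_\infty$ bound in the small-distance regime, completing the proof upon taking $\beta_R$ to be the maximum of the two thresholds.
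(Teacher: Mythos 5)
Your proposal takes a genuinely different route from the paper. The paper sets $F(R,\beta):=\sup_{x,y}|g(x,\beta)-g(y,\beta)|/\|x-y\|_\infty$ over the box, invokes Berge's Maximum Theorem to claim $F(R,\cdot)$ is continuous, notes that the pointwise limit $F(R,\infty)\leq 1$ since $g(\cdot,\beta)$ converges to the $1$-Lipschitz map $x\mapsto\max_n x_n$, and concludes $F(R,\beta)\leq 2$ for $\beta$ large. You instead try to make the convergence quantitative through a uniform-approximation error $\eta(\beta)$ plus a gradient estimate in the small-distance regime. One immediate cost of your route is that it needs $f(\cdot,\beta)$ to be differentiable in its first argument, which Condition~\ref{cond} does not grant (only continuity); the mollification sketch would have to be carried out, and it is not obvious the $L^1$-gradient bound you want survives the limit.

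The genuine gap, however, is the gradient bound itself. You assert $\|\nabla_x g(x,\beta)\|_1\le 2$ uniformly on $[-R,R]^N$ for $\beta$ large, and in particular claim that for $f(x,k)=e^{kx}$ the $L^1$-gradient norm is bounded by $1+2/e<2$. Both statements fail once $N$ is large. Writing $a_i:=1+\beta(x_i-g)$, your derivative formula and the identity $\sum_i p_i a_i=1$ give $\|\nabla g\|_1=\sum_i p_i|a_i|=1+2\sum_{i:a_i<0}p_i|a_i|$. Now take $x=(d/2,-d/2,\dots,-d/2)\in[-R,R]^N$ with $\beta d=\log(N-1)$, which is admissible for every $\beta>\log(N-1)/(2R)$. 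One computes $p_1=\frac12$, $p_i=\frac{1}{2(N-1)}$ for $i\ge 2$, $g=0$, $a_1=1+\frac12\log(N-1)$, $a_i=1-\frac12\log(N-1)$, and hence
\begin{align*}
\|\nabla g(x,\beta)\|_1 \;=\; \frac12\Bigl(1+\frac12\log(N-1)\Bigr) + \frac12\Bigl|1-\frac12\log(N-1)\Bigr| \;=\; \frac12\log(N-1)\quad\text{for }N>1+e^2,
\end{align*}
which exceeds $2$ as soon as $N>1+e^4\approx 55.6$. The $\sup_{u\geq 0}ue^{-u}\leq e^{-1}$ heuristic misfires here because in a near-tie configuration (gaps of order $1/\beta$) the non-argmax weights $p_i$ are \emph{not} exponentially small; the extremal $x$ balances the gap against the effective weight so that $N-1$ terms each contribute on the order of $\log N/N$. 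Since $d=\log(N-1)/\beta\to 0$ as $\beta\to\infty$, this configuration remains in the box for all large $\beta$, so no threshold $\beta_R$ rescues the gradient bound once $N\geq 56$, and your two-regime split cannot help: this $x$ is precisely where you would be forced to invoke the gradient estimate. (As an aside, the same $x$ gives $\liminf_{\beta\to\infty}F(R,\beta)\geq\frac12\log(N-1)$, which is in tension with the paper's passage from $F(R,\infty)\leq 1$ to $F(R,\beta)\leq 2$; the Berge argument there is also delicate and worth revisiting.)
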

    \begin{proof}
        If $\beta = \infty$, we have 
        \begin{align*}
            \lim_{\beta \to \infty}& \left(\left| \frac{\sum_{n \in [N]} x_n  f(x_n, \beta)}{\sum_{n \in [N]} f(x_n, \beta) } - \frac{\sum_{n \in [N]} y_n f(y_n, \beta)}{\sum_{n \in [N]} f(y_n, \beta) }\right|  \Big/ \norm{x - y}_\infty \right) 
            = \frac{| \max_{n \in [N]} x_n - \max_{n \in [N]} y_n |}{\norm{x - y}_\infty} \leq 1 
        \end{align*}
        holds. Moreover, consider the following constrained optimization problem:
        \begin{align*}
             & \max_{{x,y\in\RR^n}} \quad \left(\left| \frac{\sum_{n \in [N]} x_n  f(x_n, \beta)}{\sum_{n \in [N]} f(x_n, \beta) } - \frac{\sum_{n \in [N]} y_n f(y_n, \beta)}{\sum_{n \in [N]} f(y_n, \beta) }\right|  \Big/ \norm{x - y}_\infty \right)
            \\
            &\qquad \text{subject to } \qquad |x_i| \leq R,~~~ |y_i| \leq R ~~~\text{ for all } i \in [N],
        \end{align*}
        whose optimum is denoted as $F(R,\beta)$.  
        Then, since $\norm{x}_\infty \leq R$ and $\norm{y}_\infty \leq R$ is a compact set, by Berge's Maximum Theorem \citep{berge1877topological}, we have that $F(R, \beta)$ is a continuous function for $\beta$. Moreover, we know that $F(R, \infty) \leq 1$, which indicates that we can find a large enough $\beta_R$  such that if $\beta > \beta_R$, $F(R, \beta) \leq 2$. 
    \end{proof}
    
    Note that \Cref{claim:lipshitz-general} does not hold if either $x_i$ or $y_i$ is unbounded. Now, we will apply \Cref{claim:lipshitz-general} to \Cref{eqn:loss}. We can guarantee that $\left|\text{Regret}_{\text{LLM}_\theta}((\ell_t)_{t\in[T]})\right| \leq \text{diam}(\Pi, \norm{\cdot}_2) T B $.
    
    Also, note that the domain of  $h:\RR \to \RR^+$ is effectively  \emph{constrained} to the range that $\text{Regret}_{\text{LLM}_\theta}((\ell_t)_{t\in[T]})$ can achieve, which means that we can regard $h$ as $h:[-\text{diam}(\Pi, \norm{\cdot}_2) T B, \text{diam}(\Pi, \norm{\cdot}_2) T B] \to \RR^+$. Due to the continuity of $h'$, and the fact that  $h$ has a compact domain, we know that $h(\cdot)$ is $C_h$-Lipschitz continuous {for some $C_h>0$} on this interval of $[-\text{diam}(\Pi, \norm{\cdot}_2) T B, \text{diam}(\Pi, \norm{\cdot}_2) T B]$. 
    
    \begin{lemma}[Lipschitzness of $C$ in \Cref{eqn:loss}]
        The function  $C$ in \Cref{eqn:loss} is  $C_{\texttt{cost}}:= 2 C_h  C_{\text{Reg}}$-Lipschitz continuous with respect to $\theta$,  if $k > k_{\text{diam}(\Pi, \norm{\cdot}_2) T B}$ {for some $k_{\text{diam}(\Pi, \norm{\cdot}_2) T B}>0$}, i.e., 
        \begin{align*}
            \Big|C\left((\ell_{t}^{(j)})_{t \in [T], j \in [N]}, k, \theta_1\right) - C\left((\ell_{t}^{(j)})_{t \in [T], j \in [N]}, k, \theta_2\right)\Big| \leq C_{\texttt{cost}} \norm{\theta_1 - \theta_2}_{\text{LLM}}.
        \end{align*}
    \end{lemma}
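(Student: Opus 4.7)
The plan is to combine the boundedness of the per-trajectory regret with the previously-established Lipschitzness of $\text{Regret}_{\text{LLM}_\theta}$ in $\theta$, the Lipschitzness of $h$ on the relevant compact interval, and a mild generalization of the preceding claim (the one giving the factor-of-$2$ bound for $f$-weighted averages) that tolerates a non-identity transformation of the numerator values. The key a priori observation is that for every $j$ and every $\theta \in \Theta$, $R_j(\theta) := \text{Regret}_{\text{LLM}_\theta}((\ell_t^{(j)})_{t\in[T]})$ lies in the compact interval $[-R^\star, R^\star]$ with $R^\star := \text{diam}(\Pi, \|\cdot\|_2)\, T B$, uniformly in $\theta$. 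This uniform compactness is precisely what lets the softmax-style weighting behave well and what produces a threshold $k_{R^\star}$ that does not depend on $\theta$.

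Concretely, write $C((\ell_t^{(j)})_{t\in[T], j\in[N]}, k, \theta) = \sum_{j\in[N]} w_j(\theta, k)\, h(R_j(\theta))$, where $w_j(\theta, k) := f(R_j(\theta), k)\,/\,\sum_{i\in[N]} f(R_i(\theta), k)$. Apply the generalized claim with $x_n = R_n(\theta_1)$, $y_n = R_n(\theta_2)$ and the inner non-decreasing transformation $h$ inserted into the numerator; since both vectors lie in $[-R^\star, R^\star]^N$, for every $k > k_{R^\star}$ this delivers
\[
\big|C((\ell_t^{(j)})_{t\in[T], j\in[N]}, k, \theta_1) - C((\ell_t^{(j)})_{t\in[T], j\in[N]}, k, \theta_2)\big| \;\leq\; 2 C_h \,\|R(\theta_1) - R(\theta_2)\|_\infty.
\]
Combining with the $C_{\texttt{Reg}}$-Lipschitzness of each $R_j$ in $\theta$ (the preceding Lemma) yields the target constant $2 C_h C_{\texttt{Reg}} = C_{\texttt{cost}}$.

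The main obstacle is establishing this generalized claim: the original statement has the value summed in the numerator and the argument of $f$ be the same variable $x_n$, whereas our $C$ has numerator values $h(x_n)$ while $f$ still ``sees'' $x_n$. My plan is to mimic the original Berge-type proof. Define
\[
\tilde{F}(R, \beta) \;:=\; \sup\left\{ \frac{\big|\sum_n h(x_n) f(x_n,\beta)/\sum_n f(x_n,\beta) - \sum_n h(y_n) f(y_n,\beta)/\sum_n f(y_n,\beta)\big|}{\|x-y\|_\infty} \;:\; |x_i|,|y_i|\leq R,\; x\neq y \right\}.
\]
At $\beta = \infty$ the two weighted averages concentrate on $h$ evaluated at $\arg\max_n x_n$ and $\arg\max_n y_n$ respectively; using that $h$ is non-decreasing we have $\max_n h(x_n) = h(\max_n x_n)$ and similarly for $y$, so by $C_h$-Lipschitzness of $h$ we obtain $\tilde{F}(R, \infty) \leq C_h \cdot |\max_n x_n - \max_n y_n|/\|x-y\|_\infty \leq C_h$. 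Continuity of $\tilde{F}$ in $\beta$ on the compact box $[-R,R]^{2N}$ follows from Berge's Maximum Theorem exactly as in the original proof, so there exists $k_{R^\star}$ such that $\tilde{F}(R^\star, \beta) \leq 2 C_h$ for all $\beta > k_{R^\star}$, which is what is needed to close the argument.
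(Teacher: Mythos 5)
Your proposal is correct and follows essentially the same route as the paper: bound the difference in $C$ by a Berge-type argument on the $f$-weighted average, then chain with the $C_h$-Lipschitzness of $h$ (on the compact range of the regret) and the $C_{\text{Reg}}$-Lipschitzness of the regret map. You correctly notice that the paper's intermediate claim is stated with $x_n$ appearing both as the averaged value and as the argument of $f$, whereas in $C$ the averaged value is $h(x_n)$ while $f$ still sees $x_n$; your generalized $\tilde F$ with $\tilde F(R,\infty)\le C_h$ (using $h$ nondecreasing so that the soft-max concentration yields $h(\max_n x_n)$) is exactly the mild extension needed, and the paper invokes its narrower claim somewhat informally to the same effect, arriving at the identical constant $C_{\texttt{cost}}=2C_h C_{\text{Reg}}$.
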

    \begin{proof}
        \begin{align*}
        \big|C((&\ell_{t}^{(j)})_{t\in [T], j \in [N]}, k, \theta_1) - C((\ell_{t}^{(j)})_{t \in [T], j \in [N]}, k, \theta_2)\big| 
        \\
        &\underset{(i)}{\leq} 2\norm{h(\text{Regret}_{\text{LLM}_{\theta_1}}((\ell_t^{(j)})_{t\in [T]})) - h(\text{Regret}_{\text{LLM}_{\theta_2}}((\ell_t^{(j)})_{t\in [T]} ))}_\infty
        \\
        &\underset{(ii)}{\leq}  2C_h \norm{\text{Regret}_{\text{LLM}_{\theta_1}}((\ell_t^{(j)})_{t\in [T]}) - \text{Regret}_{\text{LLM}_{\theta_2}}((\ell_t^{(j)})_{t\in [T]})}_\infty
        \\
        &\underset{(iii)}{\leq}  2C_h  C_{\text{Reg}} \norm{\theta_1 - \theta_2}_{\text{LLM}} = C_{\texttt{cost}} \norm{\theta_1 - \theta_2}_{\text{LLM}}.
    \end{align*}
    Here, (i) holds due to \Cref{claim:lipshitz-general}, (ii) holds since $h$ is $C_h$-Lipschitz continuous on the range of $\text{Regret}_{\text{LLM}_\theta}((\ell_t)_{t\in[T]})$, and (iii) holds due to \Cref{lem:LipReg}.
    \end{proof}
For completeness of the paper, we provide the definition of covering set and covering number.

\begin{definition}[Covering set and covering number] 
        For $\delta>0$, a metric space $(X, \norm{\cdot})$, and subset $Y \subseteq X$, set $C \subset Y$ is a $\delta$-covering {of $Y$} when $Y \subseteq \cup_{c \in C}B(c, \delta, \norm{\cdot})$ holds. $\delta$-covering number $N(\delta; Y, \norm{\cdot})$ is defined as the minimum cardinality of any covering set. 
    \end{definition}
    By \citep[Example 5.8]{wainwright2019high}, for any $r>0$, we can verify that the $\delta$-covering number $ N(\delta; B(0, r, \norm{\cdot}_{\text{LLM}}), \norm{\cdot}_{\text{LLM}})$ can be bounded by  
    \begin{align*}
        \log N(\delta; B(0, r, \norm{\cdot}_{\text{LLM}}), \norm{\cdot}_{\text{LLM}}) \leq d_{\theta} \log(1 + 2r/\delta),    
    \end{align*}
    where $d_\theta$ is the dimension of the LLM's whole parameter. For example, if we use the $\norm{\cdot}_{\texttt{TF}}$ and consider the Transformer model as defined in \Cref{ssec:self-tf}, for any $r>0$, 
        \begin{align*}
        \log N(\delta; B(0, r, \norm{\cdot}_{\text{LLM}}), \norm{\cdot}_{\text{LLM}}) \leq L (3Md^2 + 2d(dd' + 3md^2))\log(1 + 2r/\delta).
    \end{align*}
Since we consider a compact $\Theta$ (as several recent works on analyzing Transformers  \citep{bai2023transformers, lin2023transformers}), let $R_{\Theta}:= \text{diam}(\Theta, \norm{\cdot}_{\text{LLM}})$ {(which corresponds to  $B_{\texttt{TF}}$ for the Transformer models as defined in \Cref{ssec:self-tf}, with  $\norm{\cdot}_{\text{LLM}}=\norm{\cdot}_{\texttt{TF}}$)}, then there exists a set $\Theta_0$ with $\log |\Theta_0|=d_{\theta} \log(1 + 2R_\Theta/\delta)$  such that  for any $\theta \in \Theta$, there exists a $\theta_0 \in \Theta_0$ with 
    \begin{align*}
        \Big|C\left((\ell_{t}^{(j)})_{t \in [T], j \in [N]}, k, \theta\right) - C\left((\ell_{t}^{(j)})_{t \in [T], j \in [N]}, k, \theta_0\right)\Big| \leq C_{\texttt{cost}}\delta.
    \end{align*} 
    Then, by the standard result from  statistical learning theory \citep[Chapter 5]{wainwright2019high}, when trained with $N_T$ samples, for every $0 < \epsilon < 1/2$, with probability at least $1-\epsilon$, we have 
    \begin{align*}
       \cL(\hat{\theta}_{k, N, N_T}, k, N) &-  \inf_{\theta \in \Theta} \cL(\theta, k, N)  \leq \sqrt{\frac{2(\log |\Theta_0| + \log(2/\epsilon))}{N_T}} + 2C_{\texttt{cost}}\delta.
    \end{align*}
    Setting $\delta = \Omega(\sqrt{\log(\epsilon)/N_T})$, we further obtain 
    \begin{align*}
       \cL(\hat{\theta}_{k, N, N_T}, k, N) &-  \inf_{\theta \in \Theta} \cL(\theta, k, N)  \leq \tilde\cO\left(\sqrt{\frac{\log|\Theta_0| + \log(1/\epsilon)}{N_T}}\right)
    \end{align*}
    with probability at least $1-\epsilon$, completing the proof.
\end{proof} 

\corregret*
\begin{proof}
    The limit on the right-hand side of \Cref{eqn:generalization-gap} remains as $\tilde\cO\left(\sqrt{\frac{d_\theta + \log(1/\epsilon)}{N_T}}\right)$, since we firstly take  $\lim_{k \to \infty}$ and then take $\lim_{N \to \infty}$, {thanks to the fact that \Cref{thm:generalization-error} holds for large enough $k$ and any $N$.}  
    Next, we have 
    \begin{align*}
        &\lim_{N \to \infty} \lim_{k \to \infty} \bigg| \cL(\hat{\theta}_{k, N, N_T}, k, N) -  h\left(\lim_{N \to \infty}\lim_{k \to \infty} \max_{\norm{\ell_t}_\infty \leq B} \text{Regret}_{\text{LLM}_{\hat{\theta}_{k, N, N_T}}} ((\ell_t)_{t\in[T]})\right)\bigg| 
        \\
        &\quad\leq \lim_{N \to \infty} \lim_{k \to \infty} \bigg| \cL(\hat{\theta}_{k, N, N_T}, k, N) -  h\left( \max_{\norm{\ell_t}_\infty \leq B} \text{Regret}_{\text{LLM}_{\hat{\theta}_{k, N, N_T}}} ((\ell_t)_{t\in[T]})\right)\bigg|+\\
        &\quad\qquad \lim_{N \to \infty} \lim_{k \to \infty} \bigg|h\left( \max_{\norm{\ell_t}_\infty \leq B} \text{Regret}_{\text{LLM}_{\hat{\theta}_{k, N, N_T}}} ((\ell_t)_{t\in[T]})\right)-h\left(\lim_{N \to \infty}\lim_{k \to \infty} \max_{\norm{\ell_t}_\infty \leq B} \text{Regret}_{\text{LLM}_{\hat{\theta}_{k, N, N_T}}} ((\ell_t)_{t\in[T]})\right)\bigg|\\
        &\quad\leq \lim_{N \to \infty} \lim_{k \to \infty} \sup_{\theta \in \Theta }\bigg| \cL({\theta}, k, N) -  h\left(\max_{\norm{\ell_t}_\infty \leq B} \text{Regret}_{\text{LLM}_{{\theta}}} ((\ell_t)_{t\in[T]})\right)\bigg|+0 = 0,  
    \end{align*}  
    due to  {the continuity of $h$} 
    and \Cref{claim:double-limit}. 
    Finally, we have  
    \begin{align*}
        \lim_{N \to \infty} \lim_{k \to \infty} \inf_{\theta \in \Theta} \cL(\theta, k, N) = \inf_{\theta \in \Theta} h\left(\max_{\ell_1, \dots, \ell_T} \text{Regret}_{\text{LLM}_\theta}((\ell_t)_{t\in[T]})\right)
    \end{align*}
    due to \Cref{claim:limliminf}, which, combined with the fact that $h$ is non-decreasing, completes the proof. 
\end{proof}

As a result, the  coarse correlated equilibrium will emerge as the long-term interactions of multiple such learned LLMs, as stated in the following corollary. 

\begin{corollary} \emph{(Emerging behavior: Coarse correlated equilibrium)}{\textbf{.}}\label{cor:noregretcce}
For a sufficiently large \(N_T\), if each agent in the matrix game plays according to \(\text{LLM}_{\hat{\theta}_{k, N, N_T}}\), then the time-averaged policy for each agent will {constitute}  an approximate coarse correlated equilibrium of the game. 
\end{corollary}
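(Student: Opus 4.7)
The plan is to invoke Theorem D.3 (the regret guarantee for $\text{LLM}_{\hat{\theta}_{k,N,N_T}}$) and then apply the standard folklore reduction from no-external-regret learning to coarse correlated equilibrium (see e.g.\ \cite{cesa2006prediction}). First I would fix the $N$-player normal-form game $\cG=\la N,\{\cA_n\}_{n\in[N]},\{r_n\}_{n\in[N]}\rangle$ with payoffs bounded in $[-B,B]$, and suppose every player $n$ plays according to $\text{LLM}_{\hat{\theta}_{k,N,N_T}}$ to produce $\pi_{n,t}\in\Delta(\cA_n)$ at each round $t\in[T]$. From player $n$'s perspective, the induced (expected) loss vector at round $t$ is $\ell_{n,t}=-\EE_{a_{-n,t}\sim\pi_{-n,t}}[r_n(\cdot,a_{-n,t})]$, which satisfies $\|\ell_{n,t}\|_\infty\le B$, i.e.\ it lies inside the loss class considered in Theorem D.3.

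Next, I would apply Theorem D.3 to each player separately: taking limits $k\to\infty$ then $N\to\infty$ and using that $h$ is non-decreasing, one gets, with high probability in $N_T$,
\[
\max_{\|\ell_t\|_\infty\le B}\text{Regret}_{\text{LLM}_{\hat{\theta}_{k,N,N_T}}}\big((\ell_t)_{t\in[T]}\big)\;\le\;\inf_{\theta\in\Theta}\max_{\|\ell_t\|_\infty\le B}\text{Regret}_{\text{LLM}_\theta}\big((\ell_t)_{t\in[T]}\big)\;+\;\tilde{\cO}\Bigl(\sqrt{d_\theta/N_T}\Bigr).
\]
Because $\text{LLM}_\theta$ can realize a no-regret algorithm (e.g.\ FTRL, by Theorem 5.3 / the approximation results on single-layer self-attention in Section 5.3), the infimum on the right is $o(T)$, so there exists a sequence $\varepsilon_T\to 0$ with $\text{Regret}\le T\varepsilon_T+\tilde{\cO}(\sqrt{d_\theta/N_T})$ for each player. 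Dividing by $T$ and choosing $N_T$ large enough makes the per-round regret arbitrarily small, uniformly across the $N$ players.

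I would then convert this to a CCE guarantee in the standard way. Let $\bar P\in\Delta(\cA_1\times\cdots\times\cA_N)$ be the empirical joint distribution of play, $\bar P(a)=\frac{1}{T}\sum_{t=1}^T\prod_{n}\pi_{n,t}(a_n)$. For any player $n$ and any unilateral deviation $a_n^\star\in\cA_n$,
\[
\EE_{a\sim\bar P}[r_n(a_n^\star,a_{-n})]-\EE_{a\sim\bar P}[r_n(a)]
\;=\;\frac{1}{T}\Bigl(\sum_{t=1}^T\la \ell_{n,t},\pi_{n,t}\ra-\sum_{t=1}^T\ell_{n,t}(a_n^\star)\Bigr)
\;\le\;\frac{1}{T}\text{Regret}_{\text{LLM}_{\hat{\theta}_{k,N,N_T}}}\bigl((\ell_{n,t})_{t\in[T]}\bigr),
\]
which by the previous paragraph is at most $\varepsilon_T+\tilde{\cO}(\sqrt{d_\theta/(TN_T)})=:\varepsilon$. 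Taking the maximum over $n$ and $a_n^\star$ shows that $\bar P$ is an $\varepsilon$-approximate CCE, and $\varepsilon\to 0$ as $T,N_T\to\infty$.

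The main obstacle is mostly bookkeeping rather than any deep step: one has to check that Theorem D.3's guarantee, which is stated against worst-case loss sequences in $[-B,B]$, applies uniformly to the (random, opponent-induced) loss vectors $\ell_{n,t}$, and one has to be careful that the sublinear term $\inf_{\theta}\max_\ell\text{Regret}_{\text{LLM}_\theta}$ is genuinely $o(T)$ under the parameterization used (which is where the expressivity results of Section 5.3, e.g.\ Theorem 5.3 on single-layer linear self-attention realizing FTRL, come in). Once these two points are established, the reduction to CCE is immediate.
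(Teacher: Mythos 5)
The paper does not spell out a proof for this corollary; it simply appeals to the folklore reduction from no-external-regret learning to coarse correlated equilibria (\cite{cesa2006prediction}), with the regret guarantee supplied by \Cref{coro:regret_guarantee} and the realizability of a no-regret algorithm supplied by \Cref{thm:single-layer-regret-minimizer}. Your proposal is exactly this chain of reasoning with the details filled in: you bound each player's regret against the opponent-induced expected loss sequence $\ell_{n,t}$ (which indeed satisfies $\|\ell_{n,t}\|_\infty\le B$, so it falls within the worst-case class of \Cref{coro:regret_guarantee}; and since the policy output is deterministic given the history, the max over fixed sequences also covers the adaptive adversary the opponents implicitly form), then apply the standard identity expressing the CCE deviation gap as $\tfrac{1}{T}$ times the external regret. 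So the approach matches the paper's intent.

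Two small wrinkles, neither fatal. First, \Cref{coro:regret_guarantee} bounds $h\bigl(\cdot\bigr)$, not the regret directly; your line ``there exists $\varepsilon_T\to0$ with $\text{Regret}\le T\varepsilon_T+\tilde{\cO}(\sqrt{d_\theta/N_T})$'' silently inverts $h$. This is fine because $\max_{\|\ell_t\|_\infty\le B}\text{Regret}_{\text{LLM}_\theta}\ge 0$ (take $\ell_t\equiv 0$) and $h(x)=x^2$ is strictly increasing on $[0,\infty)$, but it should be said. Second, dividing the regret bound by $T$ gives an error term $\tfrac{1}{T}\tilde{\cO}\bigl(\sqrt{d_\theta/N_T}\bigr)=\tilde{\cO}\bigl(\sqrt{d_\theta/(T^2 N_T)}\bigr)$, not $\tilde{\cO}\bigl(\sqrt{d_\theta/(TN_T)}\bigr)$ as written; this is a cosmetic algebra slip and does not change the conclusion that the approximation error can be driven to zero by taking $N_T$ large.
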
 

\begin{remark}[Dynamic-regret loss]\label{rmk:dynamic-regret}
So far, we have focused on the canonical online learning setting with regret being the metric. One can also generalize the results to the non-stationary setting, with dynamic regret being the metric.  
    Specifically, one can define the \emph{dynamic-regret-loss} function as follows:
  \begin{align} 
    &\cL(\theta, k, N) :=  \EE\left[\frac{\sum_{j \in [N]} h(\text{D-Regret}_{\text{LLM}_\theta}((\ell_t^{(j)})_{t\in[T]})) f(\text{D-Regret}_{\text{LLM}_\theta}((\ell_t^{(j)})_{t\in[T]}), k)}{\sum_{j \in [N]}f(\text{D-Regret}_{\text{LLM}_\theta}((\ell_i^{(j)})_{t\in[T]}), k)}\right].\nonumber
\end{align}    
Then, one can also establish similar results as before, since the analysis does not utilize other properties of the regret except its boundedness, and the  Lipschitz-continuity of LLM with respect to $\theta$. To be specific, \Cref{lem:LipReg} holds due to the reason that we can bound the difference of the regret with the term 
$$
\left|\sum_{t =1}^T \langle \ell_t, (\text{LLM}_{\theta_1}(Z_{t-1}) - \text{LLM}_{\theta_2}(Z_{t-1}))\rangle \right|, 
$$ 
as well as the fact that  $\inf_{\pi_i \in \Pi} \langle \ell_i, \pi_i \rangle $ will be  canceled. One can verify that all the arguments in \Cref{appendix:mathrigor} also hold for similar reasons. 
\end{remark}

\subsection{Detailed Explanation of Optimizing \Cref{eqn:regret-loss} with Single-layer Self-attention Model}
\label{appendix:detailed-softmax}
We consider the {following structure of single-layer self-attention model $g$} (see a formal introduction  in \Cref{ssec:self-tf}): 
\begin{align}
    &g(Z_t; V, K, Q, v_c, k_c, q_c) := (V\ell_{1:t} + v_c \pmb{1}_t^\intercal)  \texttt{Softmax} \left((K\ell_{1:t} + k_c \pmb{1}_t^\intercal)^\intercal \cdot (Qc + q_c)\right), \label{eqn:unconstrained-single-linear-transformer-softmax}
\end{align}
where $Z_t = (\ell_1, \dots, \ell_t, c)$ and $V, K, Q \in \RR^{d \times d}$  correspond to the value, key, and query matrices, respectively, $v_c, k_c, q_c \in \RR^{d}$ correspond to the bias terms associated with  $V, K, Q$, and $c \neq \pmb{0}_d$ is a constant vector. We then have the following result.

\begin{restatable}{theorem}{softmax}
\label{thm:single-softmax-layer-regret-minimizer}
{Consider the policy space $\Pi = B(0, R_{\Pi},\norm{\cdot})$ for some $R_{\Pi}>0$.} 
{The configuration of a single-layer self-attention model {{in \Cref{eqn:unconstrained-single-linear-transformer-softmax}}} $(V,K,Q,v_c, k_c, q_c)$ such} that $K^\intercal (Qc + q_c) = v_c = \pmb{0}_d$ and 
$
V =  -R_{\Pi}\frac{T}{\sum_{t=1}^{T-1}1/t} \Sigma^{-1}\EE\left[\bigg\|{\sum_{t=1}^T \ell_t}\bigg\| \ell_1\ell_2^\intercal  \right]\Sigma^{-1} 
$ 
is a first-order stationary point of \Cref{eqn:regret-loss} with $N=1$, $h(x) = x^2$. 
Moreover, if $\Sigma$ is a diagonal matrix, {then} plugging this configuration into \Cref{eqn:unconstrained-single-linear-transformer-softmax}, {and projecting  the output with $\texttt{Proj}_{\Pi, \norm{\cdot}}$} would perform {FTRL with an $L_2$-regularizer for the loss vectors $(\ell_t)_{t\in[T]}$.}  
\end{restatable}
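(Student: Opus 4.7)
}
My plan is to mirror the strategy used for \Cref{thm:single-layer-regret-minimizer}, but with extra care to handle the \texttt{Softmax} nonlinearity and to replace ``global optimality'' by ``first-order stationarity.'' First I would simplify the model at the candidate point: substituting $K^\intercal(Qc+q_c)=\pmb{0}_d$ makes the \texttt{Softmax} argument the constant vector $\alpha \pmb{1}_t$ with $\alpha:=k_c^\intercal(Qc+q_c)$, so the softmax weights are exactly $\tfrac{1}{t}\pmb{1}_t$. Combined with $v_c=\pmb{0}_d$, the output collapses to
\[
g(Z_t;V,K,Q,v_c,k_c,q_c)\;=\;\tfrac{V}{t}\sum_{i=1}^{t}\ell_i,
\]
so, after the $\texttt{Proj}_{\Pi,\|\cdot\|}$ step, the induced policy at round $t+1$ is $\pi_{t+1}=\texttt{Proj}_{\Pi,\|\cdot\|}\bigl(\tfrac{V}{t}\sum_{i=1}^{t}\ell_i\bigr)$. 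This is exactly the averaged-leader form and explains why the normalization $T/\sum_{t=1}^{T-1}\tfrac{1}{t}$ appears (each round carries a $\tfrac{1}{t}$ weight rather than a uniform one as in the linear-attention case).

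Next I would expand $\cL(\theta)=\EE[\mathrm{Regret}^2]$ at this base point. Because $\Pi=B(0,R_\Pi,\|\cdot\|)$, the comparator term equals $-R_\Pi\|\sum_{t=1}^T\ell_t\|$, and under the symmetry of $Z$ about zero the cross-terms $\EE[\ell_t\ell_s^\intercal]$ for $s\neq t$ vanish while $\EE[\ell_t\ell_t^\intercal]=\Sigma$. I would then differentiate $\cL$ with respect to each of the six parameter blocks $V,K,Q,v_c,k_c,q_c$ and show that five of them vanish at the candidate configuration by symmetry arguments:
\begin{itemize}
\item $\nabla_{v_c}\cL=0$ follows from $Z\overset{d}{=}-Z$ (the linear-in-$v_c$ term integrates to zero), exactly as in the linear proof;
\item The derivatives through the \texttt{Softmax}, i.e., $\nabla_K\cL,\nabla_Q\cL,\nabla_{k_c}\cL,\nabla_{q_c}\cL$, all carry the factor $(V\ell_{1:t})\bigl(\mathrm{diag}(s)-ss^\intercal\bigr)$ evaluated at the \emph{uniform} softmax vector $s=\tfrac{1}{t}\pmb{1}_t$, and the remaining expectation is an odd function of $\ell_{1:t}$ once $V$ is substituted, hence vanishes by the zero-symmetry of $Z$.
\end{itemize}
The only nontrivial condition is $\nabla_V\cL=0$. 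Plugging in $g=\tfrac{V}{t}\sum_i\ell_i$ and using the chain rule through the projection (which acts as identity in the regions that matter, or equivalently through the subgradient at the boundary), the stationarity equation reduces, after using $\EE[\ell_i\ell_j^\intercal]=\Sigma\mathbbm{1}(i=j)$, to
\[
\Bigl(\sum_{t=1}^{T-1}\tfrac{1}{t}\Bigr)\,\Sigma V \Sigma \;=\; -R_\Pi\,T\,\EE\!\left[\Bigl\|\sum_{t=1}^{T}\ell_t\Bigr\|\,\ell_1\ell_2^\intercal\right],
\]
which is exactly the formula claimed for $V$ after multiplying by $\Sigma^{-1}$ on both sides.

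Finally, for the FTRL interpretation I would specialize to diagonal $\Sigma$. Using symmetry and independence of the coordinates of $Z$, the matrix $\EE\bigl[\|\sum_t\ell_t\|\,\ell_1\ell_2^\intercal\bigr]$ becomes a positive multiple of $\Sigma$ (only diagonal entries survive, and by the joint symmetry they are all equal), so $V=-\eta I_{d\times d}$ for some scalar $\eta>0$ depending on $T,R_\Pi,\Sigma$. Then $\pi_{t+1}=\texttt{Proj}_{\Pi,\|\cdot\|}\bigl(-\tfrac{\eta}{t}\sum_{i=1}^{t}\ell_i\bigr)$, which is precisely the update of FTRL with regularizer $R(\pi)=\tfrac{t}{2\eta}\|\pi\|_2^2$ (equivalently, OGD / FTRL with $L_2$-regularization and a time-decaying step size), as recalled in \Cref{appendix:FTRL}.

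The main obstacle I anticipate is the symmetry-based vanishing of $\nabla_K\cL,\nabla_Q\cL,\nabla_{k_c}\cL,\nabla_{q_c}\cL$: unlike the linear-attention case, the \texttt{Softmax} couples these parameters nonlinearly, and one has to verify that the Jacobian $\mathrm{diag}(s)-ss^\intercal$ evaluated at the uniform simplex, multiplied by the $V\ell_{1:t}$ block, produces an integrand that is genuinely odd in $\ell_{1:t}$ after averaging. A secondary technicality is making rigorous the differentiation through the projection $\texttt{Proj}_{\Pi,\|\cdot\|}$, which I would handle by restricting attention to the (measure-one) set on which the projection is smooth and invoking Danskin-type envelope reasoning at the boundary.
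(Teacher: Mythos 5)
Your overall strategy matches the paper's: reparametrize via $a:=K^\intercal(Qc+q_c)$ and $\beta:=k_c^\intercal(Qc+q_c)$, observe that at $a=\pmb{0}_d$ the softmax is uniform $\tfrac{1}{t-1}\pmb{1}_{t-1}$, kill the non-$V$ gradients by the zero-symmetry of $Z$ and the chain rule, then solve $\nabla_V\cL=0$ to get the stated $V$. Two places where your writeup deviates from (or is less careful than) what the paper actually does:

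\textbf{The projection is not in the loss.} You propose to ``use the chain rule through the projection $\texttt{Proj}_{\Pi,\|\cdot\|}$'' and defer a ``Danskin-type'' argument at the boundary. But in the paper the surrogate loss for this theorem is written with the \emph{unprojected} attention output as the policy (the only place $\Pi$ enters is the comparator, giving the closed form $-R_\Pi\|\sum_t\ell_t\|$); the projection appears only in the \emph{interpretation} clause of the theorem, when the trained model is deployed. So there is no derivative-through-projection to take. Your remark that the projection ``acts as identity in the regions that matter'' is not true pointwise (the raw output $\tfrac{V}{t}\sum_{i\le t}\ell_i$ will exceed $R_\Pi$ on a positive-measure set of $\ell$'s), so if the projection really were in the loss, the $V$-gradient you compute would not be correct and the claimed $V^\star$ would not be stationary. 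The proposal is saved only because the loss the paper actually differentiates does not contain the projection.

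\textbf{The diagonal case.} For the FTRL reading you assert that, with $\Sigma$ diagonal, $\EE[\|\sum_t\ell_t\|\,\ell_1\ell_2^\intercal]$ ``becomes a positive multiple of $\Sigma$'' with ``all equal'' diagonal entries. That equality of diagonal entries only holds if the coordinates of $Z$ are i.i.d.\ (i.e.\ $\Sigma\propto I$). For a general diagonal $\Sigma$, independence and zero-symmetry across coordinates give you that this matrix is \emph{diagonal} (off-diagonals vanish), hence $V^\star$ is diagonal negative definite — which is enough to read the resulting per-coordinate averaged-leader update as FTRL with a (possibly coordinate-weighted) $L_2$ regularizer, as in the paper — but the entries need not be equal, and $V^\star$ need not be a scalar multiple of $I$. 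You should weaken that step accordingly. The symmetry-vanishing arguments for $\nabla_{v_c}$ and for the softmax parameters are otherwise the same flip $\ell\mapsto-\ell$ device the paper uses and are sound; just note that the chain rule to $Q$ and $q_c$ also picks up a $\partial f/\partial\beta$ term, which is zero because softmax is invariant to adding a constant.
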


{In practical training, such stationary points of the loss may be attained by first-order optimization algorithms of (stochastic)  gradient descent, the workhorse in machine learning.} 

\subsection{Deferred Proof of \Cref{thm:single-softmax-layer-regret-minimizer}}
\label{appendix:thm54}

\softmax*
\begin{proof}
{We will locally use $\cA = [d]$ without losing generality as $\cA$ is finite with $|\cA|=d$, and  will interchangeably use $\ell_{i}(j)$ and $\ell_{ij}$ for notational convenience.}  Define $a:= K^\intercal(Qc + q_c) \in \RR^{d}$ and $b_{t-1} := \beta \pmb{1}_{t-1} :=  k_c^\intercal(Qc + q_c)\pmb{1}_{t-1} \in \RR^{t-1}$. With $N=1$, $h(x)=x^2$, and the choice of $\Pi$, the loss function (\Cref{eqn:regret-loss}) can be written as follows:  
    \begin{align*}
        &f(V, a, (b_t)_{t \in [T-1]}, v_c) := \EE\left(\sum_{t = 1}^T \ell_{t}^\intercal  (V\ell_{1:t-1} + v_c\pmb{1}_{t-1}^\intercal) \texttt{Softmax} (\ell_{1:t-1}^\intercal a + b_{t-1}) + R_{\Pi}\norm{\sum_{t=1}^T \ell_{t}}_2\right)^2, 
    \end{align*}    
    {where for $t=1$,  we use the output of the single-layer self-attention as $v_c$ and we will write it as $(V\ell_{1:0} + v_c\pmb{1}_{0}^\intercal) \texttt{Softmax} (\ell_{1:0}^\intercal a + b_{0})$ for notational consistency with $t \geq 2$}. Also, we will define empty sum $\sum_{i=1}^0 a_i = 0$ for any sequence $(a_i)_{i\in\NN^+}$.

\safevspace{6pt}
\textbf{{\color{blue}Step 1}. Calculating $\frac{\partial f}{\partial a}$}. 

For $x \in [d]$, we calculate the corresponding directional derivative with the following equation for $t \geq 2$:
{\small
\begin{align*}
    &\frac{\partial}{\partial a_{x}} \ell_{t}^\intercal  (V\ell_{1:t-1} + v_c\pmb{1}_{t-1}^\intercal) \texttt{Softmax} (\ell_{1:t-1}^\intercal a + b_{t-1}) 
    \\
    &= \frac{\partial}{\partial a_{x}} \sum_{i = 1}^{t-1} \ell_{t}^\intercal  (V\ell_{1:t-1} + v_c\pmb{1}_{t-1}^\intercal)e_i \frac{\exp(e_i^\intercal (\ell_{1:t-1}^\intercal a + b_{t-1}))}{\sum_{s=1}^{t-1}\exp(e_s^\intercal (\ell_{1:t-1}^\intercal a + b_{t-1}))}
    \\
    &=  \frac{\sum_{i = 1}^{t-1} \ell_{t}^\intercal  (V\ell_{1:t-1} + v_c\pmb{1}_{t-1}^\intercal)e_i\exp(e_i^\intercal (\ell_{1:t-1}^\intercal a + b_{t-1}))\frac{\partial e_i^\intercal (\ell_{1:t-1}^\intercal a + b_{t-1})}{\partial a_{x}}(\sum_{s=1}^{t-1}\exp(e_s^\intercal (\ell_{1:t-1}^\intercal a + b_{t-1})))}{(\sum_{s=1}^{t-1}\exp(e_s^\intercal (\ell_{1:t-1}^\intercal a + b_{t-1})))^2}
    \\
    &\qquad - \frac{\sum_{i = 1}^{t-1} \ell_{t}^\intercal  (V\ell_{1:t-1} + v_c\pmb{1}_{t-1}^\intercal)e_i\exp(e_i^\intercal (\ell_{1:t-1}^\intercal a + b_{t-1}))\left(\sum_{s=1}^{t-1}\exp(e_s^\intercal (\ell_{1:t-1}^\intercal a + b_{t-1}))\frac{\partial e_s^\intercal (\ell_{1:t-1}^\intercal a + b_{t-1})}{\partial a_{x}}\right)}{(\sum_{s=1}^{t-1}\exp(e_s^\intercal (\ell_{1:t-1}^\intercal a + b_{t-1})))^2}.
\end{align*}
}

Plugging $a = \pmb{0}_{d}$ and $v_c = \pmb{0}_d$, and $(b_{t} = \beta \pmb{1}_{t})_{t \in [T-1]}$ provides 
\begin{align*}
    \frac{\partial }{\partial a_{x}} &\ell_{t}^\intercal  (V\ell_{1:t-1} + v_c\pmb{1}_{t-1}^\intercal) \texttt{Softmax} (\ell_{1:t-1}^\intercal a + b_{t-1})\bigg|_{a = \pmb{0}_{d}, v_c = \pmb{0}_d, (b_{t} = \beta \pmb{1}_{t})_{t \in [T-1]}} 
    \\
    &=  \sum_{i = 1}^{t-1} \frac{\ell_{t}^\intercal  V\ell_i \ell_{ix}}{(t-1)} - \sum_{i = 1}^{t-1} \frac{\ell_{t}^\intercal  V\ell_i\left(\sum_{s=1}^{t-1} \ell_{sx} \right)}{(t-1)^2}.
\end{align*}
For $t=1$, as $\ell_{t}^\intercal  (V\ell_{1:t-1} + v_c\pmb{1}_{t-1}^\intercal) \texttt{Softmax} (\ell_{1:t-1}^\intercal a + b_{t-1}) =  \ell_1^\intercal v_c$, $\frac{\partial }{\partial a_x}\ell_{t}^\intercal  (V\ell_{1:t-1} + v_c\pmb{1}_{t-1}^\intercal) \texttt{Softmax} (\ell_{1:t-1}^\intercal a + b_{t-1})\bigg|_{a = \pmb{0}_{d}, v_c = \pmb{0}_d, (b_{t} = \beta \pmb{1}_{t})_{t \in [T-1]}} = 0$, so we can use the same formula as $t\geq2$ with empty sum $\sum_{i=1}^{t-1}$. Using the above calculation, we can further compute  $\frac{\partial f}{\partial a_x}\bigg|_{a = \pmb{0}_{d}, v_c = \pmb{0}_d, (b_{t} = \beta \pmb{1}_{t})_{t \in [T-1]}}$ as follows:
{\small
    \begin{align}
        &\frac{\partial  f(V, a, (b_t)_{t \in [T-1]}, v_c)}{\partial a_{x}} \bigg|_{a = \pmb{0}_{d}, v_c = \pmb{0}_d, (b_{t} = \beta \pmb{1}_{t})_{t \in [T-1]}} \nonumber
        \\
        &=\EE \frac{\partial }{\partial a_x}\left(\sum_{t = 1}^T \ell_{t}^\intercal  (V\ell_{1:t-1} + v_c\pmb{1}_{t-1}^\intercal) \texttt{Softmax} (\ell_{1:t-1}^\intercal a + b_{t-1}) + R_{\Pi}\norm{\sum_{t=1}^T \ell_{t}}_2\right)^2\bigg|_{a = \pmb{0}_{d}, v_c = \pmb{0}_d, (b_{t} = \beta \pmb{1}_{t})_{t \in [T-1]}}\nonumber
        \\
        &= \EE \Bigg[ \left(\sum_{t = 1}^T \ell_{t}^\intercal  (V\ell_{1:t-1} + v_c\pmb{1}_{t-1}^\intercal) \texttt{Softmax} (\ell_{1:t-1}^\intercal a + b_{t-1}) + R_{\Pi}\norm{\sum_{t=1}^T \ell_{t}}_2\right)\bigg|_{a = \pmb{0}_{d}, v_c = \pmb{0}_d, (b_{t} = \beta \pmb{1}_{t})_{t \in [T-1]}}\nonumber
        \\
        &\qquad \qquad  \frac{\partial }{\partial a_x} \left(\sum_{t = 1}^T \ell_{t}^\intercal  (V\ell_{1:t-1} + v_c\pmb{1}_{t-1}^\intercal) \texttt{Softmax} (\ell_{1:t-1}^\intercal a + b_{t-1}) + R_{\Pi}\norm{\sum_{t=1}^T \ell_{t}}_2\right)\bigg|_{a = \pmb{0}_{d}, v_c = \pmb{0}_d, (b_{t} = \beta \pmb{1}_{t})_{t \in [T-1]}}\Bigg]\nonumber
        \\
        &= \EE \Bigg[ \left(\sum_{t = 1}^T  \ell_{t}^\intercal V  \sum_{i = 1}^{t-1} \frac{1}{t-1}\ell_i  + R_{\Pi}\norm{\sum_{t=1}^T \ell_{t}}_2\right) \sum_{t = 1}^T \left( \sum_{i = 1}^{t-1} \frac{\ell_{t}^\intercal V \ell_i \ell_{ix}}{(t-1)} -  \sum_{i = 1}^{t-1}\frac{ \ell_{t}^\intercal V  \ell_i\left(\sum_{s=1}^{t-1} \ell_{sx} \right)}{(t-1)^2}\right)\Bigg] \label{eqn:expect}
        \\
        &= 0,\nonumber
    \end{align}
    }
    
\noindent {{where we used the fact that}   $\ell_i$ is drawn from a symmetric distribution, and flipping the sign of the variable as $-\ell_i$ yields the same distribution, which leads to the following:}
\begin{align*}
    &\EE \Bigg[ \left(\sum_{t = 1}^T  \ell_{t}^\intercal V  \sum_{i = 1}^{t-1} \frac{1}{t-1}\ell_i  + R_{\Pi}\norm{\sum_{t=1}^T \ell_{t}}_2\right) \sum_{t = 1}^T \left( \sum_{i = 1}^{t-1} \frac{\ell_{t}^\intercal V \ell_i \ell_{ix}}{(t-1)} -  \sum_{i = 1}^{t-1}\frac{ \ell_{t}^\intercal V  \ell_i\left(\sum_{s=1}^{t-1} \ell_{sx} \right)}{(t-1)^2}\right)\Bigg]
    \\
    &\quad = \EE \Bigg[ \left(\sum_{t = 1}^T  \ell_{t}^\intercal V  \sum_{i = 1}^{t-1} \frac{1}{t-1}\ell_i  + R_{\Pi}\norm{\sum_{t=1}^T \ell_{t}}_2\right) \sum_{t = 1}^T \left( -\sum_{i = 1}^{t-1} \frac{\ell_{t}^\intercal V \ell_i \ell_{ix}}{(t-1)} +  \sum_{i = 1}^{t-1}\frac{ \ell_{t}^\intercal V  \ell_i\left(\sum_{s=1}^{t-1} \ell_{sx} \right)}{(t-1)^2}\right)\Bigg]. 
\end{align*}
This yields  \Cref{eqn:expect}=0.   

\safevspace{6pt}
\textbf{{\color{blue}Step 2}. Calculating $\frac{\partial f}{\partial v_c}$}.

We will use the following equation for $t \geq 2$:
\begin{align*}
    \frac{\partial }{\partial v_{c}} &\ell_{t}^\intercal  (V\ell_{1:t-1} + v_c\pmb{1}_{t-1}^\intercal) \texttt{Softmax} (\ell_{1:t-1}^\intercal a + b_{t-1}) 
    \\
    &= \frac{\partial }{\partial v_{c}} \sum_{i = 1}^{t-1} \ell_{t}^\intercal  (V\ell_{1:t-1} + v_c\pmb{1}_{t-1}^\intercal)e_i \frac{\exp(e_i^\intercal (\ell_{1:t-1}^\intercal a + b_{t-1}))}{\sum_{s=1}^{t-1}\exp(e_s^\intercal (\ell_{1:t-1}^\intercal a + b_{t-1}))} = \ell_{t}.
\end{align*}
For $t= 1$, we define $\frac{\partial }{\partial v_{c}} \ell_{1}^\intercal  (V\ell_{1:0} + v_c\pmb{1}_{0}^\intercal) \texttt{Softmax} (\ell_{1:0}^\intercal a + b_{0}) = \ell_1$, so that  we can use the same formula as $t \geq 2$.
Therefore, we can calculate $\frac{\partial f}{\partial v_c}\bigg|_{a = \pmb{0}_{d}, v_c = \pmb{0}_d, (b_{t} = \beta \pmb{1}_{t})_{t \in [T-1]}}$ as follows:  
{\small
    \begin{align*}
        &\frac{\partial  f(V, a, (b_t)_{t \in [T-1]}, v_c)}{\partial v_{c}} \bigg|_{a = \pmb{0}_{d}, v_c = \pmb{0}_d , (b_{t} = \beta \pmb{1}_{t})_{t \in [T-1]}}
        \\
        &=\EE \frac{\partial }{\partial v_c}\left(\sum_{t = 1}^T \ell_{t}^\intercal  (V\ell_{1:t-1} + v_c\pmb{1}_{t-1}^\intercal) \texttt{Softmax} (\ell_{1:t-1}^\intercal a + b_{t-1}) + R_{\Pi}\norm{\sum_{t=1}^T \ell_{t}}_2\right)^2\bigg|_{a = \pmb{0}_{d}, v_c = \pmb{0}_d, (b_{t} = \beta \pmb{1}_{t})_{t \in [T-1]}}
        \\
        &= \EE \Bigg[ \left(\sum_{t = 1}^T \ell_{t}^\intercal  (V\ell_{1:t-1} + v_c\pmb{1}_{t-1}^\intercal) \texttt{Softmax} (\ell_{1:t-1}^\intercal a + b_{t-1}) + R_{\Pi}\norm{\sum_{t=1}^T \ell_{t}}_2\right)\bigg|_{a = \pmb{0}_{d}, v_c = \pmb{0}_d, (b_{t} = \beta \pmb{1}_{t})_{t \in [T-1]}}
        \\
        &\qquad \qquad  \frac{\partial }{\partial v_c} \left(\sum_{t = 1}^T \ell_{t}^\intercal  (V\ell_{1:t-1} + v_c\pmb{1}_{t-1}^\intercal) \texttt{Softmax} (\ell_{1:t-1}^\intercal a + b_{t-1}) + R_{\Pi}\norm{\sum_{t=1}^T \ell_{t}}_2\right)\bigg|_{a = \pmb{0}_{d}, v_c = \pmb{0}_d, (b_{t} = \beta \pmb{1}_{t})_{t \in [T-1]}}\Bigg]
        \\
        &= \EE \Bigg[ \left(\sum_{t = 2}^T  \ell_{t}^\intercal V  \sum_{i = 1}^{t-1} \frac{1}{t-1}\ell_i  + R_{\Pi}\norm{\sum_{t=1}^T \ell_{t}}_2\right) \sum_{t = 1}^T \ell_{t} \Bigg]= 0.
    \end{align*}}
The last line is due to the same reason as the last part of Step 1.

\safevspace{6pt}
\textbf{{\color{blue}Step 3}. Calculating $\frac{\partial f}{\partial V}$}.

We calculate the following equation, which will be used to calculate $\frac{\partial f}{\partial V}\bigg|_{a = \pmb{0}_{d}, v_c = \pmb{0}_d, (b_{t} = \beta \pmb{1}_{t})_{t \in [T-1]}}$ for $t \geq 2$:
\begin{align*}
    \frac{\partial }{\partial V} &\ell_{t}^\intercal  (V\ell_{1:t-1} + v_c\pmb{1}_{t-1}^\intercal) \texttt{Softmax} (\ell_{1:t-1}^\intercal a + b_{t-1}) \bigg|_{a = \pmb{0}_{d}, v_c = \pmb{0}_d, (b_{t} = \beta \pmb{1}_{t})_{t \in [T-1]}} 
    \\
    &= \frac{\partial }{\partial V} \sum_{i = 1}^{t-1} \ell_{t}^\intercal  (V\ell_{1:t-1} + v_c\pmb{1}_{t-1}^\intercal)e_i \frac{\exp(e_i^\intercal (\ell_{1:t-1}^\intercal a + b_{t-1}))}{\sum_{s=1}^{t-1}\exp(e_s^\intercal (\ell_{1:t-1}^\intercal a + b_{t-1}))} \bigg|_{a = \pmb{0}_{d}, v_c = \pmb{0}_d, (b_{t} = \beta \pmb{1}_{t})_{t \in [T-1]}} 
    \\
    &= \sum_{i = 1}^{t-1} \ell_{t}\ell_i^\intercal  \frac{\exp(e_i^\intercal (\ell_{1:t-1}^\intercal a + b_{t-1}))}{\sum_{s=1}^{t-1}\exp(e_s^\intercal (\ell_{1:t-1}^\intercal a + b_{t-1}))} \bigg|_{a = \pmb{0}_{d}, v_c = \pmb{0}_d, (b_{t} = \beta \pmb{1}_{t})_{t \in [T-1]}} =   \sum_{i = 1}^{t-1} \frac{1}{t-1}\ell_{t}\ell_i^\intercal.
\end{align*}
For $t = 1$, {note that} $\frac{\partial }{\partial V} \ell_t^\intercal v_c = \pmb{O}_{d \times d}$, so we can use the same formula as $t\geq 2$ with empty sum $\sum_{i=1}^{t-1}$. 

Therefore, we have 
   \begin{align*}
        &\frac{\partial  f(V, a, (b_t)_{t \in [T-1]}, v_c)}{\partial V} \bigg|_{a = \pmb{0}_{d}, v_c = \pmb{0}_d, (b_{t} = \beta \pmb{1}_{t})_{t \in [T-1]}}
        \\
        &=\EE \frac{\partial }{\partial V}\left(\sum_{t = 1}^T \ell_{t}^\intercal  (V\ell_{1:t-1} + v_c\pmb{1}_{t-1}^\intercal) \texttt{Softmax} (\ell_{1:t-1}^\intercal a + b_{t-1}) + R_{\Pi}\norm{\sum_{t=1}^T \ell_{t}}_2\right)^2\bigg|_{a = \pmb{0}_{d}, v_c = \pmb{0}_d, (b_{t} = \beta \pmb{1}_{t})_{t \in [T-1]}}
        \\
        &= \EE \Bigg[ \left(\sum_{t = 1}^T \ell_{t}^\intercal  (V\ell_{1:t-1} + v_c\pmb{1}_{t-1}^\intercal) \texttt{Softmax} (\ell_{1:t-1}^\intercal a + b_{t-1}) + R_{\Pi}\norm{\sum_{t=1}^T \ell_{t}}_2\right)\bigg|_{a = \pmb{0}_{d}, v_c = \pmb{0}_d,(b_{t} = \beta \pmb{1}_{t})_{t \in [T-1]}}
        \\
        &\qquad \qquad  \frac{\partial }{\partial V} \left(\sum_{t = 1}^T \ell_{t}^\intercal  (V\ell_{1:t-1} + v_c\pmb{1}_{t-1}^\intercal) \texttt{Softmax} (\ell_{1:t-1}^\intercal a + b_{t-1}) + R_{\Pi}\norm{\sum_{t=1}^T \ell_{t}}_2\right)\bigg|_{a = \pmb{0}_{d}, v_c = \pmb{0}_d,(b_{t} = \beta \pmb{1}_{t})_{t \in [T-1]}}\Bigg]
        \\
        &= \EE \Bigg[ \left(\sum_{t = 1}^T \ell_{t}^\intercal V   \sum_{i = 1}^{t-1} \frac{1}{t-1} \ell_i  + R_{\Pi}\norm{\sum_{t=1}^T \ell_{t}}_2\right) \sum_{t = 1}^T \sum_{i=1}^{t-1} \frac{1}{t-1} \ell_{t}\ell_i^\intercal \Bigg]
        \\
        &= \EE \Bigg[ \left(\sum_{t = 1}^T \sum_{i = 1}^{t-1} \left(\frac{1}{t-1} \ell_{t}^\intercal V    \ell_i \right) \left(\frac{1}{t-1} \ell_{t}\ell_i^\intercal \right)+ R_{\Pi}T\norm{\sum_{t'=1}^T \ell_{t'}}_2 \ell_{t}\ell_i^\intercal\right)  \Bigg]
        \\
        &= \EE \Bigg[ \left(\sum_{t = 1}^T \sum_{i = 1}^{t-1} \sum_{x = 1}^d \sum_{y = 1}^d v_{xy} \ell_{tx} \ell_{iy} \left(\frac{1}{t-1}\right)^2 [\ell_{tz}\ell_{iw}]_{(z, w)} + R_{\Pi}T\norm{\sum_{t'=1}^T \ell_{t'}}_2 \ell_{t}\ell_i^\intercal\right)  \Bigg] 
        \\
        &= \sum_{t = 1}^T \sum_{i = 1}^{t-1} \sum_{x = 1}^d \sum_{y = 1}^d \frac{1}{(t-1)^2} [\sigma_{xz}  v_{xy}\sigma_{yw}]_{(z, w)} + \EE \Bigg[R_{\Pi}T\norm{\sum_{t'=1}^T \ell_{t'}}_2 \ell_{t}\ell_i^\intercal  \Bigg] 
        \\
        &= \left(\sum_{t = 1}^{T-1} \frac{1}{t}\right) \Sigma V \Sigma +   \EE \Bigg[R_{\Pi}T\norm{\sum_{t'=1}^T \ell_{t'}}_2 \ell_{t}\ell_i^\intercal  \Bigg].
    \end{align*}
    Therefore, if $V^\star = R_{\Pi} \frac{T}{\sum_{t=1}^{T-1}1/t} \Sigma^{-1}\EE \Bigg[\norm{\sum_{t=1}^T \ell_{t}}_2 \ell_{t}\ell_i^\intercal  \Bigg] \Sigma^{-1}$, then $\frac{\partial f}{\partial V}\bigg|_{a = \pmb{0}_{d}, v_c = \pmb{0}_d, (b_{t} = \beta \pmb{1}_{t})_{t \in [T-1]}, V=V^\star} = \bO_{d \times d}$.
    Lastly, we have

    \begin{align*}
        &\frac{\partial f}{\partial K}\big|_{K^\intercal (Qc + q_c) = v_c = \pmb{0}_d, V = V^\star} = \left(\frac{\partial f}{\partial a} \frac{\partial a}{\partial K}\right)\bigg|_{a = \pmb{0}_{d}, v_c = \pmb{0}_d, (b_{t} = \beta \pmb{1}_{t})_{t \in [T-1]}, V=V^\star} = \pmb{O}_{d  \times d}
        \\
        &\frac{\partial f}{\partial Q}\big|_{K^\intercal (Qc + q_c) = v_c = \pmb{0}_d, V = V^\star} = \left(\frac{\partial f}{\partial a} \frac{\partial a}{\partial Q}\right)\bigg|_{a = \pmb{0}_{d}, v_c = \pmb{0}_d, (b_{t} = \beta \pmb{1}_{t})_{t \in [T-1]}, V=V^\star} = \pmb{O}_{d  \times d}
        \\
        &\frac{\partial f}{\partial q_c}\big|_{K^\intercal (Qc + q_c) = v_c = \pmb{0}_d, V = V^\star} = \left(\frac{\partial f}{\partial a}\frac{\partial a}{\partial q_c}\right)\bigg|_{a = \pmb{0}_{d}, v_c = \pmb{0}_d, (b_{t} = \beta \pmb{1}_{t})_{t \in [T-1]}, V=V^\star}  = \pmb{0}_{d}
    \end{align*}
    which means that such configurations are first-order stationary points of \Cref{eqn:regret-loss} with $N=1$, $h(x) = x^2$, and $\Pi = B(0, R_{\Pi},\norm{\cdot})$. 
\end{proof}

\subsection{Deferred Proof of \Cref{thm:single-layer-regret-minimizer}}
\label{appendix:thm55}

\linear*
{\color{black}
This theorem involves the analysis of a \textit{non-convex optimization}  problem through stationary point analysis. We identified the set of stationary points. By constructing the optimization problem as shown in \Cref{eqn:calculation-under-local-opt}, we significantly reduced the candidate set for optimal points using our novel argument on the expected value of a nonnegative definite matrix.  The main challenge here was to address the global optimization problem in a non-convex setting, which required the exploitation of the particular Transformer architecture.}
\begin{proof}
The output of the single-layer linear  self-attention structure is as follows:
\begin{equation}
    \begin{aligned}
        g(&Z_t;V, K, Q, v_c, k_c, q_c) 
        \\
        &= \sum_{i=1}^t \left(V\ell_i \ell_i^\intercal (K^\intercal (Qc + q_c)) + \left(V k_c^ \intercal (Qc + q_c)+ v_c (Qc + q_c)^\intercal K\right)\ell_i + v_c k_c^\intercal(Qc + q_c)\right),
    \end{aligned} \label{eqn:original-param}
\end{equation}
which can be expressed with a larger class 
\begin{align}
    g(Z_t, \mathbb{A}, \beta, \mathbb{C}, \delta):= \sum_{i=1}^t(\mathbb{A} \ell_i \ell_i^\intercal \beta + \mathbb{C} \ell_i + \delta), \label{eqn:reparam-ABCD}
\end{align}
where $\mathbb{A} \in \RR^{d \times d}$, $\beta, \mathbb{C}, \delta \in \RR^d$. Then, if a minimizer of 
\begin{align}
    f(\mathbb{A}, \beta, \mathbb{C}, \delta): &= \EE \left( \sum_{t = 1}^T \langle \ell_t, \sum_{i=1}^{t-1} \left(\mathbb{A} \ell_i\ell_i^\intercal \beta + \mathbb{C} \ell_i+ \delta \right)   \rangle- \inf_{\pi \in \Pi } \left\langle \sum_{t=1}^T  \ell_t, \pi \right\rangle \right)^2 \nonumber
\end{align}
can be expressed as  $\mathbb{A} = V, \beta = K^\intercal (Qc + q_c), \mathbb{C} = V k_c^\intercal (Qc + q_c) + v_c(Qc+ q_c)^\intercal K, \beta = v_c k_c^\intercal (Qc + q_c)$, then we can conclude that the corresponding $V, Q, K, v_c, q_c, k_c$ are also a minimizer of  
\begin{align*}
    \EE \left( \sum_{t = 1}^T \langle \ell_t, g(Z_{t-1})  \rangle - \inf_{\pi \in \Pi } \left\langle \sum_{t=1}^T  \ell_i, \pi \right\rangle\right)^2, 
\end{align*}
since the corresponding $V, Q, K, v_c, q_c, k_c$ constitute a minimizer among a larger class. Now, since $\Pi = B(\pmb{0}_d, R_{\Pi}, \norm{\cdot})$, we can rewrite $f$ as 
\begin{align}
    f(\mathbb{A}, \beta, \mathbb{C}, \delta) = \EE\left( \sum_{t = 1}^T \langle \ell_t,\sum_{i=1}^{t-1} \left(\mathbb{A} \ell_i\ell_i^\intercal \beta + \mathbb{C} \ell_i+ \delta \right)   \rangle + R_{\Pi} \bigg\|\sum_{t=1}^T  \ell_i\bigg\|_2 \right)^2.  \label{eqn:alternative-single-layer-transformer}
\end{align}

\safevspace{6pt}
\textbf{{\color{blue}Step 1}. Finding condition for $\frac{\partial f}{\partial \delta} = 0$}.

Due to the Leibniz rule, if we calculate the partial derivative of \Cref{eqn:alternative-single-layer-transformer} w.r.t. $\delta$, we have
\begin{align}
    &\frac{\partial  f(\mathbb{A}, \beta, \mathbb{C}, \delta)}{\partial  \delta}= \frac{\partial }{\partial  \delta} \EE \left( \sum_{t = 1}^T \langle \ell_t,\sum_{i=1}^{t-1} \left(\mathbb{A} \ell_i\ell_i^\intercal \beta + \mathbb{C} \ell_i+ \delta \right)   \rangle + R_{\Pi} \norm{\sum_{t=1}^T  \ell_t}_2 \right)^2 \nonumber
    \\
    &\qquad = \EE \frac{\partial }{\partial  \delta} \left( \sum_{t = 1}^T \langle \ell_t,\sum_{i=1}^{t-1} \left(\mathbb{A} \ell_i\ell_i^\intercal \beta + \mathbb{C} \ell_i+ \delta \right)   \rangle + R_{\Pi} \norm{\sum_{t=1}^T  \ell_t}_2 \right)^2 \nonumber
    \\
    &\qquad = \EE \sum_{t = 1}^T  \ell_t   \left( \sum_{t = 1}^T \sum_{i=1}^{t-1} (t-1) \ell_t^\intercal  \left(\mathbb{A} \ell_i\ell_i^\intercal \beta + \mathbb{C} \ell_i+ \delta  \right) + R_{\Pi} \norm{\sum_{t=1}^T  \ell_t}\right). \label{eqn:derivative-delta}
\end{align}
Since the expectation of either odd-order polynomial or even-order polynomial times $\norm{\cdot}_2$  is 0,  due to that $\ell_{t}$ follows a symmetric distribution, we have 
\begin{align*}
    \EE \sum_{t = 1}^T (t-1)\ell_t  R_{\Pi} \Big\|\sum_{t=1}^T  \ell_t\Big\|_2 = 0, \qquad  \EE \sum_{t = 1}^T (t-1)\ell_t  \sum_{t = 1}^T \sum_{i=1}^{t-1} \ell_t^\intercal \mathbb{C} \ell_i= 0.  
\end{align*}
Now, we calculate 
\begin{align*}
   \EE \sum_{t = 1}^T &(t-1)\ell_t  \sum_{t = 1}^T \sum_{i=1}^{t-1} \ell_t^\intercal \mathbb{A} \ell_i\ell_i^\intercal \beta  = \EE \sum_{t_1 = 1}^T\sum_{t = 1}^T \sum_{i=1}^{t-1}  (t_1-1)\ell_{t_1} \ell_t^\intercal \mathbb{A} \ell_i\ell_i^\intercal \beta
   \\
   &\underset{(i)}{=}\EE \sum_{t = 1}^T \sum_{i=1}^{t-1}  (t-1)\ell_{t} \ell_t^\intercal \mathbb{A} \ell_i\ell_i^\intercal \beta = \EE \sum_{t = 1}^T (t-1)^2 \ell_{t} \ell_t^\intercal \mathbb{A} \Sigma \beta = \frac{1}{6}T(2T^2 - 3T +1) \Sigma \mathbb{A} \Sigma \beta,
\end{align*}
where $(i)$ holds since if $t_1 \neq t$, due to the independence of $\ell_t$, $\ell_{t_1}$, we can use $\EE \ell_t = 0$. Lastly, 
\begin{align*}
    \EE \sum_{t = 1}^T &(t-1)\ell_t  \sum_{t = 1}^T \sum_{i=1}^{t-1} \ell_t^\intercal \delta  = \EE\sum_{t_1 = 1}^T \sum_{t = 1}^T (t_1-1)(t-1)\ell_{t_1}\ell_t^\intercal \delta = \frac{1}{6}T(2T^2 - 3T +1) \Sigma \delta.
\end{align*}
Plugging the above equations into \Cref{eqn:derivative-delta}, we have 
\begin{align*}
    &\frac{\partial  f(\mathbb{A}, \beta, \mathbb{C}, \delta)}{\partial  \delta} = \frac{1}{6}T(2T^2 - 3T +1)(\Sigma \mathbb{A} \Sigma \beta + \Sigma \delta). 
\end{align*}
Due to the optimality condition, we have 
\begin{align}
    \mathbb{A} \Sigma \beta + \delta = 0. \label{eqn:local-optimality} 
\end{align}

\safevspace{6pt}
\textbf{{\color{blue}Step 2}. Plugging the optimality condition for  $\frac{\partial f}{\partial\delta}$ into \Cref{eqn:alternative-single-layer-transformer}}.

Plugging \Cref{eqn:local-optimality} to \Cref{eqn:alternative-single-layer-transformer}, $f$ can be written as  
\begin{align*}
    f(\mathbb{A}, &\beta, \mathbb{C}, -\mathbb{A} \Sigma \beta) = \EE  \left( \sum_{t = 1}^T \sum_{i=1}^{t-1} \ell_t^\intercal  \left(\mathbb{A} (\ell_i\ell_i^\intercal- \Sigma) \beta + \mathbb{C} \ell_i\right)    + R_{\Pi} \bigg\|\sum_{t=1}^T  \ell_t\bigg\|_2 \right)^2
    \\
    &= \underbrace{\EE\left(\sum_{t = 1}^T \sum_{i=1}^{t-1} \ell_t^\intercal  \mathbb{A} (\ell_i\ell_i^\intercal- \Sigma) \beta\right)^2}_{(i)} + \EE \left( \sum_{t = 1}^T \sum_{i=1}^{t-1} \ell_t^\intercal \mathbb{C} \ell_i \right)^2 + \EE \left(R_{\Pi} \bigg\|\sum_{t=1}^T  \ell_t\bigg\|_2 \right)^2
    \\
    &\quad + \underbrace{2 \EE \left(\sum_{t = 1}^T \sum_{i=1}^{t-1} \ell_t^\intercal  \mathbb{A} (\ell_i\ell_i^\intercal- \Sigma) \beta\right)\left( \sum_{t = 1}^T \sum_{i=1}^{t-1} \ell_t^\intercal \mathbb{C} \ell_i \right)}_{(ii)} 
    \\
    &\qquad  +\underbrace{2\EE \left(\sum_{t = 1}^T \sum_{i=1}^{t-1} \ell_t^\intercal  \mathbb{A} (\ell_i\ell_i^\intercal- \Sigma) \beta\right)\left(R_{\Pi} \bigg\|\sum_{t=1}^T  \ell_t\bigg\|_2 \right)}_{(iii)}
    \\
    &\qquad \quad +2\EE\left( \sum_{t = 1}^T \sum_{i=1}^{t-1} \ell_t^\intercal \mathbb{C} \ell_i \right) \left(R_{\Pi} \bigg\|\sum_{t=1}^T  \ell_t\bigg\|_2 \right). 
\end{align*}
For the part $(i)$, we have 
\begin{align}
    \EE &\left(\sum_{t = 1}^T  \sum_{i=1}^{t-1} \ell_t^\intercal  \mathbb{A} (\ell_i\ell_i^\intercal- \Sigma) \beta\right)^2 = \EE \left[ \sum_{t_1 = 1}^T \sum_{i_1=1}^{t_1-1} \sum_{t = 1}^T \sum_{i=1}^{t-1}  \beta^\intercal (\ell_{i_1} \ell_{i_1}^\intercal- \Sigma)\mathbb{A}^\intercal \ell_{t_1}  \ell_t^\intercal  \mathbb{A} (\ell_i\ell_i^\intercal- \Sigma) \beta\right] \nonumber
    \\
    &\qquad \underset{(1)}{=}  \EE \left[ \sum_{t = 1}^T  \sum_{i_1=1}^{t-1} \sum_{i=1}^{t-1}  \beta^\intercal (\ell_{i_1} \ell_{i_1}^\intercal- \Sigma)\mathbb{A}^\intercal \ell_{t}  \ell_t^\intercal  \mathbb{A} (\ell_i\ell_i^\intercal- \Sigma) \beta\right] \nonumber
    \\
    &\qquad \underset{(2)}{=}  \EE \left[ \sum_{t = 1}^T   \sum_{i=1}^{t-1}  \beta^\intercal (\ell_{i} \ell_{i}^\intercal- \Sigma)\mathbb{A}^\intercal \ell_{i}  \ell_i^\intercal  \mathbb{A} (\ell_i\ell_i^\intercal- \Sigma) \beta\right] \nonumber
    \\
     &\qquad = \frac{(T-1)T}{2}\beta^\intercal \EE \left[    (\ell_{i} \ell_{i}^\intercal- \Sigma)\mathbb{A}^\intercal\Sigma \mathbb{A} (\ell_i\ell_i^\intercal- \Sigma) \right] \beta  \label{eqn:calculation-under-local-opt}
     \\
     &\qquad =  \frac{(T-1)T}{2}\beta^\intercal \EE \left[ (\sqrt{\Sigma}A (\ell_i\ell_i^\intercal- \Sigma))^\intercal (\sqrt{\Sigma}A (\ell_i\ell_i^\intercal- \Sigma)) \right] \beta. \nonumber 
\end{align}
Here, $(1)$ holds because if $t_1 \neq t$, we know that $\EE \ell_{t_1} = \EE \ell_t = 0$, and they are independent, and $(2)$ holds because if $i_1 \neq i$, we can calculate $\EE (\ell_{i_1} \ell_{i_1}^\intercal- \Sigma) = \bO_{d \times d}$. In addition, we can easily check that $(ii)$ and $(iii)$ are $0$ as they are polynomials of odd degrees and we have $Z \overset{d}{=}-Z$. Note that  \Cref{eqn:calculation-under-local-opt} is minimized when $\PP(\sqrt{\Sigma} \mathbb{A} (\ell_i\ell_i^\intercal- \Sigma)\beta = \pmb{0}_d) = 1$. 

If $\mathbb{A} \neq \bO_{d \times d}$, suppose that the singular value decomposition of $A = U \Lambda V$ yields that $\Lambda$ is a diagonal matrix whose first diagonal element is non-zero, and $U, V$ are orthogonal matrices. Then, we want to find $\beta$ that $\sqrt{\Sigma} U \Lambda V (\ell_i \ell_i^\intercal-\Sigma) \beta = \pmb{0}_d$ for any $\ell_i$ such that $p(\ell_i) \neq 0$, where $p$ indicates the probability density function of loss vectors. Since $\Sigma$ and $U$ are invertible, we only need to consider $\Lambda V (\ell_i \ell_i^\intercal - \Sigma) \beta = \pmb{0}_d$. Since $\Lambda$'s first diagonal component is non-zero, we will consider equation $e_1^\intercal \Lambda V (\ell_i \ell_i^\intercal- \Sigma) \beta = 0$. This is equivalent to $V_1 (\ell_i \ell_i^\intercal - \Sigma) \beta = 0$,  where $V_1$ is the first row of $V$, and is a non-zero vector. 

Now, we will generally consider $a_{x,y}(v):= vv^\intercal x- y$ where $x, y, v \in \RR^d$ and $a_{x,y}: B(\pmb{0}_d, 2\epsilon_1, \norm{\cdot})\to \RR^d$ function. Then, we can check that the Jacobian of $a_{x,y}(v)$ is $v x^\intercal + (v \cdot x)I$, and we can find that the determinant of the Jacobian is nonzero when $v = \epsilon_1 x $ if $x \neq \pmb{0}_d$. Therefore, the volume of $( V_1 (\ell_i \ell_i^\intercal- \Sigma))$ for $\ell_i \in B(\pmb{0}_d, c_z, \norm{\cdot})$ is greater than the volume of $(V_1 (vv^\intercal- \Sigma))$ for $v \in B(\epsilon_1 V_1^\intercal, \epsilon_2, \norm{\cdot})$, where {$c_z$ is a constant such that $B(\pmb{0}_d, c_z, \norm{\cdot}) \subseteq \text{supp}(Z)$, and $\epsilon_1,\epsilon_2>0$}  satisfy that $\epsilon_1|V_1| + \epsilon_2 < c_z$. {Here, we define $\epsilon_2>0$ sufficiently small so that the determinant of $\text{Jacobian}(vv^\intercal V_1^\intercal - \Sigma V_1^\intercal) > 0$ for $v \in B(\epsilon_1 V_1^\intercal, \epsilon_2, \norm{\cdot})$, and $v \to vv^\intercal V_1^\intercal - \Sigma V_1^\intercal$ is a one-to-one correspondence,  by inverse function theorem.} Therefore, the volume of $( V_1 (vv^\intercal- \Sigma))$ for $v \in B(\epsilon_1 V_1^\intercal, \epsilon_2, \norm{\cdot})$ can be calculated as 
\begin{align*}
   & [\text{Volume $( V_1 (vv^\intercal- \Sigma))$ for $v \in B(\epsilon_1 V_1^\intercal, \epsilon_2, \norm{\cdot})$}] \\
   &\quad= \int_{v \in B(\epsilon_1 V_1^\intercal, \epsilon_2, \norm{\cdot})} \big|\text{det(Jacobian}( V_1 (vv^\intercal- \Sigma)))\big| dv > 0.
\end{align*}

Therefore, $ \text{Volume}( V_1 (v v^\intercal- \Sigma)) $  where $v \in B(\epsilon_1 V_1^\intercal, \epsilon_2, \norm{\cdot})$ is non-zero, {so that we can find $d$ loss vectors $\{\ell_i\}_{i\in[d]}$ such that the vectors $\{V_1 (\ell_i \ell_i^\intercal- \Sigma)\}_{i\in[d]}$ are linearly independent}.
Hence, if we want to minimize  \Cref{eqn:calculation-under-local-opt}, either $A=\bO_{d \times d}$ or $\beta =\pmb{0}_d$ should hold. In both cases, \Cref{eqn:reparam-ABCD} can be re-written as 
\begin{align*}
    g(Z_t; \mathbb{A}, \beta, \mathbb{C}, \delta):= \sum_{i=1}^t \mathbb{C} \ell_i,
\end{align*}
and this is covered by the original parametrization (\Cref{eqn:original-param}) with $K^\intercal (Qc + q_c) = v_c = \pmb{0}_d$. 

\safevspace{6pt}
\textbf{{\color{blue}Step 3}. Calculating $\frac{\partial f}{\partial \mathbb{C}}$}.

Now, we optimize over $\mathbb{C}$, by minimizing the following objective: 
\begin{align*}
    f(\mathbb{C}):&=\EE  \left( \sum_{t = 1}^T\sum_{i=1}^{t-1}   \ell_t^\intercal  \mathbb{C} \ell_i+ R_{\Pi} \norm{\sum_{t=1}^T  \ell_t}\right)^2
    \\
    &= \underbrace{\EE\left( \sum_{t = 1}^T\sum_{i=1}^{t-1}   \ell_t^\intercal  \mathbb{C} \ell_i\right)^2}_{(i)} + {2 \EE\left( \left( \sum_{t = 1}^T\sum_{i=1}^{t-1}   \ell_t^\intercal  \mathbb{C} \ell_i\right) R_{\Pi} \norm{\sum_{t=1}^T  \ell_t}\right)} + \EE  \left(R_{\Pi} \norm{\sum_{t=1}^T  \ell_t}\right)^2 
    \\
    &= \frac{T(T-1)}{2} \tr\left(    \mathbb{C}^\intercal \Sigma  \mathbb{C} \Sigma \right)+ {2 \EE\left(   B \sum_{t = 1}^T\sum_{i=1}^{t-1}   \ell_t^\intercal  \mathbb{C} \ell_i\norm{\sum_{j=1}^T \ell_j}\right)} + \EE  \left(R_{\Pi} \norm{\sum_{t=1}^T  \ell_t}\right)^2.
\end{align*}
Here, $(i)$ can be calculated as follows:
\begin{align*}
\EE&\left( \sum_{t = 1}^T\sum_{i=1}^{t-1}   \ell_t^\intercal  \mathbb{C} \ell_i\right)^2 = \EE\left(  \sum_{t_1 = 1}^T\sum_{i_1=1}^{t_1-1} \sum_{t = 1}^T\sum_{i=1}^{t-1}  \ell_{i_1}^\intercal  \mathbb{C}^\intercal \ell_{t_1}     \ell_i^\intercal  \mathbb{C} \ell_i\right)
\\
&\underset{(1)}{=} \EE\left(  \sum_{t = 1}^T\sum_{i_1=1}^{i-1} \sum_{i=1}^{t-1}  \ell_{i_1}^\intercal  \mathbb{C}^\intercal \ell_{i}     \ell_i^\intercal  \mathbb{C} \ell_i\right) = \EE\left(  \sum_{t = 1}^T\sum_{i_1=1}^{i-1} \sum_{i=1}^{t-1}  \ell_{i_1}^\intercal  \mathbb{C}^\intercal \Sigma  \mathbb{C} \ell_i\right)
\\
&\underset{(2)}{=} \EE\left(  \sum_{t = 1}^T \sum_{i=1}^{t-1}  \ell_{k}^\intercal  \mathbb{C}^\intercal \Sigma  \mathbb{C} \ell_i\right) \underset{(3)}{=} \EE \tr\left(\sum_{t = 1}^T \sum_{i=1}^{t-1}    \mathbb{C}^\intercal \Sigma  \mathbb{C} \ell_i\ell_{k}^\intercal\right) = \frac{T(T-1)}{2} \tr\left(    \mathbb{C}^\intercal \Sigma  \mathbb{C} \Sigma \right),
\end{align*}
since (1) holds because if $t_1 \neq t$, we already know that $\EE \ell_t = \EE \ell_{t_1} = 0$, (2) holds due to a similar reason, and (3) comes from $\tr(AB) = \tr(BA)$. 

We calculate $\frac{\partial  f(\mathbb{C})}{\partial \mathbb{C}}$: 
\begin{align*}
    \frac{\partial  f(\mathbb{C})}{\partial \mathbb{C}} = T(T-1)\Sigma \mathbb{C}\Sigma + 2R_{\Pi}\EE \left( \norm{\sum_{j=1}^T \ell_j}\sum_{t = 1}^T\sum_{i=1}^{t-1} \ell_t\ell_i^\intercal \right).
\end{align*}
Hence, the optimal $\mathbb{C} = -\frac{2R_{\Pi}}{T(T-1)}\Sigma^{-1} \EE \left( \norm{\sum_{j=1}^T \ell_j}\sum_{t = 1}^T\sum_{i=1}^{t-1} \ell_t\ell_i^\intercal \right) \Sigma^{-1}.$

Now, we see that for the special case of $\Sigma = I$, we have $\mathbb{C} = -R_{\Pi} \EE \left( \norm{\sum_{j=1}^T \ell_j} \ell_t\ell_i^\intercal \right) $. If we calculate the $(a,b)$-coordinate of $\mathbb{C}$, we need to calculate 
\begin{align*}
    \EE_\ell  \left[\sqrt{\sum_{o= 1}^d (\sum_{s = 1}^T \ell_{so})^2} \ell_{ia}\ell_{kb}\right]. 
\end{align*}
If $a \neq b$, then since $Z$ is symmetric, the term above becomes zero.  Therefore, we only need to consider the case when $a = b$, which is $\EE_\ell  \left[\sqrt{\sum_{o= 1}^d (\sum_{s = 1}^T \ell_{so})^2} \ell_{ia}\ell_{ka}\right]$, and it will be the same value for all $a \in [d]$ since $\ell_i$'s coordinates are independent. 

Now, we calculate the scale of  $\EE_\ell \left[\sqrt{\sum_{o= 1}^d (\sum_{s = 1}^T \ell_{so})^2} \ell_{i1}\ell_{k1}\right]$. 
We have $Z := \frac{\sum_{o= 1}^{d-1} (\sum_{s = 1}^T \ell_{so})^2}{T(d-1)} \overset{a.s.}{\to} 1$ as $d \to \infty$ (by the law of large numbers) and we define $W: = \sum_{s\neq i, k} \ell_{s1} / \sqrt{T}$ which is independent of  $\ell_{i1}$ and $\ell_{k1}$. 
{\small
\begin{align*}
    \EE_\ell &\left[\sqrt{\sum_{o= 1}^d (\sum_{s = 1}^T \ell_{so})^2} \ell_{i1}\ell_{k1}\right] = \EE_{Z, W, \ell_{i1}, \ell_{k1}}\left[\sqrt{T(d-1)Z + (\sqrt{T}W + \ell_{i1} + \ell_{k1})^2} \ell_{i1}\ell_{k1}\right]
    \\
    &= \EE_{Z, W, \ell_{i1}, \ell_{k1} \geq 0}\left[\sqrt{T(d-1)Z + (\sqrt{T}W + \ell_{i1} + \ell_{k1})^2} \ell_{i1}\ell_{k1} - \sqrt{T(d-1)Z + (\sqrt{T}W + \ell_{i1} - \ell_{k1})^2} \ell_{i1}\ell_{k1} \right]
    \\
    &= \EE_{Z, W, \ell_{i1}, \ell_{k1} \geq 0}\left[\frac{4(\sqrt{T}W + \ell_{i1})\ell_{k1}}{\sqrt{T(d-1)Z + (\sqrt{T}W + \ell_{i1} + \ell_{k1})^2}  + \sqrt{T(d-1)Z + (\sqrt{T}W + \ell_{i1} - \ell_{k1})^2}} \ell_{i1}\ell_{k1}\right].
\end{align*}
}
Taking $d \to \infty$, we have   
\begin{align*}
    \frac{\sqrt{T(d-1)Z + (\sqrt{T}W + \ell_{i1} + \ell_{k1})^2}  + \sqrt{T(d-1)Z + (\sqrt{T}W + \ell_{i1} - \ell_{k1})^2}}{2\sqrt{Td}} \overset{d}{\to} 1,
\end{align*}
which further implies   
\begin{align*}
    & \sqrt{Td} \frac{4(\sqrt{T}W + \ell_{i1})\ell_{k1}}{\sqrt{T(d-1)Z + (\sqrt{T}W + \ell_{i1} + \ell_{k1})^2}  + \sqrt{T(d-1)Z + (\sqrt{T}W + \ell_{i1} - \ell_{k1})^2}} \ell_{i1}\ell_{k1} 
    \\
    &\qquad \overset{d}{\to}  \sqrt{Td} \frac{4(\sqrt{T}W + \ell_{i1})\ell_{k1}}{2\sqrt{Td}} \ell_{i1}\ell_{k1} =  2(\sqrt{T}W + \ell_{i1})\ell_{i1}\ell_{k1}
\end{align*}
as $d \to \infty$. Therefore, 
\begin{align*}
    &\lim_{d \to \infty}  \EE_{Z, W, \ell_{i1}, \ell_{k1} \geq 0}\left[ {\sqrt{Td}}\frac{4(\sqrt{T}W + \ell_{i1})\ell_{k1}}{\sqrt{T(d-1)Z + (\sqrt{T}W + \ell_{i1} + \ell_{k1})^2}  + \sqrt{T(d-1)Z + (\sqrt{T}W + \ell_{i1} - \ell_{k1})^2}} \ell_{i1}\ell_{k1}\right]
    \\
    & = \EE_{Z, W, \ell_{i1}, \ell_{k1} \geq 0}\left[2(\sqrt{T}W + \ell_{i1})\ell_{i1}\ell_{k1}\right] = \EE_{\ell_{i1}, \ell_{k1} \geq 0}\left[ \ell_{i1}^2\ell_{k1}\right] 
\end{align*}
which is a constant. The last equality came from the fact that $W$, $\ell_{i1}$, $\ell_{k1}$ are independent random variables, and expectation of $\ell_{i1}$ is zero. Therefore, the output of the single-layer linear  self-attention provides us with online gradient descent with step-size $\Theta(R_{\Pi}/\sqrt{Td})$. In the online learning literature, we usually set the gradient step size as  $\Theta(R_{\Pi}/\sqrt{Td})$ \citep[Theorem 3.1]{hazan2016introduction}, which is consistent with the result above. 
\end{proof}

\subsection{Empirical Validation of \Cref{thm:single-softmax-layer-regret-minimizer} and \Cref{thm:single-layer-regret-minimizer}}
\label{appendix:empirical-validation}
We now provide empirical validations for \Cref{thm:single-softmax-layer-regret-minimizer} and \Cref{thm:single-layer-regret-minimizer}. We provide the training details and the results as follows. 

\subsubsection{Empirical Validation of \Cref{thm:single-softmax-layer-regret-minimizer}}

Our model architecture is defined as follows: the number of layers \(T\) is set to 30 and the dimensionality \(d\) to 32, with the loss vector $\ell_i$'s distribution \(Z\) following a standard normal distribution \(\mathcal{N}(0, 1)\). During training, we conducted 40,000 epochs with a batch size of 512. We employed the Adam optimizer, setting the learning rate to 0.001.  We initialized the value, query, and key vectors (\(v_c, q_c, k_c\)) as zero vectors.

Our empirical analysis aims to demonstrate that the optimized model inherently emulates online gradient descent. To illustrate this, we will focus on two key convergence properties: \(K^\intercal Q\) approaching the zero matrix \(\pmb{O}_{d \times d}\) and \(V\) converging to \(a\pmb{1}_d\pmb{1}^\intercal_d  + bI_{d \times d}\), where \(a\) and \(b\) are constants in \(\mathbb{R}\). The conditions \(K^\intercal Q = \pmb{O}_{d \times d}\) and \(V = a\pmb{1}_d\pmb{1}^\intercal_d  + bI_{d \times d}\) imply that the function \(g(Z_t; V, Q, K) = \sum_{i=1}^{t} (b-a) \ell_i\), effectively emulating the process of an online gradient descent method. We repeated the experiments 10 times. For verifying  \(K^\intercal Q = \pmb{O}_{d \times d}\), we will measure Frobenius norm ($\norm{\cdot}_{F}$) of $K^\intercal Q$. Also for measuring the closeness of $V$ and $a\pmb{1}_d\pmb{1}^\intercal_d  + bI_{d \times d}$, we will measure $\min_{a, b \in \RR} \norm{V - (a\pmb{1}_d\pmb{1}^\intercal_d  + bI_{d \times d})}_{F}/b$. The results are demonstrated in the first plot of \Cref{fig:emp-validation}. 

\subsubsection{Empirical Validation of \Cref{thm:single-layer-regret-minimizer}}

We now focus on two key convergence properties: \(K^\intercal (Q \pmb{1}_d + q_c) \) approaching the zero vector \(\pmb{0}_{d}\) and \(V\) converging to \(a\pmb{1}_d\pmb{1}^\intercal_d  + bI_{d \times d}\), where \(a\) and \(b\) are constants in \(\mathbb{R}\). The conditions \(K^\intercal (Q \pmb{1}_d + q_c) = \pmb{0}_{d}\) and \(V = a\pmb{1}_d\pmb{1}^\intercal_d  + bI_{d \times d}\) imply that the function \(g(Z_t; V, Q, K) = \sum_{i=1}^{t} (b-a) \ell_i\), effectively emulating the process of an online gradient descent method. We repeated 10 times. For verifying \(K^\intercal (Q \pmb{1}_d + q_c) = \pmb{0}_{d}\), we will measure the 2-norm of $K^\intercal (Q \pmb{1}_d + q_c)$. Also for measuring the closeness of $V$ and $a\pmb{1}_d\pmb{1}^\intercal_d  + bI_{d \times d}$, we will measure $\min_{a, b \in \RR} \norm{V - (a\pmb{1}_d\pmb{1}^\intercal_d  + bI_{d \times d})}_{F}/b$. The results are demonstrated in the second plot of \Cref{fig:emp-validation}. 

\begin{figure}[ht]
    \centering
    \begin{adjustwidth}{0cm}{-2cm}

\begin{subfigure}[b]{0.33\textwidth}
        \includegraphics[height =  4.2cm]{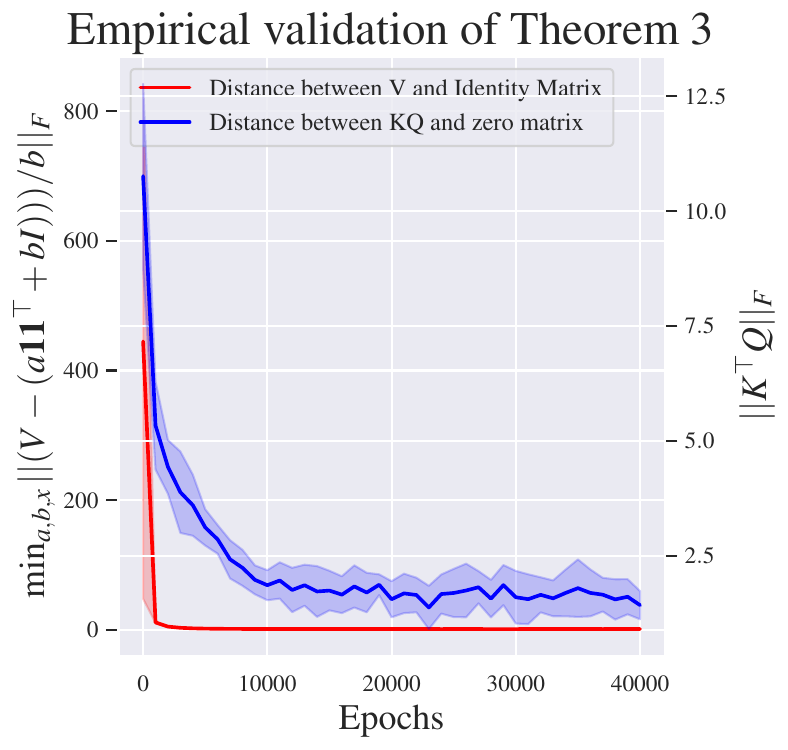}
    \end{subfigure}
     \begin{subfigure}[b]{0.33\textwidth}
        \includegraphics[height=4.2cm]{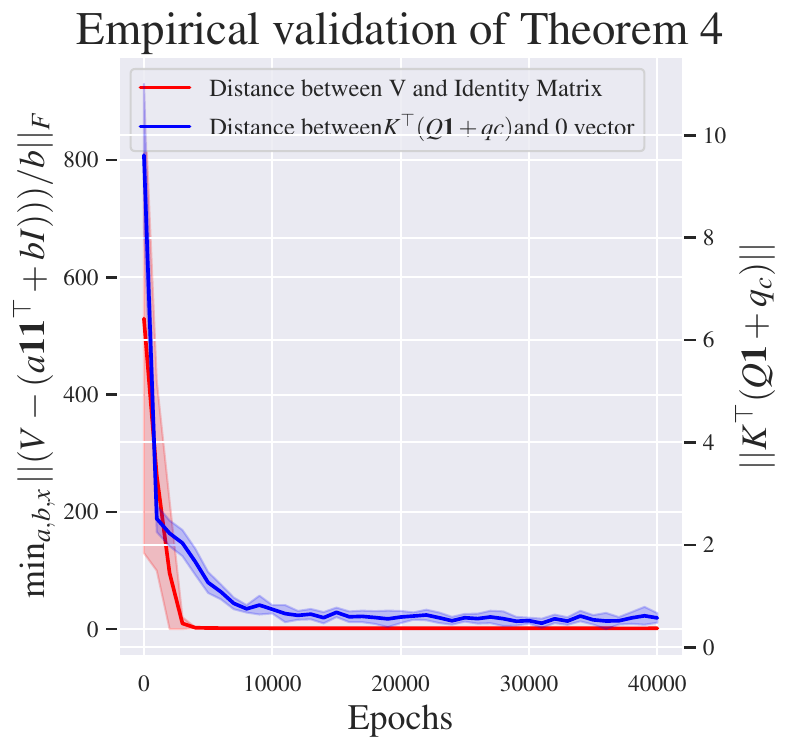} 
    \end{subfigure}
        \begin{subfigure}[b]{0.33\textwidth} 
        \includegraphics[height=4.2cm]{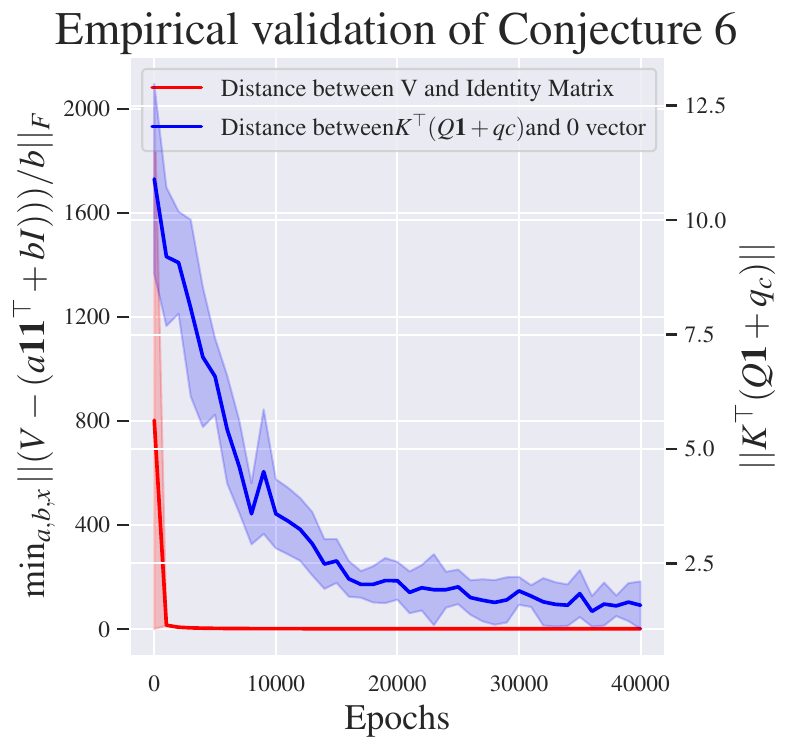}
    \end{subfigure}
\end{adjustwidth}
    \caption{Empirical validation of \Cref{thm:single-softmax-layer-regret-minimizer} (top), \Cref{thm:single-layer-regret-minimizer} (middle), and \Cref{conj:local} (bottom). The observed convergence in \Cref{thm:single-softmax-layer-regret-minimizer} and \Cref{conj:local}'s result suggests that configuration in \Cref{thm:single-softmax-layer-regret-minimizer} and \Cref{conj:local} are not only the local optimal point, but it has the potential as being  the global optimizer.}
    \label{fig:emp-validation}
    \safevspace{12pt}
\end{figure} 

\subsection{Discussions   on the Production of FTRL with Entropy Regularization}\label{appendix:conj6}
{Now, we will consider projecting a single-layer linear self-attention model into a constrained domain such as a simplex, which is more amenable to  the Experts Problem setting.}  
{To this end,} we consider the following {parameterization} by adding an additional \emph{non-linear}   structure for the single-layer linear self-attention: 
\begin{align}
    g(Z_t; V, K, Q, v_c, k_c, q_c) 
    = \texttt{Operator} \left(\sum_{i=1}^{t} (V\ell_i + v_c) ((K\ell_i + k_c))^\intercal \cdot (Qc + q_c))\right), \label{eqn:constrained-single-linear-transformer}
\end{align} 
where the $\texttt{Operator}$ denotes projection to the convex set.

\begin{restatable}{conjecture}{conj}
\label{conj:local}
Assume $\Sigma = I$. Then, the configuration that $K^\intercal (Qc + q_c) = v_c = \pmb{0}_d$ and $V =  \tilde{\Omega}\left(-\frac{1}{\sqrt{nd}}\right)I_{d\times d}$ is a first-order stationary point of \Cref{eqn:regret-loss} with $N=1$ and $h(x) = x^2$ when $\text{LLM}_\theta$ is parameterized with \Cref{eqn:constrained-single-linear-transformer}, $\texttt{Operator} = \texttt{Softmax}$, and $\Pi = \Delta(\cA)$. This configuration performs FTRL with an entropy regularizer which is a no-regret algorithm. 
\end{restatable}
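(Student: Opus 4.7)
\textbf{Proof proposal for Conjecture~\ref{conj:local}.}

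The plan is to mirror the strategy used in the proof of \Cref{thm:single-layer-regret-minimizer}, but now carrying the extra $\texttt{Softmax}$ projection through every step. First, I would reparametrize: exactly as in \Cref{appendix:thm55}, embed the single-layer self-attention family inside the larger class
\begin{align*}
g(Z_t;\mathbb{A},\beta,\mathbb{C},\delta) \;=\; \texttt{Softmax}\Bigl(\sum_{i=1}^{t}\bigl(\mathbb{A}\ell_i\ell_i^\intercal\beta + \mathbb{C}\ell_i + \delta\bigr)\Bigr),
\end{align*}
with the identification $\mathbb{A}=V,\ \beta=K^\intercal(Qc+q_c),\ \mathbb{C}=Vk_c^\intercal(Qc+q_c)+v_c(Qc+q_c)^\intercal K,\ \delta=v_ck_c^\intercal(Qc+q_c)$. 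A stationary point in the $(\mathbb{A},\beta,\mathbb{C},\delta)$ parameters pulls back to a stationary point in the original parameters via the chain rule, so it suffices to locate the conjectured critical point inside the enlarged class.

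Second, I would verify that $\partial f/\partial \beta$ and $\partial f/\partial \mathbb{A}$ both vanish at the candidate $\beta=\mathbf{0}_d$, $v_c=\mathbf{0}_d$, $\delta=\mathbf{0}_d$, $\mathbb{C}=cI$. The key observation is that any derivative with respect to $\beta$ or $\mathbb{A}$ evaluated at $\beta=\mathbf{0}_d$ pulls out a factor linear in $\mathbb{A}\ell_i\ell_i^\intercal$, and when this factor is multiplied against the regret together with the $\texttt{Softmax}$ Jacobian evaluated at a symmetric argument, the resulting integrand is an \emph{odd} function of the loss sequence under the sign-flip $\ell_i\mapsto -\ell_i$. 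Since $Z$ is symmetric about zero, the expectation vanishes. This is the same symmetry cancellation used repeatedly in \Cref{appendix:thm54,appendix:thm55}, with the additional check that the $\texttt{Softmax}$ Jacobian at a symmetric point does not destroy the parity argument.

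Third, I would compute $\partial f/\partial \mathbb{C}$ at the restricted configuration $\beta=\mathbf{0}_d$, $\delta=\mathbf{0}_d$, reducing the loss to
\begin{align*}
f(\mathbb{C}) \;=\; \mathbb{E}\Bigl(\sum_{t=1}^{T}\Bigl\langle \ell_t,\;\texttt{Softmax}\Bigl(\mathbb{C}\sum_{i=1}^{t-1}\ell_i\Bigr)\Bigr\rangle + R_\Pi\Bigl\|\sum_{t=1}^{T}\ell_t\Bigr\|\Bigr)^2.
\end{align*}
Using $\Sigma=I$ and the exchangeability of coordinates of $\ell_i$, a direct symmetry argument shows that any stationary point must satisfy $\mathbb{C}=cI$ for some scalar $c\in\mathbb{R}$ (off-diagonal and unequal-diagonal entries are forced to zero by permuting coordinate labels). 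Setting $\partial f/\partial c=0$ gives a scalar equation whose solution balances the linear-in-$c$ term (coming from the regret interaction with $\texttt{Softmax}'$ at $0$) against the constant term $R_\Pi\mathbb{E}\|\sum_t\ell_t\|$. Standard Gaussian-type concentration for $\sum_t\ell_t$ under $\Sigma=I$ yields $\mathbb{E}\|\sum_t\ell_t\|=\Theta(\sqrt{Td})$, and the Jacobian factor contributes $\Theta(1/d)$ per softmax output, so solving the scalar equation produces $c=\tilde{\Omega}(-1/\sqrt{Td})$, matching the conjectured scale.

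Finally, interpreting the configuration as FTRL with entropy regularizer is immediate: with $\beta=0$, $\delta=0$, and $\mathbb{C}=cI$, the output reduces to $\pi_t=\texttt{Softmax}\bigl(c\sum_{i<t}\ell_i\bigr)$, which is precisely the Hedge / multiplicative-weights update, i.e.\ FTRL with negative-entropy regularization at learning rate $-c$, enjoying $\cO(\sqrt{T\log d})$ regret by \Cref{appendix:FTRL}. The main obstacle I anticipate is Step~3: unlike in \Cref{thm:single-layer-regret-minimizer}, the softmax nonlinearity prevents closed-form evaluation of the expectation defining $\partial f/\partial c$, so pinning down the exact scale (and confirming it is $\tilde{\Omega}(1/\sqrt{Td})$ rather than any other polynomial order in $T,d$) will require a careful asymptotic expansion of $\texttt{Softmax}'$ around the uniform distribution combined with concentration bounds on $\sum_t\ell_t$—this is essentially the same subtle scaling analysis that appears in the final part of the proof of \Cref{thm:single-layer-regret-minimizer} in \Cref{appendix:thm55}, which is why I expect the result to remain a conjecture until those asymptotics are made fully rigorous.
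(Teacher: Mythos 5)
Your overall strategy — reparametrize as in \Cref{appendix:thm55}, compute the partial derivatives with respect to the structural parameters, exploit the symmetry of the distribution of $\ell_i$, and then back out the learning-rate scale — is broadly the one the paper follows in \Cref{appendix:conj6}. However, there is a genuine gap in the two middle steps, which is precisely why the statement is posed as a conjecture rather than a theorem.

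First, your parity argument for $\partial f/\partial\beta$ (which, in the paper's notation, is $\partial f/\partial a$ in Step~4 of \Cref{appendix:conj6}) does not go through. You claim that after differentiating, the integrand is odd under the simultaneous sign-flip $\ell_i\mapsto -\ell_i$, but the softmax Jacobian $\mathrm{diag}(p)-pp^\intercal$ with $p=\texttt{Softmax}(c\sum_{i<t}\ell_i)$ is \emph{not} an even function of its argument (unlike the linear case, where there is no Jacobian and the cancellation in \Cref{appendix:thm55} works by polynomial parity alone). For $d=2$ there is the fortuitous identity $\texttt{Softmax}(-x)=\mathrm{flip}(\texttt{Softmax}(x))$, but for general $d$ no such relation holds, so the expectation does not vanish by symmetry. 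In fact, the paper's Step~4 shows only that $\partial f/\partial a = \tilde c\,\pmb{1}_d$ for some scalar $\tilde c$ depending on the product $\beta k$ — it does not show $\tilde c=0$ — and it resorts to the Monte-Carlo evidence in \Cref{fig:nd} precisely to argue that the zero of $\tilde c$ coincides with the zero of the other nontrivial derivative. The same concern applies to $\partial f/\partial \mathbb{A}$: at $\beta=\pmb{0}_d$ it vanishes because the multiplicative factor $\ell_i\ell_i^\intercal\beta$ is literally zero, not by parity, but this is a minor point.

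Second, your Step~3 is logically incomplete. Restricting to $\mathbb{C}=cI$ and solving $\partial f/\partial c = 0$ only enforces $\big\langle \partial f/\partial\mathbb{C},\, I\big\rangle = 0$, i.e.\ that the trace of the gradient vanishes. First-order stationarity requires the full matrix equation $\partial f/\partial\mathbb{C}=\pmb{O}_{d\times d}$. Permutation symmetry of the problem (plus $\Sigma = I$) shows, as the paper observes in Step~2, that the gradient has the structure $\tilde a\,\pmb{1}_d\pmb{1}_d^\intercal + \tilde b\, I$, so there are two scalar equations to satisfy — not one — and the paper leaves open (resolving numerically in \Cref{fig:nd}) whether a single value of $\beta k$ simultaneously zeroes both $\tilde a$, $\tilde b$, \emph{and} the $\tilde c$ from $\partial f/\partial a$. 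Your proposal silently assumes these three equations are consistent. Until that consistency is established analytically, the statement stays a conjecture, which is what the paper concludes.
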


{We provide an {idea} for {proving} the conjecture, together with its numerical validation. Also, we have observed in \Cref{fig:emp-validation} that \Cref{thm:single-softmax-layer-regret-minimizer} and \Cref{conj:local} might also be a global optimizer, as training results have provided  the configuration that \Cref{thm:single-softmax-layer-regret-minimizer} and \Cref{conj:local} have suggested.}

To be specific, we will consider 
{\small
\begin{align*}
    f(V, a, \beta, v_c ) &= \EE \left(\sum_{t=1}^T \sum_{s=1}^d \ell_{ts} \frac{\exp \left( e_s^\intercal \sum_{j=1}^{t-1} (V\ell_j\ell_j^\intercal a + (\beta V + v_ca^\intercal)\ell_j + v_c \beta) \right)}{\sum_{y = 1}^d \exp \left( e_y^\intercal \sum_{j=1}^{t-1} (V\ell_j\ell_j^\intercal a + (\beta V + v_ca^\intercal)\ell_j + v_c \beta) \right)} - \min_{s} \sum_{t=1}^T \ell_{ts}\right)^2
\end{align*}
}
and will try to prove that $a = \pmb{0}_d, v_c = v\pmb{1}_d, V = kI$ is a first-order stationary point.

\safevspace{6pt}
\textbf{{\color{blue}Step 1}. Calculating $\frac{\partial f}{\partial v_c}$}.

We use the following formula: for $x \in [d]$ and $t \geq 2$, we have 
{\small
\begin{align*}
    &\frac{\partial }{\partial  v_{cx}} \exp \left( e_y^\intercal \sum_{i=1}^{t} (V\ell_i\ell_i^\intercal a + (\beta V + v_ca^\intercal)\ell_i + v_c \beta) \right) \bigg|_{a = \pmb{0}_d, v_c = v\pmb{1}_d, V = kI}
    \\
    &= \exp \left( e_y^\intercal \sum_{i=1}^{t} (V\ell_i\ell_i^\intercal a + (\beta V + v_ca^\intercal)\ell_i + v_c \beta) \right)  \frac{\partial }{\partial v_{cx}}\left( e_y^\intercal \sum_{i=1}^{t} (V\ell_i\ell_i^\intercal a + (\beta V + v_ca^\intercal)\ell_i + v_c \beta) \right) \bigg|_{a = \pmb{0}_d, v_c = v\pmb{1}_d, V = kI}
    \\
    &= \exp \left( e_y^\intercal \sum_{i=1}^{t} (V\ell_i\ell_i^\intercal a + (\beta V + v_ca^\intercal)\ell_i + v_c \beta) \right)  \sum_{i=1}^{t} \left( a^\intercal \ell_i \ell_i^\intercal e_{x} + \beta \right)  \bigg|_{a = \pmb{0}_d, v_c = v\pmb{1}_d, V = kI}
    \\
    &= t\beta\exp(v\beta)\exp(\beta k \sum_{i = 1}^t \ell_{iy}), 
\end{align*}
}
and for $t = 1$, $\frac{\partial }{\partial  v_{cx}} \exp \left( e_y^\intercal \sum_{i=1}^{t} (V\ell_i\ell_i^\intercal a + (\beta V + v_ca^\intercal)\ell_i + v_c \beta) \right) \bigg|_{a = \pmb{0}_d, v_c = v\pmb{1}_d, V = kI} = 0$, so we can use the same formula with $t\geq 2$. Thus, we have
{\small
\begin{align*}
    &\frac{\partial }{\partial  v_{cx}} \left(\sum_{t=1}^T \sum_{s=1}^d \ell_{ts} \frac{\exp \left( e_s^\intercal \sum_{j=1}^{t-1} (V\ell_j\ell_j^\intercal a + (\beta V + v_ca^\intercal)\ell_j + v_c \beta) \right)}{\sum_{y = 1}^d \exp \left( e_y^\intercal \sum_{j=1}^{t-1} (V\ell_j\ell_j^\intercal a + (\beta V + v_ca^\intercal)\ell_j + v_c \beta) \right)} - \min_{s} \sum_{t=1}^T \ell_{ts}\right) \bigg|_{a = \pmb{0}_d, v_c = v\pmb{1}_d, V = kI}
    \\
    &= \beta \exp(v\beta)
    \\
    &\qquad \sum_{t=1}^T t\sum_{s=1}^d \ell_{ts} \frac{\sum_{y=1}^d \exp\left(\sum_{j=1}^{t-1} \beta k \ell_{jy} \right) \exp\left(\sum_{j=1}^{t-1} \beta k \ell_{js} \right)   -  \sum_{y=1}^d  \exp \left(  \sum_{j=1}^{t-1} \beta k \ell_{js} \right) \exp\left(\sum_{j=1}^{t-1} \beta k \ell_{jy} \right)}{\left(\sum_{y = 1}^d \exp \left( e_y^\intercal \sum_{j=1}^{t-1} \beta V\ell_j \right)\right)^2} 
    \\
    &= 0.
\end{align*}
}
Therefore, 
\begin{align*}
    &\frac{\partial  f(V, a, \beta, v_c )}{\partial v_{cx}} \bigg|_{a = \pmb{0}_d, v_c = v\pmb{1}_d, V = kI}
    \\
    &= \EE \Bigg[ \left(\sum_{t=1}^T \sum_{s=1}^d \ell_{ts} \frac{\exp \left( e_s^\intercal \sum_{j=1}^{t-1} (V\ell_j\ell_j^\intercal a + (\beta V + v_ca^\intercal)\ell_j + v_c \beta) \right)}{\sum_{y = 1}^d \exp \left( e_y^\intercal \sum_{j=1}^{t-1} (V\ell_j\ell_j^\intercal a + (\beta V + v_ca^\intercal)\ell_j + v_c \beta) \right)} - \min_{s} \sum_{t=1}^T \ell_{ts}\right) 
    \\
    &\qquad \frac{\partial }{\partial  v_{cx}} \left(\sum_{t=1}^T \sum_{s=1}^d \ell_{ts} \frac{\exp \left( e_s^\intercal \sum_{j=1}^{t-1} (V\ell_j\ell_j^\intercal a + (\beta V + v_ca^\intercal)\ell_j + v_c \beta) \right)}{\sum_{y = 1}^d \exp \left( e_y^\intercal \sum_{j=1}^{t-1} (V\ell_j\ell_j^\intercal a + (\beta V + v_ca^\intercal)\ell_j + v_c \beta) \right)} - \min_{s} \sum_{t=1}^T \ell_{ts}\right) \Bigg]\bigg|_{a = \pmb{0}_d, v_c = v\pmb{1}_d, V = kI}
    \\
    &= 0. 
\end{align*}

\safevspace{6pt}
\textbf{{\color{blue}Step 2}. Calculating $\frac{\partial f}{\partial V}$}.

The following formula will be used for calculating $\frac{\partial f}{\partial V}\bigg|_{a = \pmb{0}_d, v_c = v\pmb{1}_d, V = kI}$:  for $r, c \in [d]$, we have 
\begin{align*}
    &\frac{\partial }{\partial V_{rc}} \exp \left( e_y^\intercal \sum_{i=1}^{t} (V\ell_i\ell_i^\intercal a + (\beta V + v_ca^\intercal)\ell_i + v_c \beta) \right) \bigg|_{a = \pmb{0}_d, v_c = v\pmb{1}_d, V = kI}
    \\
    &= \exp \left( e_y^\intercal \sum_{i=1}^{t} (V\ell_i\ell_i^\intercal a + (\beta V + v_ca^\intercal)\ell_i + v_c \beta) \right)  \frac{\partial }{\partial V_{rc}}\left( e_y^\intercal \sum_{i=1}^{t} (V\ell_i\ell_i^\intercal a + (\beta V + v_ca^\intercal)\ell_i + v_c \beta) \right) \bigg|_{a = \pmb{0}_d, v_c = v\pmb{1}_d, V = kI}
    \\
    &= \exp \left( \sum_{i=1}^{t}  k \beta  \ell_{iy} + v \beta \right)  \sum_{i=1}^{t}  \beta \pmb{1}(y=r) \ell_{ic}.
\end{align*}

Therefore, 
{\small
\allowdisplaybreaks

\begin{align*}
    &\frac{\partial  f(V, a, \beta, v_c )}{\partial V_{rc}} \bigg|_{a = \pmb{0}_d, v_c = v\pmb{1}_d, V = kI}
    \\
    &= \EE \Bigg[ \left(\sum_{t=1}^T \sum_{s=1}^d \ell_{ts} \frac{\exp \left( e_s^\intercal \sum_{j=1}^{t-1} (V\ell_j\ell_j^\intercal a + (\beta V + v_ca^\intercal)\ell_j + v_c \beta) \right)}{\sum_{y = 1}^d \exp \left( e_y^\intercal \sum_{j=1}^{t-1} (V\ell_j\ell_j^\intercal a + (\beta V + v_ca^\intercal)\ell_j + v_c \beta) \right)} - \min_{s} \sum_{t=1}^T \ell_{ts}\right) 
    \\
    &\qquad \frac{\partial }{\partial V_{rc}} \left(\sum_{t=1}^T \sum_{s=1}^d \ell_{ts} \frac{\exp \left( e_s^\intercal \sum_{j=1}^{t-1} (V\ell_j\ell_j^\intercal a + (\beta V + v_ca^\intercal)\ell_j + v_c \beta) \right)}{\sum_{y = 1}^d \exp \left( e_y^\intercal \sum_{j=1}^{t-1} (V\ell_j\ell_j^\intercal a + (\beta V + v_ca^\intercal)\ell_j + v_c \beta) \right)} - \min_{s} \sum_{t=1}^T \ell_{ts}\right) \Bigg]\bigg|_{a = \pmb{0}_d, v_c = v\pmb{1}_d, V = kI}
    \\
    &=  \EE \Bigg[ \left(\sum_{t=1}^T \sum_{s=1}^d \ell_{ts} \frac{\exp \left( \sum_{j=1}^{t-1} \beta k \ell_{js} + v \beta \right)}{\sum_{y = 1}^d \exp \left( \sum_{j=1}^{t-1}  \beta V \ell_{jy} + v \beta \right)} - \min_{s} \sum_{t=1}^T \ell_{ts}\right) 
    \\
    &\qquad \Biggl(\sum_{t=1}^T \sum_{s=1}^d \ell_{ts} \frac{ \sum_{j=1}^{t-1} \beta \pmb{1}(s =r) \ell_{jc} \exp \left( \sum_{j=1}^{t-1}  \beta k \ell_{js} + v \beta \right) \sum_{y = 1}^d \exp \left( \sum_{j=1}^{t-1}  \beta k \ell_{jy} + v \beta \right)}{\left(\sum_{y = 1}^d \exp \left( \sum_{j=1}^{t-1}  \beta k \ell_{jy} + v \beta \right)\right)^2}
    \\
    &\qquad \qquad  - \sum_{t=1}^T \sum_{s=1}^d \ell_{ts} \frac{ \exp \left( \sum_{j=1}^{t-1}  \beta k \ell_{js} + v \beta \right) \sum_{y = 1}^d \left(\sum_{j=1}^{t-1}  \beta \pmb{1}(y = r) \ell_{jc} \exp \left( \sum_{j=1}^{t-1}  \beta k \ell_{jy} + v \beta \right)\right)}{\left(\sum_{y = 1}^d \exp \left( \sum_{j=1}^{t-1}  \beta k \ell_{jy} + v \beta \right)\right)^2}\Biggr) \Bigg]
\\
    &=  \beta \EE \Bigg[ \left(\sum_{t=1}^T \sum_{s=1}^d \ell_{ts} \frac{\exp \left( \sum_{j=1}^{t-1} \beta k \ell_{js} \right)}{\sum_{y = 1}^d \exp \left( \sum_{j=1}^{t-1}  \beta V \ell_{jy}  \right)} - \min_{s} \sum_{t=1}^T \ell_{ts}\right) 
    \\
    &\qquad\qquad \qquad   \Biggl( \underbrace{\frac{\sum_{t=1}^T  \sum_{j=1}^{t-1}\sum_{y = 1}^d \ell_{tr} \ell_{jc} \exp \left(\beta k\sum_{j=1}^{t-1}\ell_{jr} \right)  \exp \left(\beta k\sum_{j=1}^{t-1}\ell_{jy} \right)}{\left(\sum_{y = 1}^d \exp \left(\beta k\sum_{j=1}^{t-1}\ell_{jy} \right)\right)^2}}_{(i)}
    \\
    &\qquad \qquad \qquad \qquad -  \underbrace{\frac{\sum_{t=1}^T \sum_{j=1}^{t-1} \sum_{y=1}^d     \ell_{ty}\ell_{jc}  \exp \left(\beta k\sum_{j=1}^{t-1}\ell_{jr} \right)\exp \left(\beta k\sum_{j=1}^{t-1}\ell_{jy} \right) }{\left(\sum_{y = 1}^d \exp \left(\beta k\sum_{j=1}^{t-1}\ell_{jy} \right)\right)^2}}_{(ii)}\Biggr)  \Bigg].
\end{align*}
}
We can observe the followings: 1) if $r_1 \neq c_1$ and $r_2 \neq c_2$, $\frac{\partial f}{\partial V_{r_1c_1}}\bigg|_{a = \pmb{0}_d, v_c = v\pmb{1}_d, V = kI} = \frac{\partial f}{\partial V_{r_2c_2}}\bigg|_{a = \pmb{0}_d, v_c = v\pmb{1}_d, V = kI}$ holds, and 2) $\frac{\partial f}{\partial V_{r_1r_1}} \bigg|_{a = \pmb{0}_d, v_c = v\pmb{1}_d, V = kI}= \frac{\partial f}{\partial V_{r_2r_2}}\bigg|_{a = \pmb{0}_d, v_c = v\pmb{1}_d, V = kI}$.

\safevspace{6pt}
\textbf{{\color{blue}Step 3}. Calculating $\frac{\partial f}{\partial \beta}$}. 

The following formula will be used for calculating $\frac{\partial f}{\partial \beta}\bigg|_{a = \pmb{0}_d, v_c = v\pmb{1}_d, V = kI}$: 
\begin{align*}
    \frac{\partial }{\partial \beta} &\exp \left( e_y^\intercal \sum_{i=1}^{t} (V\ell_i\ell_i^\intercal a + (\beta V + v_ca^\intercal)\ell_i + v_c \beta) \right) \bigg|_{a = \pmb{0}_d, v_c = v\pmb{1}_d, V = kI}
    \\
    &= \exp \left( e_y^\intercal \sum_{i=1}^{t} (V\ell_i\ell_i^\intercal a + (\beta V + v_ca^\intercal)\ell_i + v_c \beta) \right)  \frac{\partial }{\partial \beta}\left( e_y^\intercal \sum_{i=1}^{t} (V\ell_i\ell_i^\intercal a + (\beta V + v_ca^\intercal)\ell_i + v_c \beta) \right) \bigg|_{a = \pmb{0}_d, v_c = v\pmb{1}_d, V = kI}
    \\
    &= tv\beta \exp \left( \sum_{i=1}^{t}  k \beta  \ell_{iy} + v \beta \right) .
\end{align*}
Further, we have 
{\small
\begin{align*}
    &\frac{\partial }{\partial  \beta} \left(\sum_{t=1}^T \sum_{s=1}^d \ell_{ts} \frac{\exp \left( e_s^\intercal \sum_{j=1}^{t-1} (V\ell_j\ell_j^\intercal a + (\beta V + v_ca^\intercal)\ell_j + v_c \beta) \right)}{\sum_{y = 1}^d \exp \left( e_y^\intercal \sum_{j=1}^{t-1} (V\ell_j\ell_j^\intercal a + (\beta V + v_ca^\intercal)\ell_j + v_c \beta) \right)} - \min_{s} \sum_{t=1}^T \ell_{ts}\right) \bigg|_{a = \pmb{0}_d, v_c = v\pmb{1}_d, V = kI}
    \\
    &= v\beta \exp(v\beta)
    \\
    &\qquad \sum_{t=1}^T t\sum_{s=1}^d \ell_{ts} \frac{\sum_{y=1}^d \exp\left(\sum_{j=1}^{t-1} \beta k \ell_{jy} \right) \exp\left(\sum_{j=1}^{t-1} \beta k \ell_{js} \right)   -  \sum_{y=1}^d  \exp \left(  \sum_{j=1}^{t-1} \beta k \ell_{js} \right) \exp\left(\sum_{j=1}^{t-1} \beta k \ell_{jy} \right)}{\left(\sum_{y = 1}^d \exp \left( e_y^\intercal \sum_{j=1}^{t-1} \beta V\ell_j \right)\right)^2} 
    \\
    &= 0.
\end{align*}
}

\safevspace{6pt}
\textbf{{\color{blue}Step 4}. Calculating $\frac{\partial f}{\partial a}$}. 

Note that 
\begin{align*}
    &\frac{\partial }{\partial a_x} \exp \left( e_y^\intercal \sum_{i=1}^{t} (V\ell_i\ell_i^\intercal a + (\beta V + v_ca^\intercal)\ell_i + v_c \beta) \right) \bigg|_{a = \pmb{0}_d, v_c = v\pmb{1}_d, V = kI}
    \\
    &= \exp \left( e_y^\intercal \sum_{i=1}^{t} (V\ell_i\ell_i^\intercal a + (\beta V + v_ca^\intercal)\ell_i + v_c \beta) \right)  \frac{\partial }{\partial a_x}\left( e_y^\intercal \sum_{i=1}^{t} (V\ell_i\ell_i^\intercal a + (\beta V + v_ca^\intercal)\ell_i + v_c \beta) \right) \bigg|_{a = \pmb{0}_d, v_c = v\pmb{1}_d, V = kI}
    \\
    &= \exp \left( e_y^\intercal \sum_{i=1}^{t} (V\ell_i\ell_i^\intercal a + (\beta V + v_ca^\intercal)\ell_i + v_c \beta) \right)  \sum_{i=1}^{t} \left( e_y^\intercal  V\ell_i\ell_i^\intercal e_x  +  e_y^\intercal v_c\ell_i^\intercal e_x \right)  \bigg|_{a = \pmb{0}_d, v_c = v\pmb{1}_d, V = kI}
    \\
    &= \exp \left( \sum_{i=1}^{t}  \beta k \ell_{iy} + v\beta \right)  \sum_{i=1}^{t} (k \ell_{iy}\ell_{ix} +   v \ell_{ix}).
\end{align*}

Therefore, 
\begin{align*}
    &\frac{\partial  f(V, a, \beta, v_c )}{\partial a_x} \bigg|_{a = \pmb{0}_d, v_c = v\pmb{1}_d, V = kI}
    \\
    &= \EE \Bigg[ \left(\sum_{t=1}^T \sum_{s=1}^d \ell_{ts} \frac{\exp \left( e_s^\intercal \sum_{j=1}^{t-1} (V\ell_j\ell_j^\intercal a + (\beta V + v_ca^\intercal)\ell_j + v_c \beta) \right)}{\sum_{y = 1}^d \exp \left( e_y^\intercal \sum_{j=1}^{t-1} (V\ell_j\ell_j^\intercal a + (\beta V + v_ca^\intercal)\ell_j + v_c \beta) \right)} - \min_{s} \sum_{t=1}^T \ell_{ts}\right) 
    \\
    &\qquad \frac{\partial }{\partial a_x} \left(\sum_{t=1}^T \sum_{s=1}^d \ell_{ts} \frac{\exp \left( e_s^\intercal \sum_{j=1}^{t-1} (V\ell_j\ell_j^\intercal a + (\beta V + v_ca^\intercal)\ell_j + v_c \beta) \right)}{\sum_{y = 1}^d \exp \left( e_y^\intercal \sum_{j=1}^{t-1} (V\ell_j\ell_j^\intercal a + (\beta V + v_ca^\intercal)\ell_j + v_c \beta) \right)} - \min_{s} \sum_{t=1}^T \ell_{ts}\right) \Bigg]\bigg|_{a = \pmb{0}_d, v_c = v\pmb{1}_d, V = kI}
    \\
    &=  \EE \Bigg[ \left(\sum_{t=1}^T \sum_{s=1}^d \ell_{ts} \frac{\exp \left( \sum_{j=1}^{t-1} \beta k \ell_{js} \right)}{\sum_{y = 1}^d \exp \left( \sum_{j=1}^{t-1}  \beta k \ell_{jy}  \right)} - \min_{s} \sum_{t=1}^T \ell_{ts}\right) 
    \\
    &\qquad \Biggl(\sum_{t=1}^T \sum_{s=1}^d \ell_{ts} \frac{ \sum_{j=1}^{t-1} (k \ell_{js} \ell_{jx} +v \ell_{jx}) \exp \left( \sum_{j=1}^{t-1}  \beta k \ell_{js} \right) \sum_{y = 1}^d \exp \left( \sum_{j=1}^{t-1}  \beta k \ell_{jy}  \right)}{\left(\sum_{y = 1}^d \exp \left( \sum_{j=1}^{t-1}  \beta k \ell_{jy}  \right)\right)^2}
    \\
    &\qquad \qquad  - \sum_{t=1}^T \sum_{s=1}^d \ell_{ts} \frac{ \exp \left(\sum_{j=1}^{t-1}  \beta k \ell_{js} \right) \sum_{y = 1}^d \left(\sum_{j=1}^{t-1} (k \ell_{jy} \ell_{jx} + v \ell_{jx}) \exp \left( \sum_{j=1}^{t-1}  \beta k \ell_{jy}  \right)\right)}{\left(\sum_{y = 1}^d \exp \left( \sum_{j=1}^{t-1}  \beta k \ell_{jy}  \right)\right)^2}\Biggr) \Bigg]
    \\
    &= \EE \Bigg[ k \left(\sum_{t=1}^T \sum_{s=1}^d \ell_{ts} \frac{\exp \left( \sum_{j=1}^{t-1} \beta k \ell_{js} \right)}{\sum_{y = 1}^d \exp \left( \sum_{j=1}^{t-1}  \beta k \ell_{jy}  \right)} - \min_{s} \sum_{t=1}^T \ell_{ts}\right) 
    \\
    &\qquad \Biggl(\sum_{t=1}^T \sum_{s=1}^d \ell_{ts} \frac{ \sum_{j=1}^{t-1} \ell_{js} \ell_{jx} \exp \left( \sum_{j=1}^{t-1}  \beta k \ell_{js} \right) \sum_{y = 1}^d \exp \left( \sum_{j=1}^{t-1}  \beta k \ell_{jy}  \right)}{\left(\sum_{y = 1}^d \exp \left( \sum_{j=1}^{t-1}  \beta k \ell_{jy}  \right)\right)^2}
    \\
    &\qquad \qquad  - \sum_{t=1}^T \sum_{s=1}^d \ell_{ts} \frac{ \exp \left(\sum_{j=1}^{t-1}  \beta k \ell_{js} \right) \sum_{y = 1}^d \left(\sum_{j=1}^{t-1} \ell_{jy} \ell_{jx} \exp \left( \sum_{j=1}^{t-1}  \beta k \ell_{jy}  \right)\right)}{\left(\sum_{y = 1}^d \exp \left( \sum_{j=1}^{t-1}  \beta k \ell_{jy}  \right)\right)^2}\Biggr) \Bigg]
\end{align*}
Note that the value does not depend on $x$, which means that $\frac{\partial f}{\partial a}\bigg|_{a = \pmb{0}_d, v_c = v\pmb{1}_d, V = kI} = \tilde c \pmb{1}_d$ for some constant $\tilde c$. 

\subsubsection{Numerical Analysis of Step 2 and Step 4}

In Steps 2 and 4 above, we were not able to show that a $k$ whose value becomes zero exists. We hence provide some empirical evidence here. First, we attach the estimated $\frac{\partial f}{\partial V_{rc}}\bigg|_{a = \pmb{0}_d, v_c = v\pmb{1}_d, V = kI}$ ($r \neq c$), $\frac{\partial f}{\partial V_{rr}}\bigg|_{a = \pmb{0}_d, v_c = v\pmb{1}_d, V = kI}$, $\frac{\partial f}{\partial a_x}\bigg|_{a = \pmb{0}_d, v_c = v\pmb{1}_d, V = kI}$  and $\frac{\partial f}{\partial a_x}\bigg|_{a = \pmb{0}_d, v_c = v\pmb{1}_d, V = kI}$ graph with respect to $k$ value when $\ell_{ts} \sim \text{Unif}([0,1])$ for all $t \in [T], s \in [d]$. While the graph of $\frac{\partial f}{\partial V}\bigg|_{a = \pmb{0}_d, v_c = v\pmb{1}_d, V = kI}$ is not stable, we can see that $k$ for $\frac{\partial f}{\partial V_{rc}}\bigg|_{a = \pmb{0}_d, v_c = v\pmb{1}_d, V = kI}= 0$, $\frac{\partial f}{\partial V_{rr}}\bigg|_{a = \pmb{0}_d, v_c = v\pmb{1}_d, V = kI}= 0$ and $\frac{\partial f}{\partial a_x}\bigg|_{a = \pmb{0}_d, v_c = v\pmb{1}_d, V = kI}=0$ is very similar in \Cref{fig:nd}. We used the Monte Carlo estimation of $1,000,000$ times. 
\vspace{-5mm}
\begin{figure}[!t]
    \centering
    \begin{adjustwidth}{-2cm}{-4cm}
        \includegraphics[width=1.3\columnwidth]{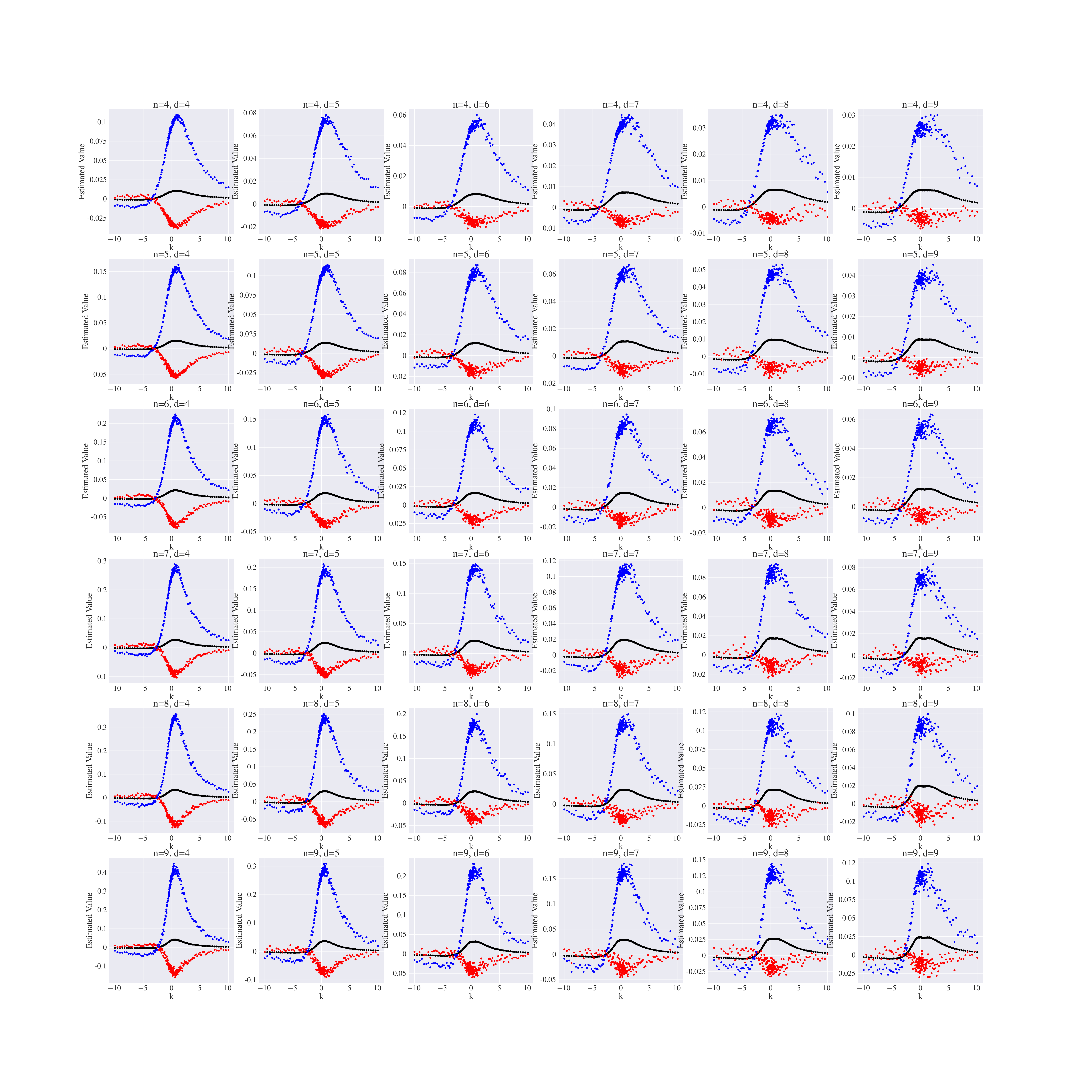}
        \safevspace{-2cm}
    \end{adjustwidth}
    \vspace{-15mm}
        \caption{Calculation of $20\frac{\partial f}{\partial V_{rc}}\bigg|_{a = \pmb{0}_d, v_c = v\pmb{1}_d, V = kI}$ ($r \neq c$)(red), $20\frac{\partial f}{\partial V_{rr}}\bigg|_{a = \pmb{0}_d, v_c = v\pmb{1}_d, V = kI}$ (blue), and $\frac{\partial f}{\partial a_x}\bigg|_{a = \pmb{0}_d, v_c = v\pmb{1}_d, V = kI}$ (black). We experimented with $n \in [4,9]$ and $d \in [4, 9]$. The figure might indicate that $\beta k$ that makes the derivative zero of $\frac{\partial f}{\partial V_{rc}}\bigg|_{a = \pmb{0}_d, v_c = v\pmb{1}_d, V = kI}$ ($r \neq c$), $\frac{\partial f}{\partial V_{rr}}\bigg|_{a = \pmb{0}_d, v_c = v\pmb{1}_d, V = kI}$, and $\frac{\partial f}{\partial a_x}\bigg|_{a = \pmb{0}_d, v_c = v\pmb{1}_d, V = kI}$ would coincide.}
        \label{fig:nd}
\end{figure}

\subsubsection{Empirical Validation}

Our model architecture is defined as follows: the number of layers \(T\) is set to 30 and the dimensionality \(d\) to 32, with the loss vector $l_i$'s distribution \(Z\) following a standard normal distribution \(\mathcal{N}(0, 1)\). During training, we conducted 40,000 epochs with a batch size of 512. We employed the Adam optimizer, setting the learning rate to 0.001. We focus on two key convergence properties: \(K^\intercal (Q \pmb{1} + q_c) \) approaching the zero vector \(\pmb{0}_{d}\) and \(V\) converging to \(a\pmb{1}_d\pmb{1}^\intercal_d  + bI_{d \times d}\), where \(a\) and \(b\) are constants in \(\mathbb{R}\). The conditions \(K^\intercal (Q \pmb{1} + q_c) = \pmb{0}_{d}\) and \(V = a\pmb{1}_d\pmb{1}^\intercal_d  + bI_{d \times d}\) imply that the function \(g(Z_t; V, Q, K) = \sum_{i=1}^{t} (b-a) l_i\), effectively emulating the process of an online gradient descent method. We repeated 10 times. For verifying \(K^\intercal (Q \pmb{1} + q_c) = \pmb{0}_{d}\), we will measure 2-norm of $K^\intercal (Q \pmb{1} + q_c)$. Also for measuring the closeness of $V$ and $a\pmb{1}_d\pmb{1}^\intercal_d  + bI_{d \times d}$, we will measure $\min_{a, b \in \RR} \norm{V - (a\pmb{1}_d\pmb{1}^\intercal_d  + bI_{d \times d})}_{2,2}/b$. The results are demonstrated in the third plot of \Cref{fig:emp-validation}.

\subsection{Comparison with In-Context-Learning Analyses in Supervised Learning}

The very recent studies by \cite{ahn2023transformers, zhang2023trained, mahankali2023one} have demonstrated that if \(Z_t = ((x_1, y_1), \dots, (x_t, y_t), (x_{t+1}, 0))\) and the ``instruction tuning'' loss (i.e., \(\mathbb{E}[\|{\hat{y}_{t+1} - y_{t+1}}\|^2]\)) is being minimized with a single-layer linear self-attention model, then a global optimizer among single-layer linear self-attention models yields the output \(\hat{y}_{n+1} = \eta \sum_{i = 1}^n y_i x_i^\intercal x_{n+1}\). This output can be interpreted as a \emph{gradient descent}  algorithm, indicating that a single-layer linear self-attention model \textbf{implicitly} performs gradient descent. However, in the online learning setting where there are no $y$-labels, such an implicit gradient descent update-rule is hard to define. Compared to the previous studies, our global optimizer among single-layer linear self-attention models is an \emph{explicit} and  \emph{online} gradient descent update for online learning. {With a different loss (regret-loss v.s. instruction-tuning-loss), the techniques to obtain the seemingly similar results are also fundamentally different.}

\subsection{Training Details of   \Cref{ssec:experiemnt}}
\label{appendix:training-detail}

We provide the training details of  \Cref{ssec:experiemnt}. 
For the multi-layer Transformer training, we used 4 layers, 1 head Transformer. For both single-layer and multi-layer, we employed the Adam optimizer, setting the learning rate to 0.001. During training, we conducted 2,000 epochs with a batch size 512. Moreover, when we trained for the loss sequences with the predictable trend, we used 4 layers, 1 head Transformer. For both single-layer and multi-layer, we employed the Adam optimizer, setting the learning rate to 0.001. During training, we conducted 9,000 epochs with a batch size of 512. 

\conf{\subsection{Detailed Experimental Settings in \Cref{ssec:experiemnt}}
\label{detailed-experiment-setting-ch5}
\subsubsection{Randomly-generated loss sequences }
 We used the same loss vector with \Cref{sec:experiments_online}'s randomly generated loss function to compare the result with GPT-4. The results show that the trained single-layer self-attention model or trained Transformer with regret-loss has comparable regret with FTRL and GPT-4's regret, and it can be checked in \Cref{fig:single-layer-ftrl1}.
\begin{figure}[h]
        \centering
        \includegraphics[width=\linewidth]{figs/figure5-4-1.png}
        \caption{Result of the randomly generated loss sequences with Gaussian (left) and uniform with truncation (right). The $p$-value and $\hat{\beta}_0$ value of the Gaussian loss sequences with trained Transformer / single-layer self-attention results were $p = 0.0, \hat{\beta}_0 = 0.4$, $p = 0.0, \hat{\beta}_0 = 0.39$, respectively.  The $p$-value and $\hat{\beta}_0$ value of uniform loss sequences with trained Transformer / single-layer self-attention results were $p = 0.0, \hat{\beta}_0 = 0.43$, $p = 0.0, \hat{\beta}_0 = 0.47$, repsectively.}
        \label{fig:single-layer-ftrl1}
\end{figure}
\subsubsection{Loss sequences with a predictable trend}\label{ssec:traininggpt2}

\cpedit{We investigate the case of loss sequences with predictable trends such as linear trends or sinusoid trends. We might expect that the performance of the trained Transformer would surpass the performance of traditional no-regret algorithms since FTRL would not be an optimal algorithm for the loss sequence with a predictable trend.} We modified the training distribution of random variable \( Z \) to follow two kinds of trends: linear and sinusoid functions. The results show that the trained single-layer self-attention model or trained Transformer with regret-loss outperformed GPT-4 in the metric of regret when the loss sequence is a linear trend, and it can be checked in \Cref{fig:trend}.

\begin{figure}[h]
        \centering
        \includegraphics[width=\linewidth]{figs/figure5-4-2.png}
        \caption{Result of the loss sequences with linear trends (left) and sinusoid trend  (right). The $p$-value and $\hat{\beta}_0$ value of the Gaussian loss sequences with trained Transformer / single-layer self-attention results were $p = 0.0, \hat{\beta}_0 = 0.51$, $p = 0.0, \hat{\beta}_0 = -0.13$, repsectively.  The $p$-value and $\hat{\beta}_0$ value of uniform loss sequences with trained Transformer / single-layer self-attention results were $p = 0.0, \hat{\beta}_0 = 0.89$, $p = 0.0, \hat{\beta}_0 = -0.9$, repsectively.}
        \label{fig:trend}
\end{figure}

\subsubsection{Repeated games}
We investigate the case with a multi-player repeated game; 2x2, 3x3x3, 3x3x3x3 games. The results show that the trained single-layer self-attention model or trained Transformer with regret-loss has a similar performance with FTRL; it can be checked in \Cref{fig:game}.

\begin{figure}
       \centering
        \includegraphics[width=\linewidth]{figs/figure5-4-game.png}
        \caption{Result of the game with two-player (left) (($p$, $\hat{\beta}_0) = (0.0, 0.69)$), three-player (middle)  (($p$, $\hat{\beta}_0) = (0.0, 0.94)$), and four-player (right)  (($p$, $\hat{\beta}_0) = (0.0, 0.98)$).}
        \label{fig:game}
\end{figure}

\subsubsection{Two Scenarios for Regrettable Behavior of GPT-4}
 We used the same loss vector as in  \Cref{sec:experiments_online}. 
 The results show that the trained single-layer self-attention model or training Transformer with regret-loss can achieve comparable regret performance as FTRL and outperform GPT-4, and it can also be checked with \Cref{fig:counter}.}

\subsection{Ablation Study on Training \Cref{eqn:regret-loss}} \label{appendix:nvariation}
In this section, we provide an ablation study that changes $N$ and $k$ in \Cref{eqn:regret-loss}. To be specific, we will set $N = 1, 2, 4$, $f(x, k) = \max(x, 0)^k$, $h(x) = \max(x, 0)^2$, and $k = 1, 2$. For the multi-layer Transformer training, we used 4 layers and 1 head Transformer. For both single-layer and multi-layer, we employed the Adam optimizer, setting the learning rate to 0.001. During training, we conducted 2,000 epochs with a batch size of 512. We experimented on the randomly generated loss sequences. Especially, we used the uniform loss sequence ($\ell_t \sim \text{Unif}([0,10]^2)$), with the  results in  \Cref{fig:aumsa} and \Cref{fig:aumma}; and the Gaussian loss sequence ($\ell_t \sim \cN(5\cdot\pmb{1}_2, I)$), with the results in  \Cref{fig:agmsa} and \Cref{fig:agmma}.

\begin{figure}
    \centering
    \includegraphics[width=0.9\columnwidth]{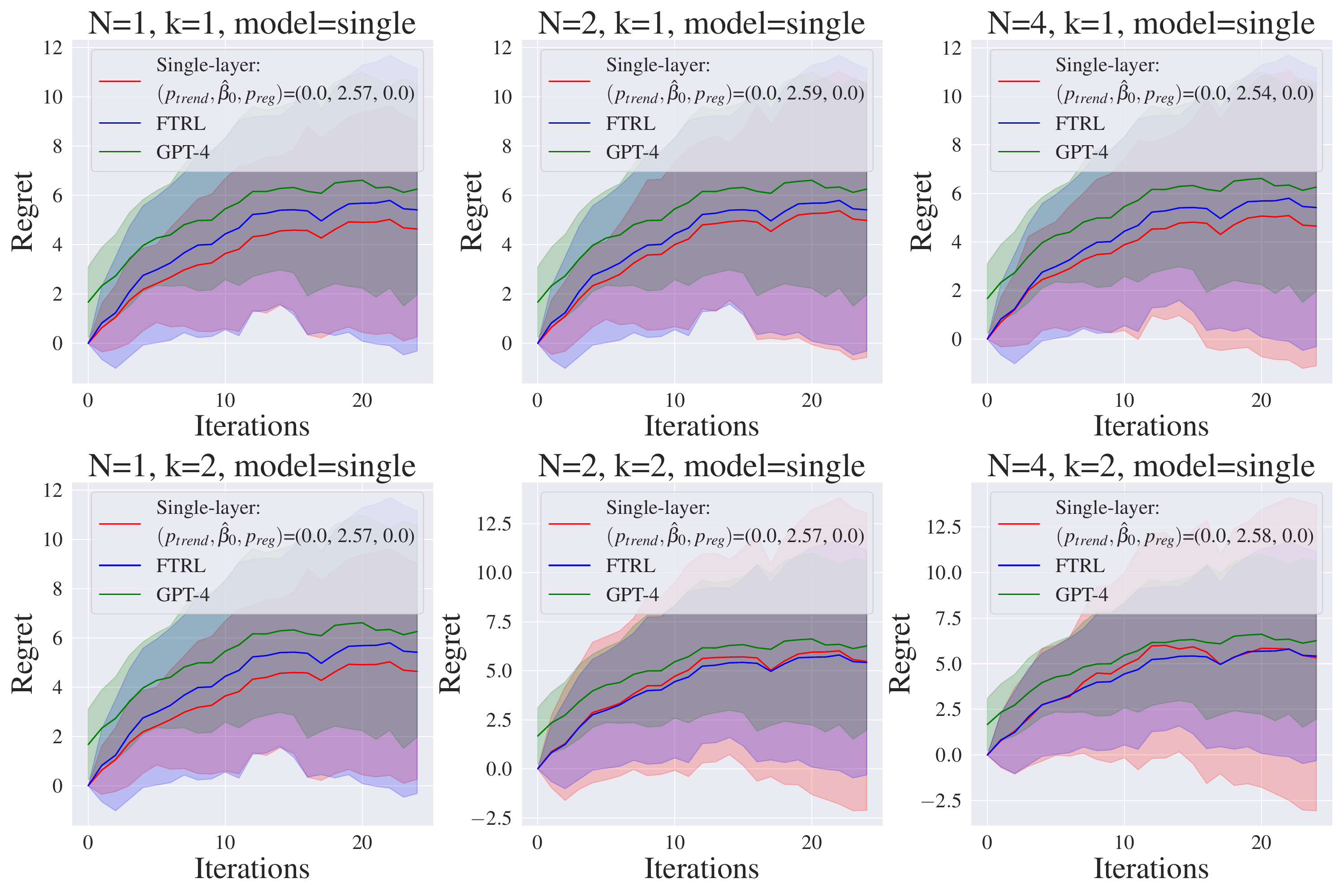}
    \caption{Ablation study for the uniform loss sequence trained with single-layer self-attention layer and  \texttt{Softmax} projection. }
    \label{fig:aumsa}
\end{figure}

\begin{figure}
    \centering
    \includegraphics[width=0.9\columnwidth]{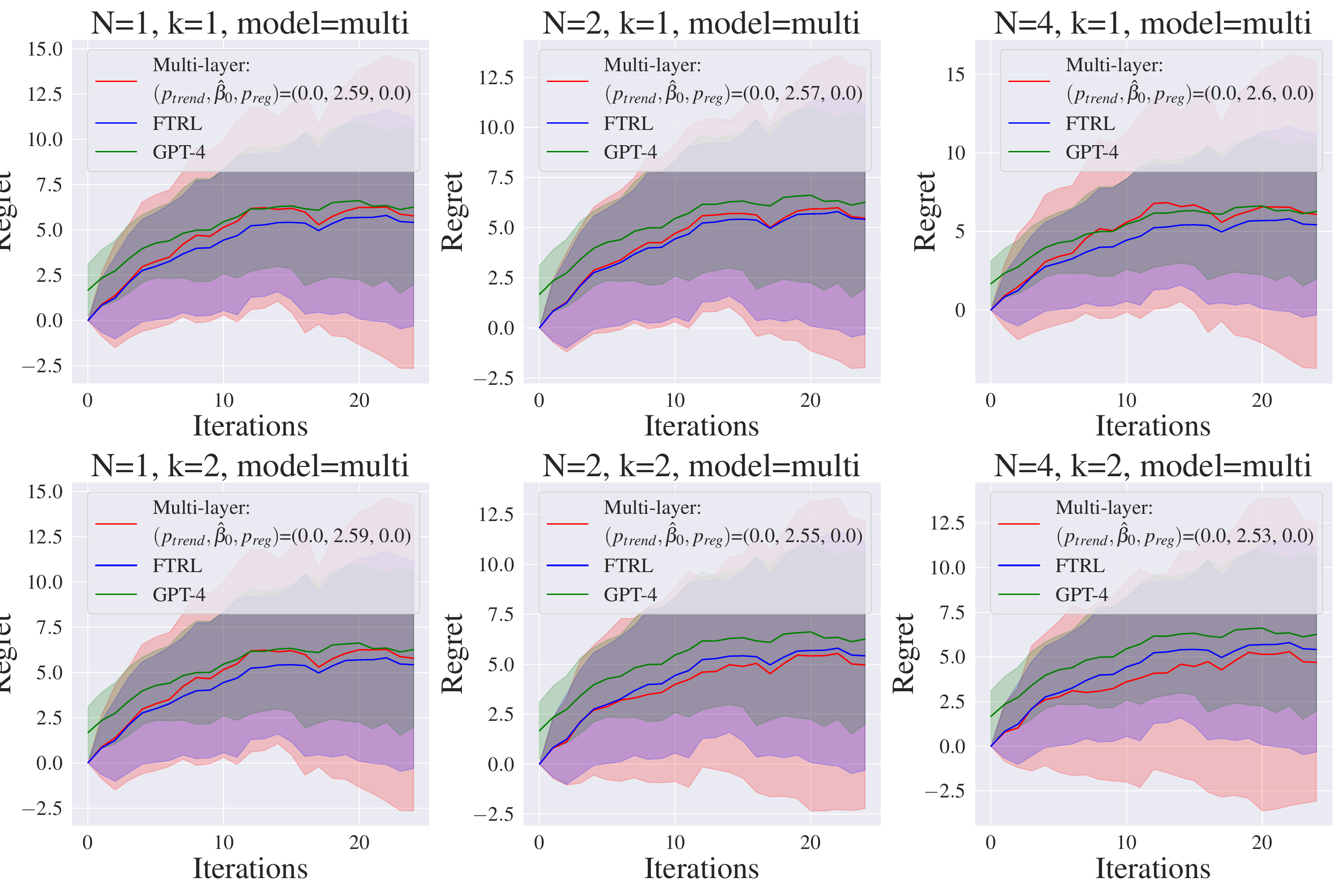}
    \caption{Ablation study for the uniform loss sequence trained with multi-layer self-attention layer and  \texttt{Softmax} projection.}
    \label{fig:aumma}
\end{figure}

\begin{figure}
    \centering
    \includegraphics[width=0.9\columnwidth]{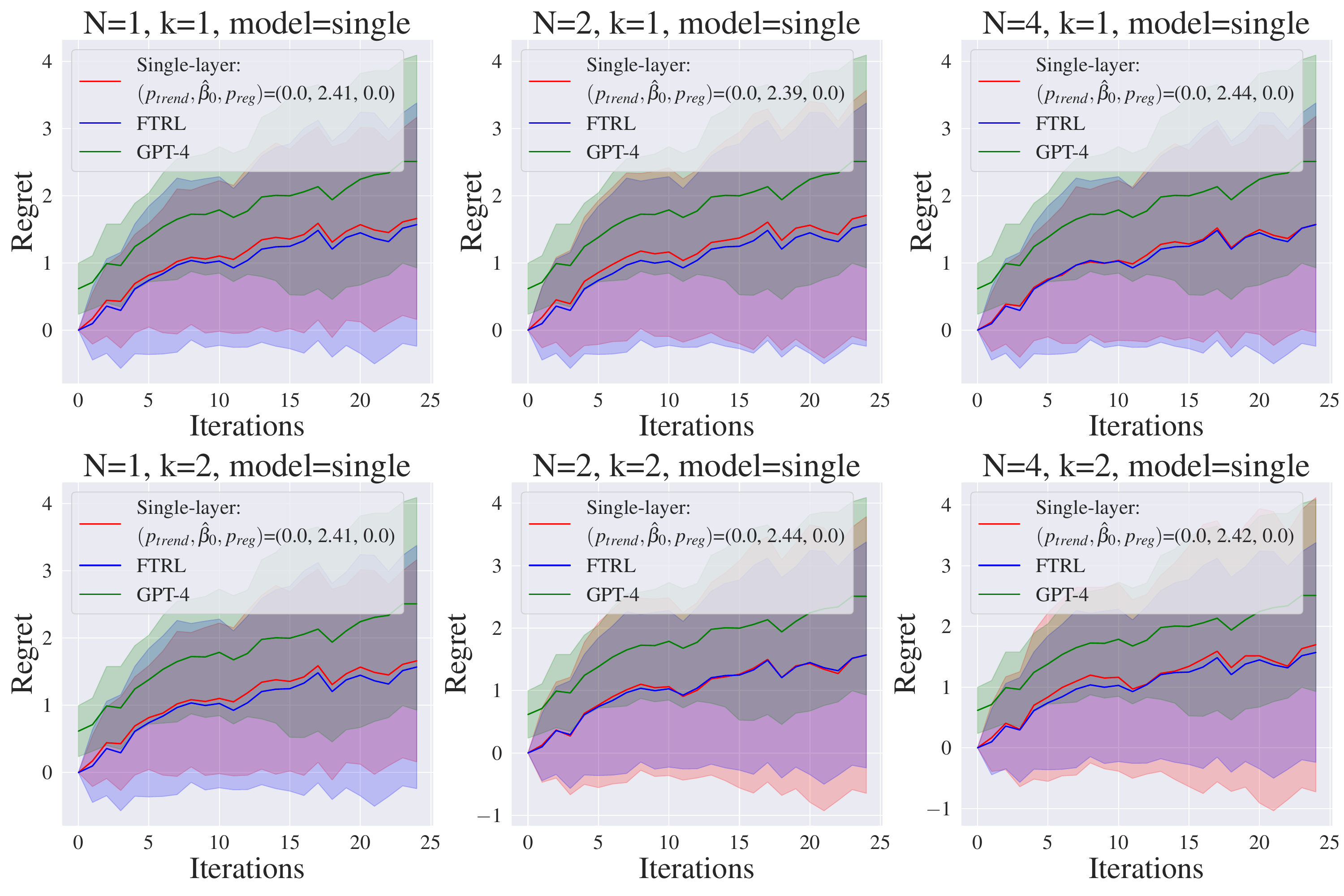}
    \caption{Ablation study for the Gaussian loss sequence trained with single-layer self-attention layer and  \texttt{Softmax} projection. }
    \label{fig:agmsa}
\end{figure}

\begin{figure}
    \centering
    \includegraphics[width=0.9\columnwidth]{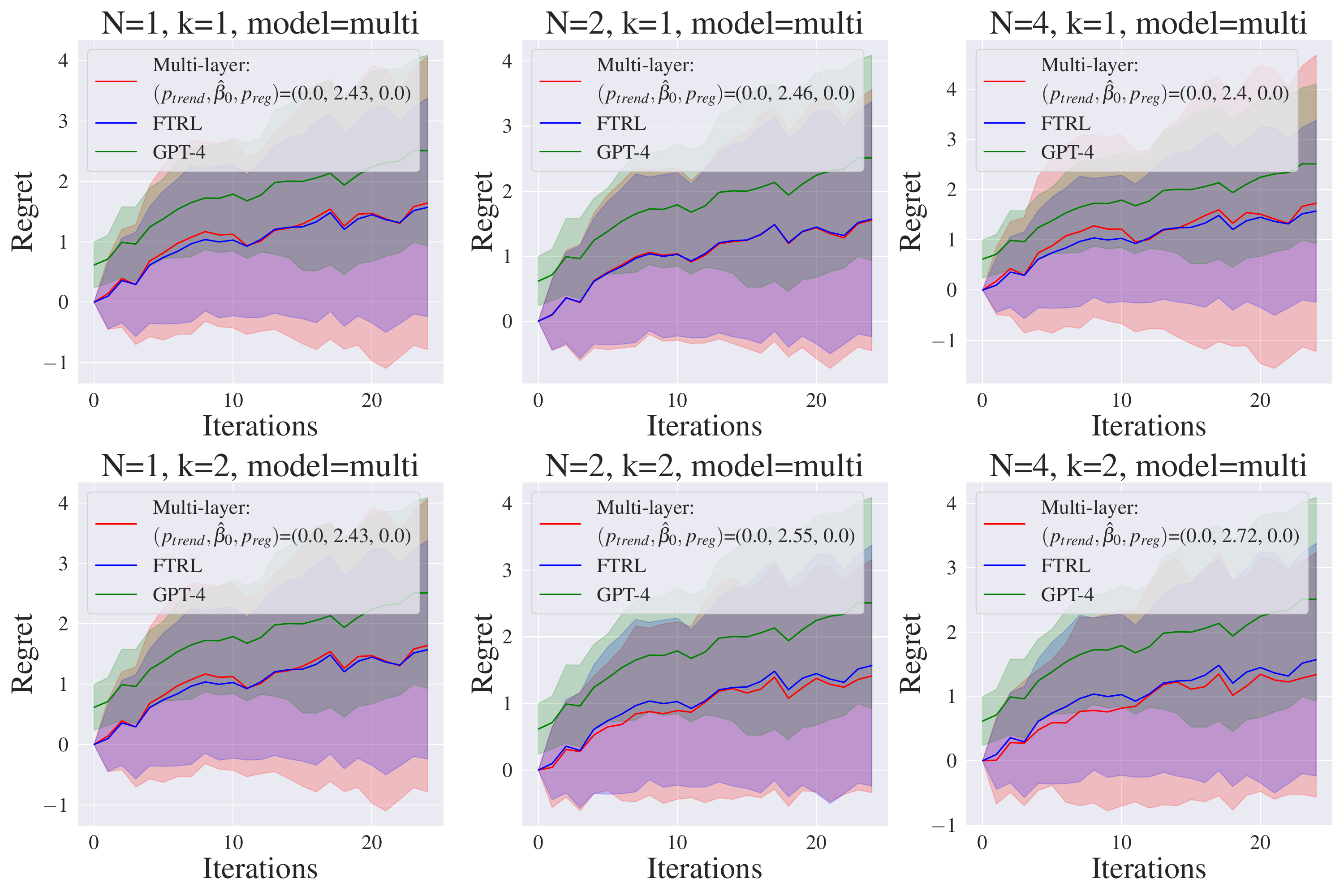}
    \caption{Ablation study for the Gaussian loss sequence trained with single-layer self-attention layer and  \texttt{Softmax} projection.}
    \label{fig:agmma}
\end{figure}

\end{document}